\definecolor{cm}{RGB}{250,0,200}
\newcommand{\Err}{\texttt{Err}}
\newcommand{\unif}{\mathrm{unif}}
\newcommand{\aalign}{\mathrm{align}}
\newcommand{\aug}{\mathrm{aug}}
\providecommand{\norm}[1]{\lVert#1\rVert}
\providecommand{\bignorm}[1]{\big\lVert#1\big\rVert}
\definecolor{codegreen}{rgb}{0,0.6,0}
\definecolor{codegray}{rgb}{0.5,0.5,0.5}
\definecolor{codepurple}{rgb}{0.58,0,0.82}
\definecolor{backcolour}{rgb}{0.95,0.95,0.92}
\lstdefinestyle{mystyle}{
    backgroundcolor=\color{backcolour},   
    commentstyle=\color{codegreen},
    keywordstyle=\color{magenta},
    numberstyle=\tiny\color{codegray},
    stringstyle=\color{codepurple},
    basicstyle=\ttfamily\footnotesize,
    breakatwhitespace=false,     
    breaklines=true,             
    captionpos=b,                
    keepspaces=true,             
    numbers=left,                
    numbersep=5pt,               
    showspaces=false,            
    showstringspaces=false,
    showtabs=false,              
    tabsize=2
}
\newcommand{\printfnsymbol}[1]{%
  \textsuperscript{\@fnsymbol{#1}}%
}
\theoremstyle{plain}
\newtheorem{theorem}{Theorem}[section]
\newtheorem{proposition}[theorem]{Proposition}
\newtheorem{lemma}[theorem]{Lemma}
\newtheorem{corollary}[theorem]{Corollary}
\theoremstyle{definition}
\newtheorem{definition}[theorem]{Definition}
\theoremstyle{remark}
\newtheorem*{remark}{Remark}
\newcommand{\veps}{\varepsilon}
\newcommand{\E}{\mathbb{E}}
\newcommand{\ind}{\mathbbm{1}}
\newcommand{\argmax}{{\rm argmax}}
\newcommand{\argmin}{{\rm argmin}}
\def \tr{\mathrm{Tr}}
\def \det{\texttt{det}}
\def\##1\#{\begin{align}#1\end{align}}
\def\$#1\${\begin{align*}#1\end{align*}}
\definecolor{myblue}{rgb}{.8, .8, 1}
\definecolor{mathblue}{rgb}{0.2472, 0.24, 0.6} % mathematica's Color[1, 1--3]
\definecolor{mathred}{rgb}{0.6, 0.24, 0.442893}
\definecolor{mathyellow}{rgb}{0.6, 0.547014, 0.24}
\newcommand{\calA}{{\mathcal{A}}}
\newcommand{\calB}{{\mathcal{B}}}
\newcommand{\calE}{{\mathcal{E}}}
\newcommand{\calL}{{\mathcal{L}}}
\newcommand{\calN}{{\mathcal{N}}}
\newcommand{\calO}{{\mathcal{O}}}
\newcommand{\calW}{{\mathcal{W}}}
\newcommand{\calX}{{\mathcal{X}}}
\newcommand{\calY}{{\mathcal{Y}}}
\newcommand{\bfm}[1]{\ensuremath{\mathbf{#1}}}
 \def\bA{\bfm A} 
 \def\bB{\bfm B} 
 \def\bD{\bfm D} 
  \def\EE{\mathbb{E}}
\def\bg{\bfm g} \def\bG{\bfm G} 
\def\bh{\bfm h} \def\bH{\bfm H} 
 \def\bI{\bfm I}
 \def\bM{\bfm M} 
  \def\NN{\mathbb{N}}
 \def\bP{\bfm P} \def\PP{\mathbb{P}}
  \def\RR{\mathbb{R}}
 \def\bT{\bfm T} 
 \def\bU{\bfm U} 
\def\bv{\bfm v} \def\bV{\bfm V} 
 \def\bW{\bfm W} 
 \def\bX{\bfm X} 
\def\by{\bfm y}  
 \def\bZ{\bfm Z}
\newcommand{\xx}{\text{\boldmath $x$}}
\newcommand{\ww}{\text{\boldmath $w$}}
\newcommand{\hh}{\text{\boldmath $h$}}
\newcommand{\zz}{\text{\boldmath $z$}}
\newcommand{\ff}{\text{\boldmath $f$}}
\newcommand{\uu}{\text{\boldmath $u$}}
\newcommand{\vv}{\text{\boldmath $v$}}
\newcommand{\ee}{\text{\boldmath $e$}}
\newcommand{\aaaa}{\text{\boldmath $a$}}
\newcommand{\gggg}{\text{\boldmath $g$}}
\newcommand{\btheta}{\text{\boldmath $\theta$}}
\newcommand{\bvarphi}{\text{\boldmath $\varphi$}}
\newcommand{\bbeta}{\text{\boldmath $\beta$}}
\newcommand{\bmu}{\text{\boldmath $\mu$}}
\newcommand{\bsigma}{\text{\boldmath $\sigma$}}
\newcommand{\bSigma}{\text{\boldmath $\Sigma$}}
\newcommand{\bepsilon}{\text{\boldmath $\epsilon$}}
\newcommand{\bOmega}{\text{\boldmath $\Omega$}}
\newcommand{\bzeta}{\text{\boldmath $\zeta$}}
\newcommand{\blambda}{\text{\boldmath $\lambda$}}
\newcommand{\bnu}{\text{\boldmath $\nu$}}
\newcommand{\bzero}{\mathrm{\mathbf{0}}}
\def\fullcoef{\text{\boldmath $\xi$}}
\def\siml{{\rm sim}}
\newcommand{\bone}{\mathrm{\bf 1}}
\renewcommand{\tilde}{\widetilde}
\renewcommand{\hat}{\widehat}
\title{Unraveling Projection Heads in Contrastive Learning: \\ Insights from Expansion and Shrinkage}
\author[1]{Yu Gui}
\author[1]{Cong Ma}
\author[2]{Yiqiao Zhong}
\affil[1]{Department of Statistics, University of Chicago}
\affil[2]{Department of Statistics, University of Wisconsin-Madison}
\date{\today}
\begin{document}

\maketitle

\begin{abstract}
We investigate the role of projection heads, also known as projectors, within the encoder-projector framework (e.g., SimCLR) used in contrastive learning. We aim to demystify the observed phenomenon where representations learned before projectors outperform those learned after---measured using the downstream linear classification accuracy, even when the projectors themselves are linear.

In this paper, we make two significant contributions towards this aim. Firstly, through empirical and theoretical analysis, we identify two crucial effects---expansion and shrinkage---induced by the contrastive loss on the projectors.  In essence, contrastive loss either expands or shrinks the signal direction in the representations learned by an encoder, depending on factors such as the augmentation strength, the temperature used in contrastive loss, etc. Secondly, drawing inspiration from the expansion and shrinkage phenomenon, we propose a family of linear transformations to accurately model the projector's behavior. This enables us to precisely characterize the downstream linear classification accuracy in the high-dimensional asymptotic limit. Our findings reveal that linear projectors operating in the shrinkage (or expansion) regime hinder (or improve) the downstream classification accuracy. This provides the first theoretical explanation as to why (linear) projectors impact the downstream performance of learned representations. Our theoretical findings are further corroborated by extensive experiments on both synthetic data and real image data.

% \textbf{Keywords: }Contrastive learning, projection heads, inductive bias, overparametrization, high-dimensional interpolation 
% , and then we show how projectors change the generalization properties of features via an analysis of inductive bias in high-dimensional models. Under a stylized model, we provide precise quantitative results, including phase transition phenomena and asymptotic characterization of generalization accuracy on downstream tasks. 

% Contrastive learning is a popular learning framework that achieves remarkable empirical performance, especially when no or few labeled training examples are available. One of its ingredients of success is the use of projectors (aka projection heads), which are removed after training to improve generalization. In this paper, we identify two major effects---expansion and shrinkage---that contrastive loss promotes, and then we show how projectors change the generalization properties of features via an analysis of inductive bias in high-dimensional models. Under a stylized model, we provide precise quantitative results, including phase transition phenomena and asymptotic characterization of generalization accuracy on downstream tasks. 
\end{abstract}

%\tableofcontents

\addtocontents{toc}{\protect\setcounter{tocdepth}{-1}}

\section{Introduction}
Representation learning~\citep{bengio2013representation} is a fundamental task in machine learning and statistics with the aim of extracting representations from the data that are useful for building future classifiers or predictors. 
While supervised learning is effective for this purpose (for instance, deep neural networks such as ResNet~\citep{he2016deep} have achieved remarkable performance in image classification), it is limited by the availability of massive labeled data. 

{Self-supervised} learning (SSL)~\citep{balestriero2023cookbook} has recently emerged as a novel paradigm to learn meaningful representations from huge \emph{unlabeled} datasets~\citep{Misra2019SelfSupervisedLO, chen2020simple, he2020momentum,Dwibedi2021WithAL, haochen2021provable,jing2021understanding,wang2020understanding,ji2021power}. %SSL has been successfully applied to various domains including speech recognition, natural language process, and computer vision. \cm{Need references} 
Among SSL methods~\citep{chen2020simple, zbontar2021barlow, bardes2021vicreg}, contrastive learning~\citep{chen2020simple} is arguably the most popular one, which is also the focus of this paper. In essence, contrastive learning learns representations by encouraging proximity between the representations of similar inputs (also known as positive pairs), while forcing the representations of dissimilar inputs (i.e., negative pairs) to be far from each other. 

Below, we compare contrastive learning with the classical representation learning methods to help readers better understand the former one. Readers familiar with contrastive learning can jump directly to Section~\ref{subsec:puzzle}.  
% Large-scale unlabeled data are believed to contain rich information for learning representations, a belief that is reinforced by excellent transferability of intermediate features \citep{yosinski2014transferable}. 
% rent images (also known as negative pairs) to be far.   

\subsection{Classical unsupervised learning based on encoders and decoders}
Principal component analysis (PCA), dating back to~\cite{pearson1901principal, hotelling1933analysis}, is perhaps the oldest unsupervised representation learning method. In a nutshell, PCA aims to find a linear function $\ff_{\bW}(\xx) = \bW \xx$ of the input $\xx$ that preserves as much information about the original data as possible. Mathematically, 
 PCA can be formulated as minimizing the \emph{reconstruction loss}: denoting $\hh = \ff_{\bW}(\xx)$, we search for another linear function $\gggg_{\bT}(\hh) = \bT \hh$ such that the empirical {reconstruction loss} 
\begin{equation*}
\frac{1}{n} \sum_{i=1}^n \big\| \xx_i - \gggg_{\bT} ( \ff_{\bW} (\xx_i) ) \big\|^2
\end{equation*}
is minimized over all possible linear maps $\ff_{\bW}$ and $\gggg_{\bT}$ of fixed dimensions. Here, $\{\xx_i\}_{i \le n}$ denotes the input data. 

% In unsupervised learning, a classic question is the following: Given data $(\xx_i)_{i \le n}$ in a high-dimensional space, how do we find embeddings of these points in a lower-dimensional space? One of the oldest methods is . 
% This question was studied again in projection pursuit \citep{friedman1974projection}, which serves as a tool for exploratory data analysis. 
%Using the machine learning terminology, the linear function $\ff(\cdot)$ is termed an encoder that encodes useful information of the input data, while the linear function $\gggg(\cdot)$ is termed a decoder that decodes the low-dimensional representation. 
Using the machine learning terminology, the linear function $\ff_{\bW}(\cdot)$ is called an \textit{encoder} that maps input data to latent representations while the linear function $\gggg_{\bT}(\cdot)$ is called a \textit{decoder} that reproduces the original input as accurately as possible. 
More generally, this \emph{encoder-decoder} approach forms the core principle of many other representation learning methods~\citep{ghojogh2023elements} including autoencoders \citep{bourlard1988auto} and variational autoencoders \citep{kingma2013auto}. %generative adversarial networks \citep{goodfellow2020generative}, etc.  \cm{I'm inclined to remove GAN.}

This encoder-decoder approach serves as the precursor of modern deep learning. Before 2010, 
% Not long before the success of AlexNet, 
learning features from large unlabeled data (also called pretraining), often followed by fine-tuning on a smaller labeled dataset, is known to be beneficial for downstream tasks \citep{hinton2006reducing}. 
However, it is realized that the decoder component is not essential if we are not asking for a generative model, which can be time-consuming to train~\citep{chen2020simple}. This gives rise to new attempts and ideas of learning with no or limited labeled data. Contrastive learning is one emerging approach that achieves outstanding empirical performance, producing state-of-the-art models such as CLIP \citep{radford2021learning}.

\subsection{An encoder-projector framework for contrastive learning}

Contrastive learning marks its departure from the classical representation learning methods (e.g., autoencoders~\citep{bourlard1988auto,kingma2013auto} in two aspects: (1) first, instead of the encoder-decoder framework, contrastive learning generally follows the \emph{encoder-projector} framework (cf.~Figure~\ref{fig:simclr}); (2) second, instead of minimizing the reconstruction loss, contrastive learning minimizes a \emph{contrastive loss} with the aim to pull positive pairs closer and push negative ones farther. 
Below we detail these two modifications and other essential components of contrastive learning using images as a running example for the input data. 

\paragraph{Encoder. } Similar to autoencoders, contrastive learning starts with an encoder
$\hh = \ff_{\btheta}(\xx)$ that maps an input image $\xx$ to its  feature representation $\hh$. Oftentimes, the encoder $\ff_{\btheta}$ is a deep neural network, e.g., ResNet-50~\citep{he2016deep}.

\paragraph{Projector. } Contrastive learning has a unique component $ \zz = \gggg_{\bvarphi}(\hh)$ that projects the representation $\hh$ to its embedding $\zz$, used later for calculating loss functions. This map $\gggg_{\bvarphi}$ is usually called the \textit{projection head} or \textit{projector}. One often uses a simple network, e.g., a multilayer perceptron (MLP) with one or two hidden layers for the projector.

\paragraph{Augmentation and contrastive loss. } 
As we have mentioned, contrastive learning aims to learn representations that are close for positive pairs and far for negative pairs. Here we provide one example for building positive and negative pairs from unlabeled data. 
Each sample $\xx_i$ is augmented by transformations (e.g., color distortion) to produce semantically similar positive samples $\{\xx_{i,k}\}_{1\leq k \leq K}$, where $K$ denotes the number of augmentations (a.k.a.~views). Two augmentations $(\xx_{i,k}, \xx_{j, k'})$ are called a positive pair if $i = j$, and a negative pair if $i \neq j$. Figure~\ref{fig:image_aug} gives an example of building positive and negative pairs from unlabeled image data. 

With the positive and negative pairs in place, we can introduce the contrastive loss used in contrastive learning, as opposed to the reconstruction loss in PCA and autoencoders. 
Let $\zz_{i,k} \coloneqq \gggg_{\bvarphi}(\ff_{\btheta}(\xx_{i,k}))$.
Denoting the (cosine) similarity score by $\mathrm{sim}(\zz,\zz') \coloneqq \langle \frac{\zz}{\| \zz \|}, \frac{\zz'}{\| \zz' \|} \rangle$, we train the encoder and projector by minimizing a contrastive loss, e.g., the canonical SimCLR loss \citep{chen2020simple}:
%is learns the encoder and projector by solving the following optimization problem: \cm{This paragraph needs revision }
%\begin{equation}\label{expr:contrastiveloss}
%\min_{\btheta, \bvarphi} L_{\mathrm{simclr}}(\btheta, \bvarphi) \coloneqq - \frac{1}{(k-1)\tau} \sum_{i\sim j}   \Sim(\zz_i, \zz_j)  + \sum_{i=1}^N  \log  \Big( \frac{1}{\tau} \sum_{k \neq i} e^{\Sim(\zz_i, \zz_k) } \Big)
%\end{equation}

\begin{figure}[t]
\centering
%\includegraphics[width=0.9\textwidth]{}
%\vspace{0.5em}
% \includegraphics[width=0.9\textwidth]{}
%\includegraphics[width=0.8\textwidth]{Figs/arch.pdf}
\includegraphics[width=0.75\textwidth]{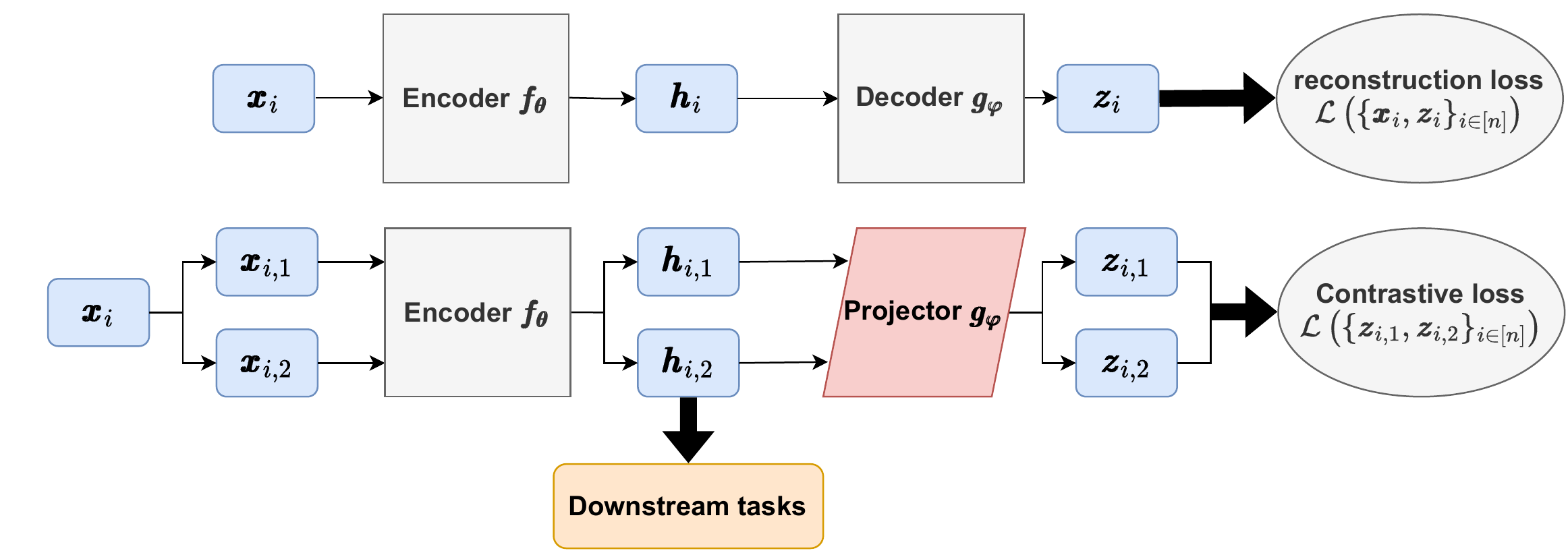}
%\caption{Schematic plots of the encoder-projector framework for contrastive learning.}
\caption{Encoder-decoder framework vs.~encoder-projector framework.}
\label{fig:simclr}
\end{figure}

\begin{align}\label{expr:contrastiveloss}
\!\!\!\min_{\btheta, \bvarphi} \mathcal{L}(\btheta, \bvarphi) \coloneqq & \underbrace{- \frac{1}{\tau} \sum_{i} \sum_{k \neq k'}   \mathrm{sim}(\zz_{i,k}, \zz_{i,k'})}_{\text{alignment loss}}  + \underbrace{(K-1)\, \sum_{i, k}  \log  \Big( \!\!\!\!\!\!\sum_{(j,k') \neq (i,k)}e^{\mathrm{sim}(\zz_{i,k}, \zz_{j,k'}) /\tau} \Big)}_{\text{uniformity loss}},
\end{align}
where %we denote $i \sim j$ if $(i,j)$ forms a positive pair, and 
$\tau > 0$ is known as the \textit{temperature} parameter. Following~\citet{wang2020understanding}, we call the first term  the \emph{alignment} loss that promotes feature proximity of positive pairs, and the second term the \emph{uniformity} loss that repels negative pairs.

In fact, the contrastive loss can be viewed as pairwise cross-entropy loss (a.k.a.~logistic loss). Let $s = (i, k)$ denote the index tuple for simplicity. In this case, $s, s'$ are a positive pair if and only if $i=i'$. Then equivalently, one has $\mathcal{L}(\btheta, \bvarphi) = \sum_s \mathcal{L}_s(\btheta, \bvarphi)$ where
\begin{equation}\label{expr:contrastiveloss2}
\mathcal{L}_s(\btheta, \bvarphi) = - \sum_{s' \neq s} \bone \{i = i'\} \log \left( \frac{\exp\big( \mathrm{sim}(\zz_{s'}, \zz_{s}) / \tau\big)}{\sum_{\bar s \neq s} \exp\big( \mathrm{sim}(\zz_{\bar s}, \zz_{s}) / \tau\big)} \right) \, .
\end{equation}

\paragraph{Downstream accuracy.} After training, we freeze $\btheta$ and only keep the encoder $\ff_{\btheta}$ (i.e., throwing away the projector). Later, given a downstream task, say a classification problem with labeled data $(\tilde{y}_i, \tilde{\xx}_i)_{i\le m}$, one can simply apply logistic regression to learned features $\{ \ff_{\btheta}(\tilde{\xx}_i)\}$ and labels $\{\tilde{y}_i\}$.

% Training the encoder $\ff_{\btheta}$ on huge unlabeled data provides valuable information about the distribution of the input data. This approach, sometimes known as \textit{semi-supervised learning} in statistics, is often preferred over directly training a discriminative model for a downstream task.

%Let us provide two remarks. 
%Second, a contrastive loss balances two components exerting opposite forces. On the right-hand side (RHS) of \eqref{expr:contrastiveloss}, the first component encourages proximity of positive pairs in the embedding space, while the second component promotes diversity in terms of the distribution of embeddings. They are also known as the alignment loss and the uniformity loss \citep{wang2020understanding}.

\subsection{Puzzling effect of projectors on the representations}\label{subsec:puzzle}
The success of contrastive learning, or more specifically SimCLR~\citep{chen2020simple}, can be attributed (at least) to two ingredients: strong data augmentation and the use of projectors. However, the role of projectors is quite puzzling. It is observed in~\cite{chen2020simple} that representations learned before projectors outperform those learned after---measured using the downstream linear classification accuracy. This is even true when the projectors themselves are linear; see Figure~8 therein.  As a result, 
the standard practice in contrastive learning is to jointly train the encoder and the projector, and then \emph{remove} the projector completely after training~\citep{balestriero2023cookbook}.

In this paper, we aim to demystify this puzzling phenomenon about projectors in contrastive learning. In particular, we 
focus on answering the following two questions: 
\begin{enumerate}
\item[\textbf{Q1:}] \textit{What geometric structure does contrastive loss minimization induce on the projectors?}
\item[\textbf{Q2:}] \textit{Why do (linear) projectors affect the generalization properties of learned features?}
\end{enumerate}
% \TODO{I hope we can replicate some results and report in a table here. For the journal version, at least we should gather some reported accuracy.}

% Despite insights from recent literature (see Section~\ref{sec:discussion} for detailed discussions), a full understanding of the mechanism of these two ingredients, however, seems elusive. In this paper, we are primarily interested in the role of projectors.

\begin{figure}[t]
\centering
\includegraphics[width=0.75\textwidth]{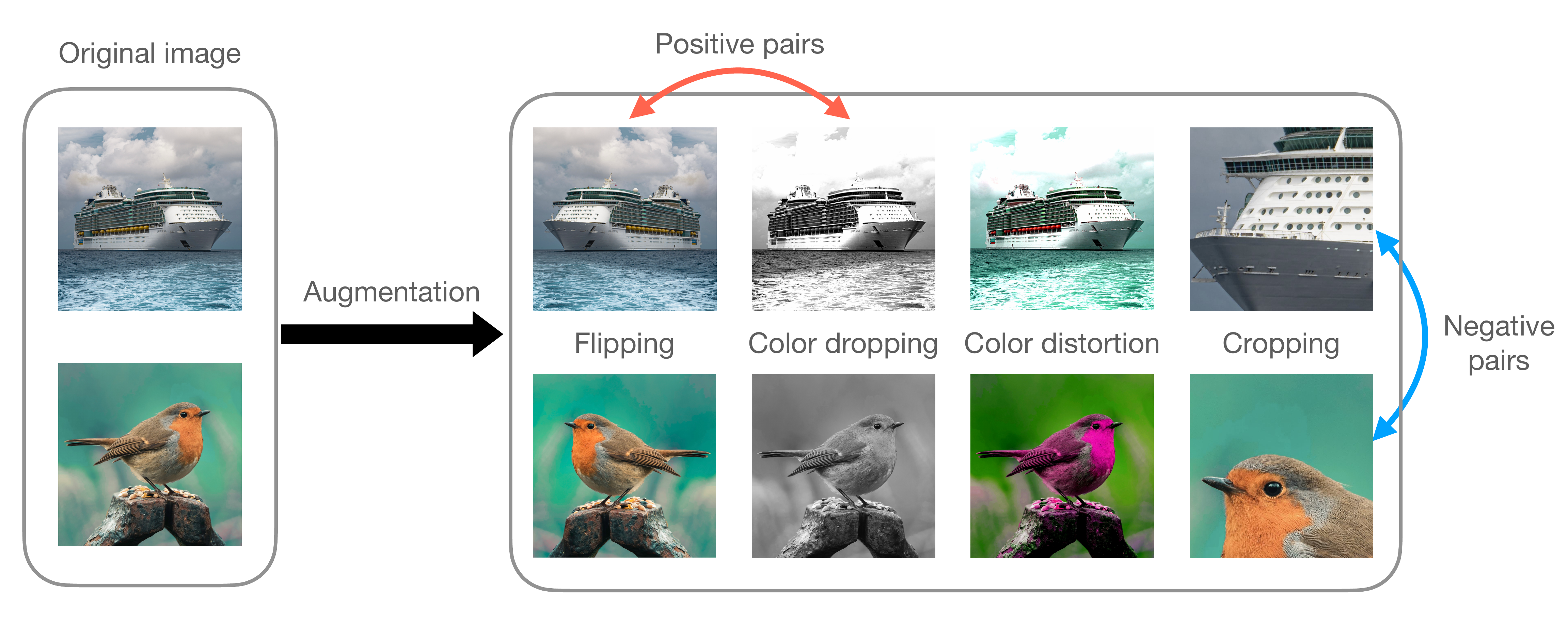}
\caption{Positive and negative pairs after image augmentation.}\label{fig:image_aug}
\end{figure}

For Q1, the contrastive loss is relatively new and thus less understood. It is unclear what properties the optimal solution to contrastive loss minimization possesses. A quantitative characterization will be helpful for understanding contrastive learning.

For Q2, heuristics in the literature are insufficient (see e.g., Section 3.2 in~\cite{balestriero2023cookbook}). One may view a projector as a buffer component: it protects features from distortion due to loss minimization, so its removal after training improves the generalization properties of features. Yet, this argument cannot reconcile with the observation that even linear projectors affect the downstream linear classification accuracy. Therefore, without a thorough investigation, the  role of projectors on generalization remains mysterious.

\subsection{Our contributions}
To delineate the effect of projectors, we do not attempt to analyze the encoder or its training dynamics, but rather assume access to a well-trained encoder, which is often achieved at the later stage of training. Through fixing such a good encoder, we make the following empirical and theoretical discoveries that are fundamental to addressing the aforementioned two questions. 
\begin{enumerate}
\item First, we identify two crucial effects---expansion and shrinkage---induced by the contrastive loss on the projectors.  In essence, contrastive loss either expands or shrinks the signal direction in the representations learned by encoders.
\item Secondly, under a simpler projector model, we precisely characterize the downstream linear classification accuracy in the high-dimensional asymptotic limit. Our findings reveal that linear projectors operating in the shrinkage (resp.~expansion) regime hinder (resp.~improve) the downstream classification accuracy. This provides the first theoretical explanation as to why (linear) projectors impact the downstream performance of learned representations.
\end{enumerate}
We also discuss connections to other empirical phenomena such as dimensional collapse, feature transferability, neural collapse, etc.
% We hope that our analysis helps demystify the role of projectors in contrastive learning, and more broadly, offers a viewpoint that connects various empirical phenomena, such as feature transferability, neural collapse, etc.

\subsection{Paper organization}
In Section~\ref{sec:empirical}, we present the empirical findings regarding the effects of the contrastive loss on the projectors, including expansion and shrinkage. In Section~\ref{sec:setup}, we introduce the feature-level Gaussian mixture model and provide empirical evidence as motivation for our modeling approach. Moving on to Section~\ref{sec:gmm_expansion_shrinkage}, we present a precise approximation of the population contrastive loss under the Gaussian mixture model. Additionally, we theoretically characterize the sharp phase transition between the expansion and shrinkage regimes, and provide the approximation bound along with the finite-sample loss. Section~\ref{sec:result} analyzes the impact of the expansion/shrinkage phenomenon of the linear projection head on downstream tasks. We calculate the precise generalization error as a function of the expansion effect. Furthermore, in Section~\ref{sec:inhomogeneous}, we extend our expansion/shrinkage results from Section~\ref{sec:gmm_expansion_shrinkage} to inhomogeneous augmentations. We identify the simultaneous expansion and shrinkage, which aligns with our empirical discoveries using the \texttt{STL-10} dataset. Simulation details and theoretical proofs are deferred to the appendix.

\subsection{Notations}
$\RR_+$ denotes $\{a \in \RR: a > 0\}$ and $\bar{\RR}$ denotes $\RR \cup \{\infty\}$.
For a vector $\uu$, we use $\norm{\uu}$ or $\norm{\uu}_2$ to denote its Euclidean norm. For vectors $\uu, \vv$ of the same length, we use $\langle \uu, \vv \rangle := \uu^\top \vv$ to denote the inner product. For a matrix $\bA$, we use $\norm{\bA}_{\mathrm F}$ to denote its Frobenius norm. The identity matrix of size $p \times p$ is denoted by $\bI_p$ or simply $\bI$.  We use $\mathcal{N}(\bmu, \bSigma)$ to denote a Gaussian distribution with mean $\bmu$ and covariance $\bSigma$. The notation $\Phi$ means the cumulative distribution function of the standard Gaussian variable.

For two real-valued sequences $(a_n)_{n \ge 1}$ and $(b_n)_{n \ge 1}$, we use the standard small-o notation: $a_n = o(b_n)$ means $\lim_{n\to \infty} a_n / b_n = 0$, and sometimes we also write $a_n \ll b_n$. For random variable $X_n$, $X_n = o_{\PP}(b_n)$ means $|X_n| / b_n$ converges in probability to $0$ as $n \to \infty$. Moreover, if $\bX_n$ is a random vector, then $\bX_n = o_{\PP}(b_n)$ means $\norm{\bX_n} / b_n$ converges in probability to $0$ as $n \to \infty$.

\section{Empirical discovery: expansion and shrinkage}\label{sec:empirical}

We begin with presenting our empirical findings on the two crucial effects---expansion and shrinkage, of contrastive loss on projectors.  

\paragraph{Experimental setup.} We provide a brief description about our experimental setup, and leave the details to the appendix. We freeze the encoder (based on  ResNet-18)\footnote{Downloaded from \url{https://github.com/sthalles/SimCLR}.} pretrained on the \texttt{STL-10} image dataset.\footnote{Source from \url{https://cs.stanford.edu/~acoates/stl10/}.} 
Then we apply standard data augmentation techniques such as random cropping and color distortion to generate positive/negative pairs.
In the end, we train a linear projector $\bW \in \RR^{512 \times 512}$ under different configurations of hyperparameters, including $5$ different temperatures. 
%and two choices of color distortion strength $\{0.1, 1.0\}$.

% \cm{Think about why we need this sentence here. }
% In the appendix (Section~\ref{todo}), we present further experimental details such as learning rate and initialization. We also explored other settings, e.g., using a deep encoder, training the encoder and the projector jointly, etc.

%We conduct the following experiment where we freeze the encoder (a pretrained ResNet-18 which outputs full-rank features) and train the linear projection $\bW \in \RR^{512 \times 512}$ with $5$ different temperatures and two choices of color distortion strength $\{0.1, 1.0\}$. Details about the experimental setup are in the appendix. 

In what follows, we summarize our empirical findings.

\subsection{Insights from spectral decomposition}
First, for two different class labels $c_1, c_2$ (e.g., $c_1$ denotes airplane while $c_2$ denotes dog), we use $\bmu_{c_k} \coloneqq \mathrm{Ave}_{y_i=c_k} \hh_i$ $(k=1,2)$ to denote the average representation after the encoder, where $\mathrm{Ave}$ is the average operator. Correspondingly, we denote
$\bmu_{c_1,c_2} \coloneqq \bmu_{c_1} - \bmu_{c_2} \in \RR^{p}$ to be the difference between class means. It turns out that $\bmu_{c_1,c_2}$is closely connected to the top/bottom right singular subspaces of the linear projector $\bW$.

To be more precise, let $\mathcal{V}_{\mathrm{top}}, \mathcal{V}_{\mathrm{bottom}}$ be the top/bottom singular subspaces containing a few right singular vectors (SVs) of $\bW$ and $\mathcal{V}_{\mathrm{bulk}}$ be the singular subspace containing the remaining SVs. Clearly one has $\RR^p = \mathcal{V}_{\mathrm{top}} \oplus \mathcal{V}_{\mathrm{bulk}}  \oplus \mathcal{V}_{\mathrm{bottom}}$. 

%\medskip
%\noindent\fbox{%
%    \parbox{\textwidth}{%
%    \bf
%        Our first main empirical finding can be summarized in the following inclusion, that is, %empirically, we observe that 
%\begin{align*}
%    \bmu_{c_1, c_2} \in \mathcal{V}_{\mathrm{top}} \oplus \mathcal{V}_{\mathrm{bottom}}
%\end{align*}
%holds approximately. 
%    }%
%}
%\medskip

\begin{center}
\textit{Our first main empirical finding is that, empirically, the following holds approximately:
}
\begin{align*}
    \bmu_{c_1, c_2} \in \mathcal{V}_{\mathrm{top}} \oplus \mathcal{V}_{\mathrm{bottom}} \, .
\end{align*}
\end{center}

In other words, the energy is concentrated on the span of extreme (right) singular vectors. Moreover, on  $\mathcal{V}_{\mathrm{top}}$ and $\mathcal{V}_{\mathrm{bottom}}$, the corresponding singular values experience sharp drops.
Indeed, in Figure~\ref{fig:image_pretrained_plot}, we train $\bW$ on the $10$-class \texttt{STL}-10 dataset and calculate the alignment score $\kappa_{j, c_1,c_2} := \langle \vv_j, \bmu_{c_1,c_2} \rangle^2 / \Vert \bmu_{c_1,c_2} \Vert^2$, where $\vv_j$ is the $j$th right SV of $\bW$. For each index $i$, we report the cumulative score $\sum_{j \le i} \kappa_{j, c_1,c_2}$, which satisfies the normalization $\sum_{j=1}^p \kappa_{j,c_1,c_2} = 1$. A wide flat cumulative score suggests orthogonality $\bmu_{c_1, c_2} \bot \; \mathcal{V}_{\mathrm{bulk}}$.
A geometric interpretation is that $\bW$ is \textit{expanding} vectors in $\mathcal{V}_{\mathrm{top}}$ and \textit{shrinking} vectors in $\mathcal{V}_{\mathrm{bottom}}$. 

% Thus, empirically the feature mean differences induce expansion/shrinkage effects in contrastive learning. 

\begin{figure}[t]
    \centering
    \includegraphics[width=0.3\textwidth]{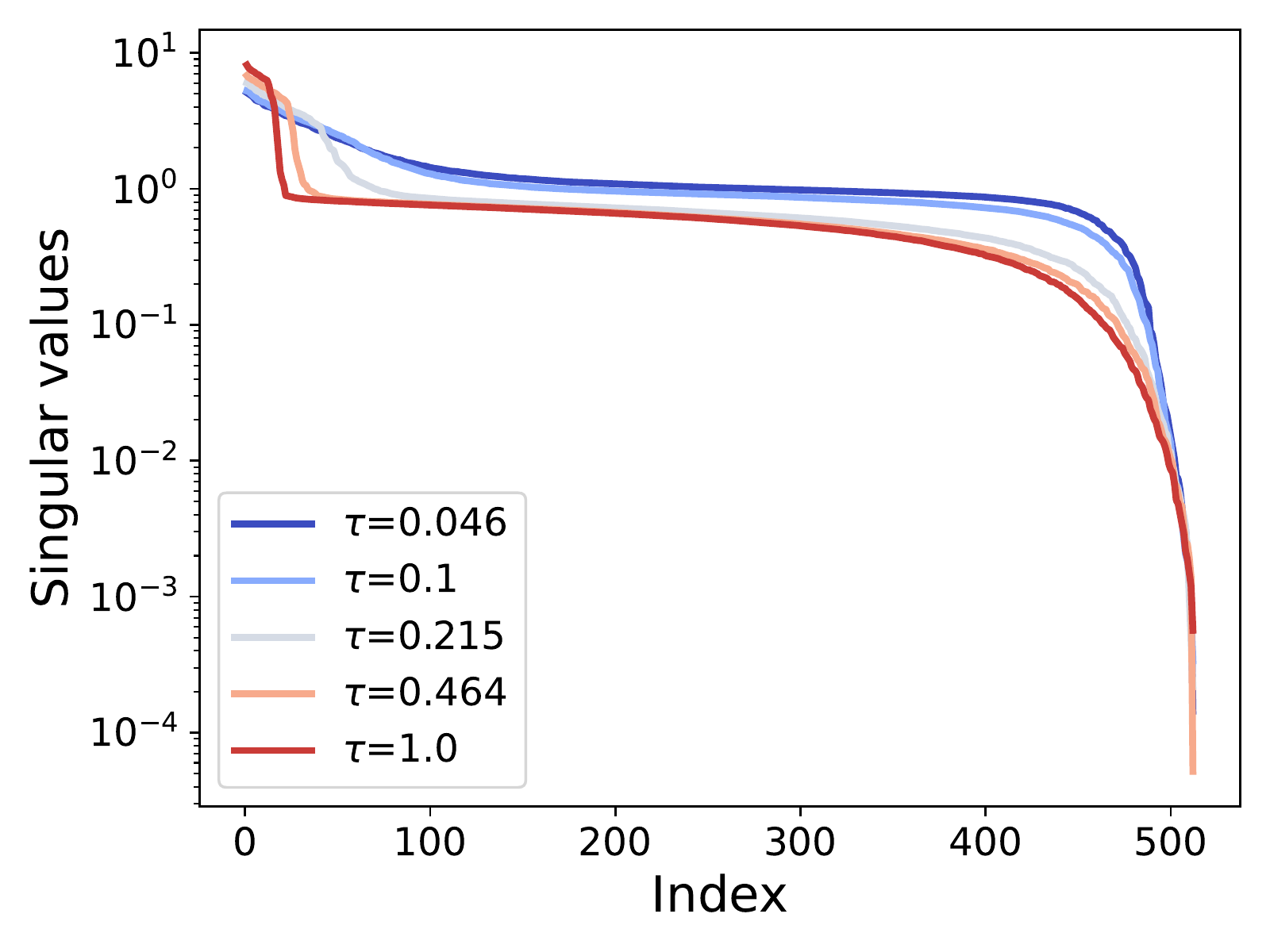} 
    \includegraphics[width=0.3\textwidth]{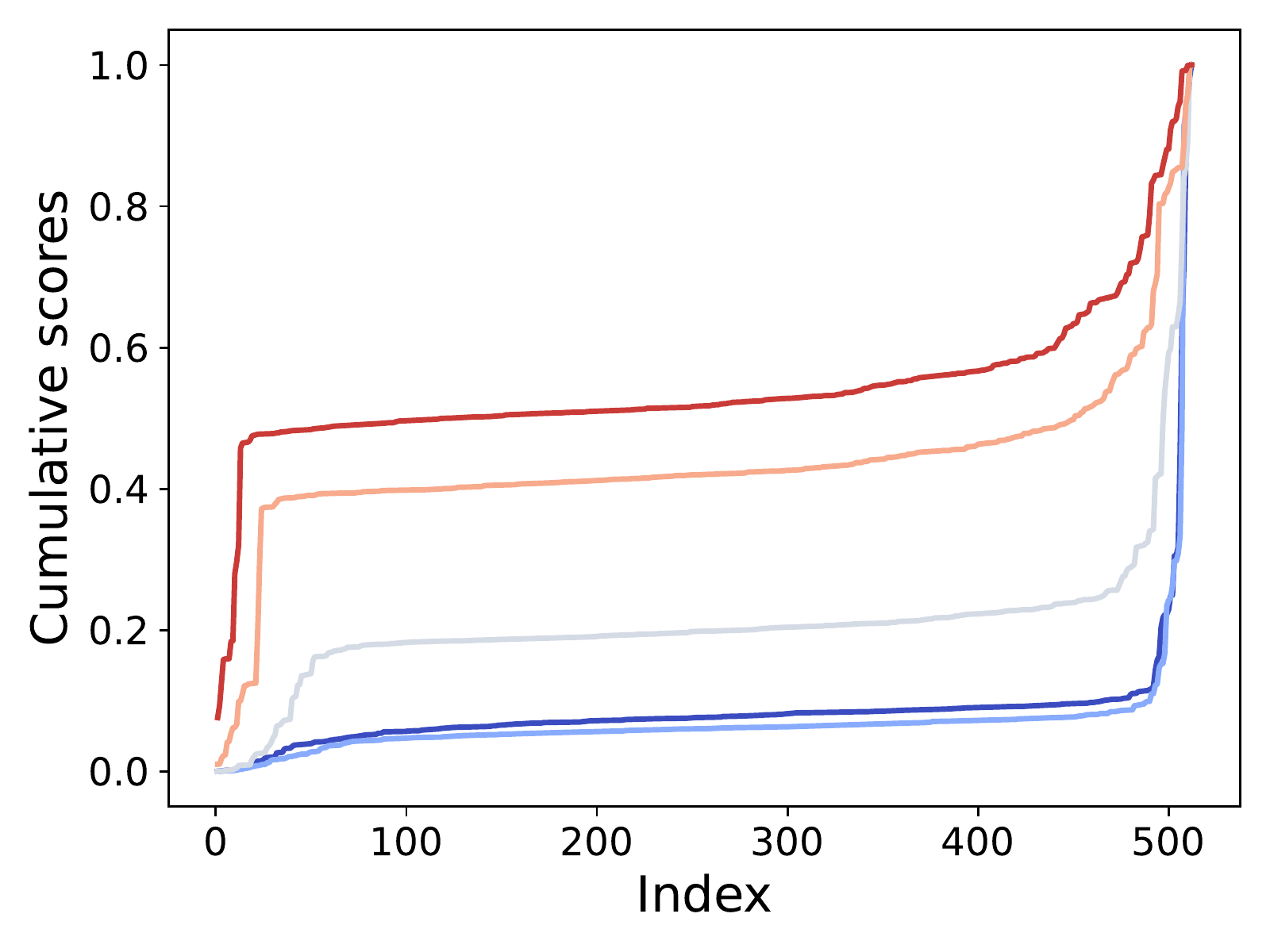} 
    \includegraphics[width=0.3\textwidth]{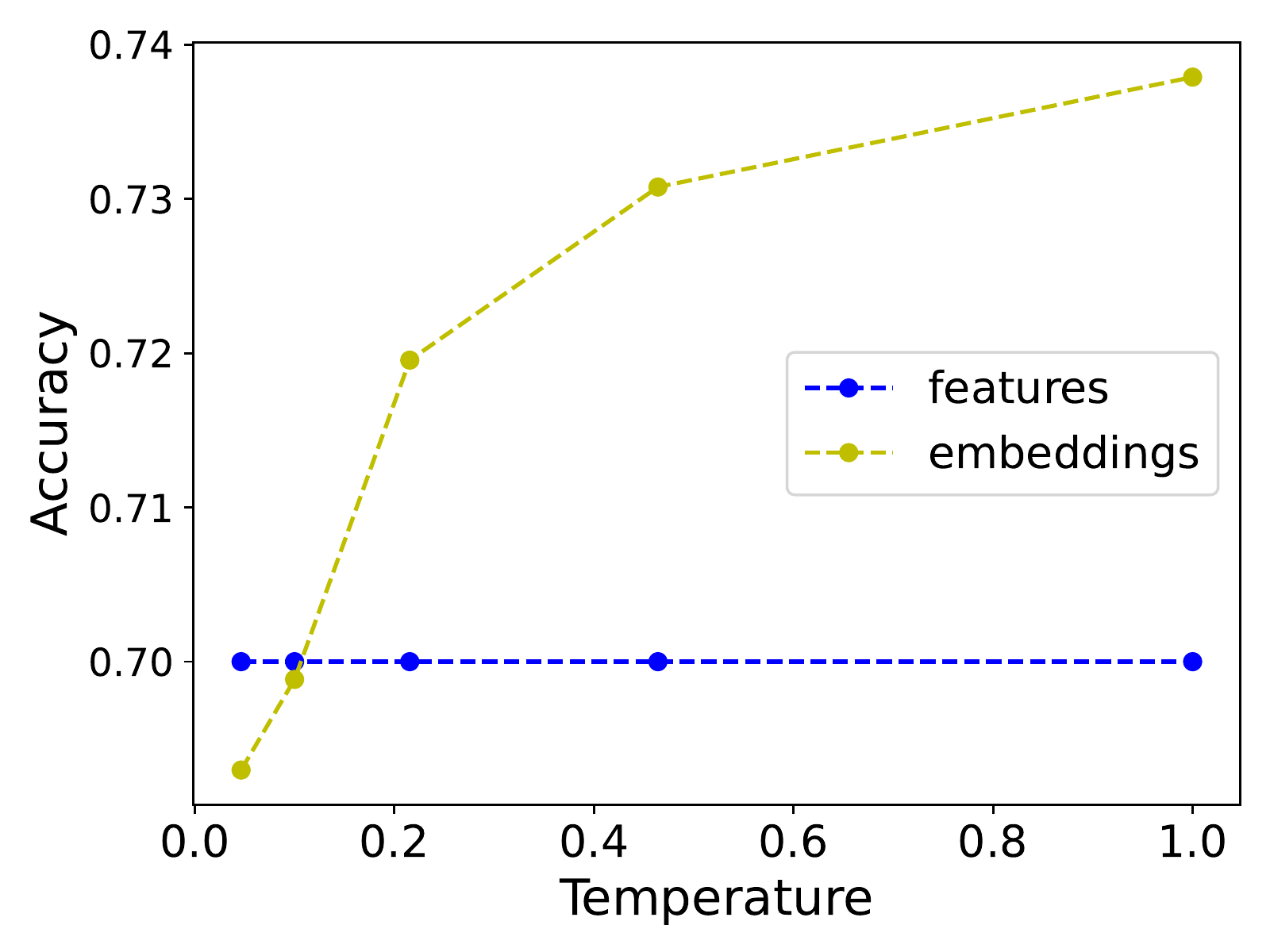}

    \caption{Results with the pretrained encoder and a one-layer linear projector $\bW \in \RR^{512 \times 512}$ under the standard SimCLR loss (ResNet-18 and 10-class \texttt{STL}-10 dataset). {\textbf{Left}}: singular values of $\bW$ with varying temperature $\tau$. {\textbf{ Middle}}: cumulative score plot 
    $\mathrm{score}_i = \sum_{j \le i} \langle \vv_j, \bmu_{c_1,c_2} \rangle^2 / \norm{ \bmu_{c_1,c_2}}^2 $ 
    where 
    % $\bmu_{c_1,c_2}$ is the class mean difference in the feature space for pair 
    $(c_1,c_2)=(7,2)$. {\textbf{ Right}}: downstream task accuracy using features $\hh$ versus using embeddings $\zz$ on the test set containing all $10$ classes.  }
    \label{fig:image_pretrained_plot}
\end{figure}

\subsection{Expansion/shrinkage affects generalization}
%\medskip
%\noindent\fbox{%
%    \parbox{\textwidth}{%
%    \bf 
%        Our second empirical finding is that 
%        the expansion/shrinkage effects are highly correlated with downstream accuracy.
%    }%
%}
%\medskip

\begin{center}
        \textit{Our second empirical finding is that 
        the expansion/shrinkage effects are highly correlated with downstream accuracy.}
\end{center}

 As we observed earlier, when the temperature $\tau$ increases, expansion gets stronger because the alignment of the top SVs with $\mu_{7,2}$ increases. Figure~\ref{fig:image_pretrained_plot} (right) shows that a larger temperature also leads to an increase in the classification accuracy using the embeddings $(z_i)_{i\le n}$. Figure~\ref{fig:image_pretrained_plot} (right) confirms the surprise that a linear projector can change the generalization performance significantly. 

Can we theoretically justify the expansion and shrinkage effects of contrastive loss and their impact on the generalization power of the learned representations?
% Further, when the encoder and projector are trained \textit{jointly}, we find in Figure~\ref{fig:resnet50} (in the appendix) that features before the projector is preferable at all temperature; but due to coupling of decoder/projector, there is no simple correspondence between temperature and expansion/shrinkage effects. Theoretical analysis is shown in Section~\ref{sec:result}, where we treat the projection head as the reparametrization of an overparametrized model and characterize the precise generalization error.
% 
% 
\subsection{A simple simulation and heuristic explanations}
Here, we present a simple simulation using synthetic input data that recreates the salient characteristics of our empirical findings. 
We hope to provide some heuristic explanations for the observed phenomena, while leaving the formal proof to later sections. 

\paragraph{A visual illustration.} The main characteristics of the empirical structure of projectors are displayed in a simple clean model. Consider a 2-component Gaussian mixture model $\frac{1}{2} \mathcal{N}(-\bmu, \bI_p) +  \frac{1}{2} \mathcal{N}(\bmu, \bI_p)$. We generate data according to this model, and then generate augmented data by adding independent perturbation (Gaussian noise). The data pass through a linear layer $\bW \in \mathbb{R}^{p \times p}$ treated as the projector. We obtain $\bW^*$ by minimizing the SimCLR loss over $\bW$.

We project and visualize our simulated data in Figure~\ref{fig:gmm-illustration} using $\bW^*$. Under two different hyperparameter settings, the original data (shown in the left plot) are either (i) extended along the separation direction $\bmu$ in the middle plot, or (ii) compressed along $\bmu$ in the right plot. 

\begin{figure*}[t]
    \centering
    \includegraphics[width=0.9\textwidth]{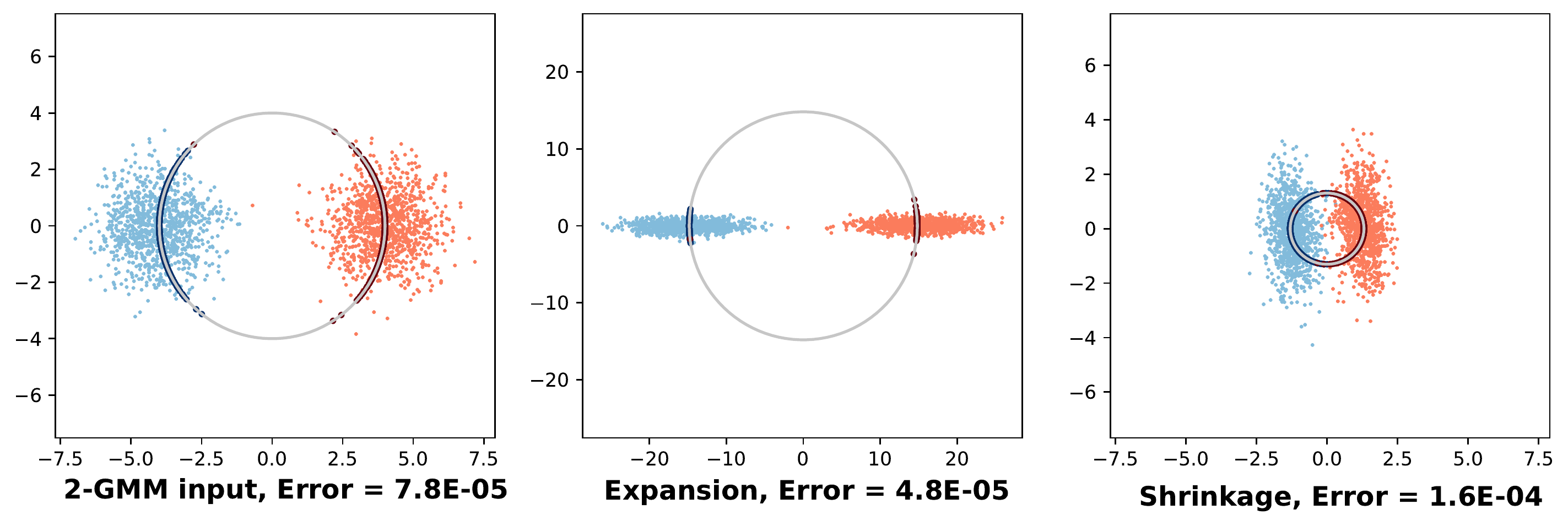}
    \caption{Contrastive loss drives expansion or shrinkage. \textbf{Left:} we generate clean 2-component GMM data. \textbf{Middle and right:} we use SimCLR loss for training $100$-dim data, and then visualize embeddings on a 2D plane under large/small augmentation and large/small temperature (middle/right plot). Also, we show the projection of embeddings onto a circle with varying radii, which is $\zz_i/\norm{\zz_i}$ used for calculating cosine similarities. \textbf{Bottom text:} expansion decreases test error, while shrinkage reduces signals and increases test error.}
    \label{fig:gmm-illustration}
\end{figure*}

The simulation result matches earlier empirical findings. (i) In the expansion regime, normalized features (projected to a circle) are more aligned, signaling a large top singular value; and in the shrinkage regime, normalized features are more uniform, signaling a smaller singular value. (ii) The test error decreases in the expansion regime that has a higher temperature, and it increases in the shrinkage regime that has a lower temperature.

\paragraph{Expansion and shrinkage promote alignment and uniformity.} In \citet{wang2020understanding}, the contrastive loss is decomposed into two components, which are called the \textit{alignment} loss and the \textit{uniformity} loss---which correspond to the two terms in the RHS of \eqref{expr:contrastiveloss}. Feature embeddings in the hypersphere are driven by the two opposite forces induced by the two loss components. 
Our expansion and shrinkage perspective offers a consistent explanation. If the projector stretches the features along the signal direction thus increasing variance in that direction, then after normalization to the hypersphere the features become more aligned. Conversely, if the projector compressed the features along the signal direction, after normalization features will become more uniform.
See Section~\ref{sec:gmm_expansion_shrinkage} for a formal analysis.
% Normalizing embeddings after expansion/shrinkage is vital to achieving alignment/uniformity, as it decreases/increases the variance in directions orthogonal to $\bmu$. 

\paragraph{Projector as reparametrization changes inductive bias.} Why does the projector, even in the linear case, change the generalization performance on downstream tasks? Reparametrization techniques such as skip connections \citep{he2016deep} and batch normalization \citep{ioffe2015batch} are commonly used in deep learning, yet how they impact generalization is rarely elucidated. Here, we give an explanation from the lens of inductive bias. When the downstream task involves linearly separable data, gradient descent on the commonly used logistic loss produces a sequence of iterates $\bbeta^{(1)}, \bbeta^{(2)}, \ldots$ that converge in direction to the max-margin solution \citep{Soudry2018TheIB}.
\begin{equation*}
\begin{split}
\begin{array}{rcl}
\max_{\bbeta} & & \min_{i \le n} y_i \langle \zz_i, \bbeta \rangle \\
\text{subject to} & & \norm{\bbeta}_2 \le 1.
\end{array}
\end{split}
\end{equation*}
The $\ell_2$ norm in the constraint plays the role of implicit regularization that is induced by the gradient descent. Applying a linear transformation to features, however, will effectively lead to a different norm. Thus, the generalization properties on downstream tasks are affected by even a linear projector. See Section~\ref{sec:result} for the formal analysis.

%\RMK{Here, if space allows (yes on ArXiv version), we can show a simple illustration that we use in slides.}

\section{Feature-level modeling via GMM}\label{sec:setup}

%In this section, we formally introduce the idealized Gaussian mixture model (GMM)
%used for studying contrastive learning, followed with some justifications about our design choices.  

%We formally introduce the idealized Gaussian mixture model (GMM), followed with justifications about our design choices.

%We formally introduce the idealized Gaussian mixture model (GMM) for investigating contrastive learning, followed with our justifications.

% In statistics and machine learning, many loss functions---such as the logistic loss, the mean squared loss, and other likelihood-based losses---are well understood. In contrast, the contrastive loss is poorly understood, even in simple settings. A natural question for contrastive learning is the following: assuming a well-trained encoder network is accessible, can we understand the contrastive loss quantitatively?

To decouple the encoder and the projector, we assume access to a well-trained encoder and model the output using a well-separated Gaussian mixture model (GMM), where each Gaussian component represents a class. 
%Admitted, this assumption is made for mathematical convenience as jointly analyzing the encoder and decoder is challenging given the current mathematical tools; see also the random feature model \cm{ADD CITATIONS.} in the analysis of supervised deep learning. However, we would like to argue that 
This assumption is also supported by at least two empirical observations: (i) the cluster structure in the feature space, and (ii) convergence speed during training. We defer detailed justifications about this assumption to Section~\ref{sec:modelmotiv}, before which we detail the model setup using GMMs.

% We are motivated to \textit{freeze the encoder} and only train the projector to understand contrastive learning from both the practical and theoretical side.  
% % In Section~\ref{sec:train-separate}, we present empirical findings that training the encoder alone provides a realistic approximation after a few epochs.
% Freezing the encoder allows us to model features (namely outputs of the encoder) directly. We will treat features as samples from a Gaussian mixture model,  %which is based on the empirical evidence in Figure~\ref{fig:cluster} that pretrained features form well-separated clusters. 

% This feature-level modeling approach allows us to decouple the encoder and the projector. 

%The encoder network (e.g., ResNet-18) is usually very complicated, which hinders analyses and theoretical understanding. 

%Fortunately, the behaviors of the encoder and the projector in the training process can be studied under simpler settings due to the following empirical observations: (1) the encoder and the projector present different time scales in convergence, where the projector converges faster then the encoder; (2) the encoder encourages linearly separable classes even after a few epochs.

\subsection{Model setup}\label{sec:GMM-setup}
Mathematically, we assume that the outputs of the encoder (i.e., features) are generated from a 2-component GMM (or 2-GMM), that is 
$$
\hh_{0,i} \stackrel{\mathrm{i.i.d.}}{\sim} \frac{1}{2}\, \mathcal{N} ( -\bmu, \bI_p ) + \frac{1}{2}\, \mathcal{N} ( \bmu, \bI_p ),
$$ 
where $\bmu \in \mathbb{R}^{p}$ denotes the mean difference between two classes. Conditional on $\hh_{0,i}$, we construct two augmentations (views) via 
$$
\hh_i, \hh_i^+ \mid \hh_{0,i} \stackrel{\mathrm{i.i.d.}}{\sim} \mathcal{N}(\hh_{0,i}, \sigma_{\aug}^2 \bI_p),
$$
where $\sigma_{\aug}^2>0$ represents the augmentation strength.
As a result, $(\hh_i, \hh_i^+)$ is viewed as a positive pair, and $(\hh_i, \hh_j)$, $(\hh_i, \hh_j^+)$ $(i \neq j)$ are considered negative pairs. 
For the projector, we focus on \emph{linear} projectors, i.e., $\zz = \gggg_{\bW}(\hh) = \bW \hh$, where $\bW \in \mathbb{R}^{p \times p}$.
% \begin{equation}\label{def:projector}
%     \zz = \gggg_{\bW}(\hh) = \bW \hh, \qquad \text{where}~\bW \in \mathbb{R}^{p \times p} \; .
%     \end{equation}
We present a modified SimCLR loss that is more amenable to theoretical analysis.
Given the temperature $\tau>0$ and projector outputs $(\zz_i)_{i \le n}$ where $\zz_i = \gggg_{\bW}(\hh_i)$, we define 
    \begin{align}
        &\mathcal{L}_n(\bW) \coloneqq -\frac{1}{n\tau} \sum_{i=1}^n \E_{\zz_i^+, \zz_i \mid \hh_{0,i}} \big[ \mathrm{sim}^*(\zz_i, \zz_i^+) \big] + \log \left(\frac{1}{n} \sum_{i=1}^n \E_{\zz_i^-, \zz_i \mid \hh_{0,i}} \big[ e^{ \mathrm{sim}^*(\zz_i^-, \zz_i)/\tau } \big] \right), \label{def:Ln} \\
        &\text{where}~~\mathrm{sim}^*(\zz_1, \zz_2) \coloneqq \frac{-\| \zz_1 - \zz_2 \|^2/2}{\{ \E[\|\zz_1\|^2] \}^{1/2} \cdot \{ \E[\|\zz_2\|^2] \}^{1/2}} \; . \notag
    \end{align}
Here, $\E_{\zz_i^+, \zz_i \mid \hh_{0,i}}$ denotes expectation over random augmentations \textit{conditioning} on $\hh_{0,i}$ (i.e., using every possible positive pair), $\zz_i^-$ denotes an independent copy of $\zz_i$, and $\E_{\zz_i^-,\zz_i \mid \hh_{0,i}}$ denotes expectation over the negative pair $(\zz_i, \zz_i^-)$ conditioning on $\hh_{0,i}$ (i.e., using every possible negative pair).

Our modified SimCLR loss is different from the original SimCLR in the following ways: (i) We consider full batch and training for infinite time (so averages over augmentations are replaced by expectations); (ii) similar to \citet{wang2020understanding}, we interchange $\log$ with the summation; and (iii) we use a variant of similarity score $\mathrm{sim}^*$ where we replace instance-based normalization with the population level normalization. 

\begin{center}
\textit{We emphasize that this modified loss is only used for theoretical analysis. In all experiments, we use the original SimCLR contrastive loss. }
\end{center}

\subsection{Why fixing a well-trained encoder? }\label{sec:modelmotiv}
Arguably, our theoretical study departs from common practice as we assume a well-trained encoder and fix it to be a simple Gaussian mixture model. However, we would like to argue that this assumption sheds light on the practice.
%However, we would like to argue that this assumption makes sense in practice and we are not over-simplifying when investigating the role of the projectors. 

\paragraph{Convergence of projectors.} Training an encoder and projector jointly creates complex dynamics that are beyond the scope of this paper. Yet, we observe in experiments that the dynamics are significantly simplified in the later stage of training. Let $( \btheta^{(t)}, \bW^{(t)})$ be the parameters at epoch $t$, and $\tilde \bW^{(t)}$ be the optimal projector parameters $\tilde \bW^{(t)} \coloneqq \argmin_{\bW}  \mathcal{L}(\btheta^{(t)}, \bW)$ while freezing encoder parameters $\btheta^{(t)}$ at epoch $t$. It can be seen from Figure~\ref{fig:train_freeze} in the appendix that when $t \geq 50$, 
\begin{equation*}
\norm{ \tilde \bW^{(t)}  - \bW^{(t)} } \ll \norm{ \bW^{(t)} },
\end{equation*}
which suggests that the projector is close to (conditionally) optimal. This allows us to decouple the dynamics and encoders and projectors into two separate optimization problems:
\begin{enumerate}
\item For fixed encoder $\btheta$, solve $\min_{\bW}  \mathcal{L}(\btheta, \bW)$. 
\item Solve $\min_{\btheta}  \mathcal{L}(\btheta, \tilde \bW(\btheta))$ where $\tilde \bW(\btheta)$ is the minimizer of the first part.
\end{enumerate}
Our focus is the first optimization problem, assuming that the encoder $\btheta$ is sufficiently good, e.g., $\btheta = \btheta^{(t)}$ for large $t$. We remark that the feature-learning process and training dynamics of encoder networks are complicated and elusive for analysis, so it is not uncommon to decouple multiple layers or components by freezing some of them \cite{han2021neural, bietti2022learning}.

%In the neural collapse literature, the last layer parameters are observed to enjoy proximity to the \textit{central path}, which allows similar decoupling of the optimization problem \cite{han2021neural}, though they studied a much simpler mean square loss minimization problem.

\begin{figure}[t]
    \centering
    \begin{subfigure}{0.30\textwidth}
        \includegraphics[width=0.9\textwidth]{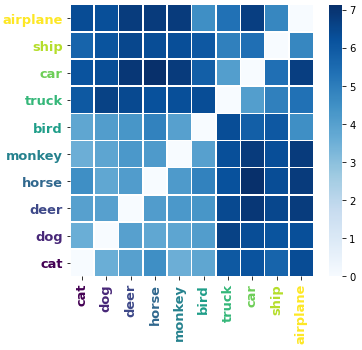}
        \label{fig:cluster1}
    \end{subfigure}
    \hspace{-2em}
    \begin{subfigure}{0.68\textwidth}
        \includegraphics[width=0.9\textwidth]{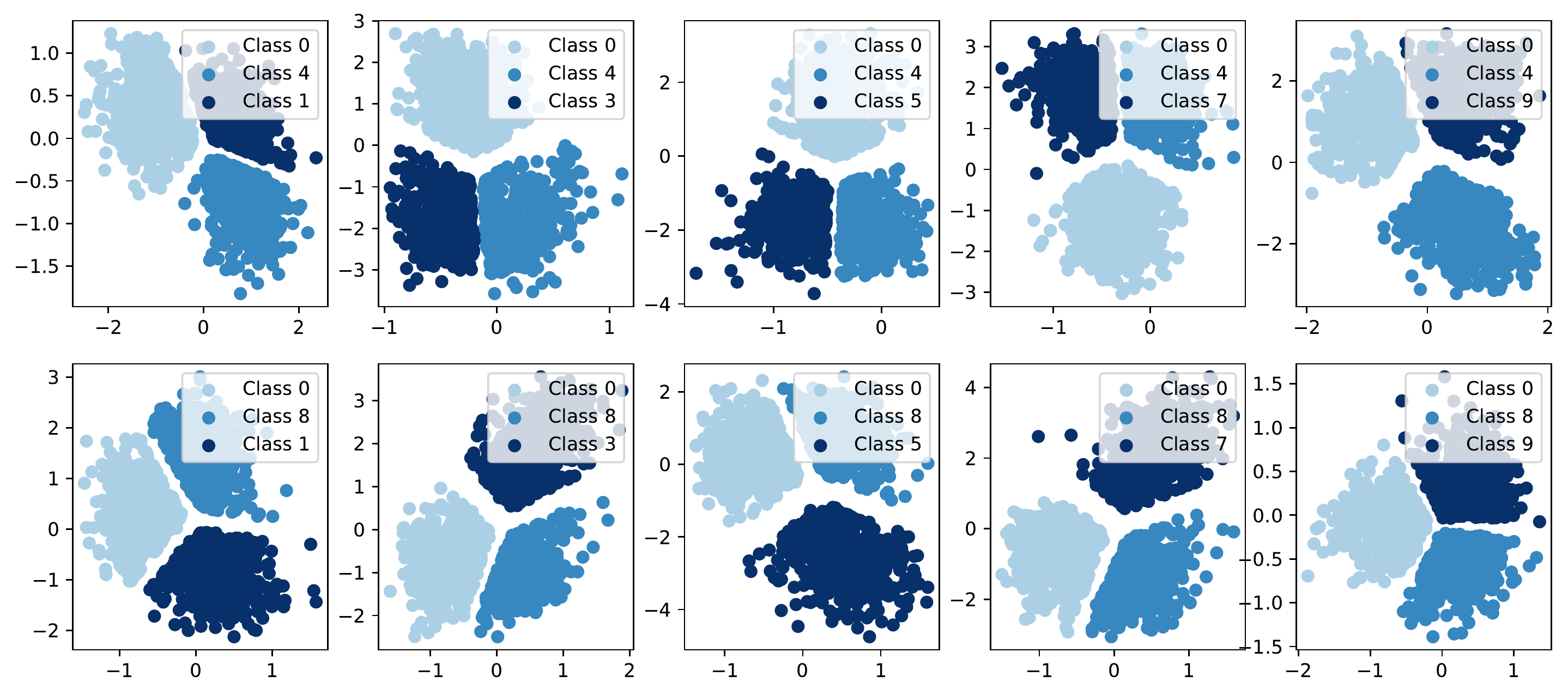}
        \label{fig:class_2d}
    \end{subfigure}
    \caption{Pretrained \texttt{STL}-10 features from the standard SimCLR model exhibit a strong cluster structure. \textbf{Left}: positive pairwise margins of linear classifier imply pairwise linear separability. \textbf{Right}: 2D visualization of 3-class subsets of features after $30$ epochs. The cluster structure emerges quickly.}
    \label{fig:cluster}
\end{figure}

\paragraph{Cluster structure in feature space.}
%%We often observe well-separated clusters in the learned representations, and these clusters contain class information \cite{bohm2022unsupervised}. We examine the cluster structure of pretrained features both visually and quantitatively. In Figure~\ref{fig:cluster} (left plot), we train a linear classifier for each pair and calculate the margin. All pair-wise margins are strictly positive, which indicates that all classes are linearly separable. In Figure~\ref{fig:cluster}(right plot), we visualize pretrained features by following the same projection technique \citet{muller2019does}.
%Strong cluster structure motivates our GMM.

When the encoders are trained well, we observe well-separated clusters formed by pretrained features in contrastive learning, and these clusters contain class information \citep{bohm2022unsupervised}. 
We examine the cluster structure of pretrained features both visually and quantitatively. In Figure~\ref{fig:cluster} (left plot), for every pair of classes, we perform max-margin linear classification on a subset of pretrained features. We find that every pair achieves zero training error with a large margin, which indicates that all classes are linearly separable. In Figure~\ref{fig:cluster} (right plot), we visualize pretrained features by following the same projection technique \citet{muller2019does}. 
For well-clustered pretrained features, the GMM is a natural model, which is the starting point of our analysis.

%\section{Expansion and shrinkage of contrastive learning}\label{sec:gmm_expansion_shrinkage}
%\input{ctr_loss}

\section{\mbox{Contrastive loss drives expansion/shrinkage}}\label{sec:gmm_expansion_shrinkage}
% !TEX root = paper.tex

%This section is dedicated to the study of the mechanism of the contrastive loss functions. 
% We identify the expansion and shrinkage effects driven by the contrastive loss. 
% Exemplified in Figure~\ref{fig:gmm-illustration}, the expansion/shrinkage perspective offers explanations for empirical phenomena (e.g., dimensional collapse).

Under the model setup in Section~\ref{sec:GMM-setup}, we hope to characterize the minimizer of $\mathcal{L}_n(\bW)$. However, the loss $\mathcal{L}_n$ is very complex, and quantitative results are hard to obtain. It turns out that, under reasonable simplification, the minimization admits an explicit solution. We then quantify the approximation error in the aforementioned simplification.

\subsection{Quantitative results under simplification}
%\paragraph{Infinite-sample scenario.}
We start our analysis with the infinite-sample case, i.e., $n=\infty$. Let $\mathcal{L}(\bW) \coloneqq \E_{\zz} \mathcal{L}_n(\bW)$ be the population loss. Denote by $\bW = \sum_{j=1}^p \sigma_j \uu_j \vv_j^\top$ the SVD of $\bW \in \mathbb{R}^{p \times p}$, where $\sigma_1\ge\ldots \sigma_p\ge 0$, $[\uu_1,\ldots, \uu_p]$ and $[\vv_1,\ldots, \vv_p]$ are orthogonal matrices. Define 
$$
\alpha \coloneqq \alpha(\bW) = (1+\sigma_\aug^2)\| \bW \|_{\mathrm{F}}^2 + \| \bW \bmu \|^2.
$$
A straightforward calculation yields the following decomposition of the population loss.
%To understand this phenomenon, we make a few simplifications and modify the contrastive loss, which now reads
% \begin{align*}
% & \calL(\bW) = \frac{1}{2\tau} \cdot \tfrac{\E[  \| \bW \hh - \bW \hh^+  \|^2]}{\big(\E[\| \bW \hh \|^2 ] \cdot \E [ \| \bW \hh^+ \|^2] \big)^{1/2}}+ \log \left( \E \exp \Big( -\frac{1}{2\tau} \cdot \tfrac{\| \bW \hh - \bW \hh^- \|^2  }{\big(\E[\| \bW \hh \|^2 ] \cdot \E [ \| \bW \hh^- \|^2] \big)^{1/2}} \Big) \right) \\
% &\eqqcolon \calL_{\aalign}(\bW) + \calL_{\unif}(\bW),
% \end{align*}
% where we decompose the loss into an alignment component and a uniformity component as in \citet{wang2020understanding}. The loss function can be expressed in a more informative form. Denote by $\bW = \sum_{j=1}^p \sigma_j \uu_j \vv_j^\top$ the SVD of $\bW \in \mathbb{R}^{p \times p}$, and let $\alpha \coloneqq (1+\sigma_\aug^2)\| \bW \|_{\mathrm{F}}^2 + \| \bW \bmu \|^2$.
\begin{proposition}\label{prop:loss}
We have $\calL(\bW) = \calL_{\aalign}(\bW) + \calL_{\unif}(\bW)$ with
\begin{align}
\calL_{\aalign}(\bW) &= \frac{\sigma_{\aug}^2}{\tau \alpha} \| \bW \|_{\mathrm{F}}^2, \notag \\
%&\calL_{\unif}(\bW) = -\frac{1}{2} \sum_{j=1}^p \log \left(1 + \frac{4 \sigma_j^2}{\tau \alpha} \right) + \log \left( 1 + \exp\Big( - \sum_{j=1}^p \frac{4\sigma_j^2 \langle \bmu, \vv_j \rangle^2}{4\sigma_j^2 + \tau \alpha } \Big) \right) - \log 2
\calL_{\unif}(\bW) &= - \log 2 -\frac{1}{2} \sum_{j=1}^p \log \left(1 + \frac{2(1+\sigma_\aug^2) \sigma_j^2}{\tau \alpha} \right) + \log \Big( 1 + \exp\Big( - \sum_{j=1}^p \frac{2\sigma_j^2 \langle \bmu, \vv_j \rangle^2}{2(1+\sigma_\aug^2)\sigma_j^2 + \tau \alpha } \Big) \Big) \notag \\
&=: - \log 2 + \calL_{\unif}^{(1)}(\bW) + \calL_{\unif}^{(2)}(\bW).
\end{align}
\end{proposition}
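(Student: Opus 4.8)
The plan is to compute the population loss $\mathcal{L}(\bW) = \E_{\zz}\mathcal{L}_n(\bW)$ directly by exploiting the Gaussian structure. Recall that $\zz_i = \bW\hh_i$, and conditional on $\hh_{0,i}$ we have $\hh_i, \hh_i^+ \iid \mathcal{N}(\hh_{0,i}, \sigma_{\aug}^2\bI_p)$, with $\hh_{0,i} \sim \frac12\mathcal{N}(-\bmu,\bI_p)+\frac12\mathcal{N}(\bmu,\bI_p)$. The two terms — alignment and uniformity — can be handled separately, and both reduce to computing Gaussian moments of quadratic forms. The main computational engine is: for $\zz_1 - \zz_2 = \bW(\hh_i - \hh_i^+)$ (positive pair) or $\bW(\hh_i^- - \hh_i)$ (negative pair), the difference is a Gaussian vector whose law I need to track carefully through the two levels of randomness.

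\textbf{Step 1 (normalization constant).} First I would compute $\E[\|\zz_1\|^2] = \E[\|\bW\hh\|^2]$. Since $\hh = \hh_0 + \sigma_{\aug}\bg$ with $\bg \sim \mathcal{N}(0,\bI_p)$ independent of $\hh_0$, and $\E[\hh_0\hh_0^\top] = \bmu\bmu^\top + \bI_p$, we get $\E[\|\bW\hh\|^2] = \tr(\bW^\top\bW(\bmu\bmu^\top + \bI_p + \sigma_{\aug}^2\bI_p)) = (1+\sigma_{\aug}^2)\|\bW\|_{\mathrm F}^2 + \|\bW\bmu\|^2 = \alpha$. By symmetry the same holds for $\zz_2$, so the denominator in $\mathrm{sim}^*$ is exactly $\alpha$.

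\textbf{Step 2 (alignment loss).} Here I need $\E_{\zz_i^+,\zz_i\mid\hh_{0,i}}[\mathrm{sim}^*(\zz_i,\zz_i^+)] = -\frac{1}{2\alpha}\E[\|\bW(\hh_i-\hh_i^+)\|^2\mid\hh_{0,i}]$. Since $\hh_i - \hh_i^+ \mid \hh_{0,i} \sim \mathcal{N}(0, 2\sigma_{\aug}^2\bI_p)$ — the $\hh_{0,i}$ cancels — this conditional expectation is $-\frac{1}{2\alpha}\cdot 2\sigma_{\aug}^2\|\bW\|_{\mathrm F}^2 = -\sigma_{\aug}^2\|\bW\|_{\mathrm F}^2/\alpha$, independent of $\hh_{0,i}$. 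Dividing by $\tau$ and negating (per the definition of $\mathcal{L}_n$) gives $\calL_{\aalign}(\bW) = \frac{\sigma_{\aug}^2}{\tau\alpha}\|\bW\|_{\mathrm F}^2$.

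\textbf{Step 3 (uniformity loss — the main obstacle).} This is the hard part. I need $\log\big(\E_{\hh_0}\E_{\zz^-,\zz\mid\hh_0}[e^{\mathrm{sim}^*(\zz^-,\zz)/\tau}]\big)$ where $\zz^-$ is an independent copy of $\zz$ (so $\zz,\zz^-$ are i.i.d.\ unconditionally, but the inner conditional expectation treats them as both built from the same $\hh_0$? — I need to read the definition carefully: $\E_{\zz_i^-,\zz_i\mid\hh_{0,i}}$ means both are drawn as augmentations of the \emph{same} $\hh_{0,i}$). So conditional on $\hh_{0,i}$, $\zz - \zz^- = \bW(\hh_i - \hh_i^-)$ where $\hh_i - \hh_i^- \mid \hh_{0,i} \sim \mathcal{N}(0, 2\sigma_{\aug}^2\bI_p)$ — wait, that again kills $\hh_0$, which would make the uniformity loss trivial and wrong. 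Let me reconsider: more likely $\zz_i^-$ uses an independent $\hh_{0}$ (a genuine negative sample from another example), so $\hh_i - \hh_i^- \mid (\hh_{0,i},\hh_{0,i}^-) \sim \mathcal{N}(\hh_{0,i}-\hh_{0,i}^-, 2\sigma_{\aug}^2\bI_p)$, and then I average over $\hh_{0,i}^- \sim \frac12\mathcal{N}(-\bmu,\bI)+\frac12\mathcal{N}(\bmu,\bI)$ and over $\hh_{0,i}$. Working in the SVD basis of $\bW$, write $\bW(\hh_i - \hh_i^-) = \sum_j \sigma_j \uu_j \langle\vv_j, \hh_i-\hh_i^-\rangle$; the coordinates $\langle\vv_j,\hh_i-\hh_i^-\rangle$ are conditionally independent Gaussians given the two cluster assignments. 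The exponent $\mathrm{sim}^*/\tau = -\|\bW(\hh_i-\hh_i^-)\|^2/(2\tau\alpha)$ is a negative-definite quadratic form, so the Gaussian MGF formula applies coordinatewise: $\E[e^{-t X^2}] = (1+2t\sigma^2)^{-1/2}\exp(-t m^2/(1+2t\sigma^2))$ for $X\sim\mathcal{N}(m,\sigma^2)$. Applying this with $t = \sigma_j^2/(2\tau\alpha)$, conditional variance $2(1+\sigma_{\aug}^2)$ in direction $\vv_j$ (the $2\sigma_{\aug}^2$ from augmentation plus $2$ from the two independent $\hh_0$ Gaussians), and conditional mean $\pm\langle\vv_j,\bmu\rangle - (\pm\langle\vv_j,\bmu\rangle)$ depending on the four sign combinations of the two cluster labels. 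The mean $\langle\vv_j, \hh_{0,i}-\hh_{0,i}^-\rangle$ is $0$ with probability $1/2$ (same cluster) and $\pm 2\langle\vv_j,\bmu\rangle$ with probability $1/4$ each — but these are correlated across $j$ through the \emph{shared} cluster labels, so I cannot factor the expectation over the sign pattern into a product over $j$; instead I factor the Gaussian part over $j$ (conditionally on labels) and then take the expectation over the joint label configuration, which collapses by symmetry to: with probability $1/2$, contribution $\prod_j (1+2t_j\cdot 2(1+\sigma_{\aug}^2))^{-1/2}$; with probability $1/2$, that same product times $\exp(-\sum_j \frac{t_j (2\langle\vv_j,\bmu\rangle)^2 }{1+4t_j(1+\sigma_{\aug}^2)})$ — wait, need care: the $\pm$ sign doesn't matter since it's squared, and the two nonzero-mean cases both give mean-squared $= 4\langle\vv_j,\bmu\rangle^2$ simultaneously for all $j$. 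So the inner expectation equals $\big(\prod_j (1 + \tfrac{2(1+\sigma_{\aug}^2)\sigma_j^2}{\tau\alpha})^{-1/2}\big)\cdot\frac12\big(1 + \exp(-\sum_j \tfrac{2\sigma_j^2\langle\vv_j,\bmu\rangle^2}{2(1+\sigma_{\aug}^2)\sigma_j^2 + \tau\alpha})\big)$. Taking $\log$ and distributing the product into a sum of logs yields exactly the claimed $-\log 2 + \calL_{\unif}^{(1)} + \calL_{\unif}^{(2)}$.

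\textbf{Step 4 (assembly).} Add the alignment and uniformity pieces. The only genuinely delicate point — and the one I'd flag as the main obstacle — is correctly bookkeeping the two-level randomness in the uniformity term: getting the conditional covariance right ($2(1+\sigma_{\aug}^2)$ along each $\vv_j$, combining augmentation noise and the variance of the clean features), and recognizing that the $2^2$ cluster-label configurations collapse by symmetry to the clean two-case split ($\tfrac12$ weight each) because $\langle\vv_j,\bmu\rangle^2$ is sign-insensitive and the labels are shared across all coordinates. Everything else is a mechanical application of the scalar Gaussian MGF identity in the SVD eigenbasis of $\bW$, using $\|\bW\|_{\mathrm F}^2 = \sum_j\sigma_j^2$ and $\|\bW\bmu\|^2 = \sum_j\sigma_j^2\langle\vv_j,\bmu\rangle^2$.
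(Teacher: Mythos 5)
Your proposal is correct and follows essentially the same route as the paper's proof: compute $\alpha$ as the normalization, reduce the alignment term to the variance of $\hh-\hh^+\sim\calN(0,2\sigma_\aug^2\bI)$, and for the uniformity term treat $\hh,\hh^-$ as independent draws from the augmented GMM, condition on the cluster labels (which by symmetry collapse to the two cases with weight $\tfrac12$ each), and apply the scalar Gaussian MGF identity coordinatewise in the SVD basis of $\bW$. Your resolution of the momentary ambiguity about $\zz_i^-$ (an independent draw from the mixture, not a re-augmentation of the same $\hh_{0,i}$) matches the paper's intended definition, and all the resulting formulas agree with Proposition~\ref{prop:loss}.
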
 

\noindent See Appendix~\ref{sec:loss_calc} for the proof.
\medskip

To understand the two loss components, let us make some observations. 
\begin{enumerate}
\item The main effect of $\calL_{\aalign}(\bW)$ is to align positive pairs. If we only minimize $\calL_{\aalign}(\bW)$, then we need to maximize\footnote{Note that $\alpha$ also depends on $\bW$.} $\| \bW \bmu \|$ for any fixed value of $\| \bW \|$. In other words, $\bW$ needs to stretch $\bmu$ as much as possible. This is intuitive since positive pairs after such stretching and normalization will be closer to each other; see the middle plot of Figure~\ref{fig:gmm-illustration}. 
\item The main effect of $\calL_{\unif}(\bW)$ is to expel negative pairs. If we only minimize one critical part $\calL_{\unif}^{(1)}(\bW)$, then given fixed $(\sigma_j)_{j \le p}$ we need to minimize $\| \bW \bmu \|$ (i.e., $\bW$ compresses $\bmu$ as much as possible), leading to $|\langle \bmu , \vv_p \rangle|=1$. Embeddings after shrinkage and normalization will be more diversely distributed; see the right plot of Figure~\ref{fig:gmm-illustration}. 
\end{enumerate}

We can gain further insights by considering a \emph{first-order approximation} of the loss: intuitively, if the signal strength $\| \bmu \|^2$ is large, then we may expect $\alpha \gg \| \bW \|_{\mathrm{F}}^2 \ge \max_j \sigma_j^2$, so we can try a first-order expansion by treating $\alpha$ as a diverging quantity.
% so we can replace $4\sigma_j^2 + \tau \alpha$ with $\tau \alpha$ and use the approximation $\log(1+x) \approx x$ in the $L_{\unif}(\bW)$ component. 
This motivates us to define an approximate loss:
\begin{align}\label{eq:approx-loss}
\tilde \calL(\bW) &\coloneqq \frac{\sigma_{\aug}^2}{\tau \alpha} \| \bW \|_{\mathrm{F}}^2 - \log 2 - \sum_{j=1}^p \frac{(1+\sigma_\aug^2) \sigma_j^2}{\tau \alpha}+  \log \Big( 1 + \exp\Big( - \sum_{j=1}^p \frac{2\sigma_j^2 \langle \bmu, \vv_j \rangle^2}{ \tau \alpha } \Big) \Big) \nonumber \\
& = - \frac{t}{\tau} + \log\Big( \frac{1}{2} + \frac{1}{2}\exp\big( - \frac{2}{\tau} + \frac{2(1+\sigma_\aug^2)}{\tau}t \big) \Big), 
% \widetilde \calL_{\unif}(\bW) & \coloneqq  - \log 2 - \sum_{j=1}^p \frac{(1+\sigma_\aug^2) \sigma_j^2}{\tau \alpha}+  \log \Big( 1 + \exp\Big( - \sum_{j=1}^p \frac{2\sigma_j^2 \langle \bmu, \vv_j \rangle^2}{ \tau \alpha } \Big) \Big).
\end{align}
where $t \coloneqq t(\bW)= \frac{\|\bW\|_{\mathrm{F}}^2}{\alpha(\bW)}$. 
% By definition, we have $L_{\unif}(\bW) \ge \widetilde L_{\unif}(\bW)$. A nice property about the approximate loss is that it is a univariate function in $\|\bW\|_{\mathrm{F}}^2 / \alpha$. Therefore, letting 
%where $t \coloneqq \frac{\|\bW\|_{\mathrm{F}}^2}{\alpha}$. 
% we have a further simplified formula
% \begin{align*}
% %\widetilde \calL(\bW) := \calL_{\aalign}(\bW) + \widetilde \calL_{\unif}(\bW) = \frac{\sigma_\aug^2}{\tau} t - \frac{2}{\tau} t + \log\Big( 1 + \exp\big( - \frac{4}{\tau}(1-t) \big) \Big), \qquad t:= \frac{\|\bW\|_F^2}{\| \bW \|_F^2 + \| \bW \bmu \|^2 }.
% \widetilde \calL(\bW) &\coloneqq \calL_{\aalign}(\bW) + \widetilde \calL_{\unif}(\bW) = - \frac{t}{\tau} + \log\Big( \frac{1}{2} + \frac{1}{2}\exp\big( - \frac{2}{\tau} + \frac{2(1+\sigma_\aug^2)}{\tau}t \big) \Big).
% \end{align*}
As an approximation, it always holds that $\mathcal{L}(\bW) \ge \widetilde{L}(\bW)$. Now that $\widetilde{L}(\bW)$ is actually a univariate function, we can state precise quantitative results.

\paragraph{Theoretical prediction.} Now we are ready to theoretically demonstrate the two key effects---expansion and shrinkage. 
Let $F(t; \sigma_\aug^2, \tau) \coloneqq \frac{\mathsf{d}}{\mathsf{d}t} \widetilde \calL(\bW)$. 

\begin{definition}
A three-parameter configuration $(\sigma_\aug^2, \tau, \norm{\bmu}^2)$ is said to be in the 
\begin{itemize}[topsep=-3pt]
\setlength\itemsep{0.01em}
\item expansion regime if $F\big(\frac{1}{1 +  \sigma_\aug^2 + \norm{\bmu}^2}; \sigma_\aug^2, \tau\big) > 0$,
\item shrinkage regime if $F\big(\frac{1}{1 +  \sigma_\aug^2 + \norm{\bmu}^2}; \sigma_\aug^2, \tau\big) < 0$. 
\end{itemize}
\end{definition}

The expansion and shrinkage regimes characterize the distinctive behavior of the projector $\bW$ at minimization, as stated in the next theorem.

\begin{theorem}[expansion/shrinkage phase transition]\label{thm:phase}
The properties of the minimizer $\bW^*$ of $\tilde \calL(\bW)$ depends on the configuration of $(\sigma_\aug^2, \tau, \norm{\bmu}^2)$. Specifically, with the notation $t^* \coloneqq t(\bW^*)$,
\begin{itemize}[topsep=-3pt]
\setlength\itemsep{0.01em}
    \item in the expansion regime, $\widetilde \calL(\bW)$ is minimized at $t^* = 1/(1+ \sigma_\aug^2+ \norm{\bmu}^2)$, which happens if and only if $\sigma_2=\cdots=\sigma_p=0$ and $\langle \vv_1, \bmu \rangle^2 = \| \bmu \|^2$; 
    \item in the shrinkage regime, $\widetilde \calL(\bW)$ is minimized at 
    \[
    t^* = \frac{1}{1+\sigma_\aug^2} \left(1 - \frac{\tau}{2} \log(1+2\sigma_\aug^2) \right) \in \left[\frac{1}{1+ \sigma_\aug^2+\norm{\bmu}^2},\frac{1}{1+\sigma_\aug^2}\right]
    \]
    that solves $F(t) = 0$. Moreover,  if $\sigma_\aug^2 \to 0$, then $t^* \to 1$ and $\norm{\bW \bmu} / \norm{\bW}_{\mathrm{F}} \to 0$. 
\end{itemize}
\end{theorem}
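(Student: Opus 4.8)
The plan is to exploit the observation, visible from \eqref{eq:approx-loss}, that $\tilde\calL(\bW)$ depends on $\bW$ only through the scalar $t=t(\bW)=\norm{\bW}_{\mathrm F}^2/\alpha(\bW)$, and that $t(c\bW)=t(\bW)$ for every $c\neq0$. Hence minimizing $\tilde\calL$ over $\bW\in\RR^{p\times p}\setminus\{0\}$ is exactly the same as minimizing the one-dimensional function $g(t):=-t/\tau+\log\!\big(\tfrac12+\tfrac12\exp(-2/\tau+2(1+\sigma_\aug^2)t/\tau)\big)$ over the set $I$ of attainable values of $t(\bW)$. The first step is to determine $I$. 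Writing the SVD $\bW=\sum_j\sigma_j\uu_j\vv_j^\top$, we have $\norm{\bW}_{\mathrm F}^2=\sum_j\sigma_j^2$ and $\norm{\bW\bmu}^2=\sum_j\sigma_j^2\langle\vv_j,\bmu\rangle^2$, so
\[
t(\bW)=\frac{1}{\,1+\sigma_\aug^2+w(\bW)\,},\qquad w(\bW):=\frac{\sum_j\sigma_j^2\langle\vv_j,\bmu\rangle^2}{\sum_j\sigma_j^2}.
\]
Since $w(\bW)$ is a convex combination of the numbers $\langle\vv_j,\bmu\rangle^2\in[0,\norm{\bmu}^2]$ and, by Parseval, $\sum_j\langle\vv_j,\bmu\rangle^2=\norm{\bmu}^2$ for the orthonormal system $\{\vv_j\}$, we get $w(\bW)\in[0,\norm{\bmu}^2]$, with every intermediate value attained by a suitable rank-$\le 2$ projector. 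The value $\norm{\bmu}^2$ is attained exactly when every index $j$ with $\sigma_j\neq0$ has $\langle\vv_j,\bmu\rangle^2=\norm{\bmu}^2$, which (using $\bmu\neq0$ and orthonormality, so at most one $\vv_j$ can be collinear with $\bmu$) forces a single nonzero singular value, i.e.\ $\sigma_2=\cdots=\sigma_p=0$ and $\langle\vv_1,\bmu\rangle^2=\norm{\bmu}^2$; the value $0$ is attained whenever the row space of $\bW$ is orthogonal to $\bmu$ (possible since $p\ge2$). Thus $I=\big[\tfrac{1}{1+\sigma_\aug^2+\norm{\bmu}^2},\,\tfrac{1}{1+\sigma_\aug^2}\big]$, with left endpoint attained precisely by the rank-one aligned projectors named in the theorem.

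The second step is to analyze $g$ on $I$. A direct computation gives
\[
F(t)=\frac{\mathsf{d}}{\mathsf{d}t}\tilde\calL(\bW)=\frac1\tau\cdot\frac{(1+2\sigma_\aug^2)\,u-1}{1+u},\qquad u=u(t):=e^{\,-2/\tau+2(1+\sigma_\aug^2)t/\tau}.
\]
Because $u\mapsto\frac{(1+2\sigma_\aug^2)u-1}{1+u}$ is strictly increasing in $u>0$ and $u(t)$ is strictly increasing in $t$, the function $F$ is strictly increasing, i.e.\ $g$ is strictly convex. Its unique stationary point solves $(1+2\sigma_\aug^2)u=1$, i.e.\ occurs at $t_0:=\frac{1}{1+\sigma_\aug^2}\big(1-\tfrac\tau2\log(1+2\sigma_\aug^2)\big)$. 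Since $F$ is increasing, the sign of $F$ at the left endpoint $t_{\min}:=\tfrac{1}{1+\sigma_\aug^2+\norm{\bmu}^2}$ determines the minimizer, and this sign is exactly the dichotomy defining the expansion/shrinkage regimes.

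The third step is bookkeeping. In the expansion regime $F(t_{\min})>0$, so $F>0$ on all of $I$ and $g$ is strictly increasing there; hence $\tilde\calL$ is minimized iff $t(\bW)=t_{\min}$, which by Step~1 holds iff $\sigma_2=\cdots=\sigma_p=0$ and $\langle\vv_1,\bmu\rangle^2=\norm{\bmu}^2$. In the shrinkage regime $F(t_{\min})<0=F(t_0)$, so $t_0>t_{\min}$; together with $t_0\le\tfrac{1}{1+\sigma_\aug^2}$ (equivalent to $\log(1+2\sigma_\aug^2)\ge0$) this gives $t_0\in I$, and by strict convexity $g$ is minimized there, so $t^*=t_0$ lies in the claimed interval. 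Finally, $t_0\to1$ as $\sigma_\aug^2\to0$; since $\norm{\bW\bmu}^2/\norm{\bW}_{\mathrm F}^2=w(\bW)=1/t(\bW)-1-\sigma_\aug^2$ and $t^*=t_0$, we get $w(\bW^*)=1/t_0-1-\sigma_\aug^2\to0$, i.e.\ $\norm{\bW^*\bmu}/\norm{\bW^*}_{\mathrm F}\to0$.

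The only genuinely delicate point is Step~1: pinning down the exact range of $t(\bW)$ and, in particular, the ``if and only if'' characterization of the extremal configuration, which relies on Parseval's identity for the right-singular basis together with the fact that at most one orthonormal vector can be collinear with $\bmu$. Everything afterwards — strict convexity of $g$, the location of its stationary point, and the translation back to $\bW$ — is a short computation. I would also record at the outset that $\bW=0$ is excluded (the score $\mathrm{sim}^*$, hence $\tilde\calL$, is undefined there) and that the argument tacitly uses $p\ge2$ and $\bmu\neq0$.
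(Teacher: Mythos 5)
Your proposal is correct and follows essentially the same route as the paper's proof: reduce $\tilde \calL$ to a univariate function of $t=\norm{\bW}_{\mathrm F}^2/\alpha(\bW)$, note that $F$ is strictly increasing in $t$, locate its zero at $t^*=\frac{1}{1+\sigma_\aug^2}\bigl(1-\frac{\tau}{2}\log(1+2\sigma_\aug^2)\bigr)$, decide the minimizer by the sign of $F$ at the left endpoint $\frac{1}{1+\sigma_\aug^2+\norm{\bmu}^2}$, and use the SVD/orthonormality argument to show the left endpoint is attained exactly by rank-one $\bW$ aligned with $\bmu$. The only (welcome) difference is that you explicitly verify the attainable set of $t$ is the full interval via a rank-$\le 2$ construction and exclude $\bW=0$, points the paper leaves implicit.
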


\noindent See Appendix~\ref{sec:proof-phase-transition} for the proof. 
\medskip

Several remarks are in order. First, 
when the configuration $(\sigma_\aug^2, \tau, \norm{\bmu}^2)$ is in the expansion regime, optimizing the contrastive loss leads to a linear projector $\bW^*$ that maximally expands along the signal direction $\bmu$.
Second, in contrast, in the shrinkage regime, the signal direction is compressed in $\bW^*$, since for fixed value $\norm{\bW}_{\mathrm{F}}$, a vanishing $\norm{\bW \bmu}$ implies $\max_j| \sigma_j \langle \vv_j, \bmu \rangle| \to 0$---any singular vector positively correlated with $\bmu$ must have a vanishing singular value.

% ($\bW$ shrinks vectors correlated with $\bmu$.)

 %Nevertheless, we can bound the difference between $\calL$ and $\tilde \calL$ and obtain quantitative results.  In Section~\ref{sec:append_appro}, we show that the minimizer $\bW^*$ of $\calL$ and minimizer $\bW_\ell^*$ of $\tilde \calL$ are close.
%The finite-sample regime is challenging for analysis. In Section~\ref{sec:append_finitesample}, we provide informal analysis.

\begin{figure*}[t]
    \centering
    \includegraphics[height=0.3\textwidth]{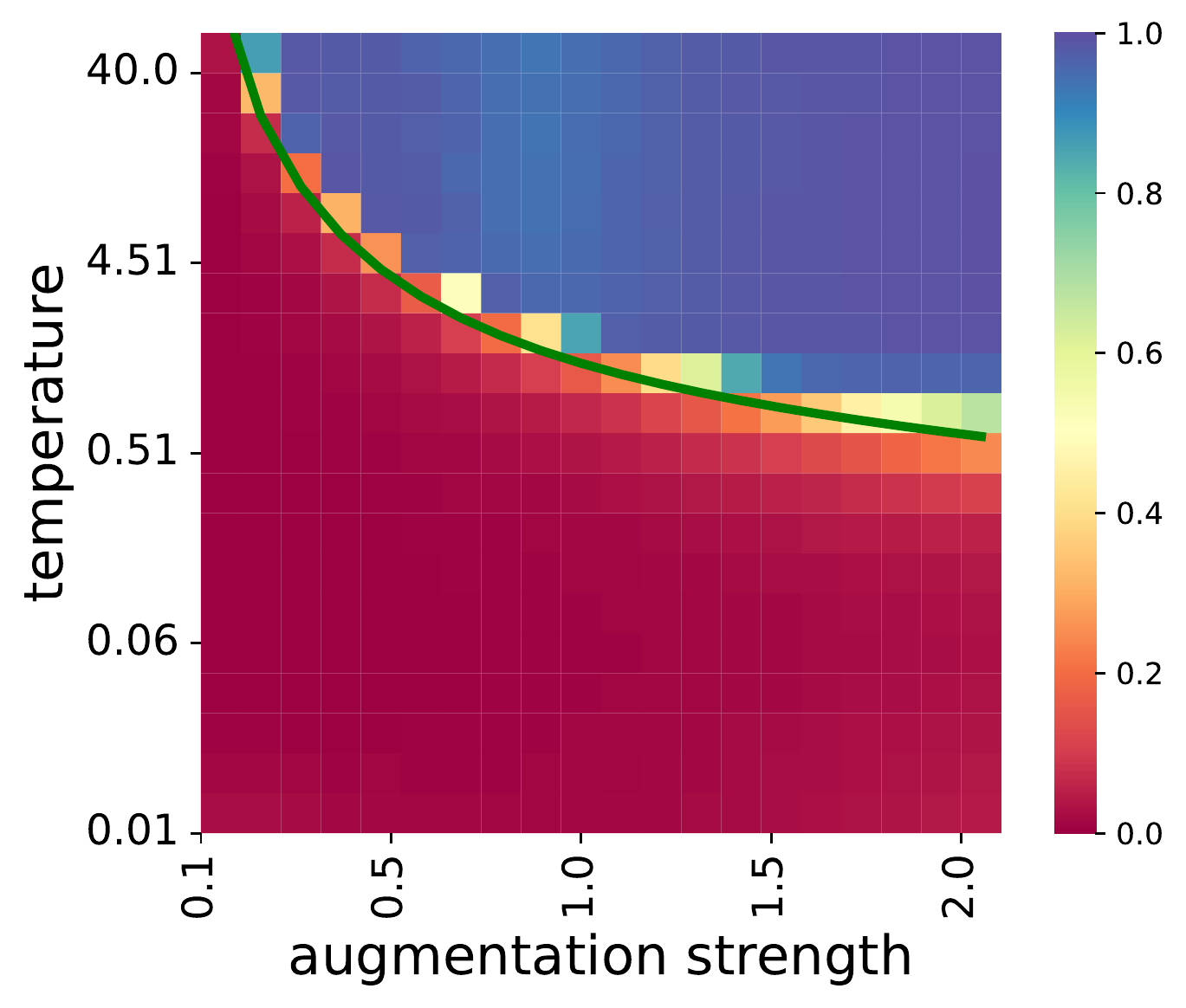}\qquad 
    \includegraphics[height=0.3\textwidth]{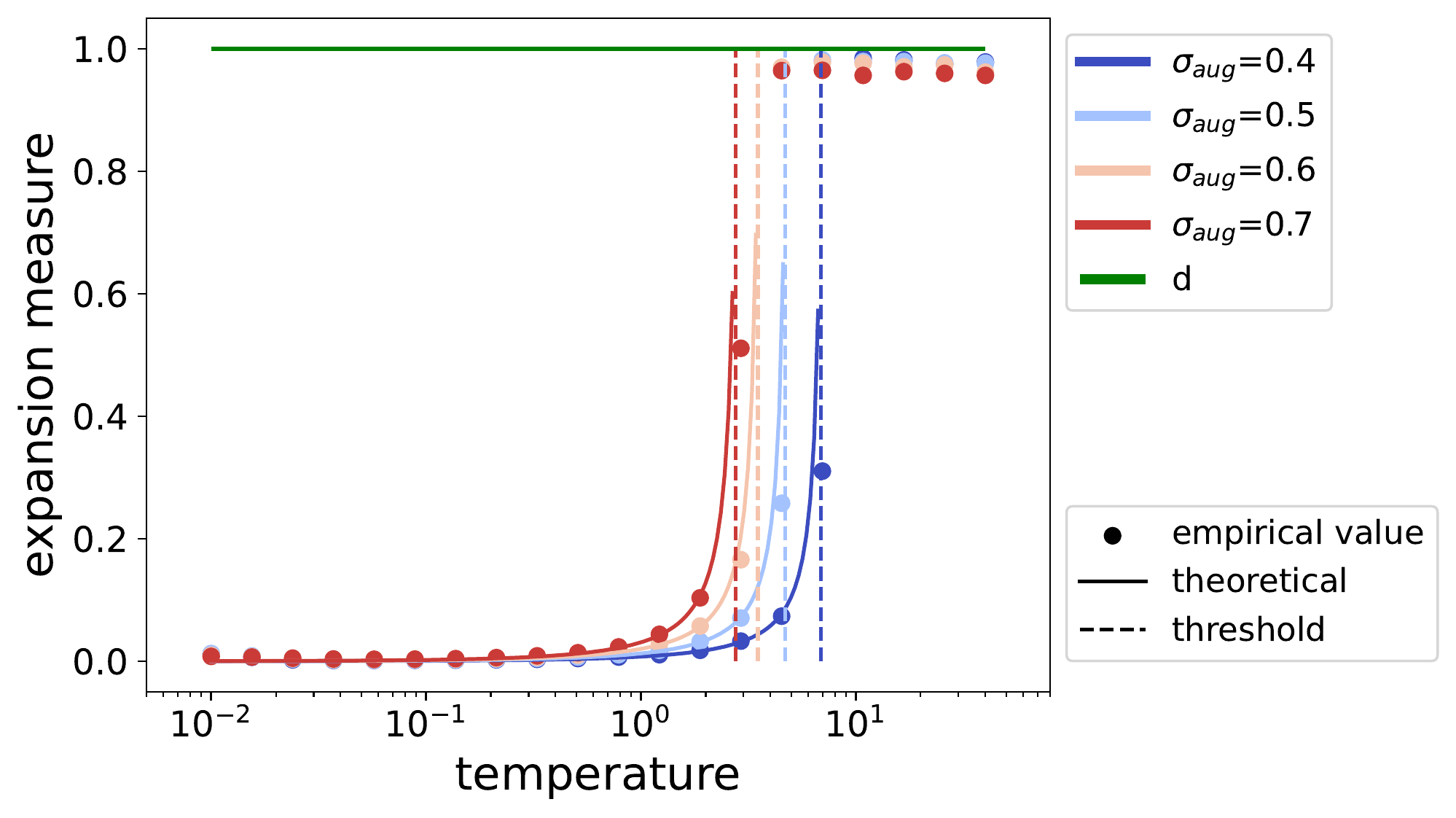}
    % \vspace{-3mm}
    \caption{Expansion/shrinkage phase transition for the configuration $n=2000$, $p=50$, $\norm{\bmu}=5$. {\bf Left}: heatmap of expansion measure $T^* \coloneqq T(\bW^*)$; {\bf green curve}: theoretical phase transition.
    {\bf Right}: $T^*$ vs.~$\tau$ with different fixed $\sigma_\aug$.}
    \label{fig:gmm_plot}
    % \vspace{-2mm}
\end{figure*}

% \cm{Arxiv: need a theoretical claim on approximations.}

\paragraph{Numerical experiments. }
Under the 2-GMM model stated in Section~\ref{sec:setup},
%In Figure~\ref{fig:gmm_plot} (left), results for population and testing accuracy are plotted against temperature. Compared with accuracy before projection, after-projection accuracy lies in the shadowed area which is less that $(10n)^{-1/2}$ away from before-projection accuracy and is in line with Theorem~\ref{thm:low-d}. In addition, accuracy after projection remains constant when temperature is adequately large, which lies in the expansion regime as is shown in Figure~\ref{fig:gmm_plot} (middle). Also,  in Figure~\ref{fig:gmm_plot} (middle), we can see that the bottom singular vector of $W$ is aligned with $\meandiff$ when temperature is small, i.e. the shrinkage regime, and as temperature increases, the alignment of $\meandiff$ with the bottom singular vector decreases while the alignment with the top singular vector gradually achieves $1$ around the threshold (purple dashed line) $\tau^* = 2\Vert \meandiff \Vert_2^2\left\{(1+\sigma^2+\Vert \meandiff \Vert_2^2)\log(1+2\sigma^2)\right\}^{-1}$. 
in Figure~\ref{fig:gmm_plot} (left), we plot the heatmap of the expansion measure $T^* \coloneqq T(\bW^*)$ with $T(\bW) \coloneqq  \| \bW \bmu\|^2 / (\| \bW \|_{\mathrm{F}}^2 \| \bmu \|^2) \in [0,1]$ with different pairs of temperature and augmentation strength. The green curve is the transition curve
\[
\tau^* = 2\Vert \bmu \Vert^2\left\{(1+\sigma_{\aug}^2+\Vert \bmu \Vert^2)\log(1+2\sigma_{\aug}^2)\right\}^{-1}
\]
which separates the expansion and shrinkage regimes. 
In Figure~\ref{fig:gmm_plot} (right), with a set of $\sigma_{\aug}$, we plot the expansion measure against varying temperatures which shows that our theoretical prediction is also precise in the regime when $0 < T^* < 1$. 
%When $\tau < \tau^*$, the quantity $\tilde{t}(\bW)$ fits well with the theoretical value as is shown in Theorem~\ref{thm:phase}; in the meanwhile, when $\tau > \tau^*$, the solution lies in the expansion regime and have $\tilde{t}(\bW) \approx d$ which confirms the result that $\Vert \bW \bmu \Vert_2^2/\Vert \bW \Vert_{\mathrm{F}}^2 = \Vert \bmu \Vert_2^2$.
When $\tau < \tau^*$ (shrinkage), $T^*$ is close to zero, indicating a significant compression along $\bmu$; when $\tau > \tau^*$ (expansion), $T^* \approx 1$, corresponding to maximal expansion along $\bmu$. 

Despite simplification and approximation, our theoretical prediction does align well with empirical phase transition with the original SimCLR loss.
% These two plots align perfectly with theoretical predictions made by Theorem~\ref{thm:phase}.

%\paragraph{Extension to original loss $\calL$ and finite-sample regime.}
\subsection{Extensions to the general case}
%Compared with $\tilde \calL$, the loss $\calL$ cannot be easily simplified into a scalar function. Nevertheless, we can bound the difference between $\calL$ and $\tilde \calL$ and obtain quantitative results.  In Section~\ref{sec:append_appro}, we show that the minimizer $\bW^*$ of $\calL$ and minimizer $\bW_\ell^*$ of $\tilde \calL$ are very close when measured by $t(\bW) = \norm{\bW}_{\mathrm{F}}^2 / \alpha$.
Recall that $\tilde \calL$ is an approximation to the original loss $\calL$. Therefore it is of interest to quantify 
the approximation error in the optimal solutions. We also briefly discuss the technical difficult of extending these analysis to the finite-sample regime.

\paragraph{Approximation errors.} Compared with $\tilde \calL$, the original loss $\calL$ cannot be expressed as a simple univariate function. Recall that in the approximation, we consider large signal strength $\| \bmu \|$. Denote $\bW_{{\rm simclr}}$ as the solution to the original loss $\calL$ and $t_{{\rm simclr}} \coloneqq t(\bW_{{\rm simclr}}) = \| \bW_{{\rm simclr}} \| / ((1+\sigma_\aug^2)\| \bW_{{\rm simclr}}\|^2 + \|\bW_{{\rm simclr}} \bmu \|^2)$.
Approximation bounds are shown in the next proposition for the expansion and shrinkage regimes, respectively.
\begin{proposition}\label{prop:approximation}
The difference between $t^*$ and $t_{{\rm simclr}}$ has the following bounds:
\begin{itemize}
\item in the expansion regime,
\[
0 \leq t_{{\rm simclr}} - t^* \leq \left(\frac{2(1+\sigma_\aug^2)}{1+e^{2/\tau}} - 1\right)^{-1}\frac{4(1+\sigma_\aug^2)}{\tau e^{1/\tau}} \frac{1}{\|\bmu\|^2} +  C(\tau, \sigma_\aug^2) \|\bmu\|^{-4} ;
\]
\item in the shrinkage regime, 
\[
\| t_{{\rm simclr}} - t^* \|^2 \leq \frac{e^{2\sigma_\aug^2/\tau}}{(1+\sigma_\aug^2)^2} \frac{1}{p-1} + C(\tau)\cdot \left(\frac{\sigma_\aug^4}{\|\bmu\|^2}\right).
\]
\end{itemize}
where $C(\tau, \sigma_\aug^2), C(\tau)>0$ are some constants.
\end{proposition}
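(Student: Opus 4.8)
The plan is to exploit three facts stated earlier: (i) the pointwise inequality $\calL(\bW)\ge\tilde\calL(\bW)$ recorded after \eqref{eq:approx-loss}; (ii) the optimality of $\bW_{\mathrm{simclr}}$ for $\calL$ and of $\bW^*$ for $\tilde\calL$; and (iii) the fact that $\tilde\calL$ depends on $\bW$ only through $t=t(\bW)$, with range $t(\bW)\in\bigl[\tfrac1{1+\sigma_\aug^2+\|\bmu\|^2},\tfrac1{1+\sigma_\aug^2}\bigr]$ (the lower end because $\|\bW\bmu\|^2\le\|\bW\|_{\mathrm F}^2\|\bmu\|^2$). Writing $\tilde\calL(t)=-t/\tau+\log\bigl(\tfrac12+\tfrac12 e^{at+b}\bigr)$ with $a=2(1+\sigma_\aug^2)/\tau$, $b=-2/\tau$, one computes $F(t)=\tilde\calL'(t)=-\tfrac1\tau+\tfrac{a\,e^{at+b}}{1+e^{at+b}}$ and $\tilde\calL''(t)=\tfrac{a^2 e^{at+b}}{(1+e^{at+b})^2}>0$, so $\tilde\calL$ is strictly convex in $t$ on this interval. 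Chaining the inequalities, $\tilde\calL(t_{\mathrm{simclr}})\le\calL(\bW_{\mathrm{simclr}})\le\calL(\bW^*)=\tilde\calL(t^*)+\Delta^*$, where $\Delta^*:=\calL(\bW^*)-\tilde\calL(\bW^*)\ge0$ is the approximation error at a conveniently chosen minimizer $\bW^*$ of $\tilde\calL$. Convexity then converts the loss gap $\tilde\calL(t_{\mathrm{simclr}})-\tilde\calL(t^*)\le\Delta^*$ into a gap for $t$: a first-order bound $|t_{\mathrm{simclr}}-t^*|\le\Delta^*/F(t^*)$ when $F(t^*)>0$, and a second-order bound $|t_{\mathrm{simclr}}-t^*|^2\le 2\Delta^*/\inf_t\tilde\calL''(t)$ when $F(t^*)=0$. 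The remaining work is to bound $\Delta^*$ by Taylor-expanding the $\log(1+\cdot)$ and $\exp(-\cdot)$ pieces of Proposition~\ref{prop:loss}, and to estimate $F(t^*)$ or $\inf_t\tilde\calL''$.

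\textbf{Expansion regime.} By Theorem~\ref{thm:phase}, $t^*=1/(1+\sigma_\aug^2+\|\bmu\|^2)$ is the left endpoint of the range of $t(\cdot)$, so $t_{\mathrm{simclr}}\ge t^*$ automatically, giving $0\le t_{\mathrm{simclr}}-t^*$. For the upper bound take the rank-one minimizer $\bW^*=\sigma\,\uu_1(\bmu/\|\bmu\|)^\top$ from Theorem~\ref{thm:phase}. Substituting into Proposition~\ref{prop:loss}: $\calL_{\aalign}(\bW^*)=\tfrac{\sigma_\aug^2}{\tau}t^*$ matches the first term of $\tilde\calL(t^*)$ exactly; the leading term of $\calL_{\unif}^{(1)}(\bW^*)=-\tfrac12\log(1+u)$ with $u=\tfrac{2(1+\sigma_\aug^2)}{\tau(1+\sigma_\aug^2+\|\bmu\|^2)}$ matches $-u/2$ with Taylor tail $\tfrac14 u^2+O(u^3)=O(\|\bmu\|^{-4})$; and the arguments of the two $\log(1+e^{-\cdot})$ terms in $\calL_{\unif}^{(2)}$ differ by $\tfrac{4(1+\sigma_\aug^2)}{\tau^2}\|\bmu\|^{-2}+O(\|\bmu\|^{-4})$, so (using $|\tfrac{d}{dh}\log(1+e^{-h})|<1$) that remainder is $O(\|\bmu\|^{-2})$ with leading constant computed from the derivative evaluated near $h=2/\tau$. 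Hence $\Delta^*=C_1(\tau,\sigma_\aug^2)\|\bmu\|^{-2}+O(\|\bmu\|^{-4})$. Finally $F(t^*)\to\tfrac1\tau\bigl(\tfrac{2(1+\sigma_\aug^2)}{1+e^{2/\tau}}-1\bigr)>0$, positive precisely because we are in the expansion regime; dividing $\Delta^*$ by $F(t^*)$ and tracking the constant yields the stated bound, with the $\|\bmu\|^{-4}$ piece absorbed into $C(\tau,\sigma_\aug^2)\|\bmu\|^{-4}$.

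\textbf{Shrinkage regime.} Now $t^*$ is the interior root of $F$, so $F(t^*)=0$ and we use $|t_{\mathrm{simclr}}-t^*|^2\le 2\Delta^*/m$ with $m=\inf_t\tilde\calL''(t)$; this infimum is bounded below by $\gtrsim(1+\sigma_\aug^2)^2 e^{-2\sigma_\aug^2/\tau}/\tau^2$ after noting $F(\text{right endpoint})=\sigma_\aug^2/\tau$ (so $at+b$ stays in a bounded window and $\tfrac{e^{at+b}}{(1+e^{at+b})^2}$ is bounded away from $0$). The key new ingredient is choosing $\bW^*$ wisely among the (non-unique) minimizers of $\tilde\calL$: take $p-1$ equal singular values on directions orthogonal to $\bmu$ and a single small singular value $\rho$ along $\bmu$, tuned so $t(\bW^*)=t^*$; this forces $\rho^2=(p-1)\bigl(1-t^*(1+\sigma_\aug^2)\bigr)/\bigl(t^*(1+\sigma_\aug^2+\|\bmu\|^2)-1\bigr)$, which is nonnegative and $o(p)$ in the shrinkage regime, and $1-t^*(1+\sigma_\aug^2)=\tfrac\tau2\log(1+2\sigma_\aug^2)=\tau\sigma_\aug^2+O(\sigma_\aug^4)$. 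For this $\bW^*$ the $\calL_{\unif}^{(1)}$ remainder is $\tfrac14\sum_j u_j^2\asymp\sum_j\sigma_j^4/(\sum_k\sigma_k^2)^2\asymp(p-1)^{-1}$ (minimized by the nearly flat spectrum), producing the $\tfrac{e^{2\sigma_\aug^2/\tau}}{(1+\sigma_\aug^2)^2}(p-1)^{-1}$ term after dividing by $m$; and the $\calL_{\unif}^{(2)}$ remainder is governed by $\|\bW^*\bmu\|^2=\rho^2\|\bmu\|^2\asymp\sigma_\aug^2$, which after expansion gives $C(\tau)\sigma_\aug^4/\|\bmu\|^2$. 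Summing the two and dividing by $m$ gives the claim.

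\textbf{Main obstacle.} The expansion bound is essentially mechanical once the convexity of $\tilde\calL$ in $t$ is in hand, since the comparison minimizer is rank one and explicit. The real difficulty is the shrinkage regime. First, $\tilde\calL$ pins down $t$ but not $\bW$, so one must engineer the comparison $\bW^*$ so that the second-order remainders of \emph{both} $\calL_{\unif}^{(1)}$ (which wants a flat spectrum) and $\calL_{\unif}^{(2)}$ (which wants $\|\bW^*\bmu\|$ small) are simultaneously controlled, and verify the tuned $\rho^2$ is admissible. Second, producing the precise constant $\tfrac{e^{2\sigma_\aug^2/\tau}}{(1+\sigma_\aug^2)^2}$ requires a careful lower bound on $\tilde\calL''$ over the exact sub-interval in which $t_{\mathrm{simclr}}$ and $t^*$ can lie, plus bookkeeping of which remainder dominates in the $p\to\infty$ versus $\|\bmu\|\to\infty$ limits. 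Third, one must confirm all Taylor remainders are uniform with constants depending only on $\tau$ and $\sigma_\aug^2$, which uses the a priori localization $t\in\bigl[\tfrac1{1+\sigma_\aug^2+\|\bmu\|^2},\tfrac1{1+\sigma_\aug^2}\bigr]$.
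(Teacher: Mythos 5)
Your proposal follows essentially the same route as the paper's proof: the sandwich $\tilde\calL\le\calL\le\tilde\calL+\Delta$, the optimality chain converting the loss gap into a gap in $t$ via convexity of the univariate $\tilde\calL$ (first-order at the boundary minimizer in expansion, strong-convexity/second-order at the interior stationary point in shrinkage), the explicit rank-one comparison matrix in the expansion regime, and the tuned flat-spectrum matrix with bottom singular direction $\bmu$ in the shrinkage regime, with matching Taylor estimates of the $\calL_{\unif}^{(1)}$ and $\calL_{\unif}^{(2)}$ remainders. The only differences are cosmetic (you bound the discrepancy directly at the chosen minimizer rather than through the paper's explicit upper envelope and general strong-convexity lemma, and your expansion-regime constant $1/(1+e^{2/\tau})$ is slightly sharper than the paper's $e^{-1/\tau}$, so it still implies the stated bound).
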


\noindent See Appendix~\ref{sec:append_appro} for the proof. 
\medskip

\noindent We can infer the following properties of the approximation error bounds.
\begin{itemize}
    \item For the expansion regime, the quality of approximation improves almost linearly in $\| \bmu \|^2$, and the dominating bound vanishes as $\tau \rightarrow +\infty$; 
    \item For the shrinkage regime, a sufficiently large dimension $p$ implies a smaller approximation error, and when $\sigma_\aug^2$ is adequately small, the error bound becomes tighter as $\sigma^2_\aug$ increases.
\end{itemize}
%Our preliminary experiments also suggest that the approximation bounds cannot be improved much. \TODO{Please do simulations and check.}

%\cm{Move this one to main text?}
%We can infer the following properties of the approximation error bound:
%\begin{itemize}
   % \item the quality of approximation improves almost linearly in $\rho^{-1}$ (the inverse separation parameter);
    %\item as the function
    %\begin{align}
       % a_1(\tau) = \left( \frac{2r}{1+e^{2/\tau}} - 1 \right)^{-1} \frac{1}{\tau e^{1/\tau}}
    %\end{align}
    %has derivative $a^\prime_1(\tau) = -e^{2/\tau} (\tau e^{2/\tau} + \tau-2)/(2r\tau^3)$, which is strictly positive for $\tau > 0$. Then, the approximation error depends inversely on $\tau$, i.e. when $\tau$ increases, the approximation error bound becomes tighter.
%\end{itemize}

\paragraph{Finite-sample regime.} Now we comment on the difficulty when dealing with finite samples, as 
opposed to the infinite-sample regime we have been focusing on thus far. 
In the finite-sample regime, the loss function becomes
\begin{align}
    &\mathcal{L}_n(\bW) \coloneqq -\frac{1}{\tau} \EE_n  \left\{ \EE_{\hh^+,\hh \mid \hh} \big[ \mathrm{sim}^*(\bW\hh, \bW\hh^+) \big| \hh_0\big]\right\} + \log \left(\EE_n \left\{\EE_{\hh^-,\hh \mid \hh_0} \big[ e^{ \mathrm{sim}^*(\bW\hh, \bW\hh^-)/\tau }\big| \hh_0 \big]\right\} \right),
\end{align}
where $\EE_n[a]$ denotes the expectation over the empirical distribution of $\{\hh_{0,i}\}_{i \leq n}$. To write $\calL_n(\bW)$ as a more explicit function in $\bW$, denote $\bM = (\bI+(\tau \alpha)^{-1}(1+\sigma^2_\aug) \bW^\top \bW)^{-1}$. Additionally, define 
\begin{align*}
S_{\hh_0} &\coloneqq \exp\left(-\frac{1}{2(1+2\sigma_\aug^2)}(\hh_0 - \bmu)^\top (\bI-\bM) (\hh_0 - \bmu)\right)\\ 
& \qquad \qquad +  \exp\left(-\frac{1}{2(1+2\sigma_\aug^2)}(\hh_0 + \bmu)^\top (\bI-\bM) (\hh_0 + \bmu)\right),\\
\tilde S_{\hh_0} &\coloneqq \exp\left(-\frac{1}{2\tau \alpha}\|\bW(\hh_0-\bmu)\|^2\right) + \exp\left(-\frac{1}{2\tau \alpha}\|\bW(\hh_0+\bmu)\|^2\right),
\end{align*}
then we have the following proposition.
\begin{proposition}\label{prop:finite}
The finite sample loss $\calL_n(\bW)$ can be written as
\begin{align*}
\calL_n(\bW) = - \log 2 + \frac{\sigma_\aug^2}{\tau \alpha} \| \bW \|_{\mathrm{F}}^2 + \frac{1}{2}\log {\rm det}[\bM] + \log\left(\E_n S_{\hh_0}\right).
\end{align*}
Further, if $\alpha \gg \|\bW\|_{\mathrm{F}}^2$, $\calL_n(\bW)$ has the following approximation
\begin{align*}
\tilde\calL_n(\bW) = - \log 2 - \frac{1}{2\tau \alpha} \|\bW\|_{\mathrm{F}}^2 + \log \left(\EE_n \tilde S_{\hh_0}\right).
\end{align*}
\end{proposition}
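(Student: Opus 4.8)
The plan is to evaluate the alignment and uniformity pieces of $\calL_n(\bW)$ in closed form and then obtain $\tilde\calL_n$ by a first-order expansion. The workhorse is that, after conditioning on the latent $\hh_0$ (and, for a negative sample, on which mixture component it comes from), every vector appearing in $\calL_n$ is Gaussian, so all expectations reduce to the quadratic-form identity
\[
\E\big[\exp(-\tfrac12\,\bg^\top\bA\bg)\big]=\det(\bI+\bSigma\bA)^{-1/2}\exp\!\big(-\tfrac12\,\bnu^\top\bA(\bI+\bSigma\bA)^{-1}\bnu\big),\qquad \bg\sim\mathcal N(\bnu,\bSigma),
\]
together with the push-through and determinant identities $\bW^\top(\bI+c\bW\bW^\top)^{-1}=(\bI+c\bW^\top\bW)^{-1}\bW^\top$ and $\det(\bI+c\bW\bW^\top)=\det(\bI+c\bW^\top\bW)$. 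As a preliminary step I would record that the population normalization in $\mathrm{sim}^*$ equals $\alpha$: from $\E[\hh\hh^\top]=\bmu\bmu^\top+(1+\sigma_\aug^2)\bI$ one gets $\E[\|\bW\hh\|^2]=\|\bW\bmu\|^2+(1+\sigma_\aug^2)\|\bW\|_{\mathrm F}^2=\alpha$, and a negative sample has the same marginal law, so $\mathrm{sim}^*(\bW\hh_1,\bW\hh_2)=-\|\bW(\hh_1-\hh_2)\|^2/(2\alpha)$ throughout.

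The alignment term is immediate: conditionally on $\hh_0$, $\hh-\hh^+\sim\mathcal N(\bzero,2\sigma_\aug^2\bI)$ independently of $\hh_0$, so $-\tfrac1\tau\EE_n\E_{\hh,\hh^+\mid\hh_0}[\mathrm{sim}^*(\bW\hh,\bW\hh^+)]=\tfrac{1}{2\tau\alpha}\E\|\bW(\hh-\hh^+)\|^2=\tfrac{\sigma_\aug^2}{\tau\alpha}\|\bW\|_{\mathrm F}^2$. For the uniformity term I would fix $\hh_0$ and condition further on the component $\varepsilon\in\{-1,+1\}$ (probability $1/2$ each) of the negative sample's latent mean, decomposing that sample as $\varepsilon\bmu$ plus an independent within-component term $\mathcal N(\bzero,\bI)$ plus an independent augmentation term $\mathcal N(\bzero,\sigma_\aug^2\bI)$, while the anchor equals $\hh_0$ plus its own augmentation $\mathcal N(\bzero,\sigma_\aug^2\bI)$. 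Summing these independent contributions yields $\bW(\hh-\hh^-)\mid(\hh_0,\varepsilon)\sim\mathcal N\big(\bW(\hh_0-\varepsilon\bmu),(1+2\sigma_\aug^2)\bW\bW^\top\big)$. Applying the identity above with $\bA=(\tau\alpha)^{-1}\bI$, $\bSigma=(1+2\sigma_\aug^2)\bW\bW^\top$, $\bnu=\bW(\hh_0-\varepsilon\bmu)$, the prefactor $\det\big(\bI+(\tau\alpha)^{-1}(1+2\sigma_\aug^2)\bW^\top\bW\big)^{-1/2}$ is exactly $\det(\bM)^{1/2}$ with $\bM$ the matrix in the statement, and the exponent, simplified by push-through, becomes $-\tfrac{1}{2(1+2\sigma_\aug^2)}(\hh_0-\varepsilon\bmu)^\top(\bI-\bM)(\hh_0-\varepsilon\bmu)$; averaging over $\varepsilon$ gives $\tfrac12\det(\bM)^{1/2}S_{\hh_0}$. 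Taking $\EE_n$, pulling the deterministic factor $\tfrac12\det(\bM)^{1/2}$ out of the average, and taking $\log$ yields $-\log2+\tfrac12\log\det\bM+\log(\EE_n S_{\hh_0})$; adding the alignment term gives the asserted formula for $\calL_n(\bW)$.

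For the approximation I would use $\alpha\gg\|\bW\|_{\mathrm F}^2$ to make $\veps\coloneqq(\tau\alpha)^{-1}(1+2\sigma_\aug^2)$ small relative to $\bW^\top\bW$ and expand $\bM=\bI-\veps\bW^\top\bW+O\big(\veps^2\|\bW\|_{\mathrm F}^4\big)$. Then $\bI-\bM=\veps\bW^\top\bW+O(\veps^2)$, so $\tfrac{1}{2(1+2\sigma_\aug^2)}(\hh_0\mp\bmu)^\top(\bI-\bM)(\hh_0\mp\bmu)=\tfrac{1}{2\tau\alpha}\|\bW(\hh_0\mp\bmu)\|^2+O(\veps^2)$, hence $S_{\hh_0}$ is well approximated by $\tilde S_{\hh_0}$; and $\tfrac12\log\det\bM=-\tfrac{\veps}{2}\|\bW\|_{\mathrm F}^2+O(\veps^2)=-\tfrac{1+2\sigma_\aug^2}{2\tau\alpha}\|\bW\|_{\mathrm F}^2+O(\veps^2)$, which combines with the alignment contribution $\tfrac{\sigma_\aug^2}{\tau\alpha}\|\bW\|_{\mathrm F}^2$ to give exactly $-\tfrac{1}{2\tau\alpha}\|\bW\|_{\mathrm F}^2$. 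Dropping the $O(\veps^2)$ remainders yields $\tilde\calL_n$. As a sanity check, letting $n\to\infty$ so that $\EE_n\to\E$ over the $2$-GMM for $\hh_0$ and performing the remaining Gaussian integrals should recover Proposition~\ref{prop:loss} and \eqref{eq:approx-loss}.

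The step I expect to be the main obstacle is the uniformity term: correctly assembling the conditional covariance of $\bW(\hh-\hh^-)$ from the three independent noise sources (anchor augmentation, negative within-component fluctuation, negative augmentation)---this is what fixes the $(1+2\sigma_\aug^2)$ factor---and then transporting the output of the Gaussian integral, which naturally sits on the ``output side'' $\RR^p$ as $(\bI+c\bW\bW^\top)^{-1}$, to a clean quadratic form in $\hh_0\mp\bmu$ on the ``input side'' via the push-through and determinant identities, so that precisely $\bM$ (and no messier object) appears both in $S_{\hh_0}$ and in the $\log\det$ term. The remaining expansion and its error control are routine.
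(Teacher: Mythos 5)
Your proposal is correct and follows essentially the same route as the paper's proof: split off the alignment term (which gives $\tfrac{\sigma_\aug^2}{\tau\alpha}\|\bW\|_{\mathrm F}^2$), then for the uniformity term condition on $\hh_0$ and on the negative sample's mixture component so that $\bW(\hh-\hh^-)$ is Gaussian with covariance $(1+2\sigma_\aug^2)\bW\bW^\top$, evaluate the Gaussian integral in closed form to get $\tfrac12\sqrt{\det\bM}\,S_{\hh_0}$ (the paper completes the square explicitly where you invoke the quadratic-form MGF plus push-through identities, a purely cosmetic difference), and finally expand $\bM\approx\bI-(\tau\alpha)^{-1}(1+2\sigma_\aug^2)\bW^\top\bW$ to obtain $\tilde\calL_n$. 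Note that your $(1+2\sigma_\aug^2)$ factor inside $\bM$ is what the computation actually yields and agrees with the paper's appendix; the $(1+\sigma_\aug^2)$ appearing in the main-text definition of $\bM$ is an inconsistency in the paper, not a gap in your argument.
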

From the proposition, the difficulty in analyzing the finite-sample loss lies in the nonlinearity in $\log {\rm det}[\bM]$ as well as the terms $S_{\hh_0}$. If we assume that $\alpha \gg \|\bW\|_{\mathrm{F}}^2$, the approximation is based on the fact that $\bM \approx \bI - (\tau \alpha)^{-1}(1+\sigma_\aug^2)\bW^\top\bW$.
\begin{remark}
As $\hh_0 \sim \calN(y \cdot \bmu, \bI)$, if we replace $\EE_n$ by $\EE_{\hh_0}$ in $\tilde \calL_n(\bW)$, then the loss can be approximated by
\[
-\log 2 - \frac{1}{\tau \alpha}\|\bW\|_{\mathrm{F}}^2 + \log\left(1 + \exp\left(-\frac{2}{\tau \alpha} \|\bW \bmu\|^2\right)\right),
\]
which is exactly the loss function $\tilde \calL(\bW)$ defined in \eqref{eq:approx-loss}.
\end{remark}

%Recall the condition that $\alpha \gg \| \bW \|_{\mathrm{F}}^2 \geq \max_j \sigma^2_j$, then we have the approximation
%\[
%\log\Big({\rm det}[\bM]\Big) = \log \left(\prod_{j=1}^p \left(1 + \frac{1+\sigma^2_\aug}{\tau \alpha} \sigma^2_j \right)^{-1}\right) \approx -\frac{1+\sigma^2_\aug}{\tau \alpha} \| \bW \|_{\mathrm{F}}^2.
%\]
%\TODO{Some analysis and discussion on difficulty. Simulations.}

%\section{The effect of expansion on prediction accuracy}\label{sec:result}
% \input{result}

\section{Effect of projectors on generalization}\label{sec:result}
% !TEX root = paper.tex
We move on to investigate the effect of projectors on the generalization  of learned representations. 

\subsection{One-parameter projector model}

To understand the generalization puzzle about projectors, an important empirical observation is that pretrained features typically form \textit{linearly separable} clusters; see Figure~\ref{fig:cluster}. 
Adopting our \mbox{2-GMM} view, we assume that cluster labels $\{y_i\}_{i \le n}$ are revealed to us on downstream tasks, and our data $\{\hh_{0,i}, y_i\}_{i \le n}$ follow the model
\begin{align*}
   \hh_{0,i} \mid y_i \stackrel{\mathrm{i.i.d.}}{\sim} \calN(y_i\cdot \bmu, \bI_p), \quad \text{where} \quad y_i  \stackrel{\mathrm{i.i.d.}}{\sim} \mathrm{Unif}(\{-1,1\}).
\end{align*}
Assuming a linear projector as before, we denote $\zz_i = \bW \hh_{0,i}$ and $\{\zz_i, y_i\}_{i \leq n}$ are the inputs of the classification problem.

In reality, the minimizer from the standard SimCLR loss~\eqref{expr:contrastiveloss} is not perfectly characterized by the ideal expansion/shrinkage; see the singular value plot in Figure~\ref{fig:image_pretrained_plot}. Nevertheless, to gain insights, we consider a simpler form of linear transforms:
\begin{equation*}
\mathcal{W} = \big\{ \bW_\eta = \bI_p + \eta\rho^{-1}  \bmu \bmu^\top: \eta > -1 \big\}, ~ \text{where}~\rho \coloneqq \| \bmu\|^2.
\end{equation*} 
Since only the right singular vectors of $\bW$ is of interest in terms of expansion/shrinkage, the symmetric projection head is without loss of generality.
If we constrain ourselves to this simple one-parameter space $\mathcal{W}$, then expansion/shrinkage regimes are solely determined by $\eta$. Specifically, $\eta = 0$ means effectively no projector, $\eta>0$ corresponds to the expansion regime, while $\eta<0$ corresponds to the shrinkage regime. 
\begin{equation*}
\begin{split}
\left\{ 
\begin{array}{ll}
\text{If}~\eta \in (-1,0), & \text{then shrinkage regime}; \\  \text{If}~\eta \in (0,\infty), & \text{then expansion regime}; \\
  \text{If}~\eta  = 0, & \text{effectively no projector}.
\end{array}
\right.
\end{split}
\end{equation*}

\subsection{Confirming folklore: invariance of test errors in low-dimension regime}

%\subsection{Invariance of generalization error in low-dimensional regime}
In the low-dimensional regime (i.e., when $p/n \to 0$), the data $(\hh_{0,i},y_i)_{i \le n}$ are not linearly separable with probability approaching one. We focus on $\ell_2$-regularized logistic regression with an intercept in a fixed dimension $p$. %, which is closely related to the max-margin classifier as is shown in \cite{Rosset2003}, \cite{Soudry2018TheIB}. 
Define
\begin{align}\label{eq:loss_log_l2}
    \ell_{n}(\gamma,\bbeta; \lambda_n)
    = \EE_n \left\{\log \left[1 + e^{-y\left(\gamma+\zz^\top \bbeta\right)}\right]\right\} + \lambda_n \Vert \bbeta \Vert^2,
\end{align}
where $\E_n$ denotes the sample average over $n$ samples $i=1,\ldots,n$. 
With coefficients $\gamma$, $\bbeta$ and a given $\bW = \bW_\eta$, define the test error as $\Err\,(\gamma, \bbeta; \eta)  \coloneqq  \mathbb{P}(\gamma + \tilde{y} \langle \tilde{\zz} , \bbeta \rangle < 0)$,
% \begin{equation*}
% \Err\,(\gamma, \bbeta; \eta,\lambda)  \coloneqq  \mathbb{P}(\gamma + \tilde{y} \langle \tilde{\zz} , \bbeta \rangle < 0)
% \end{equation*}
where $(\tilde{\zz}, \tilde{y})$ is a new independent sample. 
We further define the expected error $\overline{\Err}(\eta) = \EE[\Err(\hat \gamma, \hat \bbeta; \eta)]$, where $\hat \gamma, \hat \bbeta$ is the logistic regression estimator, and the expectation is taken w.r.t.~the training samples. Intuitively, the linear transformation will not change the generalization error.
The following result confirms this intuition by proving that the effects of expansion or shrinkage are negligible, unless we add an unreasonably large regularizer; see the second case below.

% This proposition indicates that the test error of $\ell_2$-regularized logistic regression remains invariant asymptotically in the low-dimensional regime unless $\lambda_n$ is chosen to be the order of $\sqrt{n}$.

% \subsection{An explanation from high-dimensional asymptotics}
\begin{theorem}\label{thm:low-d}
Consider minimizing the regularized logistic loss function~\eqref{eq:loss_log_l2}.
\begin{enumerate}[topsep=-3pt]
 \setlength\itemsep{0.01em}
    \item If $0 \leq \lambda_n \ll \sqrt{n}$, then the test error obeys 
    \begin{align}
        \overline \Err\,(\eta) %& = \PP\big( y'(\langle \zz', \hat{\bbeta}\rangle + \hat{\gamma}) < 0 \big)\nonumber\\ 
        & = \Phi(-\Vert \bmu\Vert) + o(1).
    \end{align}
    Here, the dominant term (i.e., the first term) in the test error remains the same for varying $\eta$.
    \item If $\lambda_n \asymp \sqrt{n}$, then $\overline\Err\,(\eta)$ is decreasing in $\eta$.
\end{enumerate}
\end{theorem}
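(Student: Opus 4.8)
\textbf{Proof proposal for Theorem~\ref{thm:low-d}.}

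The plan is to exploit the fact that, in the fixed-$p$, large-$n$ regime, the regularized logistic regression estimator $(\hat\gamma,\hat\bbeta)$ converges to a deterministic population minimizer, and then to track how that population minimizer and its induced test error depend on $\eta$ through $\bW_\eta$. First I would set up the population version of the objective: let $(\gamma^*(\lambda),\bbeta^*(\lambda))$ minimize the population risk $R_\lambda(\gamma,\bbeta) \coloneqq \E\big[\log(1+e^{-y(\gamma+\zz^\top\bbeta)})\big] + \bar\lambda\,\norm{\bbeta}^2$, where $\bar\lambda = \lim \lambda_n/n$ (so $\bar\lambda=0$ in case 1 and $\bar\lambda>0$ in case 2), and $\zz = \bW_\eta\,\hh_0$ with $\hh_0 \mid y \sim \calN(y\bmu,\bI_p)$. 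Since $p$ is fixed and the loss is smooth and strictly convex in $(\gamma,\bbeta)$ (strictly so when $\bar\lambda>0$; when $\bar\lambda=0$ one uses that the data are a.s.\ not separable so the empirical minimizer is bounded), a standard $M$-estimation / uniform law of large numbers argument gives $(\hat\gamma,\hat\bbeta) \to (\gamma^*,\bbeta^*)$ in probability, hence $\Err(\hat\gamma,\hat\bbeta;\eta) \to \Err(\gamma^*,\bbeta^*;\eta)$ and $\overline\Err(\eta) \to \Err(\gamma^*,\bbeta^*;\eta)$ by bounded convergence. So it suffices to analyze the deterministic quantity $\Err(\gamma^*,\bbeta^*;\eta)$.

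Next I would compute the test error for \emph{any} linear classifier $(\gamma,\bbeta)$ under this Gaussian mixture. Conditional on $\tilde y$, $\tilde\zz = \bW_\eta\tilde\hh_0 \sim \calN(\tilde y\,\bW_\eta\bmu,\ \bW_\eta\bW_\eta^\top)$, so $\gamma + \tilde y\langle\tilde\zz,\bbeta\rangle$ is Gaussian with mean $\gamma + \bbeta^\top\bW_\eta\bmu$ (using $\tilde y^2=1$) and variance $\bbeta^\top\bW_\eta\bW_\eta^\top\bbeta$; therefore
\begin{align*}
\Err(\gamma,\bbeta;\eta) = \Phi\!\left(-\,\frac{\gamma + \bbeta^\top\bW_\eta\bmu}{\sqrt{\bbeta^\top\bW_\eta\bW_\eta^\top\bbeta}}\right).
\end{align*}
For case 1 ($\bar\lambda=0$): the population risk is minimized by driving the classifier toward the Bayes-optimal direction for the mixture, which for equal-weight, equal-covariance Gaussians is the Fisher direction; concretely, because $\bW_\eta$ is invertible, the reparametrization $\bbeta \mapsto \bW_\eta^\top\bbeta$ shows the population logistic problem in $\zz$-space is equivalent to the one in $\hh_0$-space, whose optimal decision rule has signed margin $\tilde y\langle\tilde\hh_0,\bmu\rangle$ scaled arbitrarily, giving $\gamma^*=0$ and $\bbeta^*$ proportional to $\bW_\eta^{-\top}\bmu$. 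Plugging into the display collapses the argument to $-\norm{\bmu}$ regardless of $\eta$, yielding $\overline\Err(\eta)=\Phi(-\norm{\bmu})+o(1)$. (The hypothesis $\lambda_n \ll \sqrt n$, i.e.\ $\lambda_n/n\to 0$ plus a lower-order control, is exactly what is needed to ensure the regularization perturbs $(\gamma^*,\bbeta^*)$ by $o(1)$ and does not bias the direction; one should verify the $o(1)$ rate via a first-order expansion of the KKT conditions in $\lambda_n/n$.) For case 2 ($\lambda_n\asymp\sqrt n$, so $\bar\lambda>0$ after the appropriate normalization — here one must be careful that $\lambda_n \asymp \sqrt n$ with the $\E_n$ scaling means $\bar\lambda \coloneqq \lim\lambda_n/n = 0$, so in fact the relevant statement is a finite-$n$ comparison): the ridge penalty now genuinely interacts with the geometry of $\bW_\eta$. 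The clean way to see monotonicity is to note that among all $\bbeta$ with a fixed penalty budget $\norm{\bbeta}^2 = c$, the population risk is a decreasing function of the signal-to-noise ratio $(\bbeta^\top\bW_\eta\bmu)/\sqrt{\bbeta^\top\bW_\eta\bW_\eta^\top\bbeta}$, and for $\bW_\eta = \bI + \eta\rho^{-1}\bmu\bmu^\top$ the optimal such ratio (optimized over the direction of $\bbeta$ on the sphere of radius $\sqrt c$) is explicitly computable: $\bW_\eta$ scales the $\bmu$-direction by $(1+\eta)$ and fixes its orthocomplement, so the attainable margin-to-penalty tradeoff improves in $\eta$. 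Differentiating the resulting expression for $\Err(\gamma^*,\bbeta^*;\eta)$ in $\eta$ and checking the sign gives the claimed monotonic decrease.

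\textbf{Main obstacle.} The technical heart is case 2 and, more precisely, pinning down the correct asymptotic normalization so that ``$\lambda_n \asymp \sqrt n$'' produces a \emph{nontrivial} limiting dependence on $\eta$: with the $\E_n$ (i.e.\ $\tfrac1n\sum$) convention in \eqref{eq:loss_log_l2}, a penalty of order $\sqrt n$ is asymptotically negligible compared to the $O(1)$ average loss, so the effect on $(\hat\gamma,\hat\bbeta)$ is a vanishing perturbation and the monotonicity must be read as the sign of the \emph{leading correction} in $\lambda_n/n$ to $\overline\Err(\eta)$, not as a statement about an order-one limit. I would therefore carry out a careful second-order expansion: write the KKT stationarity condition, expand $(\hat\gamma,\hat\bbeta) = (\gamma_0,\bbeta_0) + (\lambda_n/\sqrt n)\,\delta + o(\cdot)$ around the unregularized limit, compute the induced first-order change in $\Err$ via the chain rule through the display above, and show that this change is strictly decreasing in $\eta$ uniformly. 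Controlling the remainder (e.g.\ via the Hessian of the logistic risk being uniformly positive definite on compact sets, and a Lipschitz bound on $\eta\mapsto\bW_\eta$) and ruling out separability-induced degeneracies in the $\lambda_n=0$ sub-case are the places where the argument needs the most care; everything else is Gaussian bookkeeping and standard $M$-estimation.
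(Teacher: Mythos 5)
Your Case 1 outline (M-estimation consistency, the Gaussian formula $\Err(\gamma,\bbeta;\eta)=\Phi\bigl(-(\gamma+\bbeta^\top\bW_\eta\bmu)/\sqrt{\bbeta^\top\bW_\eta^2\bbeta}\bigr)$, and the observation that the population solution points along $\bW_\eta^{-1}\bmu$ so the error collapses to $\Phi(-\|\bmu\|)$) is essentially the paper's route, although your consistency step as stated ($\bar\lambda=\lim\lambda_n/n$) does not cover $\lambda_n\to\infty$ with $\lambda_n\ll\sqrt n$, where $\hat\bbeta\to 0$ yet the conclusion still holds because only the direction and the intercept-to-slope ratio matter.

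Case 2 has a genuine gap. With the paper's convention the penalty is $\lambda_n\|\bbeta\|^2$ against an \emph{averaged} logistic loss, so $\lambda_n\asymp\sqrt n$ is not a vanishing perturbation: it forces the slope to shrink, $\hat\bbeta\approx\kappa_{\lambda_n}\bW_\eta^{-1}\bmu$ with $\lambda_n\kappa_{\lambda_n}\to\tfrac14(1+\eta)^2$ (the $(1+\eta)^{-2}$ coming from the effective penalty $\lambda_n\,\btheta^\top\bW_\eta^{-2}\btheta$ on the signal direction). Your proposed expansion around the unregularized limit is therefore invalid, and, more fundamentally, no deterministic (population-level) argument can yield the monotonicity: for every $\lambda$ and every $\eta$ the population minimizer has zero intercept and direction $\bW_\eta^{-1}\bmu$, so the population error is exactly $\Phi(-\|\bmu\|)$, independent of $\eta$ — your ``margin-to-penalty tradeoff improves in $\eta$'' argument computes a quantity with no $\eta$-dependence. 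The actual mechanism is stochastic and is the heart of the paper's proof: the intercept has CLT fluctuations $\hat\gamma=O_{\PP}(n^{-1/2})$ that do \emph{not} shrink with $\lambda_n$, while the slope magnitude shrinks like $(1+\eta)^2/(4\lambda_n)$; at $\lambda_n=a\sqrt n$ these balance, giving
\begin{equation}
\Err = \Phi\Bigl(-\|\bmu\| + \frac{8\,G'}{a(1+\eta)^2\|\bmu\|}\Bigr) + o_{\PP}(1), \qquad G'\sim\calN(0,1),
\end{equation}
and the claimed monotonicity is a statement about the \emph{expected} error: $w\mapsto\E\,\Phi(-\tau+wU)$ is increasing in $w$ (the paper's auxiliary lemma), and the noise scale $w\propto(1+\eta)^{-2}$ decreases in $\eta$. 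Your proposal never brings in the intercept's sampling fluctuation or the expectation over training randomness, so it cannot reach this conclusion; it also cannot explain why the threshold in the theorem is $\sqrt n$ (slope shrinkage $\asymp 1/\lambda_n$ versus intercept noise $\asymp 1/\sqrt n$) rather than $n$.
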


\noindent See Appendix~\ref{sec:proof_low_d} for the proof. 

%Supporting simulations and proofs are in the appendix.
% \RMK{ADD: some discussions.}

\subsection{High-dimensional regime: inductive bias matters}

In search of the explanation, we then turn to the high-dimensional regime. 
In the high-dimensional regime, two distinct phenomena arise: first, $\{\hh_{0,i}\}_{i \leq n}$ are linearly separable with high probability. Second, solving logistic regression on linearly separable data via gradient descent results in a max-margin classifier~\citep{Rosset2003, Soudry2018TheIB}---a form of inductive bias~\citep{neyshabur2014search}. Recall that 
$\zz_i = \bW \hh_{0, i}$, and 
the max-margin classifier is given by
\begin{align}\label{opt:maxmargin}
\max_{\bbeta}   \min_{i \le n} y_i \langle \zz_i, \bbeta \rangle \qquad 
\text{subject to} \;  \| \bbeta \| \le 1. 
\end{align}
To understand the different inductive biases brought by different linear projectors, let us 
rewrite the max-margin problem above using the original input data:
denoting $\tilde \bbeta = \bW \bbeta$, the optimization problem \eqref{opt:maxmargin} can be reformulated into
\begin{equation}\label{opt:max-margin2}
\begin{split}
\begin{array}{rcl}
\displaystyle\max_{\tilde \bbeta} & & \displaystyle\min_{i \le n} y_i \langle \hh_{0,i}, \tilde \bbeta \rangle \\
\text{subject to} & & \norm{\tilde \bbeta}_{(\bW \bW^\top)^{1/2}} \le 1,
\end{array}
\end{split}
\end{equation}
where  $\norm{\uu}_{\bA} := \uu^\top \bA^{-1} \uu$ is a norm associated with the positive definite matrix $\bA$.
Clearly the inductive bias by $\bW$ manifests itself via the induced norm $\norm{\cdot}_{(\bW \bW^\top)^{1/2}}$. 
Therefore, it is reasonable to expect that reparametrization due to expansion or shrinkage affects generalization in high dimensions.
The question then boils down to provably demonstrate the effect of $\bW$ on the downstream classification accuracy
$\Err(\bbeta;\eta) \coloneqq  \mathbb{P}(\tilde{y} \langle \tilde{\zz} , \bbeta \rangle < 0)$ with new independent sample $(\tilde z, \tilde y)$.

\subsection{Precise asymptotic characterization in high dimension}
Before presenting the main result, let us define some useful quantities. 
% \paragraph{Implicit bias and max-margin classification.} 
The solution is denoted by $\hat \bbeta$ (unique in the separable case). 
Second, define $\bP_{\bmu} \coloneqq \rho^{-1} \bmu \bmu^\top$, $\bP_{\bmu}^\perp \coloneqq \bI_p - \rho^{-1}\bmu \bmu^\top$, the ratio $\hat u \coloneqq \| \bP_{\bmu}^\perp \bW_\eta\hat \bbeta \| / \| \bP_{\bmu} \bW_\eta\hat \bbeta \|$.
% We use the abbreviation w.h.p.~to mean ``with high probability''.
We have the following theoretical guarantees. 
\begin{theorem}\label{thm:CGMT}
Suppose that $n / p \to \delta > 0$, $\bW_\eta\in\calW$, and $\|\bmu\|$ is a constant. There exists $\delta^*(\rho) > 0$ such that the following holds.
\begin{enumerate}[topsep=-3pt]
\setlength\itemsep{0.1em}
\item (non-separability) If $\delta > \delta^*(\rho)$, then with probability approaching one, 
$(\hh_i, y_i)_{i \le n}$ is not linearly separable and $\hat \bbeta = \mathbf{0}$.
\item (separability) If $\delta < \delta^*(\rho)$, then with probability approaching one,
there exists a unique solution $\hat \bbeta$ to \eqref{opt:maxmargin} with the margin
\[
\hat \kappa = \min_{i\le n} y_i \langle \zz_i, \hat \bbeta \rangle \xrightarrow{p} \kappa^*(\rho, \eta) > 0.
\]
\item (monotonicity of error) If $\delta < \delta^*(\rho)$, $\hat u \xrightarrow{p} u^*(\rho, \eta)$ and $u^*(\rho,\eta)$ is monotonically decreasing in $\eta$. 
% under a technical condition\footnote{For $\eta \le 0$, the statement holds without additional conditions. For $\eta>0$, however, our current proof requires a condition that is likely to be removed. Due to space limit, we defer the statement of this technical condition to the appendix.}. 
Moreover, the test error obeys
\begin{equation*}
\Err\,(\hat \bbeta; \eta) \xrightarrow{p} \Phi\left(- \rho^{1/2} \sqrt{\tfrac{1}{1+[u^*(\rho, \eta)]^2}}\right),
\end{equation*}
where $\Phi$ denotes Gaussian CDF.
Thus, the asymptotic test error is decreasing in $\eta$. 
\end{enumerate}
\end{theorem}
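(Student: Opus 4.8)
The plan is to prove parts 1--3 via the Convex Gaussian Min-max Theorem (CGMT). The first and most useful step is a reduction that isolates the role of $\eta$. Writing $\hh_{0,i}=y_i\bmu+\bg_i$ with $\bg_i\iid\calN(\mathbf 0,\bI_p)$ and using $\bW_\eta\bmu=(1+\eta)\bmu$, we have $y_i\langle\zz_i,\bbeta\rangle=(1+\eta)\langle\bmu,\bbeta\rangle+y_i\langle\bg_i,\bW_\eta\bbeta\rangle$, and by symmetry of $\bg_i$ we may replace $y_i\bg_i$ by $\bg_i$. Passing to the hard-margin (SVM) form of \eqref{opt:maxmargin} and substituting $\bv=\bW_\eta\bbeta$, $\hat\bmu\coloneqq\bmu/\norm{\bmu}$, the problem becomes, on the separable event,
\[
\hat\kappa^{-1}=\min_{\bv}\ \sqrt{(1+\eta)^{-2}\langle\hat\bmu,\bv\rangle^2+\norm{\bP_{\bmu}^\perp\bv}^2}\qquad\text{subject to}\qquad\langle\bmu+\bg_i,\bv\rangle\ge 1\ \ \forall\, i .
\]
Two structural facts follow at once. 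First, feasibility of the constraints --- hence linear separability of $(\zz_i,y_i)$ --- does not involve $\eta$; it is exactly separability of the standard two-component GMM, so the threshold $\delta^*(\rho)$ is $\eta$-free and equals the classical existence threshold of a separating hyperplane for $\calN(\bmu,\bI_p)$ points (equivalently the Cand\`es--Sur phase transition as a function of $\rho$), which we either quote or recover as the point where the scalarized auxiliary problem below diverges. When the data are not separable, the optimal value of \eqref{opt:maxmargin} equals $0$ and, with probability one by genericity, is attained only at $\bbeta=\mathbf 0$, giving part 1. Second, $\eta$ enters the displayed program \emph{only} through the weight $(1+\eta)^{-2}$ on the signal coordinate $\langle\hat\bmu,\bv\rangle$, while the feasible polytope $P\coloneqq\{\bv:\langle\bmu+\bg_i,\bv\rangle\ge1\ \forall i\}$ is $\eta$-independent.

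Next I would apply CGMT on $\{\delta<\delta^*(\rho)\}$. Dualizing the $n$ linear constraints and using $\min_i(\cdot)=\min_{\blambda}\sum_i\lambda_i(\cdot)$ over the simplex turns $\hat\kappa$ into a min--max with a bilinear Gaussian term $\blambda^\top\bG\bv$ (here $\bG=[\bg_1,\dots,\bg_n]^\top$) plus the convex--concave part $\langle\bmu,\bv\rangle$, over compact convex sets (a routine truncation restricts $\bv$ and the multipliers to balls of radius eventually inactive on the separable event). CGMT replaces this by the auxiliary optimization $\min_{\blambda}\max_{\bv}\big[\langle\bmu,\bv\rangle+\norm{\blambda}_2\,\bg^\top\bv+\norm{\bv}_2\,\bh^\top\blambda\big]$ with $\bg\sim\calN(\mathbf 0,\bI_p)$, $\bh\sim\calN(\mathbf 0,\bI_n)$. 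Scalarizing in the standard way --- decompose $\bv=v_\parallel\hat\bmu+\bv_\perp$, align $\bv_\perp$ with $\bP_{\bmu}^\perp\bg$, use $\norm{\bP_{\bmu}^\perp\bg}^2/p\to1$, and reduce the $\blambda$-minimization to the scalar statistics $\norm{\blambda}_2$ and $\bh^\top\blambda$ (noting $\norm{\blambda}_2=O_{\PP}(n^{-1/2})$ at the optimum, which kills the stray term $\norm{\blambda}_2\langle\hat\bmu,\bg\rangle\,v_\parallel$) --- collapses the auxiliary problem to a deterministic min--max over a bounded number of scalars. Its value converges to a constant $\kappa^*(\rho,\eta)$, strictly positive for $\delta<\delta^*(\rho)$ and degenerating to $0$ for $\delta>\delta^*(\rho)$; transferring back through CGMT gives parts 1--2 (separability dichotomy and $\hat\kappa\xrightarrow{p}\kappa^*(\rho,\eta)>0$).

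For part 3, observe that the test error has an \emph{exact} closed form: for a fresh $(\tilde\zz,\tilde y)$ one has $\tilde y\langle\tilde\zz,\hat\bbeta\rangle=(1+\eta)\langle\bmu,\hat\bbeta\rangle+\tilde y\langle\tilde\bg,\bW_\eta\hat\bbeta\rangle$, and conditionally on $\hat\bbeta$ the second term is $\calN(0,\norm{\bW_\eta\hat\bbeta}^2)$; writing $\bW_\eta\hat\bbeta=v_\parallel\hat\bmu+\bv_\perp$ (a positive multiple of the SVM solution $\bv^*$) and using $(1+\eta)\langle\bmu,\hat\bbeta\rangle=\langle\bmu,\bW_\eta\hat\bbeta\rangle=\norm{\bmu}\,v_\parallel$, we get, provided $v_\parallel>0$ (which I would check separately: the max-margin direction is positively aligned with $\bmu$ with high probability),
\[
\Err\,(\hat\bbeta;\eta)=\Phi\!\Big(-\tfrac{\norm{\bmu}\,v_\parallel}{\norm{\bW_\eta\hat\bbeta}}\Big)=\Phi\!\Big(-\rho^{1/2}\sqrt{\tfrac{1}{1+\hat u^2}}\Big),\qquad\text{where }\ \hat u=\frac{\norm{\bv_\perp}}{v_\parallel}=\frac{\norm{\bP_{\bmu}^\perp\bW_\eta\hat\bbeta}}{\norm{\bP_{\bmu}\bW_\eta\hat\bbeta}} .
\]
It thus suffices to show $\hat u\xrightarrow{p}u^*(\rho,\eta)$ with $u^*$ decreasing in $\eta$. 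The convergence is the standard CGMT ``solution-recovery'' step: the scalarized auxiliary problem pins down the concentrated values of $v_\parallel^*$ and $\norm{\bv_\perp^*}$, hence of $\hat u$, and convexity of the primary problem transfers this (perturb the objective by a small multiple of a function of $\hat u$ and show the perturbed auxiliary value strictly increases once $\hat u$ is bounded away from $u^*$). For monotonicity in $\eta$ I would argue \emph{pathwise}, which is the clean point: $P$ does not depend on $\eta$, and $\bv^*(\eta)$ minimizes $(1+\eta)^{-2}v_\parallel^2+\norm{\bv_\perp}^2$ over $P$; adding the optimality inequalities at $\eta_1<\eta_2$ shows $|v_\parallel^*|$ is nondecreasing and $\norm{\bv_\perp^*}$ nonincreasing in $\eta$, so $\hat u(\eta_1)\ge\hat u(\eta_2)$ almost surely, and passing to the limit gives $u^*(\rho,\cdot)$ nonincreasing. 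Plugging into the identity above shows $\Err(\hat\bbeta;\eta)$ is decreasing in $\eta$.

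The main obstacle is the scalarization of the auxiliary optimization and the analysis of the resulting low-dimensional deterministic min--max: establishing a unique optimizer (strict convexity after the reduction), extracting the fixed-point equations that define $\kappa^*$ and $u^*$, and --- crucially --- identifying exactly where the value degenerates so that $\delta^*(\rho)$ matches the GMM separability threshold. The CGMT bookkeeping (truncation to compact sets, inactivity of the auxiliary radius, the solution-recovery perturbation) is standard but tedious. By contrast, once $\hat u\xrightarrow{p}u^*$ is established, the test-error formula and its monotonicity in $\eta$ come essentially for free, precisely because $\eta$ affects only a single quadratic weight in an otherwise $\eta$-independent convex program.
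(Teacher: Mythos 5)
Your overall strategy matches the paper's: reduce to the margin/feasibility event, dualize the constraints into a bilinear Gaussian min--max, apply CGMT (Gordon's inequality) and scalarize to a low-dimensional deterministic problem defining $\delta^*(\rho)$, $\kappa^*(\rho,\eta)$ and $u^*(\rho,\eta)$, then recover $\hat u \xrightarrow{p} u^*$ by the constrained/perturbed version of the same comparison (the paper does this by restricting to the cones $\{\norm{\bP_{\bmu}^\perp\bbeta}\le v\norm{\bP_{\bmu}\bbeta}\}$), and finally read off the test error from the exact Gaussian formula $\Phi\bigl(-\rho^{1/2}(1+\hat u^2)^{-1/2}\bigr)$, which is identical to the paper's derivation.

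Where you genuinely diverge is the monotonicity of $u^*$ in $\eta$. The paper proves it analytically: it characterizes $(\kappa^*(c),u^*(c))$, $c=(1+\eta)^{-2}$, as the solution of the stationarity system for the scalar function $f_\delta(u,\kappa,c)$, establishes continuity of $u^*(c)$, and applies the implicit function theorem with an explicit sign computation of $\partial_c\partial_u f-\frac{\partial_c f}{\partial_\kappa f}\partial_\kappa\partial_u f<0$ to get \emph{strict} monotonicity. You instead exploit that, after the change of variables $\bv=\bW_\eta\bbeta$, the feasible polytope is $\eta$-independent and $\eta$ enters only through the weight $(1+\eta)^{-2}$ on $\langle\bmu,\bv\rangle^2$; the standard exchange argument (adding the two optimality inequalities at $\eta_1<\eta_2$) then gives, for every $n$ on the separable event, that $v_\parallel^2$ is nondecreasing and $\norm{\bv_\perp}^2$ nonincreasing in $\eta$, hence $\hat u(\eta_1)\ge\hat u(\eta_2)$ pathwise, and the limit transfers this to $u^*$. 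This is a cleaner and more elementary route to the qualitative conclusion and sidesteps the paper's delicate derivative and continuity lemmas; its costs are (i) it still relies on the CGMT solution-recovery step to identify the limit $u^*$ at each fixed $\eta$ (so the heavy machinery is not avoided, only the monotonicity analysis), and (ii) it only yields weak (non-strict) monotonicity of $u^*$ and of the asymptotic error, whereas the paper's implicit-function argument gives strict decrease as stated in the theorem; you would need an extra argument (e.g., ruling out $a(\bv^*_{\eta_1})=a(\bv^*_{\eta_2})$, or a local strictness argument at the scalar fixed point) to recover strictness. You should also make the claimed positivity $v_\parallel>0$ explicit, since the error formula and the ratio $\hat u$ presuppose it; the paper's formulation builds this in through the scalarized variable $\gamma\ge 0$ attaining the optimum at $\gamma>0$.
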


\noindent See Appendix~\ref{sec:proof_high_d} for the proof, and the precise definitions of $\kappa^*$ and $u^*$. 
\medskip

While separability thresholds (claims 1 and 2) are known in similar models \citep{deng2022model}, the third claim (the most interesting and important one) shows that in the high-dimensional regime, even a linear projector can change the test accuracy of a ``linear'' max-margin classifier, and the test accuracy increases with the expansion strength $\eta$. This partially explains the puzzling effect of projectors on downstream performance. We remark in passing that this result is based on a recent technique known as the convex Gaussian minimax theorem~\citep{gordon1988milman, pmlr-v40-Thrampoulidis15}. %Proofs are deferred to the appendix.

%\paragraph{Inductive bias heuristics.} In interpolating models, the projector yields different inductive bias. The expansion/shrinkage effects of $\bW$ induce an implicit norm $\norm{\uu}_{\bW} := \uu^\top (\bW\bW^\top)^{-1/2} \uu$.
%changed that ultimately leads to difference in the downstream classification accuracy.
%This is indeed the case, as shown in the following theorem.  

\paragraph{Numerical evidence.}

In Figure~\ref{fig:err}, %we examine the effects of expansion/shrinkage under the one-parameter expansion model. 
we use the $2$-GMM with separation parameter $\rho \in \{1,2,3,5\}$ to generate a dataset $(\hh_{0,i})_{i\le n}$ of size $n=1000$ and dimension $p \in \{200,600,2000\}$. We apply $\bW = \bI_p + \eta \rho^{-1} \bmu \bmu^\top \in \mathcal{W}$ to obtain embeddings $(\zz_i)_{i\le n}$. We treat the mixture membership as labels $(y_i)_{i\le n}$ and compute the max-margin classifier on $(\zz_i, y_i)_{i\le n}$. For varying $\rho,p,\eta$, we report the test error of the max-margin classifier and confirm the monotone decreasing property.

\begin{figure}[t]
    \centering
    % \hspace{-5em}
    % \begin{subfigure}{0.82\textwidth}
    \includegraphics[width=0.9\textwidth]{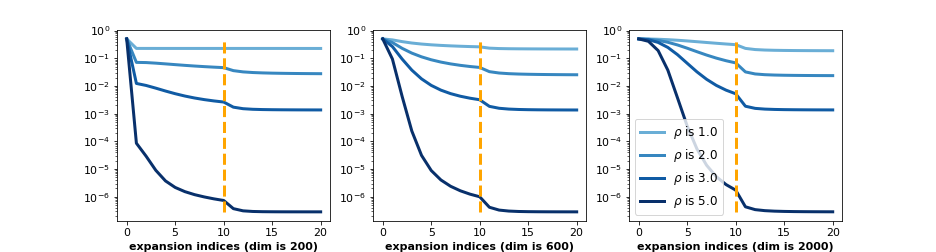}
    % \end{subfigure}
    % \hspace{-3em}
    % \begin{subfigure}{0.30\textwidth}
    %     \includegraphics[width=\textwidth]{}
    %     \caption{}
    %     \label{fig:err_heatmap}
    % \end{subfigure}
    % \vspace{-3mm}
    \caption{Test error is monotone in $\eta$ under the one-parameter projector model. We apply a linear transform $\bW \in \mathcal{W}$ on train data from $2$-GMM, and calculate the test errors of the max-margin classifiers. Note $\eta=0$ (vertical dashed) is equivalent to no transform.}
    \label{fig:err}
    % {\bf (b)}: Heatmap of $1-\texttt{Err}_{\mathrm{after}}/\texttt{Err}_{\mathrm{before}}$, where we obtain $\bW$ from training the standard SimCLR loss  ($n=2000$, $p=50$, $\norm{\bmu}=3$).
    % \vspace{-2mm}
\end{figure}

% In Figure~\ref{fig:err_heatmap}, we go beyond the simple one-parameter expansion model and train the standard SimCLR loss to obtain embeddings $(\zz_i)_{i \le n}$.
% The max-margin classifiers on $(\hh_{0,i}, y_i)_{i\le n}$ and $(\zz_{i,0}, y_i)_{i\le n}$ lead to, respectively, two test errors $\texttt{Err}_{\mathrm{before}}$ and $\texttt{Err}_{\mathrm{after}}$. We report the comparison of the two errors in a heatmap.
%\begin{figure}[ht]
%    \centering
%    \includegraphics[width=0.48\textwidth]{plots/heatmap_n_2000_d_50_theta_3.png}
%    \caption{Heatmap of accuracy ratio (after/before) and alignment with singular vectors ($n=2000$, $d=50$, $\Vert \theta \Vert_2=3$). Pink curve: theoretical phase transition.}
%    \label{fig:gmm_heatmap}
%\end{figure}

\section{Extensions to inhomogeneous feature augmentation}\label{sec:inhomogeneous}
% !TEX root = paper.tex

So far our analysis applies to scenarios where \textit{either} expansion \textit{or} shrinkage appears. 
However, in practice, it is not uncommon to encounter situations where both expansion and shrinkage appear. 
This does not render our previous analysis vacuous. As we will demonstrate in this section, 
we can extend the previous analysis to the case with inhomogeneous augmentations, under which both 
expansion and shrinkage can appear. Our treatment throughout this section parallels that in Section~\ref{sec:gmm_expansion_shrinkage}. 
%In this section, we provably demonif the covariance matrix of the augmentation is spiked.

\subsection{Feature augmentation with spiked covariance}

Suppose that features are generated from the same $2$-GMM but augmentations are inhomogeneous: 
\begin{align*}
    \hh_{0,i} \stackrel{\mathrm{i.i.d.}}{\sim} \frac{1}{2} \mathcal{N} (-\bmu, \bI_p) + \frac{1}{2} \mathcal{N} (\bmu, \bI_p), \quad \text{and} \quad
\hh_i,\hh_i^+ | \,\hh_{0,i} \stackrel{\mathrm{i.i.d.}}{\sim} \mathcal{N}(\hh_{0,i}, \sigma_{\mathrm{aug}}^2 \bA),
\end{align*}
where $\bA \succeq \bI_p$ is covariance matrix. Throughout this section, we assume $p > 2$. This inhomogeneous model is supported by empirical evidence. In the appendix, we show that image-level augmentation (random cropping, color distortion, etc.) does produce inhomogeneous features, which lead to more realistic phenomena. 

Now following the same setup as before: we consider linear projectors $\zz = \bW \hh$, the modified SimCLR loss \eqref{def:Ln}, the population counterpart $\calL(\bW) = \E_{\zz} \calL_n(\bW)$ and its approximation $\tilde \calL(\bW)$. To ease the notation, let $\tilde\bW = (\bI + \sigma_\aug^2)^{1/2} \bW$, $\tilde\bmu = (\bI + \sigma_\aug^2)^{-1/2} \bmu$ and $\sum_{j=1}^p \tilde \sigma_j^2 \tilde \uu_j^\top \tilde \vv_j$ be the SVD of $\tilde\bW$.
The following proposition gives the explicit formulas of $\calL(\bW)$ and $\tilde \calL(\bW)$.

\begin{proposition}\label{prop:loss-aug}
Define
\[
% \tilde \alpha := \tilde \alpha(\bW) = \EE\| \bW \hh \|^2 = (1+\sigma_\aug^{2})\|\bW\|_{\mathrm{F}}^{2}+\|\bW\bmu\|^{2}+\sigma_\aug^{2}\rho_\aug\|\bW\vv_\aug\|^{2}.
\tilde \alpha := \tilde \alpha(\bW) = \EE\| \bW \hh \|^2 = \|\bW\|_{\mathrm{F}}^{2}+\|\bW\bmu\|^{2}+\sigma_\aug^{2}\tr\left(\bW^\top \bA \bW\right).
\]
The loss $\calL(\bW)$ takes the form $\calL(\bW) = \calL_{\aalign}(\bW) + \calL_{\unif}(\bW)$,
where
\begin{align*}
\calL_{\aalign}(\bW) &= -\frac{1}{\tau \tilde\alpha}\Vert \bW\Vert_{\mathrm{F}}^{2}\\
\calL_{\unif}(\bW) &= -\frac{1}{2} \sum_{j=1}^p \log \left(1 + \frac{2 \tilde \sigma_j^2}{\tau \tilde \alpha} \right) + \log \left( 1 + \exp\Big( - \sum_{j=1}^p \frac{2\tilde \sigma_j^2 \langle \tilde \bmu, \tilde \vv_j \rangle^2}{2\tilde \sigma_j^2 + \tau \tilde \alpha } \Big) \right) - \log 2.
\end{align*}
\end{proposition}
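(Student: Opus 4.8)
The plan is to follow the template of the proof of Proposition~\ref{prop:loss} in Appendix~\ref{sec:loss_calc}, the only genuinely new ingredient being a linear change of variables that absorbs the anisotropic augmentation covariance $\bA$. Write $\bM \coloneqq \bI_p + \sigma_{\aug}^2 \bA$. Conditioning on the class sign $\epsilon \in \{\pm 1\}$ (so that $\hh_0 = \epsilon\bmu + g$ with $g \sim \mathcal{N}(0,\bI_p)$), a single view obeys $\hh \mid \epsilon \sim \mathcal{N}(\epsilon\bmu, \bM)$ and hence $\zz = \bW\hh \mid \epsilon \sim \mathcal{N}(\epsilon\bW\bmu, \bW\bM\bW^\top)$; a short second-moment computation gives $\E\|\zz\|^2 = \|\bW\bmu\|^2 + \tr(\bW\bM\bW^\top) = \tilde\alpha$, so that $\mathrm{sim}^*(\zz_1,\zz_2) = -\|\zz_1-\zz_2\|^2/(2\tilde\alpha)$ with the same normalizing constant everywhere. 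The key bookkeeping device is $\tilde\bW = \bW\bM^{1/2}$, $\tilde\bmu = \bM^{-1/2}\bmu$: these satisfy $\bW\bmu = \tilde\bW\tilde\bmu$ and $\bW\bM\bW^\top = \tilde\bW\tilde\bW^\top$, so with the SVD $\tilde\bW = \sum_j \tilde\sigma_j \tilde\uu_j \tilde\vv_j^\top$ one has $\bW\bM\bW^\top = \sum_j \tilde\sigma_j^2 \tilde\uu_j\tilde\uu_j^\top$ and $\bW\bmu = \sum_j \tilde\sigma_j \langle\tilde\bmu,\tilde\vv_j\rangle \tilde\uu_j$.

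The alignment loss is a one-line trace computation: conditionally on $\hh_{0,i}$ one has $\hh_i - \hh_i^+ \sim \mathcal{N}(0, 2\sigma_{\aug}^2\bA)$, hence $\E[\|\zz_i - \zz_i^+\|^2 \mid \hh_{0,i}] = 2\sigma_{\aug}^2\tr(\bW^\top\bW\bA)$ (free of $\hh_{0,i}$), giving $\calL_{\aalign} = \sigma_{\aug}^2\tr(\bW^\top\bW\bA)/(\tau\tilde\alpha)$; the compact form displayed in the statement results from additionally absorbing the leading-order piece $-\sum_j \tilde\sigma_j^2/(\tau\tilde\alpha) = -(\|\bW\|_{\mathrm{F}}^2 + \sigma_{\aug}^2\tr(\bW^\top\bW\bA))/(\tau\tilde\alpha)$ of the first term of $\calL_{\unif}$, using $\sum_j\tilde\sigma_j^2 = \|\bW\|_{\mathrm{F}}^2 + \sigma_{\aug}^2\tr(\bW^\top\bW\bA)$.

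The substantive part is the uniformity loss. Interchanging $\log$ with the (empirical) average as in~\eqref{def:Ln} and passing to the population, $\calL_{\unif}(\bW) = \log \E_{\zz,\zz'} \exp(-\|\zz-\zz'\|^2/(2\tau\tilde\alpha))$ with $\zz,\zz'$ i.i.d.\ from the marginal above. Condition on the pair of class signs $(\epsilon,\epsilon')$: then $\zz - \zz' \sim \mathcal{N}((\epsilon-\epsilon')\tilde\bW\tilde\bmu,\, 2\tilde\bW\tilde\bW^\top)$, and the Gaussian identity $\E\, e^{-\|Y\|^2/(2c)} = \det(\bI + S/c)^{-1/2}\exp(-\tfrac{1}{2c} m^\top (\bI + S/c)^{-1} m)$ for $Y \sim \mathcal{N}(m,S)$, diagonalized in the orthonormal system $\{\tilde\uu_j\}$ via the identities of the first paragraph, yields
\[
\E\big[\, e^{-\|\zz-\zz'\|^2/(2\tau\tilde\alpha)} \,\big|\, \epsilon,\epsilon' \,\big] \;=\; \prod_{j=1}^{p}\Big(1 + \tfrac{2\tilde\sigma_j^2}{\tau\tilde\alpha}\Big)^{-1/2} \exp\!\Big(-\tfrac{(\epsilon-\epsilon')^2}{2}\sum_{j=1}^{p}\tfrac{\tilde\sigma_j^2\langle\tilde\bmu,\tilde\vv_j\rangle^2}{\tau\tilde\alpha + 2\tilde\sigma_j^2}\Big).
\]
Averaging over $(\epsilon,\epsilon')$ — note $(\epsilon-\epsilon')^2$ equals $0$ with probability $1/2$ and $4$ with probability $1/2$ — turns the bracketed factor into $\tfrac12\big(1 + \exp(-\sum_j 2\tilde\sigma_j^2\langle\tilde\bmu,\tilde\vv_j\rangle^2/(\tau\tilde\alpha + 2\tilde\sigma_j^2))\big)$, and taking logarithms together with $\det(\bI + 2\tilde\bW\tilde\bW^\top/(\tau\tilde\alpha)) = \prod_j(1 + 2\tilde\sigma_j^2/(\tau\tilde\alpha))$ produces exactly the claimed $\calL_{\unif}(\bW)$. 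The approximate loss $\tilde\calL(\bW)$ then follows by replacing the logarithmic terms by their first-order expansions in $1/\tilde\alpha$ (i.e.\ $\log(1+x)\approx x$), precisely as in the derivation of~\eqref{eq:approx-loss}, valid in the regime $\tilde\alpha \gg \|\bW\|_{\mathrm{F}}^2$.

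The main obstacle is not any single hard estimate but the matrix bookkeeping forced by the anisotropy: one must keep track of the fact that $\bW\bM\bW^\top$ is diagonalized by the \emph{left} singular vectors $\tilde\uu_j$ of $\tilde\bW$, whereas the statement is phrased through the \emph{right} singular vectors $\tilde\vv_j$, the bridge being the identity $\bW\bmu = \tilde\bW\tilde\bmu$ (so $\langle\tilde\uu_j,\bW\bmu\rangle = \tilde\sigma_j\langle\tilde\vv_j,\tilde\bmu\rangle$). A secondary point of care is that the normalizing constant $\tilde\alpha = \E\|\bW\hh\|^2$ itself depends on $\bW$, must be computed with the anisotropic covariance, and must then be held fixed inside $\mathrm{sim}^*$; getting this right is exactly what makes all the inhomogeneous formulas collapse onto those of Proposition~\ref{prop:loss} upon setting $\bA = \bI_p$.
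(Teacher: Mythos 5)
Your treatment of the uniformity component is correct and is essentially the paper's own proof: the paper likewise sets $\tilde\bW=\bW(\bI+\sigma_\aug^2\bA)^{1/2}$ and $\tilde\bmu=(\bI+\sigma_\aug^2\bA)^{-1/2}\bmu$, observes that $\bW\hh-\bW\hh^-\overset{d}{=}\tfrac12\delta_0+\tfrac14\delta_{2\tilde\bW\tilde\bmu}+\tfrac14\delta_{-2\tilde\bW\tilde\bmu}+\sqrt{2}\,\tilde\bW\gggg$, and then reuses the homogeneous Gaussian computation; your closed-form identity $\E\,e^{-\|Y\|^2/(2c)}=\det(\bI+S/c)^{-1/2}\exp\big(-\tfrac{1}{2c}m^\top(\bI+S/c)^{-1}m\big)$, diagonalized via $\bW\bM\bW^\top=\tilde\bW\tilde\bW^\top$ and $\bW\bmu=\tilde\bW\tilde\bmu$, is just a compact repackaging of that step, and the $(\epsilon,\epsilon')$ averaging and the left/right singular-vector bookkeeping are the same as in the paper. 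The computation of $\tilde\alpha$ and of the conditional law of $\hh-\hh^+$ is also correct.

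The one place to be careful is the alignment term. The quantity you compute, $\calL_{\aalign}(\bW)=\sigma_\aug^2\,\tr(\bW\bA\bW^\top)/(\tau\tilde\alpha)$, is exactly what the paper's appendix derives and is the correct value of $-\tau^{-1}\E[\mathrm{sim}^*(\zz,\zz^+)]$. However, your remark that the displayed form $-\Vert\bW\Vert_{\mathrm{F}}^2/(\tau\tilde\alpha)$ ``results from absorbing the leading-order piece of the first term of $\calL_{\unif}$'' is not an exact regrouping: if you move $-\sum_j\tilde\sigma_j^2/(\tau\tilde\alpha)$ into the alignment term you must simultaneously delete it from $\calL_{\unif}$, whereas the displayed $\calL_{\unif}$ still carries the full $-\tfrac12\sum_j\log\big(1+2\tilde\sigma_j^2/(\tau\tilde\alpha)\big)$, so the sum of the two displayed expressions does not equal $\calL(\bW)$ identically. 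With the displayed $\calL_{\unif}$, the exact decomposition requires the trace form of $\calL_{\aalign}$ (as in the paper's own proof); the combination $-\Vert\bW\Vert_{\mathrm{F}}^2/(\tau\tilde\alpha)$ legitimately appears only in the first-order approximate loss $\tilde\calL$ of \eqref{def:appro-loss-inhomo}, where the alignment term cancels against the linearization $\log(1+x)\approx x$ of the log-determinant. So state the exact alignment term as $\sigma_\aug^2\,\tr(\bW\bA\bW^\top)/(\tau\tilde\alpha)$ and introduce $-\Vert\bW\Vert_{\mathrm{F}}^2/(\tau\tilde\alpha)$ only at the approximation stage, rather than presenting the absorption as if it yielded the stated exact identity.
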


\noindent Similar to before, one can consider a first-order approximation $\tilde\calL(\bW)$:
\begin{align}
\tilde \calL(\bW) &= -\frac{1}{\tau\tilde \alpha}\Vert \bW\Vert_{\mathrm{F}}^{2}+\log\left(1+\exp\left(-\frac{2\|\bW \bmu\|^{2}}{\tau\tilde \alpha}\right)\right)-\log 2. \label{def:appro-loss-inhomo}
\end{align}
The loss $\tilde \calL(\bW)$ depends on $\bW$ through $\| \bW \|$, $\| \bW \bmu \|$ and $\| \bW \vv_\aug\|$. 
We further consider a simple one-spike covariance model 
\begin{equation}\label{eq:onespike}
\bA = \bI_p + \rho_{\aug} \vv_{\aug}\vv_{\aug}^\top, \qquad \text{where}~\norm{\vv_{\aug}}=1\, .
\end{equation}
% \begin{equation*}
%     \bA = \bI_p + \rho_{\aug} \vv_{\aug}\vv_{\aug}^\top, \quad \text{where}~~\norm{\vv_{\aug}}=1,
% \end{equation*}
Here $\rho_\aug \ge 0$ quantifies the strength of the spike in augmentations. In particular, setting $\rho_\aug = 0$ recovers the homogeneous case.
It is natural to expect that $\mathrm{span}\{ \bmu, \vv_\aug\}$ plays a critical role in its minimization. For that purpose, let us define an orthogonal basis in $\mathrm{span}\{ \bmu, \vv_\aug\}$.
Let $\bar{\bmu} = \bmu/\|\bmu\|$, $r = \langle \bar{\bmu}, \vv_\aug \rangle$, and $\bmu_{\perp}$ be a vector such that $\| \bmu_{\perp} \|=1$, $\langle \bmu_{\perp}, \bmu \rangle = 0$ and $r \langle \bmu_{\perp}, \vv_\aug \rangle \geq 0$. 

\paragraph{Theoretical prediction.} On the surface, the loss $\tilde \calL(\bW)$ here is not a univariate function. It turns out, as our proof reveals, that minimization of $\tilde \calL(\bW)$ is equivalent to a univariate minimization problem that depends only on $T = \| \bW \bar{\bmu} \|^2 / \| \bW \|_{\mathrm{F}}^2$.
%Similar as the homogeneous case, the modified loss function $\tilde \calL(\bW)$ can be fully characterized by a univariate function of $T = \| \bW \bar{\bmu} \|^2$.
Define the threshold
\[
\tau_1^* = \frac{2 (1-r^2) \| \bmu \|^2}{ \log(1+2\sigma_\aug^2)(1+\sigma_\aug^2+(1-r^2) \| \bmu \|^2)} \,.
\]
We have the following result that characterizes the phase transition of the minimizer $\bW^*$.
\begin{theorem}(phase transition under spiked augmentation)
Consider the one-spike inhomogeneous model \eqref{eq:onespike} and the approximate loss \eqref{def:appro-loss-inhomo}. Recall the definition $T := T(\bW) = \| \bW \bar{\bmu} \|^2 / \| \bW \|_{\mathrm{F}}^2$. Let $\bW^*$ be a minimizer of $\tilde \calL(\bW)$ and $T^* = T(\bW^*)$. Then, $T^*$ is given by the minimization problem
\begin{align}
\begin{split}\label{opt:T}
\min_{T \in [0,1]} \quad &-\frac{1}{\tau\left[(1+\sigma_\aug^{2})+\|\bmu\|^{2}T + \rho_\aug \sigma_\aug^2 [ (r \sqrt{T} - \sqrt{1-r^2}\, \sqrt{1-T})_+ ]^2 \right]}\\
&+\log\left(1+\exp\left(-\frac{2\|\bmu\|^{2}T}{\tau\left[(1+\sigma_\aug^{2})+\|\bmu\|^{2}T + \rho_\aug \sigma_\aug^2 [ (r \sqrt{T} - \sqrt{1-r^2}\, \sqrt{1-T})_+ ] ^2\right]}\right)\right).
\end{split}
\end{align}
where $x_+$ denotes $\max\{x,0\}$. Under $\rho_\aug > 0$ and $0<|r|<1$, we have
\begin{itemize}[topsep=-3pt]
    \item if $\tau > \tau_1^*$, then the minimizer is attained at $T^* \in (1-r^2, 1)$, which is associated with a rank-one projector $\bW^*$ given by
    \[
    (\bW^*)^\top \bW^* = (\sqrt{T^*} \bar{\bmu} - \sqrt{1-T^*}\bmu_{\perp})(\sqrt{T^*} \bar{\bmu} - \sqrt{1-T^*}\bmu_{\perp})^\top;
    \]
    % \item if $\tau = \tau_1^*$, the minimizer is attained at $T^* = r^2$ and $\| \bW \vv_\aug \|=0$;
    \item if $\tau \le \tau_1^*$, the minimizer is attained at $T^* = \frac{\tau(1+\sigma_\aug^2) }{2\| \bmu \|^{2}} \log(1+2\sigma_\aug^2) \left[1-\frac{\tau}{2}\log(1+2\sigma_\aug^2)\right]^{-1}$ and $\bW^* \vv_\aug =0$.
    %$T^* = \frac{\tau(1+\sigma_\aug^2) }{2\| \bmu \|^{2}} \log(1+2\sigma_\aug^2) \left[1-\frac{\tau}{2}\log(1+2\sigma_\aug^2)\right]^{-1}$ and $\bW^* \vv_\aug =0$.
\end{itemize}
\label{thm:inhomo}
\end{theorem}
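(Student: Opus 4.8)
\medskip
\noindent\emph{Proof sketch.} The plan is to collapse the matrix minimization to a one-dimensional problem in two stages. \emph{Stage one} eliminates $\|\bW\vv_\aug\|^2$. From \eqref{def:appro-loss-inhomo} and the definition of $\tilde\alpha$, $\tilde\calL(\bW)$ depends on $\bW$ only through $(\|\bW\|_{\mathrm{F}}^2,\|\bW\bmu\|^2,\|\bW\vv_\aug\|^2)$ and is invariant under $\bW\mapsto c\bW$, so we normalize $\|\bW\|_{\mathrm{F}}^2=1$; then $T=\|\bW\bar{\bmu}\|^2$ and $\|\bW\bmu\|^2=\|\bmu\|^2T$. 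A short computation shows that, with $\|\bW\|_{\mathrm{F}}^2$ and $\|\bW\bmu\|^2$ held fixed, $\tilde\calL(\bW)$ is strictly increasing in $\tilde\alpha$ (both the $-\|\bW\|_{\mathrm{F}}^2/(\tau\tilde\alpha)$ term and the $\log(1+e^{-2\|\bW\bmu\|^2/(\tau\tilde\alpha)})$ term increase in $\tilde\alpha$). Since $\rho_\aug>0$ makes $\tilde\alpha$ strictly increasing in $\|\bW\vv_\aug\|^2$, for each fixed $T$ the loss is minimized by taking $\|\bW\vv_\aug\|^2$ as small as possible.

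I would then solve that inner problem by passing to $\bS=\bW^\top\bW\succeq0$: minimize $\vv_\aug^\top\bS\vv_\aug$ over $\bS\succeq0$ with $\tr\bS=1$ and $\bar{\bmu}^\top\bS\bar{\bmu}=T$. Only the $2\times2$ principal block of $\bS$ in the orthonormal basis $\{\bar{\bmu},\bmu_{\perp}\}$ of $\mathrm{span}\{\bmu,\vv_\aug\}$ enters the two quadratic forms, with $s_{11}=T$ fixed; combining $s_{22}\le1-T$ (the trace constraint, using $p>2$ to absorb the residual trace orthogonally) with $|s_{12}|\le\sqrt{s_{11}s_{22}}$ (positive semidefiniteness) and the expansion of $\vv_\aug$ in this basis gives
\[
\min\,\|\bW\vv_\aug\|^2=\big[\,(r\sqrt{T}-\sqrt{1-r^2}\,\sqrt{1-T})_+\,\big]^2=:\phi(T),
\]
attained for $T\le1-r^2$ by a $\bS$ with $\vv_\aug$ in its kernel and for $T>1-r^2$ by the rank-one matrix $\bS=(\sqrt{T}\,\bar{\bmu}-\sqrt{1-T}\,\bmu_{\perp})(\sqrt{T}\,\bar{\bmu}-\sqrt{1-T}\,\bmu_{\perp})^\top$. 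Substituting $\phi(T)$ produces exactly the scalar program \eqref{opt:T}; write $h(T)$ for its objective on $[0,1]$.

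\emph{Stage two} minimizes $h$. Two structural facts drive the argument: $\phi$ vanishes to second order at $T=1-r^2$, so $h\in C^1[0,1]$; and on $[0,1-r^2]$, $\phi\equiv0$, so there $h$ equals the univariate homogeneous loss \eqref{eq:approx-loss} under $t=1/((1+\sigma_\aug^2)+\|\bmu\|^2T)$, whose unimodality and interior critical point $T^\sharp:=\frac{\tau(1+\sigma_\aug^2)}{2\|\bmu\|^2}\log(1+2\sigma_\aug^2)\,[1-\frac{\tau}{2}\log(1+2\sigma_\aug^2)]^{-1}$ come from the analysis behind Theorem~\ref{thm:phase} applied with effective signal strength $(1-r^2)\|\bmu\|^2$ on the truncated $t$-interval; solving $T^\sharp=1-r^2$ for $\tau$ returns exactly $\tau_1^*$. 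Consequently: if $\tau\le\tau_1^*$ then $T^\sharp\le1-r^2$ and, since $h$ is increasing in $\tilde\alpha$ while the homogeneous loss is increasing past $T^\sharp$, one gets $h(T)\ge h(1-r^2)\ge h(T^\sharp)$ for all $T>1-r^2$; so the global minimizer is $T^*=T^\sharp$, achieved with $\bW^*\vv_\aug=0$. If $\tau>\tau_1^*$ then $h$ is strictly decreasing on $[0,1-r^2]$ (the homogeneous critical point now sits to the right), while $\phi'(T)\to+\infty$ as $T\uparrow1$ forces $h'(T)\to+\infty$; hence the minimizer is attained at an interior $T^*\in(1-r^2,1)$, and by Stage one the optimal $\bS=(\bW^*)^\top\bW^*$ is the displayed rank-one matrix.

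The main obstacle I anticipate is the interior step on $(1-r^2,1)$ for $\tau>\tau_1^*$: since $\beta(T):=(1+\sigma_\aug^2)+\|\bmu\|^2T+\rho_\aug\sigma_\aug^2\phi(T)$ is no longer affine in $T$ (unlike in Theorem~\ref{thm:phase}), proving that $h$ has a \emph{unique} critical point there---so that $T^*$ is well defined and the open interval is tight---needs a delicate sign analysis of $h'(T)$, weighing the signal term $\|\bmu\|^2T$ (which enters both $\beta$ and the exponent) against the spike penalty $\rho_\aug\sigma_\aug^2\phi(T)$ (which enters only $\beta$). The remaining pieces are either a direct transcription of the homogeneous argument or elementary optimization over a $2\times2$ positive semidefinite block.
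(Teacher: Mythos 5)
Your proposal is correct and follows the same overall route as the paper: exploit scale invariance to normalize $\|\bW\|_{\mathrm{F}}^2=1$, nest the minimization as $\min_{T}\min_{\bW:\,T(\bW)=T}$, use strict monotonicity of the loss in $\tilde\alpha$ to reduce the inner problem to minimizing $\|\bW\vv_\aug\|^2$, obtain $\phi(T)=[(r\sqrt{T}-\sqrt{1-r^2}\,\sqrt{1-T})_+]^2$, and then run the homogeneous-style threshold analysis on the scalar program \eqref{opt:T}, with the second-order vanishing of $\phi$ at $T=1-r^2$ (giving $C^1$ matching) and the blow-up of the derivative as $T\uparrow 1$ producing the two regimes and the threshold $\tau_1^*$. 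The one genuine difference is how you solve the inner problem: the paper splits into $T\le 1-r^2$ (an explicit SVD construction with $\bW\vv_\aug=0$) and $T>1-r^2$ (a semidefinite program in $\bA=\bW^\top\bW$ analyzed via the dual and complementary slackness to force the rank-one optimizer), whereas you treat both regimes uniformly through the $2\times 2$ principal block of $\bS=\bW^\top\bW$ in the basis $\{\bar{\bmu},\bmu_{\perp}\}$, combining the trace constraint $s_{22}\le 1-T$ with $|s_{12}|\le\sqrt{s_{11}s_{22}}$; this is more elementary, and forcing equality in those constraints when $T>1-r^2$ pins down the rank-one $\bS$ directly (the paper gets this from complementary slackness), while for $T\le 1-r^2$ your construction with residual trace placed orthogonally (using $p>2$) recovers $\bW\vv_\aug=0$. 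Finally, the obstacle you flag—uniqueness of the critical point of the scalar objective on $(1-r^2,1)$—is not actually needed: the theorem (and the paper's proof) only asserts that some minimizer lies in the open interval, which follows from strict decrease on $[0,1-r^2]$ when $\tau>\tau_1^*$ together with the derivative tending to $+\infty$ as $T\uparrow 1$; for whatever minimizing $T^*$, your stage-one identification of the unique optimal $\bS$ at that $T^*$ yields the stated rank-one $\bW^*$.
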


\noindent See Appendix~\ref{sec:proof-inhomo} for the proof.

\medskip
If $\rho_\aug = 0$ in the optimization problem of Theorem~\ref{thm:inhomo}, then the analysis reduces to the homogeneous case. However, with \textit{any small} spike strength $\rho_\aug>0$ and assuming nondegeneracy ($0<|r|<1$), the phase transition becomes qualitatively very different.
\begin{itemize}
\item The phase threshold is smaller as $\tau_1^* \le \tau^*$, which we recall that 
\[
\tau^* = 2\Vert \bmu \Vert^2\left\{(1+\sigma_{\aug}^2+\Vert \bmu \Vert^2)\log(1+2\sigma_{\aug}^2)\right\}^{-1}.
\]
The difference depends on the cosine angle between the signal $\bar \bmu$ and spike direction $\vv_\aug$, \textit{irrespective of the spike strength} $\rho_\aug$.
\item There is no perfect expansion along the signal direction $\bar \bmu$, as $T^* < 1$ is \textit{always true}.
\end{itemize}

\begin{figure}[t]
    \centering
    \begin{subfigure}[t]{0.72\textwidth}
    \includegraphics[width=\textwidth]{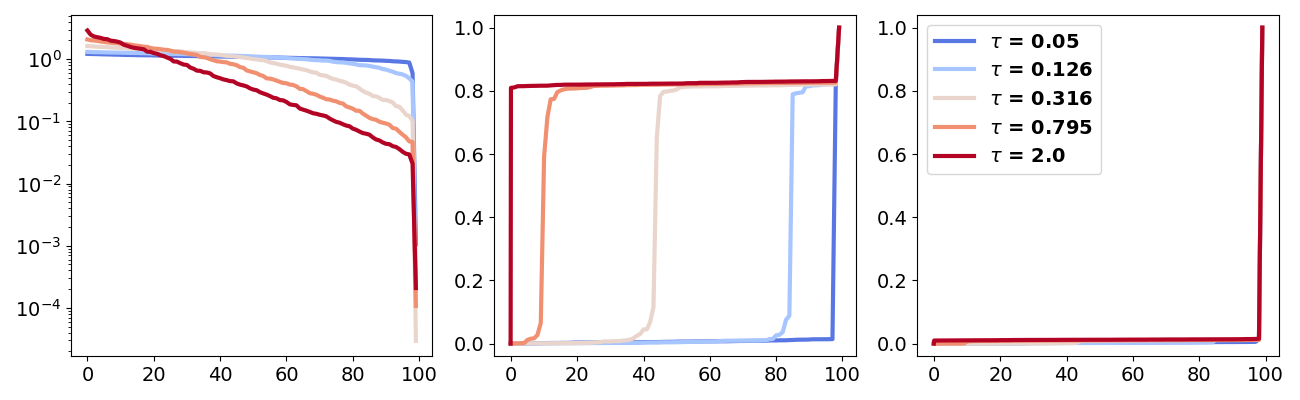}
    \caption{}
    \label{fig:inhomo}
    \end{subfigure}
    \hfill
    \begin{subfigure}[t]{0.27\textwidth}
    \includegraphics[width=\textwidth]{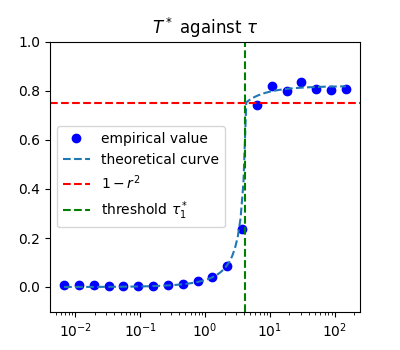}
    \caption{}
    \label{fig:plot_inhomo}
    \end{subfigure}
    \caption{GMM with inhomogeneous augmentation. (a) \textbf{Left}: $p=100$ singular values of $\bW$ in descending order with varying $\tau$ indicated by different colors. (a) \textbf{Middle}: Cumulative score for $\bmu$, i.e., $\mathrm{score}_i(\bmu) = \sum_{j \le i} \langle \bmu/\norm{\bmu}, \vv_j \rangle^2$. (a) \textbf{Right}: Cumulative score for $\vv_\aug$, i.e., $\mathrm{score}_i(\vv_\aug) = \sum_{j \le i} \langle \vv_\aug, \vv_j \rangle^2$. (b) Phase transition showing $T^*$ against $\tau$: theoretical prediction (dashed curve) versus empirical values (circles).}
\end{figure}

\paragraph{Degenerate cases.} It is beneficial to consider two degenerate examples: $r=0$ and $r=1$. When $r=0$, the signal direction and spike direction are orthogonal; and when $r=1$, they are perfectly aligned. 

Analyzing the optimization problem in Theorem~\ref{thm:inhomo} for the degenerate examples yields the following characterization.
%Consider the one-spike inhomogeneous model \eqref{eq:onespike} and the approximate loss \eqref{def:appro-loss-inhomo}. The minimizer $\bW^*$ satisfies the following properties.
\begin{enumerate}
\item When $r=0$ (namely $\bar \bmu \perp \vv_\aug$), we have $\tau_1^* = \tau^*$, and $\bW^* \vv_\aug = 0$, namely pure shrinkage along $\vv_\aug$. Moreover, similar to the homogeneous case, we have shrinkage along $\bar \bmu$ if $\tau \le \tau^*$ and expansion along $\bar \bmu$ if $\tau > \tau^*$.
\item When $|r|=1$ (namely $\bar \bmu \parallel \vv_\aug$), we have expansion along $\bar \bmu$ if 
\begin{equation*}
\tau \cdot \log\Big( \frac{2(1+\sigma_\aug^2)\| \bmu\|^2}{\| \bmu\|^2 + \rho_\aug \sigma_\aug^2} - 1 \Big) > \frac{2 \| \bmu\|^2}{1+\sigma^2+\| \bmu\|^2+\rho_\aug \sigma_\aug^2} \; .
\end{equation*}
\end{enumerate}
In the first example, expansion/shrinkage operates independently along the two directions $\bar \bmu$ and $\vv_\aug$. In the second example, the presence of a parallel spike produces a threshold that is nonlinear in $\rho_\aug$ for expansion/shrinkage along $\bar \bmu$.

%The result implies that when the temperature is adequate ($\tau < \tau^*_1$), the learned projector motivates the uniformity in the features in the sense that the inhomogeneity in augmentations, i.e. $\vv_\aug$, is compressed and we always have $\bW^* \vv_\aug = 0$ in this regime. As the explicit solution $T^*$ is an increasing function of $\tau$, when $\tau$ is small, the projector shrinks the signal $\bmu$, and when $\tau$ increases, the signal is expanded. However, when $\bar{\bmu}$ and $\vv_\aug$ are not aligned, $T^* \leq 1-r^2 < 1$, then $\bmu$ will not be fully spanned by the top singular vectors as is shown in the homogeneous setting. As a result, in this regime ($\tau < \tau^*_1$), we can see the simultaneous expansion and shrinkage of $\bmu$, which is the case with the \texttt{STL-10} dataset.
%When the temperature is large ($\tau \geq \tau^*_1$), the projector is rank-one and the top singular vector is $\sqrt{T^*} \bar{\bmu} - \sqrt{1-T^*}\bmu_{\perp}$. Different from the homogeneous setting, when the augmentation is inhomogeneous, only a proportion of the signal is spanned by the top singular vector of $\bW^*$ when $\tau$ is large. As we will show in the proof, one can show that $T^*$ is strictly less than $1$, which means that the full expansion is impossible.

\paragraph{Numerical experiments.} Figure~\ref{fig:inhomo} shows the singular vectors of $\bW^*$ after training with the SimCLR loss and also the cumulative scores with $\bar{\bmu}$ and $\vv_\aug$, respectively. We vary the temperature $\tau$, and fix $\rho_\aug=5$, $\sigma_\aug=0.5$, and $r = \langle \bar{\bmu}, \vv_\aug \rangle =0.5$. When $\tau \leq 2$, there is consistent shrinkage in the direction of $\vv_{\aug}$. In addition, for example, when $\tau=2$ or $0.795$, expansion and shrinkage in the direction of $\bmu$ coexist and $\bmu$ is only spanned by either the top singular vector or the bottom singular vector of $\bW^*$, which is consistent with Figure~\ref{fig:image_pretrained_plot}.

\subsection{Simultaneous expansion/shrinkage}

\begin{figure}[t]
    \centering
    \includegraphics[width=0.7\textwidth]{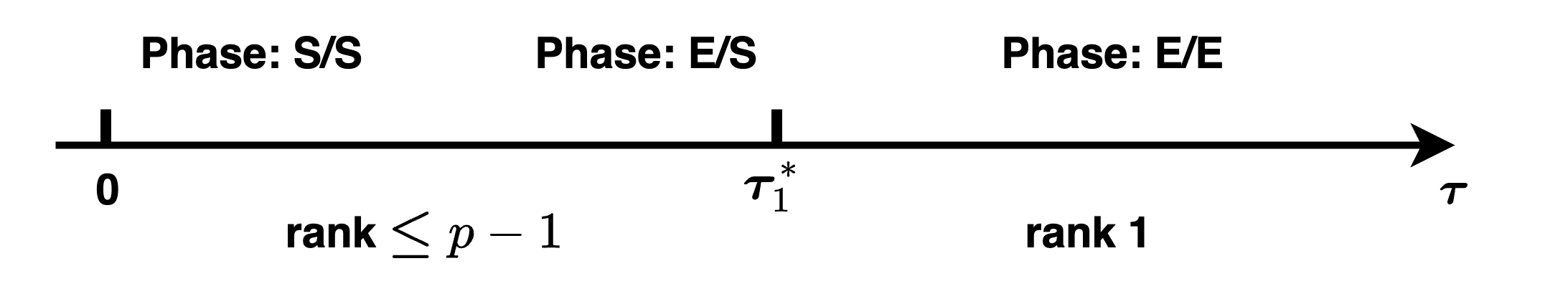}
    %\vspace{-2mm}
    \caption{An illustration showing the phase changes in the one-spike inhomogeneous model as we increase $\tau$. In the figure `E' is a shorthand for expansion and `S' for shrinkage; see Corollary~\ref{cor:change}.}
    \label{fig:change}
    %\vspace{-2mm}
\end{figure}

The signal $\bmu$ and spike $\vv_\aug$ exhibit different levels of expansion/shrinkage at the same temperature parameter, which sometimes leads to simultaneous expansion/shrinkage. 

Consider the one-spike inhomogeneous model \eqref{eq:onespike} and the approximate loss \eqref{def:appro-loss-inhomo}. Assume nondegeneracy $\rho_\aug > 0$ and $0<|r|<1$. Recall the SVD of $\bW^*$ is $\sum_{j \le p} \sigma_j \uu_j \vv_j^\top$. Without loss of generality we assume $\|\bW^*\|_\mathrm{F} = 1$.

\begin{corollary}[Phase change under varying $\tau$]\label{cor:change}
As we increase the temperature parameter $\tau$, treating other parameters $\|\bmu\|, \sigma^2_\aug, \rho_\aug, r$ as constants, we experience the following different phases.
\begin{enumerate}
\item When $\tau \ll 1$: both shrinkage. We have $\bW^* \vv_\aug = 0$, and also $\| \bW^* \bar \bmu \| \ll  1$, the latter of which implies $\max_{1\leq j \leq p} |\sigma_j \langle \vv_j, \bar \bmu \rangle| \ll 1$.
\item When $\omega(1) \le \tau \le \tau_1^*$: simultaneous expansion and shrinkage. We have $\bW^* \vv_\aug = 0$, and also $\| \bW^* \bar \bmu \| \ge c$ for certain constant $c>0$. There are two jumps in the cumulative score: for certain dimension-free constant $\epsilon \in (0,1)$, we have $\sum_{j \le \epsilon^{-1}} \langle \vv_j, \bar \bmu \rangle^2 \ge \epsilon$ and $\langle \vv_p, \bar \bmu \rangle^2 \ge \epsilon$.
\item When $O(1) \ge \tau > \tau_1^*$: both expansion. $\bW^*$ is a rank-one matrix, and its right singular vector $\sqrt{T^*} \bar{\bmu} - \sqrt{1-T^*}\bmu_{\perp}$ has positive cosine angles with both $\bar \bmu$ and $\vv_\aug$.
\item When $\tau \gg 1$: expansion increasingly aligns with $\bar \bmu$. Note that $\sqrt{T^*}$ is very close to $1$ but always strictly smaller than $1$.
\end{enumerate}
\end{corollary}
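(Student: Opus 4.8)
The plan is to obtain all four phases as corollaries of Theorem~\ref{thm:inhomo}, which already pins down the structure of the minimizer $\bW^*$ of the approximate loss \eqref{def:appro-loss-inhomo} on both sides of the threshold $\tau_1^*$ (via the reduced univariate problem \eqref{opt:T}), and then to supply a handful of elementary inequalities. Throughout I would use the normalization $\norm{\bW^*}_{\mathrm{F}} = 1$, so that $T^* = \norm{\bW^*\bar{\bmu}}^2$ and $\sum_j \sigma_j^2 = 1$, and take $r = \langle \bar{\bmu}, \vv_\aug\rangle > 0$ without loss of generality. The whole argument is then a case split on which branch of Theorem~\ref{thm:inhomo} applies, followed by reading off consequences.

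\emph{Phases 1 and 2 (shrinkage branch, $\tau \le \tau_1^*$).} Here Theorem~\ref{thm:inhomo} gives $\bW^*\vv_\aug = 0$ and the closed form $T^* = \frac{\tau(1+\sigma_\aug^2)}{2\norm{\bmu}^2}\log(1+2\sigma_\aug^2)\big[1-\frac{\tau}{2}\log(1+2\sigma_\aug^2)\big]^{-1}$, which is strictly increasing in $\tau$ on its admissible range and equals $1-r^2$ at $\tau = \tau_1^*$. When $\tau\ll 1$ this forces $T^*\to 0$, hence $\norm{\bW^*\bar{\bmu}} = \sqrt{T^*}\ll 1$; and since $\norm{\bW^*\bar{\bmu}}^2 = \sum_j (\sigma_j\langle \vv_j,\bar{\bmu}\rangle)^2 \ge \max_j (\sigma_j\langle \vv_j,\bar{\bmu}\rangle)^2$, we get $\max_j |\sigma_j\langle \vv_j,\bar{\bmu}\rangle| \le \sqrt{T^*}\ll 1$, which is Phase 1. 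When instead $\tau$ is bounded away from $0$ (the regime $\omega(1)\le\tau\le\tau_1^*$) the same formula gives $T^* \ge c > 0$, so $\norm{\bW^*\bar{\bmu}}\ge\sqrt{c}$. The two jumps are then seen as follows: $\bW^*\vv_\aug = 0$ puts $\vv_\aug$ in the right null space of $\bW^*$, so after reordering the right singular vectors within that null space we may take $\vv_p = \vv_\aug$, giving $\langle\vv_p,\bar{\bmu}\rangle^2 = r^2$ (the bottom jump); and $T^* = \sum_j \sigma_j^2\langle\vv_j,\bar{\bmu}\rangle^2 \le \sigma_1^2$ shows $\sigma_1^2 \ge c$, after which a counting argument --- at most $2/c$ indices can have $\sigma_j^2 \ge c/2$, and those indices must carry at least $T^* - c/2 \ge c/2$ of the total $\sum_j\langle\vv_j,\bar{\bmu}\rangle^2 = 1$ --- yields $\sum_{j\le\epsilon^{-1}}\langle\vv_j,\bar{\bmu}\rangle^2 \ge \epsilon$ for a dimension-free constant $\epsilon \le r^2$ (the top jump).

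\emph{Phases 3 and 4 (expansion branch, $\tau > \tau_1^*$).} Theorem~\ref{thm:inhomo} says $\bW^*$ is rank one with $(\bW^*)^\top\bW^* = \vv\vv^\top$, $\vv = \sqrt{T^*}\bar{\bmu} - \sqrt{1-T^*}\bmu_{\perp}$ and $T^* \in (1-r^2,1)$. Then $\langle\vv,\bar{\bmu}\rangle = \sqrt{T^*} > 0$ is immediate, and $\langle\vv,\vv_\aug\rangle = r\sqrt{T^*} - \sqrt{1-r^2}\sqrt{1-T^*}$, which is positive precisely because $T^* > 1-r^2$ (equivalently $r^2 T^* > (1-r^2)(1-T^*)$); this is Phase 3. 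For Phase 4 I would push to $\tau\to\infty$: with $\norm{\bW^*}_{\mathrm{F}} = 1$, \eqref{opt:T} reads $\min_{T\in[1-r^2,1]}\big\{-\frac{1}{\tau\alpha(T)} + \log(1 + e^{-2\norm{\bmu}^2 T/(\tau\alpha(T))})\big\}$ with $\alpha(T) = (1+\sigma_\aug^2) + \norm{\bmu}^2 T + \rho_\aug\sigma_\aug^2 (r\sqrt{T} - \sqrt{1-r^2}\sqrt{1-T})^2$ on this interval; a first-order expansion in $1/\tau$ gives objective $= \log 2 - \tfrac{1}{\tau}\cdot\frac{1+\norm{\bmu}^2 T}{\alpha(T)} + O(\tau^{-2})$, so $T^*$ converges to the maximizer of $T\mapsto (1+\norm{\bmu}^2 T)/\alpha(T)$ over $[1-r^2,1]$. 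Examining the sign of the derivative of this ratio shows the limit is strictly below $1$ (consistent with $T^* < 1$, which is already part of Theorem~\ref{thm:inhomo}) and that $T^*$ rises with $\tau$; combined with the shrinkage-branch formula (which climbs from $0$ to $1-r^2$) and continuity at $\tau_1^*$, this gives the picture that expansion progressively aligns with $\bar{\bmu}$ as $\tau$ grows while never reaching perfect alignment.

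I expect the genuinely technical step to be Phase 4: establishing monotonicity of $T^*(\tau)$ on the expansion branch and identifying the $\tau\to\infty$ limit requires differentiating the implicit first-order condition for \eqref{opt:T} and controlling the non-smoothness of $T\mapsto (r\sqrt{T} - \sqrt{1-r^2}\sqrt{1-T})_+$ at $T = 1-r^2$ (its derivative blows up near $T = 1$, which is exactly what prevents $T^* = 1$). Phases 1--3 and the cumulative-score counting argument in Phase 2 are then short consequences of Theorem~\ref{thm:inhomo} together with the elementary bounds above.
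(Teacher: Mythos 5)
Your proposal is correct and takes essentially the same route as the paper: all four phases are read off from Theorem~\ref{thm:inhomo}, the real work being the two cumulative-score jumps in Phase 2 (your pigeonhole count of the indices with $\sigma_j^2 \ge c/2$ is a minor variant of the paper's bound $\sigma_{k+1}^2 \le 1/(k+1)$ with $k \approx (2c)^{-1}$, and the bottom jump via taking $\vv_p = \vv_\aug$ in the null space is identical) and the positivity of $\langle \vv_1, \vv_\aug \rangle$ in Phase 3, which you argue exactly as the paper does. Your large-$\tau$ expansion and monotonicity discussion for Phase 4 go beyond what the paper supplies (it treats Phases 1 and 4 as immediate consequences of Theorem~\ref{thm:inhomo}, with $T^*<1$ built into that theorem), so nothing required by the statement is missing.
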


\noindent See Appendix~\ref{sec:proof-change} for the proof. 
\medskip

Figure~\ref{fig:change} gives an illustration of the phase change in the above corollary. It offers a theoretical explanation for the singular value/vector plots for \texttt{STL-10} data (Figure~\ref{fig:image_pretrained_plot}) and simulated GMM data (Figure~\ref{fig:inhomo}).

%which is in line with our findings with the \texttt{STL-10} dataset in Figure~\ref{fig:image_pretrained_plot}.
% We summarize our findings below.
%, and more simulations are reported in the appendix.
% \begin{itemize}%[topsep=-3pt,leftmargin=-3pt]
% \setlength\itemsep{0.01em}
%     \item There is consistent shrinkage in the direction of $\vv_{\aug}$ (left and right plots).
    
%     \item Under some settings, expansion and shrinkage in the direction of $\bmu$ coexist (middle plots).
    
%     \item As before, larger temperature $\tau$ encourages stronger expansion and suppresses shrinkage.
% \end{itemize}

% Following similar calculations as in Section~\ref{sec:gmm_expansion_shrinkage},
% %the homogeneous augmentation case,
% we characterize the ideal projector $\bW$ in term of a minimization problem over $3$ variables; see Section~\ref{sec:append_inhomo}.

%\TODO{Perhaps we can do more experiments that train SimCLR from scratch. Also, we can train a SimCLR with small output dimension such as 5 or 10, and hopefully we can see the test error does not change much, thus showing the generalization puzzle is indeed a high-dim phenomenon.}

\section{Discussion and related work}\label{sec:related}

\subsection{Connections to emerging empirical phenomena}

The puzzles about projectors in contrastive learning echo several known phenomena in deep learning. 

\paragraph{Dimensional collapse.} It is often observed that the trained features and embeddings do not span the entire ambient space. To be more precise, the singular values of the feature matrix $\bH = [\hh_1, \ldots, \hh_n]^\top$ and the trained embedding matrix $\bZ = [\zz_1,\ldots,\zz_n]^\top$ (with $\hh_i = \ff_{\btheta}(\xx_{i})$ and $\zz_i = \gggg_{\bvarphi}(\hh_i)$) contain one or more (approximate) zeros. This phenomenon is known as \textit{dimensional collapse}, which has been repeatedly reported in the literature~\citep{chen2020simple, jing2021understanding, balestriero2022contrastive}.\footnote{In fact, dimensional collapse is a more salient issue for non-contrastive approaches in SSL~\citep{hua2021feature, Tian2021UnderstandingSL} due to the lack of negative pairs.} When dimensional collapse occurs in the feature space, we obtain less informative representations. This is generally undesirable according to these papers and requires careful handling due to its adverse effects on generalization. Figure~\ref{fig:image_pretrained_plot} (left) confirms dimensional collapse by showing that the linear projector does not have full rank. 

To address dimensional collapse in contrastive learning (and more so in non-contrastive SSL), a line of work proposes to refine loss functions and design structured projectors \citep{balestriero2023cookbook}, but a systematic treatment is still lacking.

\paragraph{Transferability of intermediate-layer features.} In supervised learning, it is well observed that trained deep neural networks contain interpretable features that become progressively complex when moving up layers \citep{zeiler2014visualizing}. Therefore, it is natural to use intermediate-layer features pretrained on large datasets for related tasks \citep{yosinski2014transferable}. At the very top layers, features are believed to be very specific to a classification task, and thus they need to be finetuned on downstream tasks.

Intuitively, projectors bear similarity to those top layers that require finetuning on downstream tasks. For both supervised learning and contrastive learning, minimizing a specific loss seems to reduce the information in features and thus their generality.

\paragraph{Neural collapse.} In supervised learning, the features in the penultimate layer tend to form a symmetric structure, if we train neural networks many epochs well past the terminal phase where the train error achieves zero. This phenomenon is known as \textit{neural collapse} \citep{papyan2020prevalence}. In short, as training evolves, the penultimate features gradually collapse to their respective class means, which form an equiangular simplex. The highly symmetric and compact cluster structure is observed on the training dataset and rarely on the test dataset (unless the test error is also zero).

Contrastive learning produces weaker cluster structure but the learned features are more general for downstream tasks \citep{wang2020understanding}. The cluster structure resulting from contrastive learning is desirable for generalization, whereas the strong cluster structure resulting from cross-entropy minimization is partly due to the optimization artifact. In fact, a fine-grained intermediate-layer neural collapse suggests that top layers (including the penultimate layer) do not improve and sometimes even harm the generalization properties of features \citep{galanti2022implicit}. 

\subsection{Related work}

\paragraph{Analysis of contrastive learning.} Contrastive learning has received tremendous attention 
in the past few years, and a large body of work has been done around this topic. 
We refer interested readers to the recent overview~\citep{balestriero2023cookbook} for the historical account 
and for the recent updates. Though empirically successful, contrastive learning also brings
various intriguing phenomena including dimensional collapse \citep{jing2021understanding, hua2021feature} and behavior of projectors \citep{chen2020simple,chen2020big,cosentino2022toward}. These motivate quite a few recent theoretical attempts to explain the success of contrastive learning. Most notable and related is the paper by \citet{wang2020understanding}, where they discover that contrastive loss promotes both alignment and uniformity of the learned representations. This viewpoint is instrumental in our analysis and understanding, as can be seen from e.g., Proposition~\ref{prop:loss}. Our work goes beyond alignment/uniformity by precisely characterizing the effect (namely, expansion and shrinkage) of contrastive loss on the projection head. 
\cite{jing2021understanding} also studies the role of projectors by arguing that they prevent dimensional collapse in the representation space. However, no theoretical study of the generalization property is provided. 
In addition, several recent papers~\citep{haochen2021provable, wen2021toward, ji2021power, lee2021predicting, wen2022mechanism, von2021self, saunshi2022understanding} theoretically study contrastive learning without focusing on the role of projectors.

\paragraph{Implicit bias and interpolating models.} 
 It is recently discovered 
that in over-parametrized models (i.e., interpolating models), gradient descent (GD) type algorithms have implicit regularization effects on the model parameters. Relevant to our paper are the results of \citet{Soudry2018TheIB} and \citet{gunasekar2018characterizing}, where it is proved that for linearly separable data, GD iterates converge in direction to the \textit{max-margin classifier}. It is also important to characterize the generalization error of the solutions with implicit bias. A line of relevant papers include \citet{belkin2019reconciling, bartlett2020benign, hastie2022surprises, liang2020just, bartlett2020benign, hastie2022surprises, montanari2019generalization, montanari2022interpolation, deng2022model, mei2022generalization, liang2022precise, montanari2021tractability}. Closely related to our Section~\ref{sec:result} is the recent paper by \citet{deng2022model}, but their main purpose is explaining the double-descent phenomenon rather than studying the effects for expansion/shrinkage.

\bibliographystyle{apalike}
\bibliography{ref}

%%%%%%%%%%%%%%%%
%%% Appendix %%%
%%%%%%%%%%%%%%%%

\newpage
\appendix
\tableofcontents

\addtocontents{toc}{\protect\setcounter{tocdepth}{3}}
% !TEX root = paper_icml.tex

\section{Experiments: details and extensions}\label{sec:append_exp}
% !TEX root = appendix_organization.tex

\paragraph{Reproducibility.} Our code and data are included in the supplemental materials.

\subsection{Experiment setup and details}\label{sec:append_setup}

\paragraph{Fixed encoder network.} In Figure~\ref{fig:image_pretrained_plot}, we freeze the encoder network (ResNet-18) trained and saved in \url{https://github.com/sthalles/SimCLR} to focus on the behavior of projector under SimCLR loss. The pretrained architecture is trained with the default temperature $\tau=0.07$ and the following composite augmentation
\begin{lstlisting}[language=Python, caption=Image augmentation]
class TransformsSimCLR:
    """
    A stochastic data augmentation module that transforms any given data example randomly
    resulting in two correlated views of the same example,
    denoted x_i and x_j, which we consider as a positive pair.
    """

    def __init__(self, size, aug_str=1):
        color_jitter = torchvision.transforms.ColorJitter(
            0.8 * aug_str, 0.8 * aug_str, 0.8 * aug_str, 0.2 * aug_str
        )
        self.train_transform = torchvision.transforms.Compose(
            [
                torchvision.transforms.RandomResizedCrop(size=size),
                torchvision.transforms.RandomHorizontalFlip(), 
                torchvision.transforms.RandomApply([color_jitter], p=0.8),
                torchvision.transforms.RandomGrayscale(p=0.2),
                torchvision.transforms.ToTensor(),
            ]
        )

        self.test_transform = torchvision.transforms.Compose(
            [
                torchvision.transforms.Resize(size=size),
                torchvision.transforms.ToTensor(),
            ]
        )

    def __call__(self, x):
        return self.train_transform(x), self.train_transform(x)
\end{lstlisting}

\paragraph{Training details.}
For simplicity, we extract and then center pretrained features of 10-class \texttt{STL}-10 images (which conforms to the zero-mean assumption in 2-GMM). We train the projector $\bW \in \RR^{512 \times 512}$ for $50$ epochs with the batch size of $64$.

In all experiments reported in the paper, we use one-layer linear projector $\bW \in \RR^{p \times p}$ without the bias term, and this matrix is initialized by a random orthogonal matrix (orthogonal initialization avoids potential optimization artifacts \citep{xiao2018dynamical, arora2018convergence}. We train the linear projector using the standard SimCLR loss function \url{https://github.com/sthalles/SimCLR}. For downstream accuracy, we use ``linear\underline{\hspace{0.5em}}model.LogisticRegression'' from \texttt{sklearn} with very small $\ell_2$ regularization (choosing $C=1000$), with the aim to approximate the max-margin classifier when data are linearly separable \citep{Rosset2003}.

To assess the behavior of $\bW$ and the transition between expansion and shrinkage regimes, $10$ values of $\log(\tau)$ are chosen from equi-spaced grids in $[\log(0.01), \log(10)]$ and two values of the augmentation strength ($0.1$ and $1.0$) are chosen to represent small and moderate augmentation respectively.

\paragraph{Results for other pairs and deeper encoder.} Since the projector $\bW$ is trained with the 10-class \texttt{STL}-10 dataset, then the plots for singular values of $\bW$ and downstream accuracy are the same as is shown in Figure~\ref{fig:image_pretrained_plot}. In Figure~\ref{fig:image_pretrained_pair}, the cumulative sums of alignment scores are shown for different pairs which show similar patterns: as $\tau$ increases, the expansion effect is gradually gained.
\begin{figure}[t]
    \centering
    \includegraphics[width=0.95\textwidth]{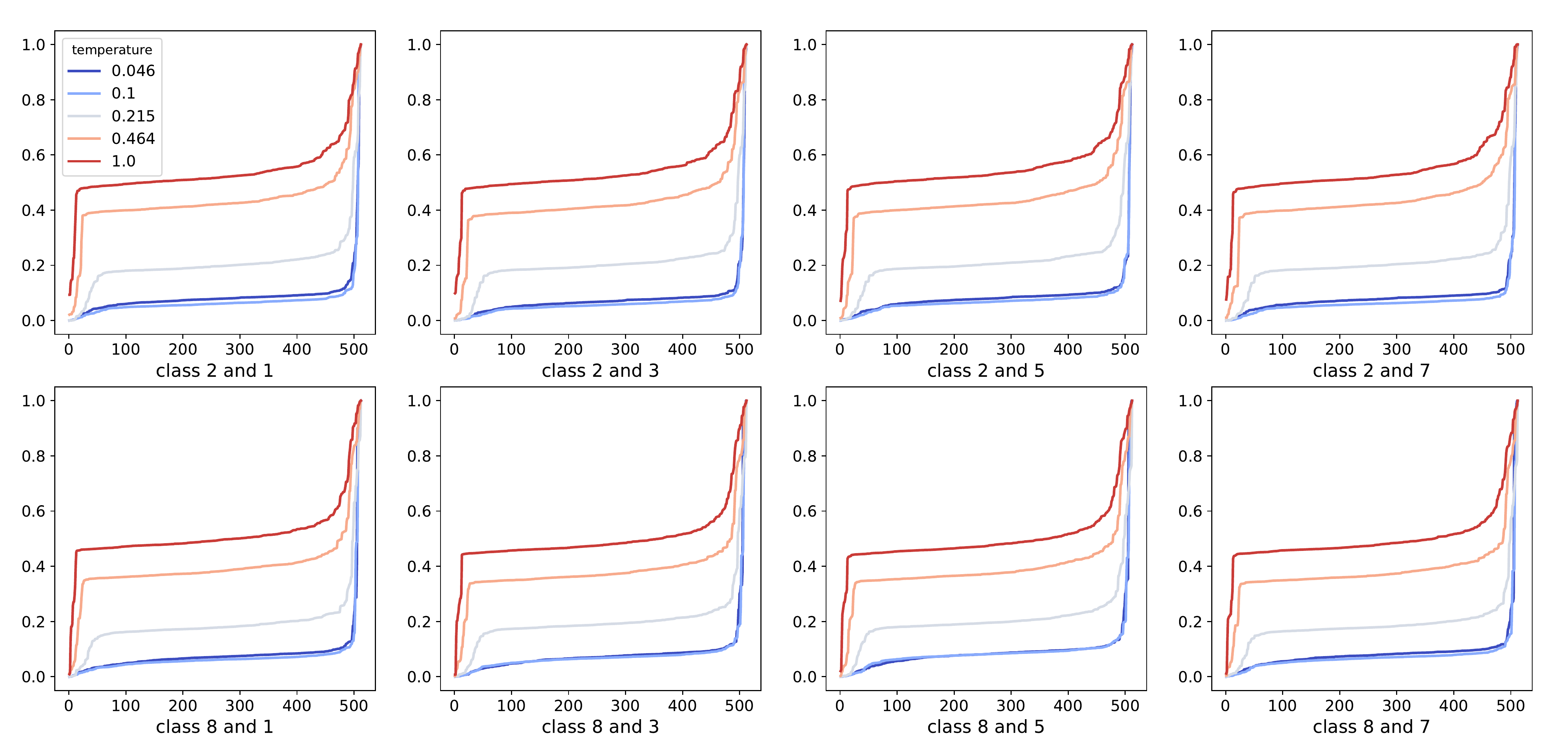}
    \caption{Cumulative sums of alignment scores $\kappa_{j,c_1,c_2} = \langle \vv_j, \bmu_{c_1,c_2} \rangle^2 / \Vert \bmu_{c_1,c_2} \Vert^2$ for different pairs with the pretrained encoder and a one-layer linear projector $\bW \in \RR^{512 \times 512}$ under the standard SimCLR loss (ResNet-18 and 10-class \texttt{STL}-10 dataset).}
    \label{fig:image_pretrained_pair}
\end{figure}

In addition, we also experimented with ResNet-50 as the encoder instead of ResNet-18. Here we present results using ResNet-50 in  Figure~\ref{fig:image_pretrained_plot_resnet50}, which is similar to Figure~\ref{fig:image_pretrained_plot}. We also presented results for different pairs in \ref{fig:image_pretrained_pair_resnet50}. 
\begin{figure}[t]
    \centering
    \begin{subfigure}{0.95\textwidth}
        \includegraphics[width=0.95\textwidth]{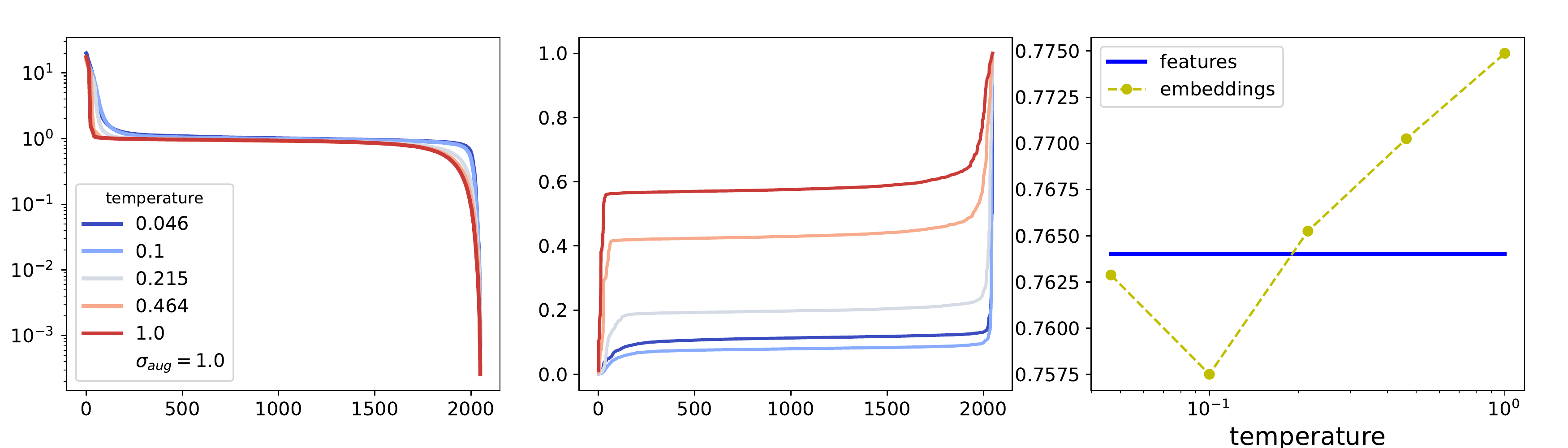} 
        \caption{}
        \label{fig:image_pretrained_plot_resnet50}
    \end{subfigure}
    \begin{subfigure}{0.95\textwidth}
        \includegraphics[width=0.95\textwidth]{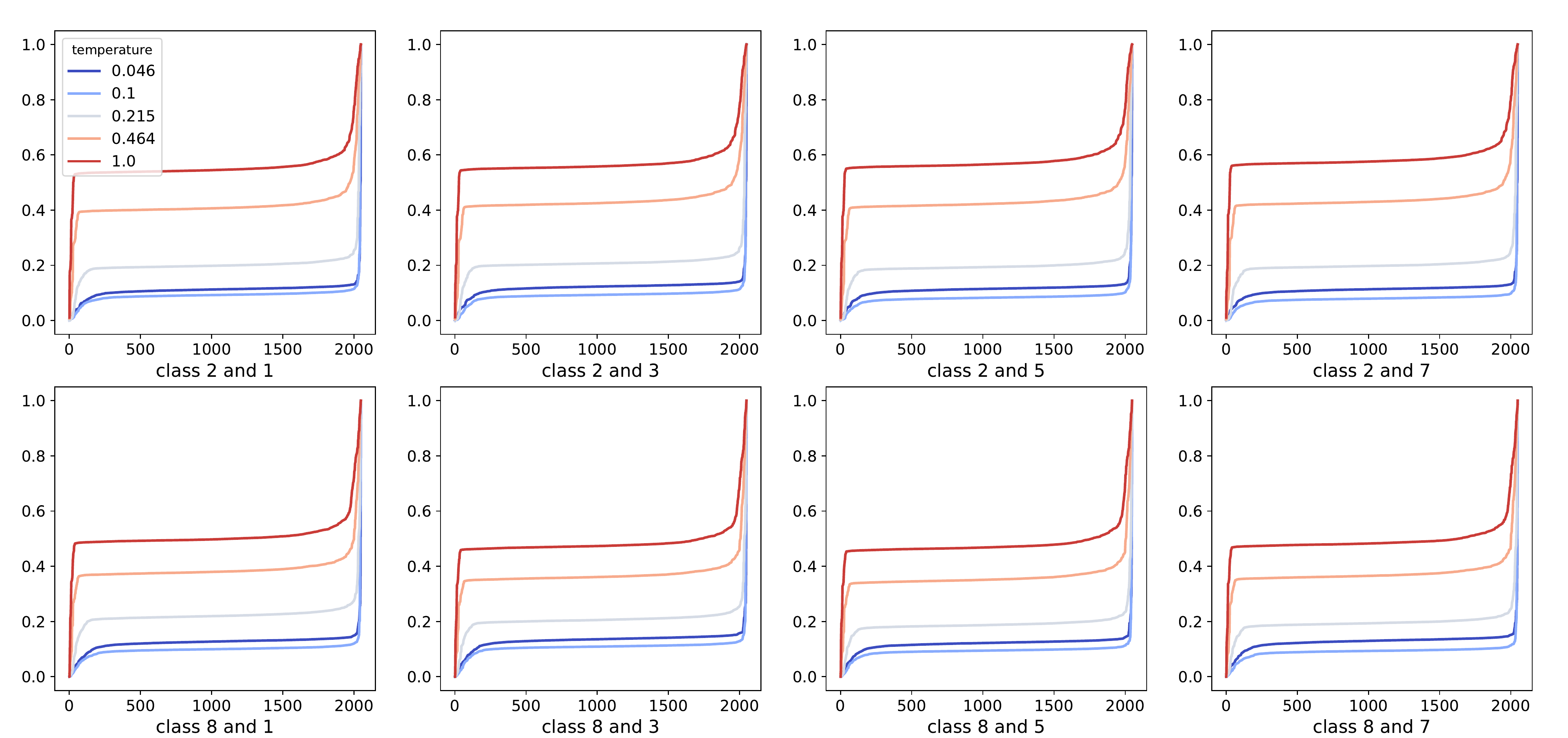}
        \caption{}
        \label{fig:image_pretrained_pair_resnet50}
    \end{subfigure}
    \caption{Results with the pretrained encoder and a one-layer linear projector $\bW \in \RR^{2048 \times 2048}$ under the standard SimCLR loss (ResNet-50 and 10-class \texttt{STL}-10 dataset). {\bf (a)~~Left}: singular values of $\bW$ with varying temperature. {\bf (a)~~Middle}: cumulative sums of alignment scores $\kappa_{j,c_1,c_2} = \langle \vv_j, \bmu_{c_1,c_2} \rangle^2 / \Vert \bmu_{c_1,c_2} \Vert^2$ for the pair $(c_1,c_2)=(7,2)$. {\bf (a)~~Right}: comparison between downstream task accuracy with features and embeddings for 10-class evaluation. {\bf (b)} Cumulative sums of alignment scores $\kappa_{j,c_1,c_2} = \langle \vv_j, \bmu_{c_1,c_2} \rangle^2 / \Vert \bmu_{c_1,c_2} \Vert^2$ for different pairs.}
\end{figure}

% \paragraph{Figure~\ref{fig:gmm-illustration} details.} 
%\paragraph{Visualization of expansion and shrinkage}
%\begin{figure}
%    \centering
%    \includegraphics[width=0.8\textwidth]{}
%    \caption{Illustration of expansion/shrinkage with GMM.}
%    \label{fig:gmm-illu}
%\end{figure}
We generate $n=2000$ data $(\xx_i)_{i\le n}$ from 2-GMM with $p=100$ and $\bmu = 4\ee_1$ where $(\ee_k)_{k \le p}$ forms the canonical basis. The left plot shows the first two coordinates of these data points, which is equivalent to projecting data onto $\mathrm{span}(\ee_1, \ee_2)$.

We add random perturbation $\mathcal{N}(0,\sigma_\aug^2 \bI_p)$ to form augmented data, and then we train a linear projector $\bW$ and the standard SimCLR loss on $(\xx_i)_{i\le n}$. After training, we calculate embeddings $\zz_i = \bW \xx_i$, project the embeddings onto $\mathrm{span}(\bW\ee_1, \bW\ee_2)$ and visualize these 2D projections. We plot the embeddings under an archetypal expansion regime (middle plot, $\sigma_\aug = 0.1, \tau=0.2$) and an archetypal shrinkage regime (right plot, $\sigma_\aug = 1, \tau=20$).

To aid visualization, we add circles in each of the three plots. Note that the normalized embedding $\zz_i / \norm{\zz_i}$ is used for calculating the cosine similarities in the SimCLR loss. We can interpret the plots using the alignment vs.~uniformity perspective \citep{wang2020understanding}: in the expansion regime. the alignment loss is the dominant term and forces concentrations of normalized embeddings, whereas in the shrinkage regime, the uniformity loss is the dominant term and encourages normalized embeddings to be evenly spread.

\subsection{Additional experiments}\label{sec:append_add_exp}

\paragraph{Beyond fixed encoders: training SimCLR from scratch.} To explore the expansion/shrinkage phenomenon without freezing the encoder component, we train the entire architecture (e.g.,  ResNet-50 encoder and a one-layer linear projector) on \texttt{STL}-10 train dataset with $400$ epochs and the batch size of $256$. We choose $10$ values of $\log(\tau)$ from equi-spaced grids in $[\log(0.01), \log(10)]$ and we choose the augmentation as the default value $1.0$. 
%\TODO{Joe: what datasets do we use? labeled train set or unlabeled train set? Figure 10 seems to be different from Yuandong Tian's paper.}
\begin{figure}[ht]
    \centering
    \includegraphics[width=0.9\textwidth]{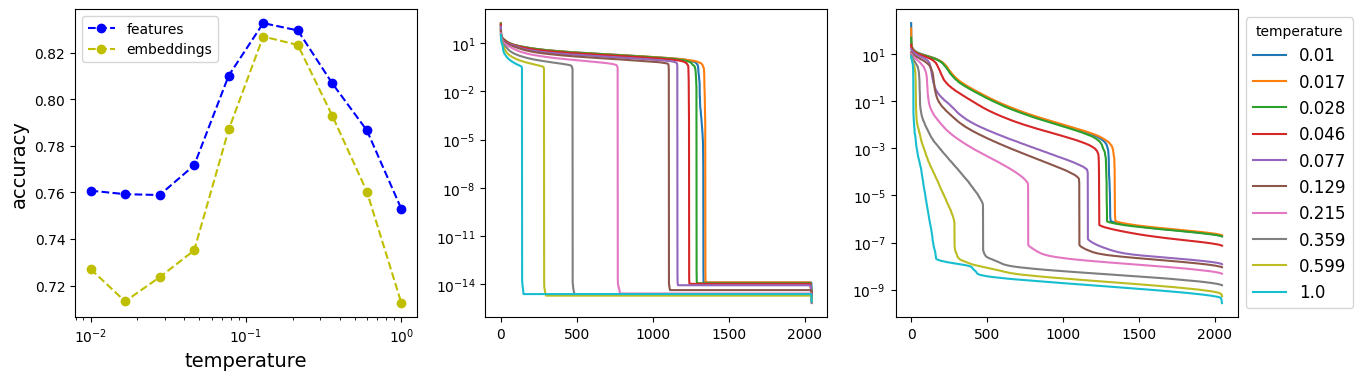}
    \caption{Results for SimCLR models trained from scratch. {\bf Left}: comparison between classification accuracy with features (before projection) and embeddings (after projection). {\bf Middle}: singular values of feature matrix $\bH_0 = [\hh_{1,0},\ldots,\hh_{n,0}]$ with varying temperatures. {\bf Right}: singular values of embedding matrix $\bZ = [\zz_{1}, \ldots, \zz_{n}]$ with varying temperatures.}
    \label{fig:resnet50}
\end{figure}

\begin{figure}[ht]
    \centering
    \includegraphics[width=0.9\textwidth]{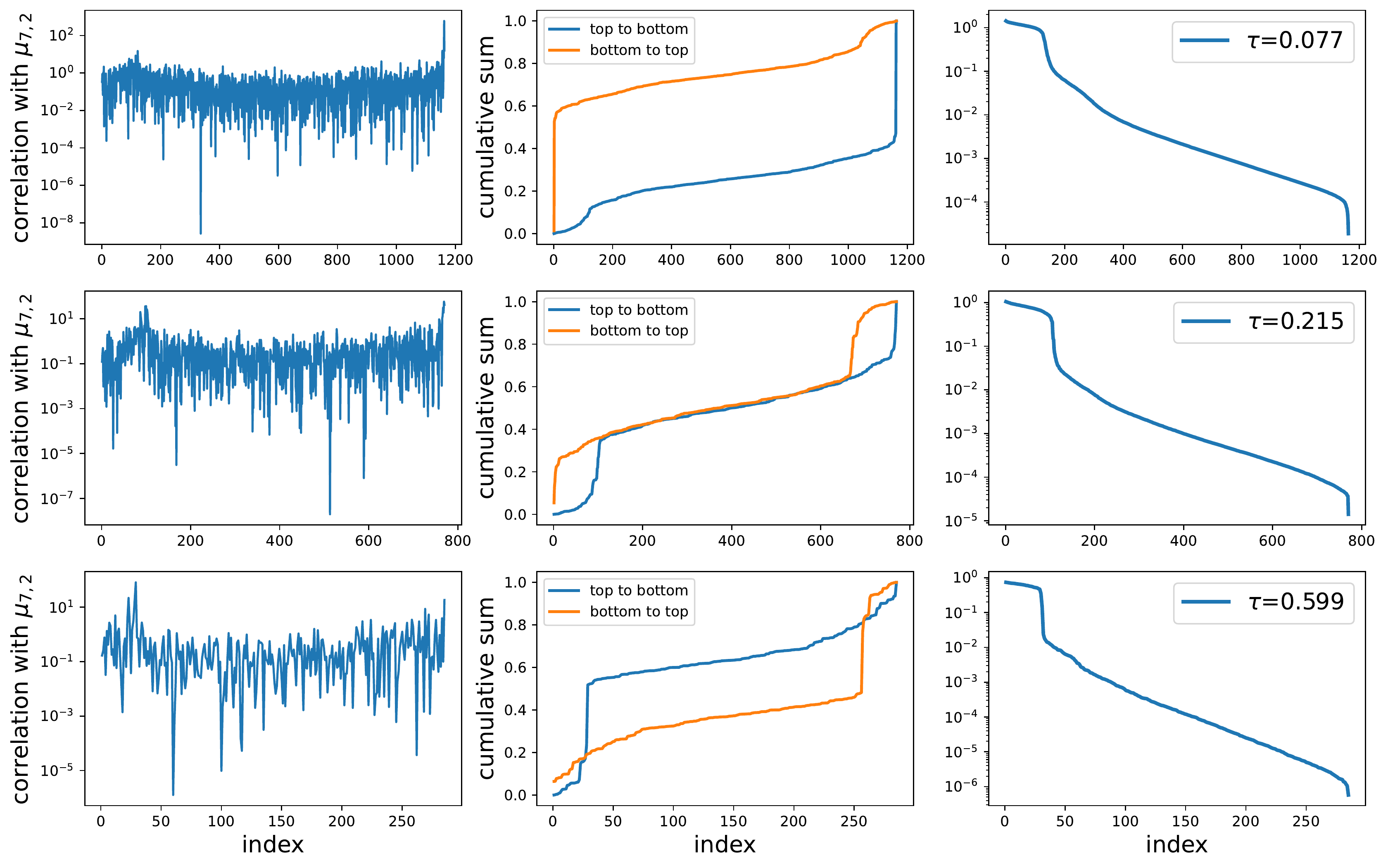}
    \caption{Plots for train-from-scratch experiment with projector $\bW^{(r)} = \bW\bV_r(\bH_0) \in \RR^{p \times r}$ and $\bmu^{(r)} = \bV^\top_r(\bH_0) \bmu \in \RR^r$ which are both projected onto feature matrix $\bH_0$'s top-$r$ right singular subspace with $r=\argmax\{k: s_k(\bH_0) \geq 10^{-3}\}$. {\bf Left}: unsquared alignment scores $\langle \vv^{(r)}_j, \bmu^{(r)}_{c_1,c_2} \rangle / \Vert \bmu^{(r)}_{c_1,c_2} \Vert$ for the pair $(c_1,c_2)=(7,2)$. {\bf Middle}: cumulative sum of alignment scores. {\bf Right}: singular values of $\bW^{(r)}$.}
    \label{fig:resnet_res_scores}
\end{figure}

As shown in Figure~\ref{fig:resnet50}, visibly, the dimensional collapse phenomenon is evident in both the feature space and the projector. Our GMM theory does not apply directly to this scenario since in Section~\ref{sec:gmm_expansion_shrinkage} we assume that features are generated from a full-dimensional mixture model. Still, our expansion/shrinkage analysis provides partial explanations as summarized below.

\begin{enumerate}
    \item When we train the encoder and projector simultaneously, their roles and effects are not distinctly separated. Indeed, there are many ways to express $\gggg_{\bvarphi} \circ \ff_{\btheta}$ as function compositions. Thus, the dimensional  collapse in the feature space can be interpreted as a shrinkage effect induced by the last few layers in the encoder. Understanding how dimensional collapse emerges progressively across layers is an interesting research direction.
    \item Our analysis still provides useful information about downstream accuracy when both the encoder and the projector are trained. For example, when we vary the temperature parameter, the severity of the collapse is correlated with the downstream accuracy; see Figure~\ref{fig:resnet50}.
    \item Our theory matches the empirical results if we restrict the linear transform $\bW$ on the subspace that the feature/embedding vectors span. Figure~\ref{fig:resnet_res_scores} shows that the singular values/vectors of the restricted linear transform. Note that we  recover similar cumulative score plots as in the fixed encoder scenario.
    %if we restrict the linear transform $\bW$ focus on the subspace that the feature/embedding vectors span. Figure~\ref{fig:resnet_res_scores} shows that by discarding very small singular values, we can recover similar cumulative score plots as in the fixed encoder scenario. \TODO{Joe: Can we calculate SVD for $\bV^\top \bW \bV$ and provide a stronger argument?} \cm{I second. Mybe a singular value plot of this matrix }
\end{enumerate}

Below we provide more detailed explanations for point 2 and 3. 

First, from Figure~\ref{fig:resnet50} (left), we can see that the downstream accuracy using features is higher than that using embeddings, which validates the practice of using only the features before the projector for classification.
 When $\tau \leq 0.129$, the difference between two curves is decreasing and both achieve the highest value at $\tau=0.129$. However, when $\tau$ further increases, which disagrees with our previous findings, both accuracy start to decrease. This phenomenon can be explained by the plots of singular values. As we can see from Figure~\ref{fig:resnet50} (middle), the features already have dimensional collapse with the one-layer linear projector even when $\tau$ is small, but the collapse is relatively moderate when $\tau \leq 0.129$, which refers to the bunch of curves starting to drop after the index of $1000$. When $\tau > 0.129$, the collapse becomes much more severe and the effective rank decreases fast below $200$ when $\tau$ goes to $1.0$. The singular values of embeddings change accordingly.

The trend in the downstream task accuracy together with the changes in singular values of $\bW$ convey the message that
\begin{itemize}
    \item when $\tau$ is moderate, the increase in $\tau$, which enhances the expansion of signal (will be shown in the following figures), will improve the downstream task accuracy with embeddings, making it as good as the accuracy with features even when features are undergoing the dimensional collapse;
    \item when $\tau$ is large, it poses negative effects in downstream task accuracy in that the features are already low-rank as is shown in \citet{Tian2022UnderstandingDC}, which may lead to the information loss in the data and may further do harm to the training of projector. As a result, the accuracy with either embeddings or features decreases. Also, the benefits from the expansion of signal are surpassed and accuracy with embeddings can be worse than that with features. In contrast, with pretrained model and full-rank features, the accuracy with embeddings can be better than that with features with the benefits from expansion as is shown in Figure~\ref{fig:image_pretrained_plot}.
\end{itemize}

In Figure~\ref{fig:resnet_res_scores}, we first write the SVD of the feature matrix $\bH_0$ as $\bH_0 = \bU(\bH_0) \bD(\bH_0) \bV^\top(\bH_0)$, where $\bD(\bH_0)$ has singular values of $\bH_0$ as diagonal elements: $\{s_k(\bH_0):~1 \leq k \leq p\}$. We consider projecting both $\bW$ and $\bmu$ onto the features' top-$r$ right singular subspace, that is we define $\bW^{(r)} = \bW\bV_r(\bH_0)$ and $\bmu^{(r)} = \bV_r^\top(\bH_0) \bmu$ where $\bV_r(\bH_0) \in \RR^{p \times r}$ is the submatrix consisting of the first $r$ columns of $\bV(\bH_0)$. We choose $r$ by $r=\argmax\{k: s_k(\bH_0) \geq 10^{-3}\}$. Then, we can instead calculate the SVD of $\bW^{(r)} = \bU^{(r)} \bD^{(r)} \bV^{(r) \top}$. Denote $\vv^{(r)}_j$ as the $j$th right singular vector of $\bW^{(r)}$. %\cm{Redo using right multiplication. }

In the left column, we plot the unsquared alignment score  $\langle \vv^{(r)}_j, \bmu^{(r)}_{c_1,c_2} \rangle / \Vert \bmu^{(r)}_{c_1,c_2} \Vert$, where $(c_1,c_2)=(7,2)$, and the cumulative scores are plotted in the middle column. With the truncated scores, we can see that when $\tau=0.077$, the bottom singular vectors align much better with the $\bmu^{(r)}_{c_1,c_2}$ than top ones (shrinkage regime). As $\tau$ increases, the expansion and shrinkage effects are comparable to each other with $\tau=0.219$, but when $\tau$ further increases to $0.599$, top singular vectors align better with $\bmu^{(r)}_{c_1,c_2}$ and the expansion effect is dominating. However, expansion effect directly enhances downstream accuracy with full-rank features as is shown in the pretrained experiments and the benefits can be hidden by the dimensional collapse of features as is shown in Figure~\ref{fig:resnet50}.

\paragraph{Convergence of the encoder}
Recall the decomposition
\[
\calL(\btheta^{(t)},\bW^{(t)}) = \calL(\btheta^{(t)},\tilde \bW^{(t)}) + \left( \calL(\btheta^{(t)},\bW^{(t)}) - \calL(\btheta^{(t)},\tilde \bW^{(t)}) \right),
\]
where $( \btheta^{(t)}, \bW^{(t)})$ are the parameters at epoch $t$, and $\tilde \bW^{(t)} \coloneqq \argmin_{\bW}  \mathcal{L}(\btheta^{(t)}, \bW)$, i.e. the optimal projector with frozen encoder parameters $\btheta^{(t)}$ at epoch $t$. We observe that when $t \geq T_0$ for $T_0 \approx 50$, we observe $\norm{ \tilde \bW^{(t)}  - \bW^{(t)} } \ll \norm{ \bW^{(t)} }$.
\begin{figure*}[t]
    \centering
    \begin{subfigure}{0.49\textwidth}
        \includegraphics[width=0.7\textwidth]{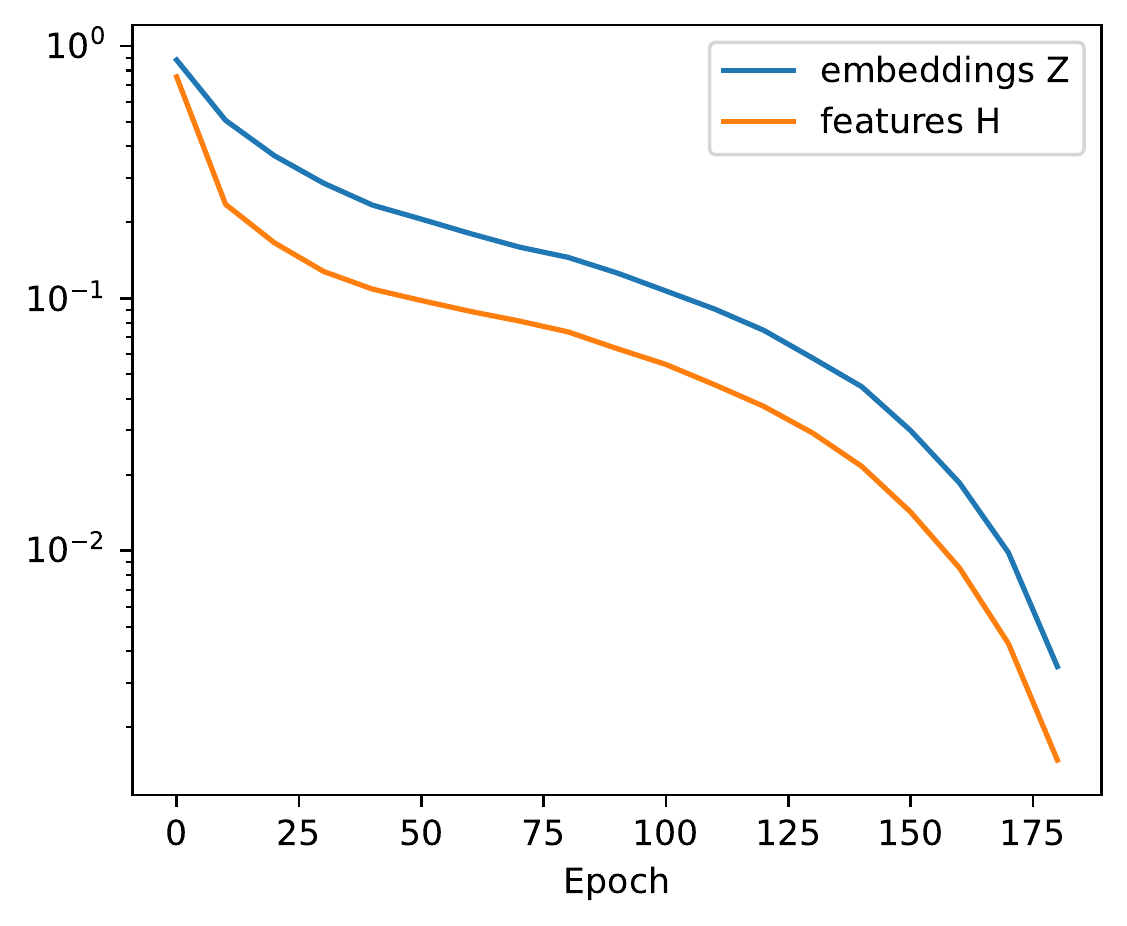}
    \end{subfigure}
    \hspace{-2em}
    \begin{subfigure}{0.49\textwidth}
        \includegraphics[width=0.7\textwidth]{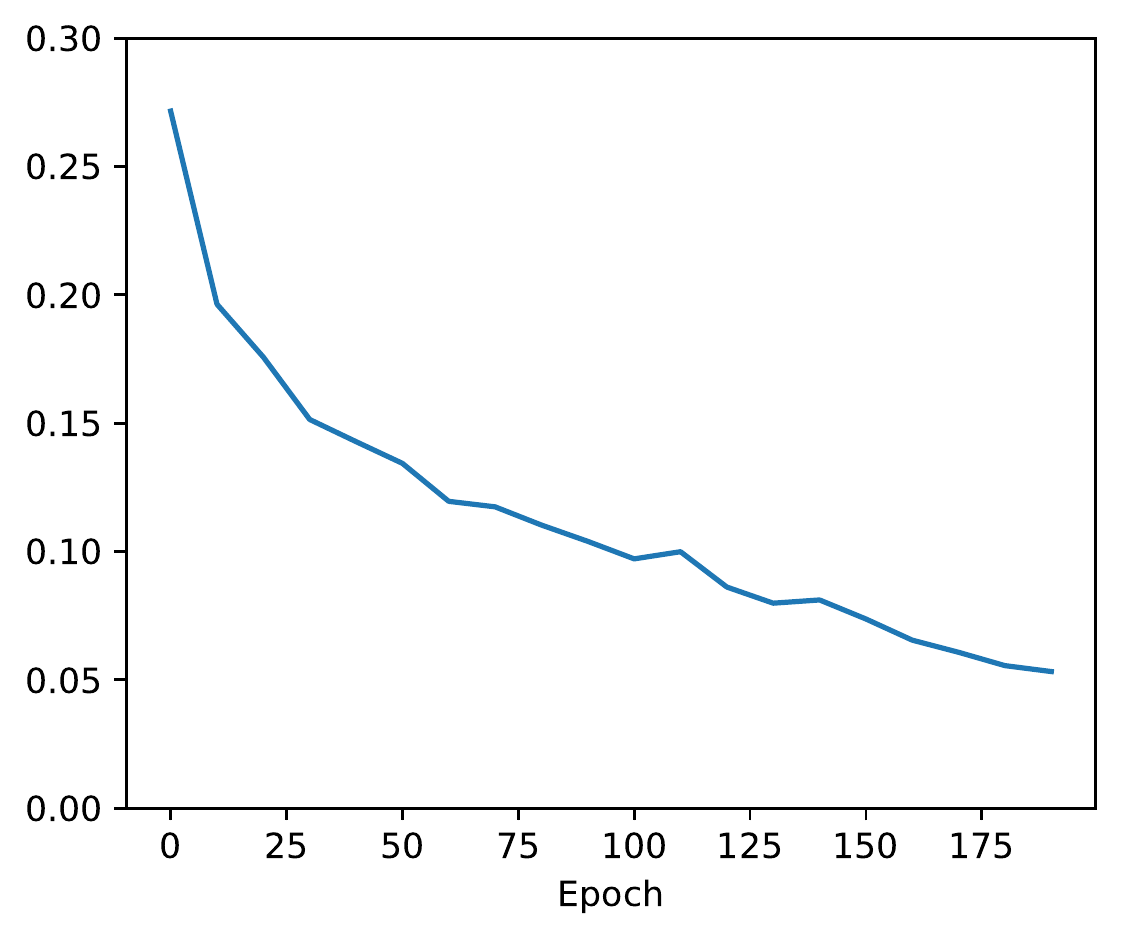}
    \end{subfigure}
    \caption{\textbf{Left} the relative change of features and embeddings: $\Vert \btheta^{(t+1)} - \btheta^{(t)} \Vert/\Vert \btheta^{(t)} \Vert$ and $\Vert \bW^{(t+1)} - \bW^{(t)} \Vert/\Vert \bW^{(t)} \Vert$; \textbf{Right} the standardized distance between $\bW^{(t)}$ and the minimizer $\tilde{\bW}^{(t)}$: $\norm{ \tilde \bW^{(t)}  - \bW^{(t)} } /\norm{ \bW^{(t)} }$.}
    \label{fig:train_freeze}
\end{figure*}

\paragraph{Feature-level vs.~image-level augmentation}
Image-level augmentation results in feature-level perturbation in that with the composite augmentation and the fixed encoder $\ff_{\btheta_n^*}$, for each image $\xx_i$, we have the augmented image $\xx_i^+$ and the mapped features $\hh_i = \ff_{\btheta_n^*}(\xx_i)$, $\hh_i^+ = \ff_{\btheta_n^*}(\xx_i^+)$. Then, the image-level augmentation is associated with the feature-level perturbation $\ee_i = \hh_i^+ - \hh_i$. This perturbation is correlated with $\hh_i$ and can be complicated. To make a connection with our 2-GMM theory, we also consider the homogeneous feature-level augmentation and its effect on the expansion/shrinkage phenomenon.

We freeze the encoder network (ResNet-50) as before, then the features are also fixed in this case. For each $\hh_{0,i}$, we add Gaussian perturbation $\bepsilon_i \sim \calN(0,\sigma_{\aug}^2 \bI_p)$ independently to obtain $\hh_i^+ = \hh_{0,i} + \bepsilon_i$. Then, with the same training process, we train the projector $\bW \in \RR^{2048 \times 2048}$ without the bias term under the standard SimCLR loss for 50 epochs with the batch size of 64. We choose $10$ values of $\log(\tau)$ from equi-spaced grids in $[\log(0.01), \log(10)]$ and we choose the augmentation as the default value $1.0$. \begin{figure}[t]
    \centering
    \includegraphics[width=0.95\textwidth]{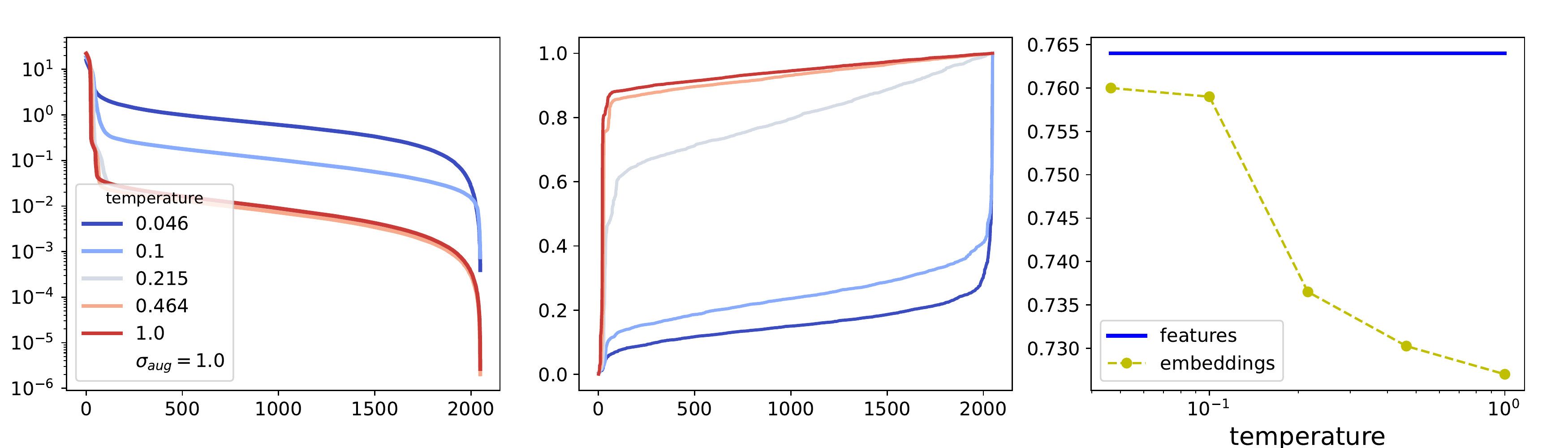} 
    \caption{Results with the pretrained encoder and a one-layer linear projector $\bW \in \RR^{2048 \times 2048}$ under the standard SimCLR loss (ResNet-50, 10-class \texttt{STL}-10 dataset and feature-level perturbation with $\sigma_{\aug}=1.0$). {\bf Left}: singular values of $\bW$ with varying temperature. {\bf Middle}: cumulative sums of alignment scores $\kappa_{j,c_1,c_2} = \langle \vv_j, \bmu_{c_1,c_2} \rangle^2 / \Vert \bmu_{c_1,c_2} \Vert^2$ for the pair $(c_1,c_2)=(7,2)$. {\bf Right}: comparison between downstream task accuracy with features and embeddings for 10-class evaluation.}
    \label{fig:feature_pretrained_plot}
\end{figure}

From Figure~\ref{fig:feature_pretrained_plot} ( middle), different from the results for image-level augmentation, we can see the dominating expansion effect when $\tau > 0.215$ where the bottom singular vectors of $\bW$ is merely uncorrelated with $\bmu_{c_1,c_2}$ while the top singular values contribute over $90\%$ of the correlation.

\section{Proofs for Section~\ref{sec:gmm_expansion_shrinkage}}
% !TEX root = appendi\hh_organi\zzation.te\hh

\subsection{Proof of Proposition~\ref{prop:loss}}\label{sec:loss_calc}
Recall that the population loss $\calL(\bW) = \mathbb{E} [\calL_{n}(\bW)]$ obeys
\begin{align}
    \calL(\bW)\coloneqq\calL_{\aalign}(\bW)+\calL_{\unif}(\bW),
\end{align}
where we denote 
\begin{align*}
\calL_{\aalign}(\bW) & \coloneqq \frac{1}{\tau}\mathbb{E}_{\hh,\hh^{+}}\left[\frac{1}{2}\frac{\|\bW \hh-\bW \hh^{+}\|_{2}^{2}}{\left(\mathbb{E}[\|\bW \hh\|_{2}^{2}] \cdot \mathbb{E}[\|\bW \hh^{+}\|_{2}^{2}]\right)^{1/2}}\right];\\
\calL_{\unif}(\bW) & \coloneqq\log\left(\mathbb{E}_{\hh^{-},\hh}\left[\exp\left(-\frac{1}{2\tau}\frac{\|\bW \hh-\bW \hh^{-}\|_{2}^{2}}{\left(\mathbb{E}[\|\bW \hh\|_{2}^{2}] \cdot  \mathbb{E}[\|\bW \hh^{-}\|_{2}^{2}]\right)^{1/2}}\right)\right]\right).
\end{align*}

In what follows, we compute $\calL_{\aalign}(\bW)$ and $\calL_{\unif}(\bW)$.

\paragraph{Computing $\calL_{\aalign}(\bW)$.} 
For the first term, we have
\[
\mathbb{E}[\|\bW \hh\|_{2}^{2}]=\mathbb{E}[\|\bW \hh^{+}\|_{2}^{2}]=(1+\sigma^2_{\aug})\|\bW\|_{\mathrm{F}}^{2}+\|\bW \bmu\|_{2}^{2}=\alpha,
\]
where the last relation is the definition of $\alpha$.
Therefore, $\calL_{\aalign}$ can be simply written as 
\[
\calL_{\aalign}(\bW)=\frac{1}{2\tau\alpha}\mathbb{E}_{\hh,\hh^{+}}\left[\|\bW \hh-\bW \hh^{+}\|_{2}^{2}\right].
\]
In addition, since $\hh=\hh_{0}+\bepsilon$, and $\hh^{+}=\hh_{0}+\bepsilon^{+}$ with $\bepsilon, \bepsilon^{+} \overset{\text{i.i.d.}}{\sim} \mathcal{N}(0, \sigma_{\aug} \bI)$,
we obtain 
\[
\hh-\hh^{+}=\bepsilon-\bepsilon^{+}\sim\mathcal{N}(0,2\sigma_{\aug}^{2}\bI),
\]
which yields 
\[
\mathbb{E}_{\hh,\hh^{+}}\left[\|\bW \hh-\bW \hh^{+}\|_{2}^{2}\right]=2\sigma_{\aug}^{2}\|\bW\|_{\mathrm{F}}^{2}.
\]
This further leads to the simplified expression for $\calL_{\aalign}(\bW)$: 
\begin{align}
    \calL_{\aalign}(\bW) = \frac{\sigma^2_{\aug}\Vert \bW \Vert_{\mathrm{F}}^2}{\tau\alpha}.
\end{align}

\paragraph{Computing $\calL_{\unif}(\bW)$.} 
For the second term $\calL_{\unif}(\bW)$, using
the definition of $\alpha$, we see that 
\begin{align}\label{eq:uniform-decomposition-with-log}
\calL_{\unif}(\bW)=\log\left(\mathbb{E}_{\hh^{-},\hh}\left[\exp\left(-\frac{\|\bW \hh-\bW \hh^{-}\|_{2}^{2}}{2\tau\alpha}\right)\right]\right).
\end{align}
Recall that $\hh$ and $\hh^{-}$ are two independent draws
from the Gaussian mixture model, i.e., 
\begin{align*}
\hh & =y\bmu+\sqrt{1+\sigma^2_{\aug}}\gggg,~~~\hh^{-} =y^{-}\bmu+\sqrt{1+\sigma^2_{\aug}}\gggg^{-},
\end{align*}
where $y$ and $y^{-}$ are independent Rademacher random variables,
$\gggg$ and $\gggg^{-}$ are two independent random vectors from $\calN(0,\bI)$. This further yields
\begin{align*}
\hh-\hh^{-} & \overset{\mathsf{d}}{=}(y-y^{-})\bmu+\sqrt{2(1+\sigma^2_{\aug})}\gggg\\
 & \overset{\mathsf{d}}{=}\frac{1}{2}\delta_{0}+\frac{1}{4}\delta_{2\bmu}+\frac{1}{4}\delta_{-2\bmu}+\sqrt{2(1+\sigma^2_{\aug})}\gggg.
\end{align*}
As a result, we obtain 
\begin{align}\label{eq:uniform-decomposition}
 & \mathbb{E}_{\hh^{-},\hh}\left[\exp\left(-\frac{\|\bW \hh-\bW \hh^{-}\|_{2}^{2}}{2\tau\alpha}\right)\right] \nonumber \\
 & \quad=\mathbb{E}_{\gggg}\left[\frac{1}{2}\exp\left(-\frac{(1+\sigma^2_{\aug})\|\bW \gggg\|_{2}^{2}}{\tau\alpha}\right)+\frac{1}{2}\exp\left(-\frac{\|\sqrt{2}\bW \bmu+\sqrt{1+\sigma^2_{\aug}}\bW \gggg\|_{2}^{2}}{\tau\alpha}\right)\right].
\end{align}
Let $\sum_{j=1}^{p}\sigma_{j}\uu_{j}\vv_{j}^{\top}$ be the singular value decomposition of $\bW$.
Then one has 
\[
\bW \gggg=\sum_{j=1}^{p}\sigma_{j}\vv_{j}^{\top}\gggg\ \uu_{j},\qquad\|\bW \gggg\|_{2}^{2}=\sum_{j=1}^{p}\sigma_{j}^{2}\left(\vv_{j}^{\top}\gggg\right)^{2},
\]
and 
\[
\|\sqrt{2}\bW \bmu+\sqrt{1+\sigma^2_{\aug}}\bW \gggg\|_{2}^{2}=\sum_{j=1}^{p}\sigma_{j}^{2}\left(\vv_{j}^{\top}\left(\sqrt{2}\bmu+\sqrt{1+\sigma^2_{\aug}}\gggg\right)\right)^{2}.
\]
Taking expectations, we arrive at 
\begin{align*}
\mathbb{E}_{\gggg}\left[\exp\left(-\frac{(1+\sigma^2_{\aug})\|\bW \gggg\|_{2}^{2}}{\tau\alpha}\right)\right] & =\mathbb{E}_{\gggg}\left[\exp\left(-\sum_{j=1}^{d}\frac{(1+\sigma^2_{\aug})\sigma_{j}^{2}}{\tau\alpha}\left(\vv_{j}^{\top}\gggg\right)^{2}\right)\right]\\
 & =\prod_{j=1}^{p}\mathbb{E}_{\gggg}\left[\exp\left(-\frac{(1+\sigma^2_{\aug})\sigma_{j}^{2}}{\tau\alpha}\left(\vv_{j}^{\top}\gggg\right)^{2}\right)\right],
\end{align*}
where the last equality uses the independence among $\{\vv_{j}^{\top}\gggg\}_{1\leq j \leq p}$. 
Since $(\vv_{j}^{\top}\gggg)^2$ is a $\chi^2_1$ random variable, we can 
use the moment generating function of $\chi^2_1$
to obtain
\[
\mathbb{E}_{\gggg}\left[\exp\left(-\frac{(1+\sigma^2_{\aug})\sigma_{j}^{2}}{\tau\alpha}\left(\vv_{j}^{\top}\gggg\right)^{2}\right)\right]=\left(1+\frac{2(1+\sigma^2_{\aug})\sigma_{j}^{2}}{\tau\alpha}\right)^{-1/2}.
\]
Combining the previous two relations yields
\begin{equation}\label{eq:uniform-T-1}
\mathbb{E}_{\gggg}\left[\exp\left(-\frac{(1+\sigma^2_{\aug})\|\bW \gggg\|_{2}^{2}}{\tau\alpha}\right)\right]=\prod_{j=1}^{p}\left(1+\frac{2(1+\sigma^2_{\aug})\sigma_{j}^{2}}{\tau\alpha}\right)^{-1/2}.
\end{equation}
Using a similar decomposition, we have 
\begin{align}\label{eq:unif_exp}
&\mathbb{E}_{\gggg}\left[\exp\left(-\frac{\|\sqrt{2}\bW \bmu+\sqrt{1+\sigma^2_{\aug}}\bW \gggg\|_{2}^{2}}{\tau\alpha}\right)\right] \nonumber \\
&\quad  = \prod_{j=1}^{p} \mathbb{E}_{\gggg}\left[\exp\left(-\frac{\sigma_{j}^{2}}{\tau\alpha}\left(\sqrt{2}\vv_{j}^{\top}\bmu+\sqrt{1+\sigma^2_{\aug}}\vv_{j}^{\top}\gggg\right)^{2}\right)\right].
\end{align}
Elementary calculations tell us that for any $t>0$ and any $a \in \mathbb{R}$, one has 
$
\mathbb{E}_{u\sim N(0,1)}\left[\exp\left(-t(a+u)^{2}\right)\right]=\frac{1}{\sqrt{1+2t}}\exp\left(-\frac{a^{2}t}{2t+1}\right).
$
This allows us to simplify each term in~\eqref{eq:unif_exp} as
\begin{align*}
&\mathbb{E}_{u\sim N(0,1)}\left[\exp\left(-\frac{\sigma_{j}^{2}}{\tau\alpha}\left(\sqrt{2}\vv_{j}^{\top}\bmu+\sqrt{1+\sigma^2_{\aug}}u\right)^{2}\right)\right] \\
&\quad  =\frac{1}{\sqrt{1+\frac{2(1+\sigma^2_{\aug})\sigma_{j}^{2}}{\tau\alpha}}}\exp\left(-\frac{2\sigma_{j}^{2}}{2(1+\sigma^2_{\aug})\sigma_{j}^{2}+\tau\alpha}(\vv_{j}^{\top}\bmu)^{2}\right).
\end{align*}
The previous two displays taken together lead to  
\begin{align}\label{eq:uniform-T-2}
&\mathbb{E}_{\gggg}\left[\exp\left(-\frac{\|\sqrt{2}\bW \bmu+\sqrt{1+\sigma^2_{\aug}}\bW \gggg\|_{2}^{2}}{\tau\alpha}\right)\right] \nonumber \\
& =  \prod_{j=1}^{p}\left(1+\frac{2(1+\sigma^2_{\aug})\sigma_{j}^{2}}{\tau\alpha}\right)^{-1/2}\exp\left(-\sum_{j=1}^{p}\frac{2\sigma_{j}^{2}}{2(1+\sigma^2_{\aug})\sigma_{j}^{2}+\tau\alpha}(\vv_{j}^{\top}\bmu)^{2}\right).
\end{align}
Substitute~\eqref{eq:uniform-T-1} and~\eqref{eq:uniform-T-2} into the identities~\eqref{eq:uniform-decomposition} and~\eqref{eq:uniform-decomposition-with-log} to see that 
\begin{align}
&\mathbb{E}_{\hh^{-},\hh}\left[\exp\left(-\frac{\|\bW \hh-\bW \hh^{-}\|_{2}^{2}}{2\tau\alpha}\right)\right]\nonumber\\
&=\frac{1}{2}\prod_{j=1}^{p}\left(1+\frac{2(1+\sigma^2_{\aug})\sigma_{j}^{2}}{\tau\alpha}\right)^{-1/2}\left(1+\exp\left(-\sum_{j=1}^{p}\frac{2\sigma_{j}^{2}}{2(1+\sigma^2_{\aug})\sigma_{j}^{2}+\tau\alpha}(\vv_{j}^{\top}\bmu)^{2}\right)\right),
\end{align}
and  
\begin{align*}
\calL_{\unif}(\bW) &=-\frac{1}{2}\sum_{j=1}^{p} \log\left(1+\frac{2(1+\sigma^2_{\aug})\sigma_{j}^{2}}{\tau\alpha}\right) \\
&\quad +\log\left(1+\exp\left(-\sum_{j}\frac{2\sigma_{j}^{2}}{2(1+\sigma^2_{\aug})\sigma_{j}^{2}+\tau\alpha}(\vv_{j}^{\top}\bmu)^{2}\right)\right)-\log2.
\end{align*}
These complete the proof of Proposition~\ref{prop:loss}.

\subsection{Justification of approximate loss}
Here we present justification underlying the approximation~\eqref{eq:approx-loss}.
In the regime where $\tau\alpha=\|\bW\|_{\mathrm{F}}^{2}+\|\bW \bmu\|_{2}^{2}\gg\sigma_{j}^{2}$
for each $j$, we can use the approximations $\log (1 + x) \approx x$ and $2(1+\sigma^2_{\aug})\sigma_{j}^{2}+\tau\alpha \approx \tau\alpha$ to obtain the approximate loss
\begin{align*}
\tilde{\calL}_{\unif}(\bW) & =-\sum_{j=1}^{p}\frac{(1+\sigma^2_{\aug})\sigma_{j}^{2}}{\tau\alpha}+\log\left(1+\exp\left(-\sum_{j}\frac{2\sigma_{j}^{2}}{\tau\alpha}(\bv_{j}^{\top}\bmu)^{2}\right)\right)-\log2\\
 & =-\frac{(1+\sigma^2_{\aug})\|\bW\|_{\mathrm{F}}^{2}}{\tau\alpha}+\log\left(1+\exp\left(-\frac{2\|\bW \bmu\|_{2}^{2}}{\tau\alpha}\right)\right)-\log2.
\end{align*}
This combined with the alignment term results in
\begin{align}
    \tilde{\calL}(\bW) &= \calL_{\aalign}(\bW) + \tilde{\calL}_{\unif}(\bW)\nonumber\\
    & = -\frac{\|\bW\|_{\mathrm{F}}^{2}}{\tau\alpha}+\log\left(1+\exp\left(-\frac{2\|\bW \bmu\|_{2}^{2}}{\tau\alpha}\right)\right)-\log2.
\end{align}
This loss function turns out to be a simple univariate function. 
To see this, we define $t = \Vert \bW \Vert_{\mathrm{F}}^2/\alpha$. As a result, we have $\Vert \bW \bmu\Vert_2^2 = \alpha - (1+\sigma^2_{\aug})\alpha t$, and hence
\begin{align}
    \tilde{\calL}(\bW) &= -\frac{t}{\tau} + \log\left(1+\exp\left(-\frac{2 \alpha \left[1-( 1+\sigma^2_{\aug})t\right]}{\tau\alpha}\right)\right)-\log2\nonumber\\
    & = -\frac{t}{\tau} + \log\left(1+\exp\left(\frac{2(1+\sigma^2_{\aug})t}{\tau} - \frac{2}{\tau}\right)\right) - \log 2.
\end{align}

\subsection{Proof of Theorem~\ref{thm:phase}}\label{sec:proof-phase-transition}

Recall the approximate loss function
\begin{align}
    \tilde{\calL}(\bW) = \ell(t) = -\frac{t}{\tau} + \log\left(\frac{1}{2} + \frac{1}{2} \exp\left(-\frac{2}{\tau} + \frac{2(1+\sigma_{\aug}^2)}{\tau}t\right)\right),
\end{align}
where $t = \|\bW\|_{\mathrm{F}}^2/\alpha \in [1/(1+\sigma_{\aug}^2+\Vert \bmu \Vert^2), 1/(1+\sigma_{\aug}^2)]$ with $\alpha=(1+\sigma^2_{\aug})\|\bW\|_{\mathrm{F}}^{2}+\|\bW \bmu\|_{2}^{2}$. 

Since $F(t; \sigma_\aug^2, \tau) \coloneqq \frac{d}{dt} \widetilde \calL(\bW)$, we have
\begin{align}
    F(t; \sigma_\aug^2, \tau) = \frac{1}{\tau} \left\{1 + 2\sigma_{\aug}^2 - \frac{2(1+\sigma_{\aug}^2)}{1+\exp\left(-\frac{2}{\tau} + \frac{2(1+\sigma_{\aug}^2)}{\tau}t\right)}\right\},
\end{align}
which is a strictly increasing function in $t$. In addition, $F\left((1+\sigma_{\aug}^2)^{-1}; \sigma_\aug^2, \tau\right) = \sigma_{\aug}^2/\tau$ > 0. To determine the minimizer of $\ell(t)$, it suffices to check the sign of $F\left((1+\sigma_{\aug}^2+\Vert \bmu \Vert)^{-1}; \sigma_\aug^2, \tau\right)$. 

\paragraph{Case 1: expansion regime. } 
If $F\left((1+\sigma_{\aug}^2+\Vert \bmu \Vert)^{-1}; \sigma_\aug^2, \tau\right) > 0$ (i.e., in the {\it expansion regime}), we have $F\left(t; \sigma_\aug^2, \tau\right) > 0$ for any $t \in [1/(1+\sigma_{\aug}^2+\Vert \bmu \Vert^2), 1/(1+\sigma_{\aug}^2)]$. Thus, $\tilde{\calL}(W)$ is strictly increasing in $t$ and is minimized when $t = 1/(1+\sigma_{\aug}^2+\Vert \bmu \Vert^2)$, which implies that
\begin{align}
    \frac{\|\bW\|_{\mathrm{F}}^2}{(1+\sigma^2_{\aug})\|\bW\|_{\mathrm{F}}^{2}+\|\bW \bmu\|_{2}^{2}} = \frac{1}{1+\sigma_{\aug}^2+\Vert \bmu \Vert^2} ~~~\Longrightarrow~~~ \frac{\|\bW \bmu\|_{2}^{2}}{\|\bW\|_{\mathrm{F}}^2} = \Vert \bmu \Vert^2.\label{eq:expansion_cond}
\end{align}
If we write the SVD of $\bW$ as $\bW = \sum_{j=1}^p \sigma_j \uu_j \vv_j^\top $, then \eqref{eq:expansion_cond} can be written as 
\begin{align}
     \sum_{j=1}^p \langle \vv_j, \bmu\rangle^2 = \sum_{j=1}^p \frac{\sigma^2_j}{\sum_{k=1}^p \sigma^2_k} \langle \vv_j, \bmu\rangle^2 \leq \frac{\sigma^2_1}{\sum_{k=1}^p \sigma^2_k} \sum_{j=1}^p \langle \vv_j, \bmu\rangle^2 \leq \sum_{j=1}^p \langle \vv_j, \bmu\rangle^2,
\end{align}
where equality holds if and only if $\sigma_j = 0$ for all $j \geq 2$ and $\sigma_1>0$. Then,
\begin{align}
    \Vert \bmu \Vert^2 = \sum_{j=1}^p \langle \vv_j, \bmu\rangle^2 = \langle \vv_1, \bmu\rangle^2.
\end{align}

\paragraph{Case 2: shrinkage regime. } 
If $F\left((1+\sigma_{\aug}^2+\Vert \bmu \Vert)^{-1}; \sigma_\aug^2, \tau\right) < 0$ ({\it shrinkage regime}), then $\tilde{\calL}(W)$ is minimized at some $t^*$ in the interior satisfying
\begin{align}
    F\left(t^*; \sigma_\aug^2, \tau\right) = \frac{1}{\tau} \left\{1 + 2\sigma_{\aug}^2 - \frac{2(1+\sigma_{\aug}^2)}{1+\exp\left(-\frac{2}{\tau} + \frac{2(1+\sigma_{\aug}^2)}{\tau}t^*\right)}\right\} = 0.
\end{align}
Solving this provides us with 
\begin{align}
    t^* = \frac{1}{1+\sigma_{\aug}^2}\left\{1 - \frac{\tau}{2}\log\left(1+2\sigma_{\aug}^2\right)\right\}.
\end{align}
When $\sigma_{\aug} \rightarrow 0$, we have $t^* \rightarrow 1$, which implies that 
\begin{align}
    \frac{\|\bW\|_{\mathrm{F}}^2}{\|\bW\|_{\mathrm{F}}^{2}+\|\bW \bmu\|_{2}^{2}} \rightarrow 1~~~\Longrightarrow~~~ \frac{\|\bW \bmu\|_{2}^{2}}{\|\bW\|_{\mathrm{F}}^2} \rightarrow 0.
\end{align}
Note that for any fixed value of $\norm{\bW}_{\mathrm{F}}$, $\norm{\bW \bmu}^2 \to 0$ is equivalent to
\begin{equation}
    \sum_{j=1}^p \sigma^2_j \langle \vv_j, \bmu\rangle^2 \to 0
\end{equation}
%$\|\bW \bmu\|_{2}^{2}/\|\bW\|_{\mathrm{F}}^2 = 0$ can be written as
%\begin{align}
%    0 = \sum_{j=1}^p \langle \vv_j, \bmu\rangle^2 = \sum_{j=1}^p \sigma^2_j \langle \vv_j, \bmu\rangle^2,
%\end{align}
from which we have $\max_{j \leq p} |\sigma_j \langle \vv_j, \bmu\rangle| \to 0$.

\subsection{Proof of Proposition~\ref{prop:approximation}}\label{sec:append_appro}

\subsubsection{A sandwich formula}
Define $\calL_\ell(\bW) \coloneqq \tilde \calL(\bW) $, and 
$\calL_u(\bW) \coloneqq \tilde \calL(\bW) + \Delta(\bW)$, where we denote $r = 1+\sigma_\aug^2$ and 
\begin{align*}
\Delta(\bW) \coloneqq \sum_{j \le p} \frac{r^2\sigma_j^4}{\tau^2 \alpha^2} + \exp \left( - \frac{\norm{\bW \bmu}^2}{\tau \alpha} \right) \cdot \left[ \exp\Big( \sum_{j \le p} \frac{4r\sigma_j^4 \langle \bmu, \vv_j \rangle^2}{\tau^2 \alpha^2} \Big) - 1 \right]. 
\end{align*}
We have the following sandwich-type result. 

\begin{lemma}\label{lem:Lupbnd} We have
\begin{equation*}
\calL_\ell(\bW) \le \calL(\bW) \le \calL_u(\bW), \qquad \text{for all} ~ \bW \in \RR^{p \times p}.
\end{equation*}
\end{lemma}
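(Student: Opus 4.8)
The plan is to compare the exact loss $\calL(\bW)$ from Proposition~\ref{prop:loss} with the approximate loss $\tilde\calL(\bW)$ of~\eqref{eq:approx-loss}, term by term. The alignment term $\calL_{\aalign}(\bW) = \sigma_\aug^2\|\bW\|_{\mathrm F}^2/(\tau\alpha)$ is common to both, so it suffices to control the difference of the two uniformity contributions. Write $r := 1+\sigma_\aug^2$ and $a_j := 2r\sigma_j^2/(\tau\alpha)\ge 0$. Replacing $\calL_{\unif}^{(1)}$ by its linearization produces the discrepancy
\[
D_1 := \sum_{j\le p}\Bigl(\tfrac{a_j}{2} - \tfrac12\log(1+a_j)\Bigr),
\]
and replacing $\calL_{\unif}^{(2)}$ produces $D_2 := \log\bigl(1+e^{-A'}\bigr) - \log\bigl(1+e^{-A}\bigr)$, where
\[
A := \sum_{j\le p}\frac{2\sigma_j^2\langle\bmu,\vv_j\rangle^2}{\tau\alpha} = \frac{2\|\bW\bmu\|^2}{\tau\alpha}, \qquad A' := \sum_{j\le p}\frac{2\sigma_j^2\langle\bmu,\vv_j\rangle^2}{2r\sigma_j^2+\tau\alpha}.
\]
By construction $\calL(\bW) - \tilde\calL(\bW) = D_1 + D_2$, so the lemma reduces to the two-sided estimate $0 \le D_1 + D_2 \le \Delta(\bW)$.

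For $D_1$ I would invoke the scalar bounds $x - x^2/2 \le \log(1+x)\le x$, valid for $x \ge 0$: with $x = a_j$ they give $0 \le \tfrac{a_j}{2} - \tfrac12\log(1+a_j) \le \tfrac{a_j^2}{4} = r^2\sigma_j^4/(\tau^2\alpha^2)$, so summing over $j$ gives $0 \le D_1 \le \sum_{j\le p} r^2\sigma_j^4/(\tau^2\alpha^2)$, which is exactly the first term of $\Delta(\bW)$.

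For $D_2$, observe that each denominator $2r\sigma_j^2 + \tau\alpha$ appearing in $A'$ is at least the corresponding denominator $\tau\alpha$ in $A$, hence $0 \le A' \le A$, so $e^{-A'}\ge e^{-A}$ and $D_2 \ge 0$. For the upper bound I would write $\frac{1+e^{-A'}}{1+e^{-A}} = 1 + \frac{e^{-A'}-e^{-A}}{1+e^{-A}}\le 1 + (e^{-A'}-e^{-A}) = 1 + e^{-A}(e^{A-A'}-1)$, and apply $\log(1+x)\le x$ to get $D_2 \le e^{-A}(e^{A-A'}-1)$. Then $A \ge \|\bW\bmu\|^2/(\tau\alpha)$ yields $e^{-A}\le e^{-\|\bW\bmu\|^2/(\tau\alpha)}$, while $\frac{1}{\tau\alpha} - \frac{1}{2r\sigma_j^2+\tau\alpha} = \frac{2r\sigma_j^2}{\tau\alpha(2r\sigma_j^2+\tau\alpha)} \le \frac{2r\sigma_j^2}{\tau^2\alpha^2}$ gives
\[
A - A' = \sum_{j\le p} 2\sigma_j^2\langle\bmu,\vv_j\rangle^2\Bigl(\frac{1}{\tau\alpha} - \frac{1}{2r\sigma_j^2+\tau\alpha}\Bigr)\le \sum_{j\le p}\frac{4r\sigma_j^4\langle\bmu,\vv_j\rangle^2}{\tau^2\alpha^2}.
\]
Hence $D_2 \le e^{-\|\bW\bmu\|^2/(\tau\alpha)}\bigl(\exp\bigl(\sum_{j\le p} 4r\sigma_j^4\langle\bmu,\vv_j\rangle^2/(\tau^2\alpha^2)\bigr) - 1\bigr)$, the second term of $\Delta(\bW)$. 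Adding the two bounds gives $0 \le \calL(\bW) - \tilde\calL(\bW)\le \Delta(\bW)$, i.e. $\calL_\ell(\bW)\le\calL(\bW)\le\calL_u(\bW)$.

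There is no real obstacle here; the argument is bookkeeping with elementary scalar inequalities (which should simply be cited, not re-derived). The only points that need a little care are fixing the direction $A'\le A$ — hence $D_2\ge 0$ — and noting that the convenient weakening $e^{-2\|\bW\bmu\|^2/(\tau\alpha)}\le e^{-\|\bW\bmu\|^2/(\tau\alpha)}$ used to match the form of $\Delta(\bW)$ is indeed in the upper-bound direction, since $\|\bW\bmu\|^2/(\tau\alpha)\ge 0$.
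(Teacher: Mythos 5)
Your proposal is correct and follows essentially the same route as the paper: the same term-by-term comparison of $\calL_{\unif}$ with its linearization, using $x-x^2/2\le\log(1+x)\le x$ for the log-determinant part and the denominator comparison $2(1+\sigma_\aug^2)\sigma_j^2+\tau\alpha\ge\tau\alpha$ together with $\log(1+x)\le x$ for the second part. Your explicit remark that the weakening $e^{-2\|\bW\bmu\|^2/(\tau\alpha)}\le e^{-\|\bW\bmu\|^2/(\tau\alpha)}$ is needed to match the stated form of $\Delta(\bW)$ is a point the paper leaves implicit, but the argument is the same.
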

\begin{proof}%[Proof of Lemma~\ref{lem:Lupbnd}]
The lower holds by the construction of $\tilde \calL(\bW)$. Hence we focus on the upper bound.

Since $\log (1+x) - x \ge -x^2/2$, we compare the difference between the first term in $\calL_\unif$ and that in $\tilde \calL_\unif$ to obtain
\begin{align*}
-\frac{1}{2} \sum_{j=1}^p\log \left( 1 + \frac{2r \sigma_j^2}{\tau \alpha} \right) + \sum_{j=1}^p \frac{r \sigma_j^2}{\tau \alpha} \le \sum_{j=1}^p\frac{r^2 \sigma_j^4}{\tau^2 \alpha^2} \; .
\end{align*}
Write $\beta_j = \langle \bmu, \vv_j \rangle^2$ for simplicity. We compare the difference between the second terms.
\begin{align*}
&~~\log \left( 1 + \exp \Big( - \sum_{j=1}^p \frac{2 \sigma_j^2 \beta_j}{2r \sigma_j^2 + \tau \alpha} \Big) \right) - \log \left( 1 + \exp \Big( - \sum_{j=1}^p \frac{2 \sigma_j^2 \beta_j}{\tau \alpha} \Big) \right)  \\
&= \log \left[1 + \frac{\exp\big( - \sum_{j\le p} 2\sigma_j^2 \beta_j/(2r \sigma_j^2 + \tau \alpha) \big) - \exp\big( - \sum_{j\le p} 2\sigma_j^2 \beta_j/(\tau \alpha) \big)}{1+ \exp\big( - \sum_{j\le p} 2\sigma_j^2 \beta_j/(\tau \alpha) \big)} \right] \\
&\stackrel{(i)}{\le} \exp\Big( - \sum_{j\le p} \frac{2\sigma_j^2 \beta_j}{2r \sigma_j^2 + \tau \alpha} \Big) - \exp\Big( - \sum_{j\le p} \frac{2\sigma_j^2 \beta_j}{\tau \alpha} \Big) \\
&= \exp \Big( - \frac{2\norm{\bW \bmu}^2}{\tau \alpha} \Big) \cdot \left[\exp\Big(- \sum_{j\le p} \frac{2\sigma_j^2 \beta_j}{2r \sigma_j^2 + \tau \alpha}  + \sum_{j\le p} \frac{2\sigma_j^2 \beta_j}{\tau \alpha}  \Big) - 1 \right] \\
& \stackrel{(ii)}{\le}  \exp \Big( - \frac{2\norm{\bW \bmu}^2}{\tau \alpha} \Big) \cdot \left[ \exp \Big( \sum_{j \le p} \frac{4r\sigma_j^4 \beta_j}{\tau^2 \alpha^2} \Big) - 1 \right]
\end{align*} 
where in (i) we used $\log (1+x) \le x$ and $\exp\big( - \sum_{j\le p} 2\sigma_j^2 \beta_j/(\tau \alpha) \big) \geq 0$, and  in (ii) we used brute force computation. This completes the proof.
\end{proof}

We denote the minimizers of $\calL_\ell$, $\calL$, $\calL_u$ by $\bW_\ell^*$, $\bW^*$, $\bW_u^*$, respectively. We also denote the related \textit{expansion measure}
\begin{equation*}
t_\ell^* = t(\bW_\ell^*), \quad t^* = t(\bW^*), \quad t_u^* = t(\bW_u^*), 
\end{equation*}
where we recall 
\begin{equation*}
t = t(\bW) = \frac{\norm{\bW}_{\mathrm{F}}^2}{r \norm{\bW}_{\mathrm{F}}^2 + \norm{\bW \bmu}^2}.
\end{equation*}

In what remains, we are mainly interested in showing that $t^*$ and $t_\ell^*$ are close. This would suggest that our approximation has small effects on the expansion/shrinkage phenomenon, thus justifying our approximation.

%
%and $F(t)$ is the derivative of $\tilde \calL(\bW)$ with the latter being treated as a function of $t$. 
%
%
%By construction, we have $\calL_{\unif}(\bW) \ge \widetilde \calL_{\unif}(\bW)$ and thus $\calL(\bW) \ge \widetilde \calL(\bW)$. Here, we begin with provide=ing an upper bound on $\calL(\bW) -\widetilde \calL(\bW)$, that will be helpful in the control of $t_{{\rm simclr}} - t^* $.
%
% Recall that the singular value decomposition of $\bW$ is denoted by $\bW = \sum_{j=1}^p \sigma_j \uu_j \vv_j^\top$.  Now we are ready to present the upper bound.
% 

\subsubsection{Approximation bound in expansion regime} 
Recall the notation $r = 1 + \sigma_\aug^2$, and define $\rho = \norm{\bmu}$. We observe from Lemma~\ref{lem:Lupbnd} that 
\begin{align*}
\calL_u(\bW_\ell^*) - \calL_\ell(\bW_\ell^*) &\ge \calL(\bW_\ell^*) - \calL_\ell(\bW_\ell^*) \ge \calL(\bW^*) - \calL_\ell(\bW_\ell^*) \\
&\ge \calL_\ell(\bW^*) - \calL_\ell(\bW_\ell^*) \\
&\ge \min_{t \in [(r+\rho^2)^{-1}, r^{-1}]} F(t) \cdot \big( t^* - t_\ell^* \big).
\end{align*}
Here the last inequality uses the definition of $F(\cdot)$ as well as the fact that $\calL_\ell$ is convex in $t$.

Note that in the expansion regime, $t_\ell^*$ achieves the smallest value, namely $t_\ell^* = 1/(r+\rho^2)$. 
Therefore we must have $0 \le  t^* - t_\ell^*$. 
In addition, we know that in the expansion regime, $F(t) > 0$ for all $t > t_\ell^*$. As a result, we obtain
\begin{equation}\label{ineq:tdiff}
0 \le  t^* - t_\ell^* \le \left[  \min_{t \in [(r+\rho^2)^{-1}, r^{-1}]} F(t) \right]^{-1} \left(\calL_u(\bW_\ell^*) - \calL_\ell(\bW_\ell^*) \right) \; .
\end{equation}
Since $F(t)$ is increasing in $t$, we have
\begin{align*}
\min_{t \in [(r+\rho^2)^{-1}, r^{-1}]} F(t) = F(1/(r+\rho^2)) = -\frac{1}{\tau} + \frac{2r}{\tau} \cdot \left[ 1 + \exp \Big( \frac{2}{\tau} - \frac{2r}{\tau(r+\rho^2)} \Big) \right]^{-1} \; .
\end{align*}
Denoting the following quantity
\begin{align*}
\tilde{\theta}_{\rho,r,\tau} = \frac{1}{1+e^{2/\tau}}  + \frac{2r e^{2/\tau}}{(1+e^{2/\tau})^2 \rho^2 \tau},
\end{align*}
then as $\rho \rightarrow \infty$,
\begin{align*}
\bigg| \left[ 1 + \exp \Big( \frac{2}{\tau} - \frac{2r}{\tau(r+\rho^2)} \Big) \right]^{-1} - \tilde{\theta}_{\rho,r,\tau} \bigg|  = O(\rho^{-4}).
\end{align*}
Therefore, 
\begin{align}
\min_{t \in [(r+\rho^2)^{-1}, r^{-1}]} F(t) &= -\frac{1}{\tau} + \frac{2r}{(1+e^{2/\tau})\tau} + \frac{4r^2 e^{2/\tau}}{(1+e^{2/\tau})^2 \rho^2 \tau^2} + O(\rho^{-4})\notag \\
&= \frac{1}{\tau} \Big( \frac{2r}{1+e^{2/\tau}} - 1 \Big) + O(\rho^{-2}). \label{eq:Fasymp}
\end{align}
We view $\tau$ and $\sigma_\aug^2$ as fixed parameters and let $\rho \to \infty$. Recall that the expansion regime occurs if $F((r+\rho^2)^{-1}) > 0$ and the shrinkage regime occurs if the reverse inequality is true. Under the asymptotics $\rho \to \infty$, the phase transition boundary simplifies to $2 r > 1 + e^{2/\tau}$ for expansion regime and $2 r < 1 + e^{2/\tau}$ for the shrinkage regime. Thus, the leading term \ref{eq:Fasymp} is always positive in this asymptotics.

Since $\bW = \bW_\ell^*$ if and only if $\langle \bmu, \vv_1 \rangle^2 > 0$, $\langle \bmu, \vv_j \rangle^2 = 0$ for $j > 1$ and $\sigma_2 = \ldots = \sigma_p = 0$, we can show that
\begin{align}
\Delta(\bW_\ell^*) = \calL_u(\bW_\ell^*) - \calL_\ell(\bW_\ell^*) &= \frac{r^2}{\tau^2(r+\rho^2)^2} + \exp \left( - \frac{\rho^2}{\tau(r+\rho^2)} \right) \cdot \left[ \exp\Big( \frac{4r\rho^2}{\tau^2(r+\rho^2)^2}\Big) - 1 \right],
% &= O(\rho^{-2}) + e^{-1/\tau} \cdot \Big(1 + \frac{1}{\tau \rho} + O(\rho^{-2}) \Big) \cdot \Big( \frac{4r}{\tau^2\rho} + O(\rho^{-2}) \Big) \notag \\
\end{align}
and as $\rho \rightarrow \infty$,
\begin{align}\label{eq:Ldiffasymp}
    \bigg|\Delta(\bW^*_{\ell}) - \frac{4r}{\tau^2 e^{1/\tau}} \frac{1}{\rho^2} \bigg| = O(\rho^{-4}). 
\end{align}
From \eqref{eq:Fasymp} and~\eqref{eq:Ldiffasymp}, we have the control for \eqref{ineq:tdiff}:
\begin{equation*}
0 \le t^* - t_\ell^* \le \left( \frac{2r}{1+e^{2/\tau}} - 1 \right)^{-1} \frac{4r}{\tau e^{1/\tau}} \frac{1}{\rho^2} + O(\rho^{-4})
\end{equation*}
This finishes the proof. 

%It is worthwhile to note that 
%\begin{align}
 %       a_1(\tau) = \left( \frac{2r}{1+e^{2/\tau}} - 1 \right)^{-1} \frac{1}{\tau e^{1/\tau}}
 %\end{align}
 %has derivative $a^\prime_1(\tau) = -e^{2/\tau} (\tau e^{2/\tau} + \tau-2)/(2r\tau^3)$, which is strictly positive for $\tau > 0$. Then, the approximation error depends inversely on $\tau$, i.e. when $\tau$ increases, the approximation error bound becomes tighter.

\subsubsection{Approximation bound in shrinkage regime} 
We use the notation $\texttt{C}^2$ to denote the space of twice continuously differentiable functions. We first present the general result as follows.
\begin{lemma}\label{lem:shrink}
Suppose that $f(\xx), f_\ell(\xx), f_u(\xx)$ are defined for $\xx \in \calX \subset \mathbb{R}^q$ with 
\begin{equation*}
f_\ell(\xx) \le f(\xx) \le f_u(\xx).
\end{equation*}
Assume that $f_\ell(\xx) = h(\varphi(\xx))$ where $\varphi: \calX \to \calY \subset \mathbb{R}$ is a surjection satisfying $\varphi \in \texttt{C}^2(\calX)$, and that $h: \calY \to \mathbb{R}$ satisfies $h \in \texttt{C}^2(\calX)$ and is strongly convex. Suppose $\xx_\ell^* \in \calX$ and $y_\ell^*$ satisfy
\begin{equation*}
\varphi(\xx_\ell^*) = y_\ell^*, \qquad y_\ell^* = \argmin_{y \in \calY} h(y).
\end{equation*}
Then, the following inequality holds.
\begin{equation}\label{ineq:convexbnd}
\bignorm{ y_\ell^* - \varphi(\xx^*) }^2 \le \frac{2 \big( f_u(\xx_\ell^*) - f_\ell(\xx_\ell^*) \big)}{\inf_{y \in \calY}  h''(y)  } \; .
\end{equation}
\end{lemma}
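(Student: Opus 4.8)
\textbf{Proof plan for Lemma~\ref{lem:shrink}.} The plan is to obtain the bound directly from the strong convexity of $h$, combined with the sandwich $f_\ell \le f \le f_u$ and the optimality of $\xx^*$ (here $\xx^*$ denotes the minimizer of $f$, matching the notation $\bW^*$ for the minimizer of $\calL$ in the application). The only hypothesis that really gets used is that $h$ is strongly convex on the interval $\calY$ with modulus $m \coloneqq \inf_{y\in\calY} h''(y) > 0$; the smoothness of $\varphi$ plays no role in this particular estimate, and the $\texttt{C}^2$ assumption on $h$ enters only to make $m$ well-defined and to license the first-order optimality condition at $y_\ell^*$.

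First I would record the strong-convexity expansion of $h$ around its minimizer. Since $h$ is strongly convex with modulus $m$ on $\calY$, for all $y, y' \in \calY$ one has $h(y') \ge h(y) + h'(y)(y'-y) + \tfrac{m}{2}(y'-y)^2$. Taking $y = y_\ell^* = \argmin_{\calY} h$ and using the first-order optimality condition $h'(y_\ell^*)(y'-y_\ell^*) \ge 0$ valid for all $y'\in\calY$ (whether $y_\ell^*$ is interior to $\calY$ or at an endpoint; equivalently one can avoid $h'$ altogether and take a limit in the midpoint strong-convexity inequality), the cross term drops in the favorable direction and we get
\[
h(y') \ge h(y_\ell^*) + \tfrac{m}{2}\,(y' - y_\ell^*)^2 \qquad \text{for every } y' \in \calY .
\]

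Next I would specialize to $y' = \varphi(\xx^*) \in \calY$ and translate everything back through the identity $f_\ell = h \circ \varphi$. On one side, $h(\varphi(\xx^*)) = f_\ell(\xx^*)$; on the other, $h(y_\ell^*) = h(\varphi(\xx_\ell^*)) = f_\ell(\xx_\ell^*)$. Chaining the hypotheses $f_\ell \le f \le f_u$ with the optimality $f(\xx^*) \le f(\xx_\ell^*)$ yields
\[
f_\ell(\xx^*) \le f(\xx^*) \le f(\xx_\ell^*) \le f_u(\xx_\ell^*).
\]
Plugging these two facts into the displayed strong-convexity inequality gives $f_u(\xx_\ell^*) \ge f_\ell(\xx_\ell^*) + \tfrac{m}{2}\bignorm{\varphi(\xx^*) - y_\ell^*}^2$, which rearranges to the claimed $\bignorm{y_\ell^* - \varphi(\xx^*)}^2 \le 2\big(f_u(\xx_\ell^*) - f_\ell(\xx_\ell^*)\big)/m$.

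The argument is short; the only place that warrants care — and thus the "main obstacle," modest as it is — is the first-order/optimality step for $y_\ell^*$ when the feasible interval $\calY$ is not all of $\RR$: one must invoke convexity to conclude $h'(y_\ell^*)(y'-y_\ell^*)\ge 0$ rather than $h'(y_\ell^*)=0$, so that the linear term in the strong-convexity expansion is discarded with the correct sign. Everything else is bookkeeping with the sandwich inequality and the composition $f_\ell = h\circ\varphi$.
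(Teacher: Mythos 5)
Your proposal is correct and follows essentially the same route as the paper: chain $f_u(\xx_\ell^*) \ge f(\xx_\ell^*) \ge f(\xx^*) \ge f_\ell(\xx^*) = h(\varphi(\xx^*))$ and then apply the strong-convexity lower bound of $h$ at its minimizer $y_\ell^* = \varphi(\xx_\ell^*)$ with modulus $\inf_{y\in\calY} h''(y)$. Your extra remark about handling the first-order condition when $y_\ell^*$ sits on the boundary of $\calY$ is a small refinement the paper leaves implicit, but the argument is otherwise identical.
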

\begin{proof}%[\textbf{Proof of Lemma~\ref{lem:shrink}}]
By the definitions of $f_u, f, f_\ell$ and $\xx_\ell^*, \xx^*$, we have
\begin{equation}\label{ineq:shrinklem1}
f_u(\xx_\ell^*) \ge f(\xx_\ell^*) \ge f(\xx^*) \ge f_\ell(\xx^*) = h(\varphi(\xx^*)) \, .
\end{equation}
Since $h$ is convex and $\varphi(\xx_\ell^*)$ achieves the minimum, we have
\begin{equation}\label{ineq:shrinklem2}
h(\varphi(\xx^*)) - h(\varphi(\xx_\ell^*)) \ge \frac{1}{2} \inf_{y \in \calY}   h''(y) \cdot  \bignorm{\varphi(\xx^*) - \varphi(\xx_\ell^*)}^2.
\end{equation}
We combine \eqref{ineq:shrinklem1}--\eqref{ineq:shrinklem2} and use $h(\varphi(\xx_\ell^*)) = f_\ell(\xx_\ell^*)$, $\varphi(\xx_\ell^*) = y_\ell^*$, which leads to the desired inequality \eqref{ineq:convexbnd}.
\end{proof}
To use the above lemma, we set $f = \calL, f_\ell = \calL_\ell, f_u = \calL_u$, 
\begin{align*}
& h(t) = -\frac{t}{\tau} + \log\left(1 + \exp \Big( -\frac{2}{\tau} + \frac{2rt}{\tau} \Big) \right) - \log 2, \\
& \varphi(\bW) = t(\bW) = \frac{\norm{\bW}_{\mathrm{F}}^2}{r\norm{\bW}_{\mathrm{F}}^2 + \norm{\bW \bmu}^2},
\end{align*}
where we identify $y$ with $t$. We calculate the lower bound on $h^{\prime\prime}(t)$ as follows.
\begin{align*}
&h^\prime(t) = F(t) = -\frac{1}{\tau} + \frac{2r}{\tau} \left[ 1 + \exp \Big(\frac{2}{\tau} - \frac{2rt}{\tau} \Big) \right]^{-1}, \\
&h^{\prime\prime}(t) = F^\prime(t) = \frac{4r^2}{\tau^2} \exp\Big( \frac{2}{\tau} - \frac{2rt}{\tau} \Big) \cdot \left[ 1 + \exp \Big(\frac{2}{\tau} - \frac{2rt}{\tau} \Big) \right]^{-2}.
\end{align*}
We note that 
\begin{equation*}
\exp\left(\frac{2}{\tau}\right) \ge \exp\Big( \frac{2}{\tau} - \frac{2rt}{\tau} \Big) \ge \exp \left( \frac{2-2r}{\tau} \right).
\end{equation*}
So making use of the fact that $\frac{x}{(1+x)^2}$ is decreasing when $x \geq 1$, we deduce
\begin{align*}
\min_{t \in [(r+\rho)^{-1},1]} F^\prime(t) &= \frac{4r^2}{\tau^2} \min \left\{ \frac{e^{2\sigma_{\aug}^2/\tau r}}{(1+e^{2\sigma_{\aug}^2/\tau r})^2}, \frac{e^{2\rho/\tau(r+\rho)}}{(1+e^{2\rho/\tau(r+\rho)})^2}\right\} \\
& = \frac{4r^2}{\tau^2} \frac{e^{2\sigma_{\aug}^2/\tau r}}{(1+e^{2\sigma_{\aug}^2/\tau r})^2} \geq \frac{r^2}{\tau^2} e^{-2\sigma_{\aug}^2/\tau r}.
\end{align*}
Note that we have the freedom to choose any $\xx$ (identified as $\bW$) as long as $\varphi(\bW) = t_\ell^*$. This leads to the error bound
\begin{align*}
\norm{t^* - t_\ell^*}^2 \le \frac{\tau^2}{r^2} e^{2\sigma_{\aug}^2/\tau}\cdot \Delta(\bW). 
\end{align*}
It remains to control $\min_{\bW: \varphi(\bW) = t_\ell^*} \Delta(\bW)$ which will be our target below.

\paragraph{Characterizing $\min_{\bW: \varphi(\bW) = t_\ell^*} \Delta(\bW)$.}
It suffices to provide a feasible solution $\bW$ such that $\varphi(\bW) = t_\ell^*$. To do so, we construct $\bW$ as follows. 
As usual, let $\sum_{j=1}^{p}\sigma_{j}\uu_{j}\vv_{j}^{\top}$ be the singular value decomposition of $\bW$. Here 
$\sigma_j$, $j\in [p]$, are not necessarily ordered. 
Let $\vv_p = \bmu / \norm{\bmu}$. As a result, we have $\|\bW\|_{\mathrm{F}} = \sum_{j} \sigma_{j}^2$ and $\Vert \bW\bmu \Vert = \sigma_{p}^2 \|\bmu\|^2 = \sigma_{p}^2 \rho^2$. 
Recall that in the shrinkage regime, we have 
\begin{align*}
t^*_\ell = \frac{1}{r} \left(1 - \frac{\tau}{2}\log (2r-1)\right) =  \frac{\norm{\bW}_{\mathrm{F}}^2}{r \norm{\bW}_{\mathrm{F}}^2 + \norm{\bW \bmu}^2} \in \left[\frac{1}{r+\rho^2}, \frac{1}{r}\right].
\end{align*}
Consequently, one can verify that any sequence of singular values $\bsigma$ with $S_{p-1} = \sum_{j=1}^{p-1} \sigma_{j}^2$ that obeys
\begin{align}\label{eq:key-relation}
	\frac{S_{p-1}}{\sigma_{p}^2} = \frac{t^*_\ell (\rho^2 + r) -1 }{1 - t^*_\ell r} > 0
\end{align}
will be a feasible solution. In particular,  
\[
\sigma_{1}^2 = \sigma_{2}^2 = \cdots = \sigma_{p-1}^2 = \frac{S_{p-1}}{p-1}.
\]
Now we are ready to compute the quantity of interest
\[
\Delta(\bW) = \sum_{j \le p} \frac{r^2\sigma_j^4}{\tau^2 \alpha^2} + \exp \left( - \frac{\norm{\bW \bmu}^2}{\tau \alpha} \right) \cdot \left[ \exp\Big( \sum_{j \le p} \frac{4r\sigma_j^4 \langle \bmu, \vv_j \rangle^2}{\tau^2 \alpha^2} \Big) - 1 \right].
\]
\begin{enumerate}
\item First, we have 
\begin{align*}
\sum_{j \le p} \frac{r^2\sigma_j^4}{\tau^2 \alpha^2} &= \sum_{j \le p} \frac{r^2 \sigma_j^4}{\tau^2 (r\norm{\bsigma}^2 + \sigma_p^2 \rho^2)^2}  = \frac{r^2  (t_\ell^*)^2}{\tau^2} \cdot \frac{\sum_{j \le p} \sigma_j^4}{\norm{\bsigma}^4} \\
&= \frac{r^2  (t_\ell^*)^2}{\tau^2} \cdot \frac{(p-1) \frac{S^2_{p-1}}{(p-1)^2} + \sigma_p^4}{\left((p-1) \frac{S_{p-1}}{p-1} + \sigma_p^2\right)^2}\\
&= \frac{r^2  (t_\ell^*)^2}{\tau^2 (p-1)} \cdot \frac{ \frac{S^2_{p-1}}{\sigma^2_{p}} + p-1}{ \left(\frac{S_{p-1}}{\sigma_{p}^2} + 1\right)^2  }
\end{align*}
Recall the identity~\eqref{eq:key-relation} and the fact that $r t^*_{\ell} \leq 1$, we can further write the first term as
\begin{align*}
\sum_{j \le p} \frac{r^2\sigma_j^4}{\tau^2 \alpha^2} &= \frac{r^2 (t^*_{\ell})^2}{\tau^2 (p-1)} \frac{(\rho^2 t^*_{\ell}  - 1 +  r t^*_{\ell})^2 + (p-1)(1 - r t^*_{\ell})^2}{\rho^4 (t^*_{\ell})^2}\\
& \leq \frac{r^2}{\tau^2 \rho^4 (p-1)} \cdot \left(\rho^4 (t^*_{\ell})^2 + p(1 - r t^*_{\ell})^2\right)\\ 
& \leq \frac{r^2}{\tau^2 \rho^4 (p-1)} \cdot \left(\frac{\rho^4}{r^2}  + \frac{\tau^2 p}{4} \log^2(2r-1)\right)\\
& = \frac{1}{\tau^2 (p-1)} + \frac{p r^2}{4 \rho^4 (p-1)}\log^2(2r-1) = \frac{1}{\tau^2 (p-1)} + \frac{\sigma_\aug^4}{\rho^4} + O\left(\frac{\sigma_\aug^6}{\rho^4}\right) \qquad \sigma_\aug \rightarrow 0.
\end{align*}
\item Secondly, since $\exp \left( - \frac{\norm{\bW \bmu}^2}{\tau \alpha} \right) \leq 1$ and $\sigma^2_p \rho^2/\alpha = 1 - r t^*_{\ell}$, then 
\begin{align*}
\exp\left( \sum_{j \le p} \frac{4r\sigma_j^4 \langle \bmu, \vv_j \rangle^2}{\tau^2 \alpha^2} \right) - 1 & = \exp\left(\frac{4r\sigma_p^4 \rho^2}{\tau^2 \alpha^2}\right) - 1 = \exp\left(\frac{4r }{\tau^2 \rho^2}\left(1 - r t^*_{\ell}\right)^2 \right) - 1\\
& = \exp\left( \frac{r}{\rho^2} \log^2(2r-1)\right) - 1 = \frac{4\sigma_\aug^4}{\rho^2} + O\left(\frac{\sigma_\aug^6}{\rho^2}\right) \qquad \sigma_\aug \rightarrow 0.
\end{align*}
\end{enumerate}
Combining pieces above, 
\[
\Delta(\bW) \leq \frac{1}{\tau^2 (p-1)} + O\left(\frac{\sigma_\aug^4}{\|\bmu\|^2}\right).
\]
As a result, we can bound the approximation error in the shrinkage regime by
\[
\norm{t^* - t_\ell^*}^2 \le \frac{1}{(1+\sigma_{\aug}^2)^2 (p-1)} e^{2\sigma_{\aug}^2/\tau} + O\left(\frac{\sigma_\aug^4}{\|\bmu\|^2}\right).
\]

\subsection{Proof of Proposition~\ref{prop:finite}}\label{sec:append_finitesample}
%\TODO{This section on finite-sample case seems to be informal. I removed the ``proof'' environment. Just to give readers some ideas.}

Here we provide some informal analysis on the finite-sample scenario. We realize that a rigorous analysis is challenging and is left to future research.

Recall the contrastive loss
\begin{align}\label{eq:ctr_loss_1}
\calL_n(\bW)\coloneqq-\frac{1}{\tau}\EE_{n}\left\{\EE_{\hh^+,\hh\mid \hh_0}\left[\siml^*\left(\bW \hh,\bW \hh^{+}\right) \mid \hh_0\right]\right\}+\log\left(\EE_{n}\left\{ \EE_{\hh^-, \hh \mid \hh_0}\left[e^{\siml^*(\bW \hh,\bW \hh^{-})/\tau} \big| \hh_0\right]\right\}\right),
\end{align}
which can be decomposed as two terms
\begin{align}
    \calL_{n,1}(\bW) &= -\frac{1}{\tau}\EE_{n}\left[\siml^*\left(\bW \hh,\bW \hh^{+}\right)\right],\\
    \calL_{n,2}(\bW) &= \log\left(\EE_{n} \left\{\EE_{\hh^-, \hh \mid \hh_0}\left[e^{\siml^*(\bW \hh,\bW \hh^{-})/\tau} \big| \hh_0\right]\right\}\right).
\end{align}
Denote $\alpha = (1+\sigma^2_{\aug})\Vert \bW \Vert_{\mathrm{F}}^2 + \Vert \bW \bmu \Vert_2^2$ and $\bA = (2\tau \alpha)^{-1} \bW^\top \bW$, with the definition of $\siml^*$, we have
\begin{align*}
    \calL_{n,1}(\bW) & = \frac{1}{2\tau \alpha} \EE_n \left\{\EE_{\hh^+,\hh\mid \hh_0}\left[\Vert \bW(\hh - \hh^+) \Vert_2^2\big| \hh_0\right]\right\} = \frac{\sigma_\aug^2}{\tau\alpha} \| \bW \|_{\mathrm{F}}^2.
\end{align*}
For $\hh^- \sim \frac{1}{2} \calN(-\bmu,(1+\sigma^2_{\aug})\bI) + \frac{1}{2} \calN(\bmu,(1+\sigma^2_{\aug})\bI)$ and $\hh \mid \hh_0 \sim \calN(\hh_0, \sigma_\aug^2 \bI)$, we have
\begin{align}\label{eq:uni_loss}
    &\EE_{\hh,\hh^{-} \mid \hh_0}\left[\exp\left(\frac{\siml^*(\bW \hh,\bW \hh^{-})}{\tau}\right)\bigg| \hh_0\right] = \EE_{\hh,\hh^{-} \mid \hh_0}\left[\exp\left(-\frac{\Vert \bW(\hh - \hh^-) \Vert_2^2}{2\tau\alpha}\right)\bigg| \hh_0\right]\nonumber\\
    & = \frac{1}{2} \EE_{\zz\sim \calN(\bmu,(1+2\sigma^2_{\aug})\bI)}\left[\exp\left(-\frac{\Vert \bW(\hh_0 - \zz) \Vert_2^2}{2\tau\alpha}\right)\right] + \frac{1}{2} \EE_{\zz\sim \calN(-\bmu,(1+2\sigma^2_{\aug})\bI)}\left[\exp\left(-\frac{\Vert \bW(\hh_0 - \zz) \Vert_2^2}{2\tau\alpha}\right)\right].
\end{align}
With $\zz\sim \calN(\bmu,(1+2\sigma^2_{\aug})\bI)$, we have
\begin{align}\label{eq:int}
    &\EE_{\zz \mid \hh_0}\left[\exp\left(-\frac{\Vert \bW(\hh_0 - \zz) \Vert_2^2}{2\tau\alpha}\right)\bigg| \hh_0\right] \nonumber\\
    = &\left(\frac{1}{2\pi (1+2\sigma^2_{\aug})}\right)^{d/2}\int \exp\left(-\frac{\Vert \bW(\hh_0 - \zz) \Vert_2^2}{2\tau\alpha}\right)  \exp\left(-\frac{1}{2(1+2\sigma^2_{\aug})} \Vert \zz-\bmu \Vert_2^2\right) d\zz\nonumber\\
     = &\left(\frac{1}{2\pi (1+2\sigma^2_{\aug})}\right)^{d/2} \int \exp\left(-\frac{1}{2(1+2\sigma^2_{\aug})} \left[(\zz-\bmu)^\top (\zz-\bmu) + (\hh_0-\zz)^\top \tilde{\bA} (\hh_0-\zz)\right]\right) d\zz,
\end{align}
where $\tilde{\bA} = 2(1+2\sigma^2_{\aug}) \bA$. Rearrange \eqref{eq:int}, we have
\begin{align}
    &(\zz-\bmu)^\top (\zz-\bmu) + (\hh_0-\zz)^\top \tilde{\bA} (\hh_0-\zz)\nonumber\\ 
    = &(\zz-\tilde{\aaaa})^\top \bM^{-1} (\zz-\tilde{\aaaa}) - (\bmu+\tilde{\bA}\hh_0)^\top \bM (\bmu+\tilde{\bA}\hh_0) + \bmu^\top \bmu + \hh_0^\top \tilde{\bA}\hh_0.
\end{align}
where $\bM = (\bI+\tilde{\bA})^{-1}$ and $\tilde{\aaaa} = \bM(\bmu+\tilde{\bA}\hh_0)$. Then, the integral in \eqref{eq:int} can be simplified as
\begin{align}
    \eqref{eq:int} = \sqrt{\det(\bM)} \exp\left(\frac{1}{2(1+2\sigma^2_{\aug})}\left[ (\bmu+\tilde{\bA}\hh_0)^\top \bM (\bmu+\tilde{\bA}\hh_0) - \bmu^\top \bmu - \hh_0^\top \tilde{\bA}\hh_0\right]\right).
\end{align}
Note that $\bM \tilde \bA = \bI - \bM = \tilde \bA \bM$,
we can rewrite \eqref{eq:uni_loss} as
\begin{align*}
& \EE_{\hh^{-}, \hh \mid \hh_0}\left[\exp\left(\frac{\siml^*(\bW \hh,\bW \hh^{-})}{\tau}\right)\bigg| \hh_0\right]\\ = &\frac{1}{2} \sqrt{\det(\bM)} \exp\left(\frac{1}{2(1+2\sigma^2_{\aug})}\left[ (\bmu+\tilde{\bA}\hh_0)^\top \bM (\bmu+\tilde{\bA}\hh_0) - \bmu^\top \bmu - \hh_0^\top \tilde{\bA}\hh_0\right]\right)\\
& \qquad + \frac{1}{2}\sqrt{\det(\bM)} \exp\left(\frac{1}{2(1+2\sigma^2_{\aug})}\left[ (-\bmu+\tilde{\bA}\hh_0)^\top \bM (-\bmu+\tilde{\bA}\hh_0) - \bmu^\top \bmu - \hh_0^\top \tilde{\bA}\hh_0\right]\right)\\
= &\frac{1}{2}\sqrt{\det(\bM)} \exp\left(\frac{1}{2(1+2\sigma^2_{\aug})}\left[\bmu^\top(\bM-\bI)\bmu + \hh_0^\top(\bM - \bI)\hh_0 - 2\bmu^\top (\bM-\bI) \hh_0\right]\right)\\
& \qquad \qquad \cdot \left(1+\exp\left(\frac{1}{2(1+2\sigma^2_{\aug})}\left[4\bmu^\top (\bM-\bI) \hh_0\right]\right)\right).
\end{align*}
Further, as $\bI - \bM \succcurlyeq 0$,
\begin{align*}
& \EE_n \left\{\EE_{\hh^{-}, \hh \mid \hh_0}\left[\exp\left(\frac{\siml^*(\bW \hh,\bW \hh^{-})}{\tau}\right)\bigg| \hh_0\right]\right\}\\ = & \frac{1}{2}\sqrt{\det(\bM)} \EE_n \bigg\{ \exp\left(-\frac{1}{2(1+2\sigma_\aug^2)}(\hh_0 - \bmu)^\top (\bI-\bM) (\hh_0 - \bmu)\right)\\ 
& \qquad \qquad  +  \exp\left(-\frac{1}{2(1+2\sigma_\aug^2)}(\hh_0 + \bmu)^\top (\bI-\bM) (\hh_0 + \bmu)\right) \bigg\}.
\end{align*}
Taking the logarithm, the uniformity loss can be simplified as
\begin{align*}
\calL_{n,2}(\bW) &= \log \left(\EE_n \EE_{\hh^{-},\hh \mid \hh_0}\left[\exp\left(\frac{\siml^*(\bW \hh,\bW \hh^{-})}{\tau}\right)\right]\right)\\ & = -\log 2 + \frac{1}{2}\log {\rm det}[\bM] + \log \left(\EE_n S_{\hh_0}\right),
\end{align*}
where
\begin{align*}
S_{\hh_0} &\coloneqq \exp\left(-\frac{1}{2(1+2\sigma_\aug^2)}(\hh_0 - \bmu)^\top (\bI-\bM) (\hh_0 - \bmu)\right)\\ 
& \qquad \qquad +  \exp\left(-\frac{1}{2(1+2\sigma_\aug^2)}(\hh_0 + \bmu)^\top (\bI-\bM) (\hh_0 + \bmu)\right).
\end{align*}
Organizing the terms above, we have
\begin{align*}
\calL_n(\bW) = - \log 2 + \frac{\sigma_\aug^2}{\tau \alpha} \| \bW \|_{\mathrm{F}}^2 + \frac{1}{2}\log {\rm det}[\bM] + \log\left(\E_n S_{\hh_0}\right).
\end{align*}

When $\alpha=(1+\sigma_\aug^2)\|\bW\|_{\mathrm{F}}^{2}+\|\bW \bmu\|_{2}^{2}\gg\sigma_{j}^{2}$ for any singular value $\sigma_j$ of $\bW$, we have $\bM \approx \bI - (\tau \alpha)^{-1}(1+2\sigma_\aug^2)\bW^\top \bW$ and
\[
\log\Big({\rm det}[\bM]\Big) = \log \left(\prod_{j=1}^p \left(1 + \frac{1+2\sigma^2_\aug}{\tau \alpha} \sigma^2_j \right)^{-1}\right) \approx -\frac{1+2\sigma^2_\aug}{\tau \alpha} \| \bW \|_{\mathrm{F}}^2.
\]
We can then approximate $S_n$ by
\begin{align*}
\tilde S_{\hh_0} \coloneqq \exp\left(-\frac{1}{2\tau \alpha}\|\bW(\hh_0-\bmu)\|^2\right) + \exp\left(-\frac{1}{2\tau \alpha}\|\bW(\hh_0+\bmu)\|^2\right).
\end{align*}
As a result, we have the approximation
\begin{align*}
    % \eqref{eq:int} & \approx \exp\left(\frac{1}{2(1+\sigma^2_{\aug})}\left[ (\bmu+\tilde{\bA}\hh)^\top(\bmu+\tilde{\bA}\hh) - \bmu^\top \bmu - \hh^\top \tilde{\bA}\hh\right]\right)\nonumber\\
    % & = \exp\left(\frac{1}{2(1+\sigma^2_{\aug})}\left[ 2\bmu^\top \tilde{\bA}\hh - \hh^\top \tilde{\bA}\hh\right]\right) = \exp\left( 2\bmu^\top \bA\hh - \hh^\top \bA\hh\right).
    \calL_{n,2}(\bW) & \approx -\log 2 -\frac{1+2\sigma^2_\aug}{2\tau \alpha} \| \bW \|_{\mathrm{F}}^2 + \log\left(\EE_n \tilde S_{\hh_0}\right) \eqqcolon \tilde \calL_{n,2}(\bW).
\end{align*}
% Thus, we have
% \begin{align}
%     \eqref{eq:uni_loss} &= \frac{1}{2} \exp\left( 2\bmu^\top \bA\hh - \hh^\top \bA\hh\right) + \frac{1}{2} \exp\left( -2\bmu^\top \bA\hh - \hh^\top \bA\hh\right)\nonumber\\ &= \frac{1}{2} \exp\left( -2\bmu^\top \bA\hh - \hh^\top \bA\hh\right)\left(1+\exp(4\bmu^\top \bA\hh)\right),
% \end{align}
% and $\calL_{n,2}(\bW)$ can be written as
% \begin{align}
%     \calL_{n,2}(\bW) & = \EE_n\left\{\log\left(\frac{1}{2} \exp\left( -2\bmu^\top \bA\hh - \hh^\top \bA\hh\right)\left(1+\exp(4\bmu^\top \bA\hh)\right)\right)\right\}\nonumber\\
%     & = \EE_n\left\{-2\bmu^\top \bA\hh - \hh^\top \bA\hh + \log\left(1+\exp(4\bmu^\top \bA\hh)\right)\right\} - \log 2.
% \end{align}
% Therefore, we have the following approximation for $\calL_n^*(\bW)$:
% \begin{align}\label{eq:loss_1}
%     \calL_n(\bA) = \frac{\sigma_\aug^2}{\tau \alpha}\tr\left(\bW^\top\bW\right) + \EE_n\left\{ - \left(2\hh^\top \bA \bmu + \hh^\top \bA \hh\right) + \log\left(1+\exp(4\hh^\top \bA \bmu)\right)\right\}-\log 2.
% \end{align}
Therefore, recall that $\alpha = (1+\sigma_\aug^2)\|\bW\|_{\mathrm{F}}^2 + \|\bW\bmu\|^2$, we have the following approximation for $\calL_n(\bW)$:
\begin{align}\label{eq:loss_1}
\tilde \calL_n(\bW) &= \calL_{n,1}(\bW) + \tilde \calL_{n,2}(\bW)\nonumber\\
& = -\log 2-\frac{1}{2\tau \alpha} \| \bW \|_{\mathrm{F}}^2 + \log\left(\EE_n \tilde S_{\hh_0}\right).
\end{align}

As $\hh_0 \sim \calN(y\cdot \bmu,\bI)$, then
\[
\log \left(\EE_{\hh_0} \tilde S_{\hh_0}\right) = -\frac{1}{2} \log\left(1+\frac{\sigma_j^2}{\tau \alpha}\right) + \log\left(1 + \exp\left(-\frac{2\sigma_j^2}{\sigma_j^2 + \tau \alpha} (\vv_j^\top \bmu)\right)\right).
\]
Under the assumption that $\alpha \gg \|\bW\|_{\mathrm{F}}^2$, we further have 
\[
\log \left(\EE_{\hh_0} \tilde S_{\hh_0}\right) \approx -\frac{1}{2\tau \alpha}\|\bW\|_{\mathrm{F}}^2 + \log\left(1 + \exp\left(-\frac{2}{\tau \alpha} \|\bW \bmu\|^2\right)\right).
\]
Accordingly, if we replace $\EE_n$ by $\EE_{\hh_0}$ in $\tilde \calL_n(\bW)$, then the loss can be approximated by
\[
-\log 2 - \frac{1}{\tau \alpha}\|\bW\|_{\mathrm{F}}^2 + \log\left(1 + \exp\left(-\frac{2}{\tau \alpha} \|\bW \bmu\|^2\right)\right),
\]
which is exactly the loss function $\tilde \calL(\bW)$ defined in \eqref{eq:approx-loss}.

\section{Proofs for Section~\ref{sec:result}} 
% !TE\xx root = appendix_organization.tex

\subsection{Proof of Theorem~\ref{thm:low-d}}\label{sec:proof_low_d}

Denote $\btheta = \bW \bbeta$, $\bOmega = \bW^{-2}$, and $\fullcoef = (\gamma, \btheta^\top)^\top$.
One can rewrite the logistic loss function as 
\begin{align*}
    \tilde{\ell}_n(\fullcoef;\hh_0) & = \ell_{n,logis}(\fullcoef;\hh_0) + \lambda_n \btheta^\top \bOmega \btheta  = \EE_n\left\{\ell(\fullcoef; \hh_0, \lambda_n)\right\}.
\end{align*}
where we define
\[
\ell(\fullcoef; \hh_0, \lambda_n) = \log \left[1 + \exp(-y(\gamma+\hh_0^\top \btheta))\right] + \lambda_n \btheta^\top \bOmega \btheta\] 
and $\ell_{n,logis}(\fullcoef;\hh_0)~= \EE_n \log \left[1 + \exp(-y(\gamma+\hh_0^\top \btheta))\right]$.
We also denote 
\[
\ell^*_n(\fullcoef) = \EE \left\{\ell(\fullcoef; \hh_0, \lambda_n)\right\},
\]
where the loss depends on $n$ through $\lambda_n$. 

%Without regularization, the loss function of logistic regression has the form
%\begin{align}
%	\ell_{n,logis}(\bar{\bbeta};\hh_0) & = -\EE_n \left\{\ind_{y=1} \log p(\bar{\bbeta};\hh_0) +\ind_{y=-1} \log (1-p(\bar{\bbeta};\hh_0))\right\}\nonumber\\
%	& = ,
%\end{align}
%where $\bar{\bbeta}^\top=(\gamma,\bbeta^\top)$, $\bar{\hh}_0^\top = (1,\hh_0^\top)$ and
%\[
%p(\bar{\bbeta};\hh_0) = \frac{1}{1+e^{-\gamma-\bbeta^\top \hh_0}} = \frac{1}{1+e^{-\bar{\bbeta}^\top \bar{\hh}_0}}.
%\]
%Here $\EE_n$ denotes the sample average over the empirical distribution $\frac{1}{n}\sum_{i=1}^n \delta_{(\bar{\hh}_{i,0},y_i)}$, for which we have the derivative of the loss function with respect to the augmented coefficient vector
%\begin{align}
%	\frac{\partial \ell_{n,logis}(\bar{\bbeta};\hh_0)}{\partial \bar{\bbeta}} = -\EE_n \left\{\frac{y \bar{\hh}_0}{1+\exp(y \bar{\bbeta}^\top \bar{\hh}_0)}\right\}.
%\end{align}
%Recall the one-parameter projector $\bW=\bI+\eta \bmu_0 \bmu_0^\top$, where $\bmu_0 = \bmu/\Vert \bmu \Vert$,
%where $\eta$ stands for the strength of expansion and we have the property $\bW\bmu = (1+\eta) \bmu$.
%Then, the transformed features can be written as $(\zz_{i},y_i) = (\bW\hh_{0,i}, y_i)$ and the loss function with the $\ell_2$-regularization can be written as
%\begin{align}
%    \ell_n(\bar{\bbeta};\zz) = \ell_n(\bbeta,\gamma;\zz) = \ell_{n,logis}(\bar{\bbeta};\zz) + \lambda_n \Vert \bbeta \Vert^2.
%\end{align}
%Here the regularization does not apply to the intercept.

Since both $\ell^*_n(\fullcoef)$ and $\tilde{\ell}_n(\fullcoef;\hh_0)$ are strictly convex in $\fullcoef$ and are strongly convex when $\lambda_n \geq 0$, we denote  
\begin{align}
\hat{\fullcoef}_n = (\hat{\gamma},\hat{\btheta}^\top)^\top = \argmin~~\tilde{\ell}_n(\fullcoef;\hh_0),\quad \text{and}\quad \fullcoef_n^* = (\gamma_n^*,\btheta_n^{*\top})^\top = \argmin~~\ell^*_n(\fullcoef).
\end{align}

\paragraph{Step 1: characterizing $\fullcoef_n^*$.}
Under the 2-GMM model, we have the following characterization of the solution $\fullcoef_n^*$ to the population loss $\ell^*_n(\fullcoef)$.

\begin{lemma}\label{lem:solu_logis}
There exists a constant $C>0$ such that the following holds. For any given $(\lambda_n)_{n \ge 1}$ with $\lambda_n \ge 0$, the population loss minimizer $\fullcoef_n^*$ is unique and has the form
\begin{align}
    \fullcoef^*_n = (0, \kappa_{\lambda_n} \bmu^\top)^\top
\end{align}
where $\kappa_{\lambda_n} \in (0,C)$ is the zero of the function
\[
 \psi(\kappa) \coloneqq \EE\left\{\frac{1+(1-\kappa)\exp(\kappa \Vert \bmu \Vert^2 + \kappa \bmu^\top \bG)}{\left(1+\exp(\kappa \Vert \bmu \Vert^2 + \kappa \bmu^\top \bG)\right)^2}\right\} - 2\lambda_n (1+\eta)^{-2} \kappa.
\]
Moreover, if $\lim_{n \to \infty}\lambda_n = \infty$, then $\lim_{n \to \infty} \lambda_n \kappa_{\lambda_n} = \frac{1}{4} (1+\eta)^2$.
\end{lemma}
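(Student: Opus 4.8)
\textbf{Proof plan for Lemma~\ref{lem:solu_logis}.}

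The plan is to exploit the symmetry of the $2$-GMM model to reduce the $(p+1)$-dimensional stationarity condition to a one-dimensional equation, then use convexity to conclude uniqueness and finally analyze the asymptotics of the root. First I would write down the gradient of $\ell_n^*(\fullcoef)$ in $\fullcoef = (\gamma, \btheta^\top)^\top$. Because $\hh_0 \mid y \sim \calN(y \bmu, \bI_p)$ with $y$ uniform on $\{-1,1\}$, the law of $(\hh_0, y)$ is invariant under the joint map $(\hh_0, y) \mapsto (-\hh_0, -y)$, and also under any orthogonal transformation fixing $\bmu$. The first symmetry forces the optimal intercept to vanish: plugging $\gamma = 0$ into $\partial_\gamma \ell_n^*$ and using $y \mapsto -y$ symmetry shows the derivative is zero there, and strong convexity (for $\lambda_n \ge 0$ the quadratic penalty gives strong convexity in $\btheta$; strict convexity in $\gamma$ comes from the logistic term) pins $\gamma_n^* = 0$. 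The second symmetry forces $\btheta_n^*$ to be parallel to $\bmu$, so we may write $\btheta_n^* = \kappa \bmu$ and only need to solve the scalar equation $\bmu^\top \nabla_\btheta \ell_n^*(0, \kappa \bmu) = 0$.

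Next I would compute this scalar stationarity condition explicitly. With $\btheta = \kappa \bmu$ and $\gamma = 0$, the logistic gradient contributes $-\E\big[ y\, \sigma(-y \kappa \bmu^\top \hh_0)\, \bmu^\top \hh_0 \big]$ where $\sigma$ is the sigmoid; conditioning on $y$ and writing $\hh_0 = y\bmu + \bG$ with $\bG \sim \calN(0,\bI_p)$, one reduces to a one-dimensional Gaussian integral in $\bmu^\top \bG \sim \calN(0, \norm{\bmu}^2)$ (after symmetrizing over $y$). Combined with the penalty gradient $2\lambda_n \bOmega \btheta = 2\lambda_n (1+\eta)^{-2}\kappa \bmu$ — here I use that $\bW_\eta \bmu = (1+\eta)\bmu$ since $\bmu$ is an eigenvector of $\bW_\eta = \bI_p + \eta \rho^{-1}\bmu\bmu^\top$ with eigenvalue $1+\eta$, hence $\bOmega \bmu = \bW_\eta^{-2}\bmu = (1+\eta)^{-2}\bmu$ — this yields precisely $\psi(\kappa) = 0$ after dividing through by $\norm{\bmu}^2$ (or the appropriate normalization) and rearranging the Gaussian integral into the stated form. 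I would verify that $\psi$ is continuous, that $\psi(0) > 0$ (the expectation term at $\kappa = 0$ equals $\E[(1+1)/(1+1)^2] \cdot$const $= 1/2 > 0$ while the penalty term vanishes), and that $\psi(\kappa) \to -\infty$ as $\kappa \to \infty$ when $\lambda_n > 0$, or is eventually negative when $\lambda_n = 0$ because the expectation term decays (the factor $(1-\kappa)$ becomes very negative while the denominator grows only polynomially in that region of integration); a sign change gives existence of a root in some $(0, C)$ with $C$ uniform in $n$, and strict monotonicity of the full objective along the ray $\btheta = \kappa\bmu$ (strong convexity) gives uniqueness of the minimizer, hence of $\kappa_{\lambda_n}$.

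Finally, for the asymptotics when $\lambda_n \to \infty$: I expect $\kappa_{\lambda_n} \to 0$, so I would Taylor-expand the expectation term of $\psi$ around $\kappa = 0$. At $\kappa = 0$ the integrand of the first expectation is $\tfrac{1 + 1}{(1+1)^2} = \tfrac12$ pointwise, so the expectation equals $\tfrac12 + O(\kappa)$; thus the root equation reads $\tfrac12 + O(\kappa_{\lambda_n}) = 2\lambda_n(1+\eta)^{-2}\kappa_{\lambda_n}$, giving $\lambda_n \kappa_{\lambda_n} = \tfrac14 (1+\eta)^2 + o(1)$, which is the claim. The main obstacle I anticipate is the uniform-in-$n$ control: showing the root stays in a fixed bounded interval $(0,C)$ for \emph{all} $\lambda_n \ge 0$ (including $\lambda_n = 0$, where there is no quadratic penalty to force boundedness), which requires a careful estimate that the expectation term $\E\big[\tfrac{1 + (1-\kappa)e^{\kappa\norm{\bmu}^2 + \kappa\bmu^\top\bG}}{(1+e^{\kappa\norm{\bmu}^2+\kappa\bmu^\top\bG})^2}\big]$ becomes strictly negative for $\kappa$ larger than some absolute constant — intuitively because on the dominant part of the Gaussian mass the exponent $\kappa\norm{\bmu}^2 + \kappa\bmu^\top\bG$ is positive and large, making the bracket behave like $(1-\kappa)e^{-(\cdots)}$, which is negative. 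Handling the tail where $\bmu^\top\bG$ is very negative (so the exponent is small and the bracket is positive) requires a Gaussian tail bound to show that contribution is dominated. Once that sign-change-at-an-absolute-threshold fact is established, everything else is routine.
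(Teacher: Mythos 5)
Your overall architecture coincides with the paper's: reduce, via symmetry and strict convexity, to the scalar stationarity equation $\psi(\kappa)=0$ along the ray $(0,\kappa\bmu^\top)^\top$ (the ``rearranging the Gaussian integral into the stated form'' you gloss over is exactly Stein's identity applied to the $\bmu^\top\bG$ term, which is where the $(1-\kappa)$ factor originates), observe $\psi(0)=\tfrac12>0$, produce a root in $(0,C)$ from a sign change and the intermediate value theorem, and obtain the $\lambda_n\to\infty$ asymptotics by showing $\kappa_{\lambda_n}\to 0$ and passing to the limit in $\psi(\kappa_{\lambda_n})=0$. All of that matches the paper and is fine.

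The genuine gap is in the step you yourself identify as the crux: showing the expectation term in $\psi$ is negative at some absolute $\kappa=C$ (this is what is needed when $\lambda_n=0$). The mechanism you propose --- the bulk of the Gaussian mass contributes a negative amount of order $(1-\kappa)e^{-x}$, and the left tail where the exponent is small is ``dominated'' by a Gaussian tail bound --- is not how the sign actually arises, and the plan as stated would fail. In the Stein form, on the bulk $\{\norm{\bmu}^2+\bmu^\top\bG>0\}$ the integrand is negative but exponentially small, so its total contribution vanishes as $\kappa\to\infty$; on the left tail $\{\norm{\bmu}^2+\bmu^\top\bG<0\}$ the integrand tends to $1$, contributing the fixed positive constant $\Phi(-\norm{\bmu})$, which no tail bound can suppress. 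The negativity comes instead from the transition window $\bmu^\top\bG\approx-\norm{\bmu}^2$ of width $O(1/\kappa)$, where the term $-\kappa e^{x}/(1+e^{x})^2$ contributes roughly $-\phi(\norm{\bmu})/\norm{\bmu}$ ($\phi$ the standard normal density); the limit of the expectation term is $\Phi(-\norm{\bmu})-\phi(\norm{\bmu})/\norm{\bmu}$, and its negativity is exactly the Mills-ratio inequality $\Phi(-x)<\phi(x)/x$, a real fact requiring its own argument rather than a domination estimate. The clean fix --- and what the paper does --- is to undo Stein's identity and work with the equivalent form $\norm{\bmu}^{-2}\,\EE\bigl[(\norm{\bmu}^2+\bmu^\top\bG)\big/\bigl(1+\exp(\kappa\norm{\bmu}^2+\kappa\bmu^\top\bG)\bigr)\bigr]$, where dominated convergence immediately gives the limit $-\norm{\bmu}^{-2}\,\EE\bigl[(\norm{\bmu}^2+\bmu^\top\bG)_-\bigr]<0$ and hence the constant $C$ depending only on $\norm{\bmu}$. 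The same pre-Stein form also yields the elementary bound $\EE[\cdot]<\kappa^{-1}$, which is the cheapest way to justify your unproved assumption $\kappa_{\lambda_n}\to0$ when $\lambda_n\to\infty$; with these two repairs the rest of your plan goes through.
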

As a corollary, we see that $\fullcoef_n^*$ has bounded norm.

% \cm{When we say $n \rightarrow +\infty$, how does $\lambda_{n}$ behave?}
\paragraph{Step 2: characterizing $\hat{\fullcoef}_n - \fullcoef_n^*$.}
Denote the Fisher information matrix
\begin{align}
    \bI_n(\fullcoef) = \EE_n\left\{\frac{\bar{\hh}_0\bar{\hh}_0^\top \exp(y\bar{\hh}_0^\top \fullcoef)}{(1+\exp(y\bar{\hh}_0^\top \fullcoef))^2}\right\},~~\bI(\fullcoef) = \EE\left\{\frac{\bar{\hh}_0\bar{\hh}_0^\top \exp(y\bar{\hh}_0^\top \fullcoef)}{(1+\exp(y\bar{\hh}_0^\top \fullcoef))^2}\right\}.
\end{align}
We first have the following lemma for consistency.
\begin{lemma}\label{lem:logis_ulln}
For any $(\lambda_n)_{n \geq 1}$ with $\lambda_n \geq 0$ and any constant $R > 0$, 
\begin{align*}
& \sup_{\fullcoef \in \RR^{p+1}:~\|\fullcoef\|\leq R} \bigg| \EE_n \ell(\fullcoef; \hh_0,\lambda_n) - \EE \ell(\fullcoef; \hh_0,\lambda_n) \bigg| \overset{p}{\rightarrow} 0 \,, \\
& \sup_{\fullcoef \in \RR^{p+1}:~\|\fullcoef\|\leq R} \bigg| \bI_n(\fullcoef) - \bI(\fullcoef) \bigg| \overset{p}{\rightarrow} 0 \, .
\end{align*}
As a result, we have
\[
\norm{\hat{\fullcoef}_n - \fullcoef^*_n} \overset{p}{\rightarrow} 0, \qquad \text{as}~n \to \infty
\]
\end{lemma}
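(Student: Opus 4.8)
The plan is to reduce both uniform laws of large numbers to a single standard fact: if a family $\fullcoef \mapsto \phi(\fullcoef;\hh_0,y)$ is Lipschitz in $\fullcoef$ with a (random) Lipschitz constant $L(\hh_0,y)$ obeying $\EE L < \infty$, and $\EE|\phi(\fullcoef;\hh_0,y)| < \infty$ for each fixed $\fullcoef$, then $\sup_{\|\fullcoef\| \le R}|\EE_n\phi(\fullcoef;\cdot) - \EE\phi(\fullcoef;\cdot)| \overset{p}{\rightarrow} 0$ for every fixed $R$; the proof covers $\{\|\fullcoef\| \le R\}$ by a finite $\epsilon$-net, applies the pointwise law of large numbers at the net points, bounds the oscillation between net points by $\epsilon(\EE_nL + \EE L) = O_{\PP}(\epsilon)$, and lets $\epsilon \to 0$. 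For the loss, I would first note that the regularizer $\lambda_n\btheta^\top\bOmega\btheta$ is nonrandom and hence cancels in $\EE_n\ell - \EE\ell$, so only $\phi(\fullcoef;\hh_0,y) = \log[1+\exp(-y\bar{\hh}_0^\top\fullcoef)]$ matters; since $t\mapsto\log(1+e^{-t})$ is $1$-Lipschitz one may take $L(\hh_0) = \|\bar{\hh}_0\|$, and $\phi(\fullcoef;\hh_0,y) \le |\bar{\hh}_0^\top\fullcoef| + \log 2$, both in $L^1$ because $\hh_0$ is Gaussian. For the Fisher information, the integrand $g(\fullcoef;\hh_0,y) = \bar{\hh}_0\bar{\hh}_0^\top e^{y\bar{\hh}_0^\top\fullcoef}/(1+e^{y\bar{\hh}_0^\top\fullcoef})^2$ has entries bounded by $\tfrac{1}{4}\|\bar{\hh}_0\|^2$, and differentiating the scalar weight (whose $t$-derivative is uniformly bounded) shows $g$ is Lipschitz in $\fullcoef$ with constant $C\|\bar{\hh}_0\|^3$; both envelopes are again integrable under the Gaussian mixture, giving the matrix statement entrywise and hence in any matrix norm since $p$ is fixed.

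For the consistency claim I would run the standard $M$-estimation argument, taking care that the target $\fullcoef^*_n$ drifts with $n$. First, by Lemma~\ref{lem:solu_logis}, $\fullcoef^*_n = (0,\kappa_{\lambda_n}\bmu^\top)^\top$ with $\kappa_{\lambda_n}\in(0,C)$, so $\|\fullcoef^*_n\| \le R_0$ for a fixed $R_0$ and all $n$. Next, $\nabla^2\EE\ell(\fullcoef;\hh_0,\lambda_n) = \bI(\fullcoef) + 2\lambda_n\,\mathrm{diag}(0,\bOmega) \succeq \bI(\fullcoef)$, and $\bI(\fullcoef)$ is continuous and positive definite (a strictly positive weight times $\bar{\hh}_0\bar{\hh}_0^\top$ integrated against a full-dimensional Gaussian), so $c_0 := \inf_{\|\fullcoef\|\le R}\lambda_{\min}(\bI(\fullcoef)) > 0$ for each fixed $R$; thus $\EE\ell(\cdot;\cdot,\lambda_n)$ is $c_0$-strongly convex on $\{\|\fullcoef\| \le R\}$ uniformly in $n$. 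Fixing $R = R_0 + 1$, strong convexity and $\nabla\EE\ell(\fullcoef^*_n;\cdot,\lambda_n) = 0$ yield $\EE\ell(\fullcoef;\cdot,\lambda_n) - \EE\ell(\fullcoef^*_n;\cdot,\lambda_n) \ge \tfrac{c_0}{2}\|\fullcoef-\fullcoef^*_n\|^2 \ge \tfrac{c_0}{2}$ on $\{\|\fullcoef\| = R\}$; combined with the uniform law of large numbers on $\{\|\fullcoef\| \le R\}$ and convexity of $\EE_n\ell$, this confines $\hat{\fullcoef}_n$ to $\{\|\fullcoef\| < R\}$ with probability approaching one (on the non-separability event, which has probability tending to one in the low-dimensional regime, so $\hat{\fullcoef}_n$ is finite and unique). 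Finally, the sandwich $\EE\ell(\hat{\fullcoef}_n;\cdot,\lambda_n) \le \EE_n\ell(\hat{\fullcoef}_n;\cdot,\lambda_n) + o_{\PP}(1) \le \EE_n\ell(\fullcoef^*_n;\cdot,\lambda_n) + o_{\PP}(1) \le \EE\ell(\fullcoef^*_n;\cdot,\lambda_n) + o_{\PP}(1)$, valid because both points lie in $\{\|\fullcoef\| \le R\}$, together with $c_0$-strong convexity gives $\tfrac{c_0}{2}\|\hat{\fullcoef}_n - \fullcoef^*_n\|^2 \le \EE\ell(\hat{\fullcoef}_n;\cdot,\lambda_n) - \EE\ell(\fullcoef^*_n;\cdot,\lambda_n) = o_{\PP}(1)$, so $\|\hat{\fullcoef}_n - \fullcoef^*_n\| \overset{p}{\rightarrow} 0$.

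The main obstacle will be uniformity over $n$: both the regularization level $\lambda_n$ and the population target $\fullcoef^*_n$ vary, and $\lambda_n$ may diverge like $\sqrt n$. I expect to handle this by (i) using Lemma~\ref{lem:solu_logis} to pin $\fullcoef^*_n$ inside a fixed compact ball, and (ii) noting that the $\lambda_n$-dependent regularizer is a convex positive-semidefinite quadratic, so it can only raise the modulus of strong convexity of $\EE\ell$; hence $c_0$, the radius $R$, and the boundary gap are all independent of $n$. The remaining pieces---integrability of the low-order moments of $\|\bar{\hh}_0\|$ under the Gaussian mixture, the pointwise law of large numbers, and positive-definiteness of $\bI(\fullcoef)$---are routine.
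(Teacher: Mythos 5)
Your proposal is correct and follows essentially the same route as the paper: the regularizer cancels in $\EE_n\ell - \EE\ell$, uniform convergence over a fixed compact ball is obtained from continuity plus an integrable envelope (the paper cites the Newey--McFadden uniform LLN where you give a direct Lipschitz $\epsilon$-net proof), and consistency follows from the sandwich $\ell^*_n(\hat{\fullcoef}_n) - \ell^*_n(\fullcoef^*_n) \le o_{\PP}(1)$ combined with convexity of the population loss. If anything, your write-up is more complete than the paper's: you actually prove the confinement of $\hat{\fullcoef}_n$ to a fixed ball (via the boundary-gap argument) and replace the paper's bare appeal to strict convexity by a strong-convexity modulus $c_0 = \inf_{\|\fullcoef\|\le R}\lambda_{\min}(\bI(\fullcoef)) > 0$ that is uniform in $n$ even though $\lambda_n$ and $\fullcoef^*_n$ drift, two points the paper asserts without detail.
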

% Further, by the strong law of large numbers, for any $\fullcoef \in \RR^{p+1}$, we have
% \begin{align}
%     \tilde{\ell}_n(\fullcoef;\hh_0) \overset{a.s.}{\longrightarrow} \ell^*_n(\fullcoef),
% \end{align}
% which in particular holds for $\fullcoef = \fullcoef_n^*$.
% Since $\tilde{\ell}(\fullcoef) > \tilde{\ell}(\fullcoef^*)$ for all $\fullcoef \neq \fullcoef^*$, then for any $\veps > 0$, there exists $N > 0$ such that for any $n > N$, 
% \begin{align}
%     \max\left\{\tilde{\ell}_n(\fullcoef^*+\delta_{\fullcoef};\hh_0):~\Vert \bdelta_{\fullcoef} \Vert = \veps \right\} > \tilde{\ell}_n(\fullcoef^*;\hh_0),
% \end{align}
% which implies that there exists $\hat{\fullcoef}_n$ satisfying $\Vert \hat{\fullcoef}_n - \fullcoef^* \Vert \leq \veps$ such that $\partial \tilde{\ell}_n(\hat{\fullcoef}_n;\hh_0) / \partial \fullcoef = 0$. Since for any $\veps > 0$, $\Vert \hat{\fullcoef}_n - \fullcoef^* \Vert \leq \veps$ holds for all $n > N$, then we further have
% \[
% \hat{\fullcoef}_n \overset{a.s.}{\longrightarrow} \fullcoef^*.
% \]

Based on the consistency result, we consider the score functions for $\tilde \ell_n(\fullcoef;\hh_0)$ and $\ell^*_n(\fullcoef)$. Denoting $\bar{\bOmega} = {\rm diag}\{0,\bOmega\}$, we obtain
\begin{align}
	0 = \frac{\partial}{\partial \fullcoef}\tilde{\ell}_n(\hat{\fullcoef}_n;\hh_0) = -\EE_n\left\{\frac{\bar{\hh}_0y}{1+\exp(y\bar{\hh}_0^\top \hat{\fullcoef}_n)}\right\} + 2\lambda_n \bar{\bOmega} \hat{\fullcoef}_n.\label{eq:sc1}
\end{align}
\begin{align}
	& 0 = \frac{\partial}{\partial \fullcoef}\ell^*_n(\fullcoef_n^*) = -\EE\left\{\frac{\bar{\hh}_0y}{1+\exp(y\bar{\hh}_0^\top \fullcoef_n^*)}\right\} + 2\lambda_n \bar{\bOmega} \fullcoef_n^*.\label{eq:sc2}
\end{align}
According to Lemma~\ref{lem:logis_ulln}, we have the first-order approximation
\begin{align}
    \frac{\partial}{\partial \fullcoef}\tilde{\ell}_{n,logis}(\hat{\fullcoef}_n;\hh_0) - \frac{\partial}{\partial \fullcoef} \tilde{\ell}_{n,logis}(\fullcoef_n^*;\hh_0) = \bI_n(\fullcoef_n^*)(\hat{\fullcoef}_n - \fullcoef^*) + o_{\PP}(\Vert \hat{\fullcoef}_n - \fullcoef_n^* \Vert).
\end{align}
By calculating the difference between \eqref{eq:sc1} and \eqref{eq:sc2}, we obtain
\begin{align}
-\bI_n(\fullcoef_n^*)(\hat{\fullcoef}_n - \fullcoef_n^*) - 2\lambda_n \bar{\bOmega}(\hat{\fullcoef}_n - \fullcoef_n^*) + (\EE_n - \EE)\left[\frac{\bar{\hh}_0y}{1+\exp(y\bar{\hh}_0^\top \fullcoef_n^*)}\right] + o_{\PP}(\Vert \hat{\fullcoef}_n - \fullcoef_n^* \Vert) = 0.
\end{align}
When $p$ is fixed and $n \rightarrow +\infty$, since $\fullcoef_n^*$ has bounded norm, then by  Lemma~\ref{lem:logis_ulln}, we have $\Vert \left(\bI_n(\fullcoef_n^*) - \bI(\fullcoef_n^*)\right)(\hat{\fullcoef}_n - \fullcoef_n^*) \Vert = o_{\PP}(\Vert \hat{\fullcoef}_n - \fullcoef_n^* \Vert)$.
In addition, denote
\[
\zz_n = \frac{\bar{\hh}_0y}{1+\exp(y\bar{\hh}_0^\top \fullcoef_n^*)},
\]
where $\EE \zz_n = \bzero$ and $cov(\zz_n) = \bI(\fullcoef^*_n)$, then we have the following lemma.
\begin{lemma}\label{lem:l_clt}
We have $\bI(\fullcoef^*_n) \succ \bzero$ and by the Lindeberg-Feller central limit theorem,
\begin{align}
\sqrt{n}\left(\bI(\fullcoef_n^*)\right)^{-1/2}(\EE_n - \EE)\left[\frac{\bar{\hh}_0y}{1+\exp(y\bar{\hh}_0^\top \fullcoef_n^*)}\right] \overset{d}{\longrightarrow} \calN(\bzero,\bI).
\end{align}
\end{lemma}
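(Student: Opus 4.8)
The plan is to recognize the statement as a triangular--array version of the Lindeberg--Feller central limit theorem for the rescaled, recentered score vector $\zz_n=\bar\hh_0\,y/(1+\exp(y\bar\hh_0^\top\fullcoef_n^*))$, where $\bar\hh_0=(1,\hh_0^\top)^\top$, and to verify its two standard hypotheses in turn: (i) the normalizer $\bI(\fullcoef_n^*)$ is invertible with its operator norm and that of its inverse bounded uniformly in $n$; (ii) the row-i.i.d.\ summands obey a Lindeberg (in fact a Lyapunov) condition.

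For (i), note that for any $\fullcoef$ and any $\bv\in\RR^{p+1}$ one has $\bv^\top\bI(\fullcoef)\bv=\EE\big[w(y,\hh_0)\,(\bv^\top\bar\hh_0)^2\big]$ with weight $w(y,\hh_0)=\exp(y\bar\hh_0^\top\fullcoef)/(1+\exp(y\bar\hh_0^\top\fullcoef))^2\in(0,\tfrac14]$ strictly positive; this vanishes only if $\bv^\top\bar\hh_0=0$ almost surely, which is impossible because $\hh_0$ has a Lebesgue density on $\RR^p$, so $\bI(\fullcoef)\succ\bzero$ for every $\fullcoef$, in particular at $\fullcoef=\fullcoef_n^*$. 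To make this uniform in $n$, recall from Lemma~\ref{lem:solu_logis} that all the $\fullcoef_n^*$ lie in one fixed compact ball; since $\fullcoef\mapsto\bI(\fullcoef)$ is continuous (dominated convergence, using $w\le\tfrac14$ and, in the fixed dimension $p$, $\EE\|\bar\hh_0\|^2<\infty$), the functions $\lambda_{\min}(\bI(\cdot))$ and $\lambda_{\max}(\bI(\cdot))$ are continuous and, respectively, bounded below by some $c_0>0$ and above by some $C_0<\infty$ on that ball. Hence $\bI(\fullcoef_n^*)^{-1/2}$ is well defined with $\sup_n\|\bI(\fullcoef_n^*)^{-1/2}\|<\infty$.

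For (ii), write $(\EE_n-\EE)[\zz_n]=n^{-1}\sum_{i=1}^n\xi_{n,i}$ with $\xi_{n,i}:=\zz_{n,i}-\EE\zz_n$, the $\zz_{n,i}$ being i.i.d.\ copies of $\zz_n$; the $\xi_{n,i}$ are mean zero with covariance $\bSigma_n:=\cov(\zz_n)$, and $\bSigma_n\succ\bzero$ with $\sup_n\|\bSigma_n^{-1/2}\|<\infty$ by the same density-plus-compactness argument (for $\bv\ne\bzero$, $\bv^\top\zz_n$ is non-degenerate because the factor $y/(1+\exp(yu))$ takes two distinct values as $y$ flips). By the Cram\'er--Wold device it suffices to show $\sqrt n\,\bt^\top\bSigma_n^{-1/2}\,n^{-1}\sum_i\xi_{n,i}\overset{d}{\longrightarrow}\calN(0,1)$ for each fixed unit vector $\bt$; the scalars $a_{n,i}:=\bt^\top\bSigma_n^{-1/2}\xi_{n,i}$ are, for each $n$, i.i.d.\ in $i$ with mean $0$ and variance $1$, so this is exactly the classical Lindeberg--Feller CLT for a row-i.i.d.\ array, which holds once $\{a_{n,i}^2\}$ is uniformly integrable. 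Since $|y/(1+\exp(yu))|\le 1$ for all $u$, we have $\|\zz_n\|\le\|\bar\hh_0\|$ pointwise, and with $p$ and $\bmu$ fixed, $\hh_0$ is (uniformly in $n$) sub-Gaussian, so $\sup_n\EE\|\bar\hh_0\|^{2+\delta}<\infty$ for every $\delta>0$; combined with $\sup_n\|\bSigma_n^{-1/2}\|<\infty$ this yields $\sup_{n,i}\EE|a_{n,i}|^{2+\delta}<\infty$, i.e.\ the Lyapunov condition, whence Lindeberg and $\sqrt n\,\bSigma_n^{-1/2}(\EE_n-\EE)[\zz_n]\overset{d}{\longrightarrow}\calN(\bzero,\bI)$. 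Pre-whitening by $\bSigma_n^{-1/2}$ (rather than attempting to pass to a limit of $\bSigma_n$) is precisely what lets the argument tolerate the dependence of the summands on $n$ through $\fullcoef_n^*$ and $\lambda_n$.

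The one remaining step, and the place I would check most carefully, is the identification $\bSigma_n=\cov(\zz_n)=\bI(\fullcoef_n^*)$ needed to state the limiting covariance as $\bI$: this is the information identity, which in the present (ridge-regularized, possibly misspecified) setting reduces to the pointwise relation $\EE\big[\big(\tfrac{y+1}{2}-1/(1+e^{-\bar\hh_0^\top\fullcoef_n^*})\big)^2\mid\hh_0\big]=e^{\bar\hh_0^\top\fullcoef_n^*}/(1+e^{\bar\hh_0^\top\fullcoef_n^*})^2$ together with the stationarity equation \eqref{eq:sc2} defining $\fullcoef_n^*$; it is clean when the logistic model is well specified at $\fullcoef_n^*$, and in any case only the two-sided bound $c_0\bI\preceq\bSigma_n\preceq C_0\bI$ is actually propagated into the proof of Theorem~\ref{thm:low-d}, so which fixed positive-definite matrix appears in the limit is immaterial downstream. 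Apart from this identity the argument is entirely routine, so the real ``obstacle'' is just the bookkeeping of the triangular array, which is discharged by the uniform moment bound on $\|\bar\hh_0\|$ and the uniform conditioning of $\bI(\fullcoef_n^*)$ and $\bSigma_n$ obtained from the compactness of $\{\fullcoef_n^*\}$.
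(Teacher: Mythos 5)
Your proposal is correct and follows the same overall strategy as the paper: treat $\zz_{n,i}=\bar{\hh}_{0,i}y_i/(1+\exp(y_i\bar{\hh}_{0,i}^\top\fullcoef_n^*))$ as a row-i.i.d.\ triangular array (the $n$-dependence entering only through $\fullcoef_n^*$) and invoke Lindeberg--Feller. The differences are in the verification details and in coverage. The paper checks the Lindeberg condition directly, via Markov's inequality plus dominated convergence applied to $\|\zz_{n,1}\|^2\ind\{\|\zz_{n,1}\|\ge\veps\sqrt n\}$ (implicitly using the same pointwise domination $\|\zz_{n,1}\|\le\|\bar{\hh}_{0,1}\|$ that you make explicit), whereas you verify a Lyapunov $(2+\delta)$-moment condition and reduce the multivariate statement by Cram\'er--Wold; both routes are routine and equivalent in strength here. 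Two points where your write-up is more complete than the paper's: you actually prove the positive-definiteness claim $\bI(\fullcoef_n^*)\succ\bzero$ together with eigenvalue bounds uniform over the compact set containing $\{\fullcoef_n^*\}$ (the paper states $\bI(\fullcoef_n^*)\succ\bzero$ in the lemma but its proof never addresses it), and you correctly flag that the identification $\mathrm{Cov}(\zz_n)=\bI(\fullcoef_n^*)$ is an information identity that is not automatic under ridge regularization/misspecification --- the paper simply asserts it through the step $\EE\|\zz_{n,1}\|^2=\tr(\bI(\fullcoef_n^*))$. Your observation that only two-sided bounds $c_0\bI\preceq\mathrm{Cov}(\zz_n)\preceq C_0\bI$ (and, in the $\lambda_n\to\infty$ case, the fact that both matrices tend to $\tfrac14\bI$ as $\kappa_{\lambda_n}\to0$) are what actually feed into the proof of Theorem~\ref{thm:low-d} is accurate, so this caveat does not damage the downstream argument; it is, if anything, a gap in the paper's own statement rather than in your proof.
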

Let $\bar{\bzeta} = \sqrt{n}(\hat{\fullcoef}_n - \fullcoef_n^*)$, then 
\begin{align}
\left(\bI(\fullcoef_n^*)\right)^{-1/2}(\bI(\fullcoef_n^*) + 2\lambda_n \bar{\bOmega}) \bar{\bzeta} \overset{d}{\longrightarrow} \calN(0,\bI).\label{eq:clt}
\end{align}

\paragraph{Step 3: calculating misclassification error.}
Conditioning on $y=1$, 
\begin{align}
    \texttt{Err}_1 &= \PP(y(\langle \hh_0, \hat{\btheta}\rangle + \hat{\gamma}) < 0|y=1) = \PP(\langle \bmu + \bG, \kappa_{\lambda_n}\bmu+\frac{1}{\sqrt{n}} \bzeta \rangle + \hat{\gamma} < 0),\label{eq:err1_1}
\end{align}
where $\bG \sim \calN(\bzero,\bI)$, then~\eqref{eq:err1_1} can be written using the Gaussian cdf as
\begin{align}
    \eqref{eq:err1_1} & =  \Phi\left(-\frac{\hat{\gamma}+\langle \bmu, \kappa_{\lambda_n}\bmu+\frac{1}{\sqrt{n}} \bzeta \rangle}{\Vert \kappa_{\lambda_n}\bmu+\frac{1}{\sqrt{n}} \bzeta \Vert}\right)\nonumber\\
    & = \Phi\left(-\Vert \bmu \Vert \cos(\bmu,\kappa_{\lambda_n}\bmu+\frac{1}{\sqrt{n}} \bzeta) - \frac{\hat{\gamma}}{\Vert \kappa_{\lambda_n}\bmu+\frac{1}{\sqrt{n}} \bzeta \Vert}\right).\label{eq:err1_2}
\end{align}
By the symmetry of the distribution of $y$ and the fact that $y\hh_0 \sim \bmu+\calN(\bzero,\bI_p)$, if we condition on $y = -1$,
\begin{align}
    \texttt{Err}_{-1} &= \PP(y(\langle \hh_0, \hat{\btheta}\rangle + \hat{\gamma}) < 0|y=-1) = \PP(\langle \bmu + \bG, \kappa_{\lambda_n}\bmu+\frac{1}{\sqrt{n}} \bzeta \rangle - \hat{\gamma} < 0)\nonumber\\
    &=\Phi\left(-\Vert \bmu \Vert \cos(\bmu,\kappa_{\lambda_n}\bmu+\frac{1}{\sqrt{n}} \bzeta) + \frac{\hat{\gamma}}{\Vert \kappa_{\lambda_n}\bmu+\frac{1}{\sqrt{n}} \bzeta \Vert}\right).\label{eq:err2}
\end{align}
Then, we will next analyze the two terms
\[
\cos(\bmu,\kappa_{\lambda_n}\bmu+\frac{1}{\sqrt{n}} \bzeta) \qquad \text{and} \qquad \frac{\hat{\gamma}}{\Vert \kappa_{\lambda_n}\bmu+\frac{1}{\sqrt{n}} \bzeta \Vert}
\] 
to see how they will affect the misclassification error.

\paragraph{Step 4: finalizing conclusions. }
We consider the following regimes.
\paragraph{Case 1: $\limsup_{n \rightarrow +\infty} \lambda_n \leq \bar{\lambda} < +\infty$.}
In this case, by Lemma~\ref{lem:solu_logis}, we have $\liminf_{n \rightarrow +\infty} \kappa_{\lambda_n} \geq c_{\kappa} > 0$. Then, 
\begin{align*}
& \cos(\bmu,\kappa_{\lambda_n}\bmu+\frac{1}{\sqrt{n}} \bzeta) = 1 + o_{\PP}(1)\\
& \frac{\hat{\gamma}}{\Vert \kappa_{\lambda_n}\bmu+\frac{1}{\sqrt{n}} \bzeta \Vert} = \frac{\hat{\gamma}}{\Vert \kappa_{\lambda_n}\bmu \Vert} + o_{\PP}(1).
\end{align*}
Denote $q(t) = \frac{1}{1+e^t}$, since $\hh_0 y\sim\calN(\bmu,\bI_p)$, one can check that for any $\fullcoef = (\gamma, \btheta^\top)^\top$ and $2 \leq j \leq p+1$,
\begin{align}
[\bI(\fullcoef)]_{j,1} = &\frac{1}{2} \EE[q(\hh_0^\top\btheta y + \gamma y)(1-q(\hh_0^\top\btheta y + \gamma y))(\hh_0)_j|y=1]\nonumber\\ +&\frac{1}{2} \EE[q(\hh_0^\top\btheta y + \gamma y)(1-q(\hh_0^\top\btheta y + \gamma y))(\hh_0)_j|y=-1] = 0.
\end{align}
Then, by~\eqref{eq:clt}, we have
\begin{align}
    \sqrt{n i(\btheta^*_n)} \hat{\gamma} \overset{d}{\longrightarrow} \calN(0,1),
\end{align}
where $i(\btheta_n^*) :=  [\bI(\fullcoef_n^*)]_{1,1}$ with $\gamma^*=0$ and $\left[\bI(\btheta_n^*)\right]_{1,1}$ denotes the submatrix consisting of the first column and the last row of $\bI(\btheta_n^*)$.
Since $\liminf_{n \rightarrow +\infty} i(\btheta_n^*) > 0$, we have $\hat{\gamma} = [n i(\btheta_n^*)]^{-1/2} G^\prime + o_{\PP}(1)$ with $G^\prime \sim \calN(0,1)$, then
\begin{align}
    \texttt{Err} = \frac{1}{2}\texttt{Err}_1 + \frac{1}{2} \texttt{Err}_{-1} = \Phi\left(-\Vert \bmu \Vert + \frac{[i(\btheta_n^*)]^{-1/2} G^\prime}{\sqrt{n}\kappa_{\lambda_n}\Vert \bmu \Vert}\right) + o_{\PP}(1).
\end{align}
From the reasoning above, we have $\kappa_{\lambda_n}^{-1} [n i(\btheta_n^*)]^{-1/2} = o(1)$, then
\[
\texttt{Err} = \Phi\left(-\Vert \bmu \Vert\right) + o_{\PP}(1).
\]

\paragraph{Case 2: $\limsup_{n \rightarrow +\infty} \lambda_n = +\infty$.} Without loss of generality, consider $\lambda_n \rightarrow +\infty$ as $n \rightarrow +\infty$. By Lemma~\ref{lem:solu_logis}, $\lambda_n \kappa_{\lambda_n} = \frac{1}{4} (1+\eta)^{-2} + o(1)$, thus we have $\bI(\fullcoef^*_n) = \frac{1}{4} \bI_{p+1} + o(1)$, particularly $i(\btheta^*_n) = \frac{1}{4} + o(1)$. By~\eqref{eq:clt}, we have $\bzeta = O_{\PP}(\kappa_{\lambda_n})$, then similar with before
\begin{align*}
& \cos(\bmu,\kappa_{\lambda_n}\bmu+\frac{1}{\sqrt{n}} \bzeta) = 1 + o_{\PP}(1)\\
& \frac{\hat{\gamma}}{\Vert \kappa_{\lambda_n}\bmu+\frac{1}{\sqrt{n}} \bzeta \Vert} = \frac{2}{\sqrt{n} \kappa_{\lambda_n}\Vert \bmu \Vert} + o_{\PP}(1).
\end{align*}
\begin{enumerate}
\item If $\sqrt{n}/\lambda_n = o(1)$, then $\sqrt{n} \kappa_{\lambda_n} = o(1)$. We have $\EE[\Err] \rightarrow 1$ as $\lambda_n \rightarrow +\infty$.
% and as a result, $\Err \rightarrow 1$ as $\lambda_n \rightarrow +\infty$.

\item If $\lambda_n = a \cdot \sqrt{n}$ with a positive constant $a$, then $a \sqrt{n} \kappa_n = \frac{1}{4}(1+\eta)^2 + o(1)$ and
\[
\Err = \Phi\left(-\Vert \bmu \Vert + \frac{8 G^\prime}{a (1+\eta)^2\Vert \bmu \Vert}\right) + o_{\PP}(1).
\]
We have the following lemma
\begin{lemma}\label{lem:cdf1}
Let $\tau,~w > 0$ and $U \sim\calN(0,1)$, then
\begin{align}
    \frac{d}{dw}\EE\Phi(-\tau+w\cdot U) = \frac{\tau w}{\sqrt{2\pi}}\frac{1}{(1+w^2)^{3/2}}\exp\left(-\frac{\tau^2}{2(1+w^2)}\right) > 0.
\end{align}
\end{lemma}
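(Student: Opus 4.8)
The plan is to first reduce the expectation to a closed-form expression and then differentiate directly, which sidesteps any need to interchange differentiation and integration. The key observation is the standard Gaussian smoothing identity: if $Z \sim \calN(0,1)$ is independent of $U$, then $\E_U \Phi(-\tau + wU) = \E_{U}\, \PP(Z \le -\tau + wU \mid U) = \PP(Z - wU \le -\tau)$, and since $Z - wU \sim \calN(0, 1+w^2)$ this equals $\Phi\big(-\tau/\sqrt{1+w^2}\big)$. So the first step is to record that
\[
\E\,\Phi(-\tau + w\cdot U) = \Phi\!\left(\frac{-\tau}{\sqrt{1+w^2}}\right).
\]

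The second step is a routine chain-rule computation. Differentiating the right-hand side in $w$ gives $\phi\big(-\tau/\sqrt{1+w^2}\big)\cdot \frac{d}{dw}\big(-\tau(1+w^2)^{-1/2}\big)$, and since $\frac{d}{dw}(1+w^2)^{-1/2} = -w(1+w^2)^{-3/2}$, the inner derivative is $\tau w (1+w^2)^{-3/2}$. Combining with $\phi(x) = (2\pi)^{-1/2}e^{-x^2/2}$ and $x = -\tau/\sqrt{1+w^2}$, so that $x^2/2 = \tau^2 / (2(1+w^2))$, yields exactly
\[
\frac{d}{dw}\E\,\Phi(-\tau + w\cdot U) = \frac{\tau w}{\sqrt{2\pi}}\,\frac{1}{(1+w^2)^{3/2}}\exp\!\left(-\frac{\tau^2}{2(1+w^2)}\right),
\]
and positivity is immediate because $\tau > 0$ and $w > 0$ while the remaining factors are strictly positive.

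Because the closed form is obtained before any differentiation, there is essentially no analytic subtlety here — the main (mild) obstacle is simply justifying the smoothing identity cleanly, which follows from Fubini/tower property applied to the independent pair $(Z,U)$, or alternatively from the convolution $\Phi * \varphi$ formula; either is standard. Should one prefer to avoid the identity entirely, an alternative route is to differentiate under the integral sign in $\int \Phi(-\tau+wu)\,\varphi(u)\,du$ (dominated-convergence justification is trivial since the integrand's $w$-derivative $u\,\varphi(-\tau+wu)\varphi(u)$ is integrable locally uniformly in $w$), obtaining $\int u\,\varphi(-\tau+wu)\varphi(u)\,du$, and then complete the square in the exponent to recover the same expression; I expect the first route to be shorter and cleaner, so that is the one I would present.
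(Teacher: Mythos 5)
Your proof is correct. The identity $\E\,\Phi(-\tau+wU)=\PP(Z-wU\le -\tau)=\Phi\bigl(-\tau/\sqrt{1+w^2}\bigr)$ (with $Z\sim\calN(0,1)$ independent of $U$) is valid, and the chain-rule differentiation plus positivity argument exactly reproduces the stated formula. The paper itself states Lemma~\ref{lem:cdf1} without giving a proof, so there is nothing to compare against; your smoothing-identity route is the standard and cleanest way to establish it, and your remark that one could instead differentiate under the integral sign and complete the square is a fine alternative but unnecessary.
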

Then, $\EE[\Err]$ is increasing in $\frac{8}{a (1+\eta)^2\Vert \bmu \Vert}$, thus is decreasing in $\eta$.

\item If $\lambda_n = a \cdot b_n$ with a constant $a > 0$ and $b_n/\sqrt{n} = o(1)$, then $a b_n \kappa_n = \frac{1}{4}(1+\eta)^2 + o(1)$ and
\[
\Err = \Phi\left(-\Vert \bmu \Vert + \frac{8 G^\prime}{a (1+\eta)^2\Vert \bmu \Vert}\frac{b_n}{\sqrt{n}}\right) + o_{\PP}(1) = \Phi(-\|\bmu\|) + O_{\PP}\left(\frac{b_n}{\sqrt{n}}\right).
\]
As a result, $\EE[\Err] = \Phi(-\|\bmu\|) + O(b_n/\sqrt{n})$.
\end{enumerate}
Combining pieces above, low-dimensional logistic regression with $\ell_2$ regularization has decreasing prediction error in the expansion parameter $\eta$ only if $\lambda_n \asymp \sqrt{n}$. Otherwise, $\texttt{Err}$ asymptotically remains constant in $\eta$.

\subsubsection{Proof of Lemma~\ref{lem:solu_logis}}
Since the population loss function $\tilde{\ell}(\fullcoef)$ is strongly convex in $\fullcoef$, it suffices to show that (i) there exists some $\kappa := \kappa_{\lambda_n} \in (0,C)$ such that $\fullcoef = (0,\kappa \bmu^\top)^\top$ satisfies the score equation
\begin{align}\label{eq:sc_eq}
    \EE\left\{\frac{\bar{\hh}_0y}{1+\exp(y\bar{\hh}_0^\top \fullcoef)}\right\} = 2\lambda_n \bar{\bOmega} \fullcoef;
\end{align}
and that (ii) $\lim_{n \to \infty} \lambda_n \kappa_{\lambda_n} = \frac{1}{4} (1+\eta)^{-2}.$ Due to the block structure of $\bar{\bOmega}$, we separate the verification process into two parts.

\paragraph{Step 1: check the first coordinate.}
Clearly, the first coordinate on the right-hand side of~\eqref{eq:sc_eq} is zero. When it comes to the left-hand side, 
we have
\begin{align}\label{eq:gamma}
    \EE\left\{\frac{y}{1+\exp(y\bar{\hh}_0^\top \fullcoef)}\right\} = \frac{1}{2} \EE\left\{\frac{1}{1+\exp(\kappa \bmu^\top(\bmu+\bG))}\right\} + \frac{1}{2} \EE\left\{\frac{-1}{1+\exp( \kappa \bmu^\top(\bmu+\bG))}\right\} = 0.
\end{align}
Here, $\bG \sim \calN(\bzero,\bI_p)$. Thus, \eqref{eq:gamma} holds in the first coordinate for $\fullcoef = (0,\kappa \bmu^\top)^\top$.

\paragraph{Step 2: check the remaining coordinates.}
Now we move on to the remaining coordinates, i.e., the coordinates corresponding to 
$\hh_0$.

The right-hand side of~\eqref{eq:sc_eq} obeys
\begin{align}
   2\lambda_n \bOmega \btheta = 2\lambda_n (1+\eta)^{-2} \kappa \bmu,
\end{align}
by the definition of $\bOmega$.
For the left hand side, since $\hh_0 y \overset{d}{=} \bmu + \bG \sim \calN(\bmu, \bI_p)$, we have 
\begin{align}
    \EE\left\{\frac{\hh_0 y}{1+\exp(y\kappa \hh_0^\top \bmu)}\right\} &= \EE\left\{\frac{\bmu + \bG}{1+\exp(\kappa \Vert \bmu \Vert^2 + \kappa \bmu^\top \bG)}\right\}\\ & = \EE\left\{\frac{1}{1+\exp(\kappa \Vert \bmu \Vert^2 + \kappa \bmu^\top \bG)}\right\}\bmu + \EE\left\{\frac{\bG}{1+\exp(\kappa \Vert \bmu \Vert^2 + \kappa \bmu^\top \bG)}\right\}.
\end{align}
Apply Stein's identity to see that 
\begin{align}
    \EE\left\{\frac{\bG}{1+\exp(\kappa \Vert \bmu \Vert^2 + \kappa \bmu^\top \bG)}\right\} = - \kappa \EE\left\{\frac{\exp(\kappa \Vert \bmu \Vert^2 + \kappa \bmu^\top \bG)}{\left(1+\exp(\kappa \Vert \bmu \Vert^2 + \kappa \bmu^\top \bG)\right)^2}\right\}\bmu,
\end{align}
which leads to the conclusion 
\begin{equation}\label{eq:someconclu}
\EE\left\{\frac{\hh_0 y}{1+\exp(y\kappa \hh_0^\top \bmu)}\right\} = 
 \EE\left\{\frac{1+(1-\kappa)\exp(\kappa \Vert \bmu \Vert^2 + \kappa \bmu^\top \bG)}{\left(1+\exp(\kappa \Vert \bmu \Vert^2 + \kappa \bmu^\top \bG)\right)^2}\right\} \, \bmu.
\end{equation}
Then it boils down to showing that there exists a constant $\kappa > 0$ such that
\begin{align}
    \EE\left\{\frac{1+(1-\kappa)\exp(\kappa \Vert \bmu \Vert^2 \kappa \bmu^\top \bG)}{\left(1+\exp(\kappa \Vert \bmu \Vert^2 + \kappa \bmu^\top \bG)\right)^2}\right\} = 2\lambda_n (1+\eta)^{-2} \kappa.
\end{align}
Recall the function
\begin{align}
    \psi(\kappa) = \EE\left\{\frac{1+(1-\kappa)\exp(\kappa \Vert \bmu \Vert^2 + \kappa \bmu^\top \bG)}{\left(1+\exp(\kappa \Vert \bmu \Vert^2 + \kappa \bmu^\top \bG)\right)^2}\right\} - 2\lambda_n (1+\eta)^{-2} \kappa.
\end{align}
We see that $\psi(0) = \frac{1}{2} > 0$. 

(i) First, we claim that there exists $C>0$ depending solely on $\norm{\bmu}$ such that $\psi(C) < 0$. Once this is proved, since $\psi(\kappa)$ is a continuous function, by the intermediate value theorem, there must exist $\kappa \in (0,C)$ such that $\psi(\kappa)=0$. 

To prove this claim, we use the identity \eqref{eq:someconclu} to express $\psi(\kappa)$ into 
\begin{align*}
\psi(\kappa) &= \norm{\bmu}^{-2} \cdot \EE\left\{\frac{\norm{\bmu}^2 + \bG^\top \bmu}{1+\exp(\kappa \Vert \bmu \Vert^2 + \kappa \bmu^\top \bG)}\right\}  - 2\lambda_n (1+\eta)^{-2} \kappa \\
& \le \norm{\bmu}^{-2} \cdot \EE\left\{\frac{\norm{\bmu}^2 + \norm{\bmu}G}{1+\exp(\kappa \Vert \bmu \Vert^2 + \kappa \norm{\bmu}G)}\right\} 
\end{align*}
where $G \sim \mathcal{N}(0,1)$. Denote an event $A = \{ \norm{\bmu}^2 + \norm{\bmu} G < 0\}$. For any fixed $\norm{\bmu} > 0$, the probability $\PP(A) \in (0,1)$. By the dominated convergence theorem,
\begin{align*}
&\lim_{\kappa \to \infty} \EE\left\{\frac{(\norm{\bmu}^2 + \norm{\bmu}G)_+}{1+\exp(\kappa \Vert \bmu \Vert^2 + \kappa \norm{\bmu}G)}\right\} = 0, \\
&\lim_{\kappa \to \infty} \EE\left\{\frac{(\norm{\bmu}^2 + \norm{\bmu}G)_-}{1+\exp(\kappa \Vert \bmu \Vert^2 + \kappa \norm{\bmu}G)}\right\} = \EE\big[  (\norm{\bmu}^2 + \norm{\bmu}G)_- \big] > 0.
\end{align*}
Combining the two parts, we deduce that $\limsup_{\kappa \to \infty} \psi(\kappa) \le - \EE\big[  (\norm{\bmu}^2 + \norm{\bmu}G)_- \big] < 0$. 

(ii) Since for any $a \in \mathbb{R}$, $a < \kappa^{-1} + a \le \kappa^{-1} \exp(\kappa a)$, we have
\begin{equation*}
\EE\left\{\frac{\norm{\bmu}^2 + \norm{\bmu}G}{1+\exp(\kappa \Vert \bmu \Vert^2 + \kappa \norm{\bmu}G)}\right\} \le \kappa^{-1} \EE\left\{\frac{\exp(\kappa \Vert \bmu \Vert^2 + \kappa \norm{\bmu}G)}{1+\exp(\kappa \Vert \bmu \Vert^2 + \kappa \norm{\bmu}G)}\right\} < \kappa^{-1}.
\end{equation*}
The equation $\psi(\kappa_{\lambda_n})=0$ must imply 
\begin{equation*}
2\lambda_n(1+\eta)^{-2} \kappa_{\lambda_n} < \frac{1}{\kappa_{\lambda_n} \norm{\bmu}^2}.
\end{equation*}
If $\lambda_n \to \infty$ as $n \to \infty$, then we must have $\kappa_{\lambda_n} = o(1)$. Taking the limit $\lim_{n \to \infty}\psi(\kappa_{\lambda_n})$, we get
\begin{equation*}
\frac{1}{2} - 2(1+\eta)^{-2} \lim_{n \to \infty} \lambda_n \kappa_{\lambda_n} = 0
\end{equation*}
so $\lim_{n \to \infty}  \lambda_n \kappa_{\lambda_n} = \frac{1}{4} (1+\eta)^{2}$.

Combining the two steps finishes the proof. 

\subsubsection{Proof of Lemma~\ref{lem:logis_ulln}}
Recall that for any $\lambda_n \geq 0$ and any $\fullcoef = (\gamma,\btheta^\top)^\top \in \RR^{p+1}$,
\[
\ell(\fullcoef; \hh_0, \lambda_n) = \log \left[1 + \exp(-y(\gamma+\hh_0^\top \btheta))\right] + \lambda_n \btheta^\top \bOmega \btheta.
\]
\paragraph{Step 1: Applying the uniform law of large numbers.}
Since the term $\lambda_n \btheta^\top \bOmega \btheta$ is canceled out, we need to show that for any $R$ and the compact ball $B_R = \{\|\fullcoef\| \leq R\}$,
\begin{align}\label{eq:logis_ulln_eq}
\sup_{\fullcoef \in \bB_R} \bigg|(\EE_n - \EE) \left\{\log \left[1 + \exp(-y(\gamma+\hh_0^\top \btheta))\right]\right\}\bigg| \overset{p}{\rightarrow} 0.
\end{align}
We see that $\ell(\fullcoef; \hh_0, \lambda_n)$ is continuous in $\fullcoef$ for any $\hh_0$ and is measurable in $(\hh_0,y)$ for any $\fullcoef$.
In addition, for any $R$,
\[
\log \left[1 + \exp(-y(\gamma+\hh_0^\top \btheta))\right] \leq \log \left[1 + \exp(R\|\bar \hh_0\|)\right] \quad \text{with} \quad \EE\left\{\log \left[1 + \exp(R\|\bar\hh_0\|)\right]\right\} < +\infty.
\]
Then, we can apply the uniform law of large numbers \citep[Lemma~2.4]{newey1994large} to show~\eqref{eq:logis_ulln_eq}.

It is similar for the convergence of Fisher information as for any $\fullcoef \in \RR^{p+1}$,
\[
\frac{\bar\hh_0 \bar\hh_0^\top \exp(y\bar\hh_0^\top \fullcoef)}{(1+\exp(y\bar\hh_0^\top \fullcoef))^2} \leq \frac{1}{4} \bar\hh_0 \bar\hh_0^\top\quad\text{with}\quad\EE\left\{\frac{1}{4} \bar\hh_0 \bar\hh_0^\top\right\} = \frac{1}{4}\bI_{p+1}.
\]
Then, we have
\[
\sup_{\fullcoef \in \bB_R} \bigg| \bI_n(\fullcoef) - \bI(\fullcoef) \bigg| \overset{p}{\rightarrow} 0.
\]

\paragraph{Step 2: Proving the convergence of the minimizer.} 
With $\hat{\fullcoef}_n$ and $\fullcoef^*_n$ defined before, there exists a constant $R > 0$ such that $\PP(\hat{\fullcoef}_n \notin \bB_R) = o(1)$. Therefore,
\begin{align*}
\ell^*_n(\hat{\fullcoef}_n) - \ell^*_n(\fullcoef^*_n) = \ell^*_n(\hat{\fullcoef}_n) - \tilde{\ell}_{n}(\hat{\fullcoef}_n;\hh_0) + \tilde{\ell}_{n}(\hat{\fullcoef}_n;\hh_0) - \tilde{\ell}_n(\fullcoef^*_n;\hh_0) + \tilde{\ell}_n(\fullcoef^*_n;\hh_0) - \ell^*_n(\fullcoef^*_n).
\end{align*}
Here, we have $\ell^*_n(\hat{\fullcoef}_n) - \tilde{\ell}_{n}(\hat{\fullcoef}_n;\hh_0) \leq \sup_{\fullcoef \in \bB_R} \big| (\EE_n - \EE) \ell(\fullcoef;\hh_0,\lambda_n) \big| \overset{p}{\rightarrow} 0$, $\tilde{\ell}_{n}(\hat{\fullcoef}_n;\hh_0) - \tilde{\ell}_n(\fullcoef^*_n;\hh_0) \leq 0$ by the optimality of $\hat{\fullcoef}_n$, and $\tilde{\ell}_n(\fullcoef^*_n;\hh_0) - \ell^*_n(\fullcoef^*_n) = o_{\PP}(1)$ by the law of large numbers. Consequently, we have $\ell^*_n(\hat{\fullcoef}_n) - \ell^*_n(\fullcoef^*_n) \leq o_{\PP}(1)$. Since $\ell^*_n(\fullcoef)$ is strictly convex and by definition $\fullcoef^*_n = \argmin_{\fullcoef \in \RR^{p+1}} \ell^*_n(\fullcoef)$, we further have $\hat{\fullcoef}_n - \fullcoef^*_n = o_{\PP}(1)$. 

\subsubsection{Proof of Lemma~\ref{lem:l_clt}}
Denoting
\[
\zz_{n,i} = \frac{\bar{\hh}_{0,i} y_i}{1+\exp(y_i\bar{\hh}_{0,i}^\top \fullcoef_n^*)},
\]
then it suffices to show that for any $\veps > 0$,
\[
\frac{1}{n} \sum_{i=1}^n \EE\left\{\|\zz_{n,i}\|^2 \ind\{\|\zz_{n,i}\| \geq \veps \sqrt{n}\}\right\} = \EE\left\{\|\zz_{n,1}\|^2 \ind\{\|\zz_{n,1}\| \geq \veps \sqrt{n}\}\right\} \overset{p}{\rightarrow} 0.
\]
Let $Z_{n,i} = \|\zz_{n,i}\|^2$, then $\EE Z_{n,i} = \tr (\EE(\zz_{n,i} \zz^\top_{n,i})) = \tr(\bI(\fullcoef^*_n))$. Let $W_{n,i} = Z_{n,i}\ind\{Z_{n,i} \geq \veps^2 n\}$. We can verify that $0 \leq W_{n,i} \leq Z_{n,i}$ and for any $\delta > 0$, by Markov's inequality,
\[
\PP\left(W_{n,i} \geq \delta\right) = \PP\left(Z_{n,i} \geq \epsilon^2 n\right) \leq \frac{\tr(\bI(\fullcoef^*_n))}{n \veps^2} \rightarrow 0, \quad \text{as~} n \rightarrow 0,
\]
which shows that $W_{n,i} \overset{p}{\rightarrow} 0$. Then, by the dominated convergence theorem, we have
\[
\lim_{n \rightarrow +\infty} \EE W_{n,i} = \EE \lim_{n \rightarrow +\infty} W_{n,i} = 0,
\]
which verifies Lindeberg's condition. By the Lindeberg-Feller theorem for multivariate random variables, we have
\[
\sqrt{n} \left[\bI(\fullcoef^*_n)\right]^{-1/2}(\EE_n \zz_{n,i} - \EE \zz_{n,i}) \overset{d}{\rightarrow} \calN(\bzero, \bI).
\]

\subsection{High-dimensional regime: max-margin classifier}\label{sec:proof_high_d}

Written in terms of the input data $\{\hh_{0,i}\}$, the max-margin classifier~\eqref{opt:maxmargin} is equivalent to
\begin{align}
\begin{split}\label{opt:maxmargin2}
    \max_{\bbeta\in\RR^p} &\quad \min_{i\leq n} y_i \langle \bbeta, \hh_{0,i} \rangle\\
    {\rm subject~to} &\quad \Vert \bbeta \Vert_{\bOmega} = \sqrt{\bbeta^\top \bOmega \bbeta} \leq 1,
\end{split}
\end{align}
where $\bOmega \coloneqq (\bI+\eta \bmu_0 \bmu_0^\top)^{-2} = \bI - \left(1-\frac{1}{(1+\eta)^2}\right)\bmu_0 \bmu_0^\top$, and $\bmu_0 = \frac{1}{\Vert \bmu \Vert} \bmu$.

\subsubsection{Preliminaries}
Before presenting the proof of Theorem~\ref{thm:CGMT}, we first introduce important quantities and their equivalence.

Define a compact set $\Theta = \{\bbeta \in \RR^p: \bbeta^\top \bOmega \bbeta \leq 1\}$. Similar to the development in \citet{montanari2019generalization}, for any positive margin $\kappa > 0$, we have the equivalence
\begin{align}\label{eq:maxmargin_form}
    \calE_{n,p,\kappa} &= \left\{{\rm there~exists} ~\bbeta \in \Theta : y_i \langle \hh_{0,i},\bbeta \rangle \geq \kappa \text{~for~all~} i \leq n\right\}\nonumber\\ &= \left\{\min_{\bbeta\in\Theta} \max_{\blambda:\Vert \blambda \Vert \leq 1, \by \odot \blambda \geq 0}\frac{1}{\sqrt{p}}\blambda^\top(\kappa \by - \bH_0\bbeta) = 0\right\}.
\end{align}
To simplify the notation, for any set $\bB_p \subseteq \mathbb{R}^{p}$, we define the quantity
\begin{align}
    \xi_{n,p,\kappa}(\bB_p) = \min_{\bbeta\in\bB_p} \max_{\blambda:\Vert \blambda \Vert \leq 1, \by \odot \blambda \geq 0}\frac{1}{\sqrt{p}}\blambda^\top(\kappa \by - \bH_0\bbeta).
\end{align}
In particular, we set $\xi_{n,p,\kappa} \coloneqq \xi_{n,p,\kappa}(\Theta)$, which obeys 
\begin{align}
    \{\xi_{n,p,\kappa} > 0\} \Longleftrightarrow \calE_{n,p,\kappa}^c~~{\rm and}~~\{\xi_{n,p,\kappa} = 0\} \Longleftrightarrow \calE_{n,p,\kappa}.
\end{align}
As a result, our goal is to analyze whether $\xi_{n,p,\kappa}$ is positive.

\paragraph{Step 1: Applying CGMT. }

Recall the Gaussian mixture model where $\bH_0 = \by \bmu^\top + \bG$ and each entry of $\bG \in \RR^{n \times p}$ is independently drawn from $\calN(0,1)$. As a result, we have $\blambda^\top(\kappa \by - \bH_0\bbeta) = \blambda^\top(\kappa \by - \langle \bmu, \bbeta \rangle\by - \bG \bbeta)$, 
which is a bilinear form of the Gaussian random matrix $\bG$. 
We plan to use Gordan's comparison inequality to simplify the calculation of $\xi_{n,p,\kappa}(\bB_p)$. To do so, we introduce another quantity for an arbitrary set $\bB_p$:
\begin{align}\label{eq:xi}
    \xi^{(1)}_{n,p,\kappa}(\bB_p) = \min_{\bbeta\in\bB_p} \max_{\blambda:\Vert \blambda \Vert \leq 1, \by \odot \blambda \geq 0}\frac{1}{\sqrt{p}}\left\{\Vert \blambda \Vert \; \bg^\top \bbeta + \Vert \bbeta \Vert \; \bh^\top \blambda + \kappa \blambda^\top \by - \langle \bmu, \bbeta \rangle \blambda^\top \by\right\},
\end{align}
where $\bg \sim \calN(\bzero,\bI_p)$ and $\bh \sim \calN(\bzero,\bI_n)$ are two independent Gaussian vectors. Similar to before, we denote $\xi^{(1)}_{n,p,\kappa} \coloneqq \xi^{(1)}_{n,p,\kappa}(\Theta)$. 
The following lemma connects $\xi_{n,p,\kappa}(\bB_p)$ with $\xi^{(1)}_{n,p,\kappa}(\bB_p)$, which is a simple corollary of Gordon's comparison inequality~\citep{gordon1988milman, pmlr-v40-Thrampoulidis15, montanari2019generalization}.
\begin{lemma}\label{lem:cgmt1}
For any $t \in \RR$ and any compact set $\bB_p$,
\begin{align}
    &\PP\left(\xi_{n,p,\kappa} \leq t\right) \leq 2\PP\left(\xi^{(1)}_{n,p,\kappa} \leq t\right)~~\text{and}~~\PP\left(\xi_{n,p,\kappa} \geq t\right) \leq 2\PP\left(\xi^{(1)}_{n,p,\kappa} \geq t\right).\\
    & \PP\left(\xi_{n,p,\kappa}(\bB_p) \leq t\right) \leq 2\PP\left(\xi^{(1)}_{n,p,\kappa}(\bB_p) \leq t\right).
\end{align}
If in addition $\bB_p$ is convex, then
\begin{align}
    \PP\left(\xi_{n,p,\kappa}(\bB_p) \geq t\right) \leq 2\PP\left(\xi^{(1)}_{n,p,\kappa}(\bB_p) \geq t\right).
\end{align}
\end{lemma}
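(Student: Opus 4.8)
The plan is to recognize $\xi_{n,p,\kappa}(\bB_p)$ as the value of a \emph{primary optimization} in the sense of the convex Gaussian minimax theorem (CGMT)~\citep{gordon1988milman, pmlr-v40-Thrampoulidis15, montanari2019generalization}, and to identify $\xi^{(1)}_{n,p,\kappa}(\bB_p)$ as the corresponding \emph{auxiliary optimization}. Using $\bH_0 = \by\bmu^\top + \bG$ with $\bG$ having i.i.d.\ $\calN(0,1)$ entries, I would first expand
\[
\tfrac{1}{\sqrt{p}}\blambda^\top(\kappa\by - \bH_0\bbeta) = -\tfrac{1}{\sqrt{p}}\blambda^\top\bG\bbeta + \tfrac{1}{\sqrt{p}}\big(\kappa - \langle\bmu,\bbeta\rangle\big)\blambda^\top\by,
\]
so that $\xi_{n,p,\kappa}(\bB_p) = \min_{\bbeta\in\bB_p}\max_{\blambda\in S_\blambda}\{-p^{-1/2}\blambda^\top\bG\bbeta + \psi(\bbeta,\blambda)\}$, where $S_\blambda := \{\blambda : \Vert\blambda\Vert\le 1,\ \by\odot\blambda\ge 0\}$ and $\psi(\bbeta,\blambda) := p^{-1/2}(\kappa - \langle\bmu,\bbeta\rangle)\blambda^\top\by$ is affine in $\bbeta$ and affine in $\blambda$, hence convex--concave. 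The isolated Gaussian term is bilinear in $(\bbeta,\blambda)$, which is exactly the structure Gordon's comparison inequality is built for.

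Next I would verify the regularity hypotheses. The set $\bB_p$ is compact by assumption; $S_\blambda$ is compact, being the intersection of the unit ball with an orthant; and the objective is jointly continuous. Since $-\bG \eqd \bG$, replacing $\bG$ by $-\bG$ shows that the auxiliary process attached to $-p^{-1/2}\blambda^\top\bG\bbeta$ is $p^{-1/2}(\Vert\bbeta\Vert\,\bh^\top\blambda + \Vert\blambda\Vert\,\bg^\top\bbeta)$ with $\bg\sim\calN(\bzero,\bI_p)$ and $\bh\sim\calN(\bzero,\bI_n)$ independent; adding $\psi$ back recovers precisely $\xi^{(1)}_{n,p,\kappa}(\bB_p)$. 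Because the label vector $\by$ (and $\bmu$, $\bOmega$) is deterministic data entering $S_\blambda$, $\psi$, and hence $\xi^{(1)}$, I would run the whole argument conditionally on $\by$, which is independent of $\bG,\bg,\bh$; the resulting conditional inequalities then integrate over $\by$ since the constant $2$ does not depend on it.

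With this identification, the three claims are the two halves of CGMT. The first (one-sided) comparison needs only compactness and continuity and gives $\PP(\xi_{n,p,\kappa}(\bB_p)\le t)\le 2\,\PP(\xi^{(1)}_{n,p,\kappa}(\bB_p)\le t)$ for all $t$, which is the second displayed bound. The second comparison additionally needs the minimization set and the maximization set to be convex and $\psi$ convex--concave: $S_\blambda$ is always convex and $\psi$ is bilinear, so when $\bB_p$ is convex one gets $\PP(\xi_{n,p,\kappa}(\bB_p)\ge t)\le 2\,\PP(\xi^{(1)}_{n,p,\kappa}(\bB_p)\ge t)$, the third displayed bound. Specializing $\bB_p = \Theta = \{\bbeta : \bbeta^\top\bOmega\bbeta\le 1\}$, which is compact and convex because $\bOmega = (\bI+\eta\bmu_0\bmu_0^\top)^{-2}$ is positive definite for $\eta>-1$, both comparisons apply simultaneously and yield the first pair of bounds. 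The main obstacle is not an estimate but the careful matching of hypotheses: ensuring $S_\blambda$ is genuinely convex (legitimizing the upper-tail comparison), that $\psi$ splits as convex-in-$\bbeta$/concave-in-$\blambda$, and that the conditioning on $\by$ freezes all non-Gaussian quantities before the Gaussian comparison is invoked; once these are in place the lemma is a transcription of the minimax comparison inequality as stated in~\citet{pmlr-v40-Thrampoulidis15} and~\citet{montanari2019generalization}.
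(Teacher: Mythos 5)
Your proposal is correct and follows essentially the same route as the paper, which presents Lemma~\ref{lem:cgmt1} as a direct corollary of Gordon's comparison inequality / the CGMT of \citet{gordon1988milman, pmlr-v40-Thrampoulidis15, montanari2019generalization}: you isolate the bilinear Gaussian term $-p^{-1/2}\blambda^\top\bG\bbeta$ (using $-\bG \eqd \bG$), verify compactness of $\bB_p$ and of the $\blambda$-set, convexity of the $\blambda$-set and the bilinear $\psi$, and condition on $\by$ before invoking the two halves of the comparison. The specialization to the convex compact ellipsoid $\Theta$ giving both tail bounds matches the paper's intended argument, so nothing is missing.
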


\paragraph{Step 2: Connecting $\xi^{(1)}_{n,p,\kappa}$ with $\tilde{\xi}_{n,p,\kappa}^{(1)}$.}

We then move on to characterizing $\xi^{(1)}_{n,p,\kappa}$ by translating the min-max problem into a single minimization problem. 
For any set $\bB \subseteq \mathbb{R} \times \mathbb{R}_{+}$, 
we define a quantity
\begin{align}
    \tilde{\xi}_{n,p,\kappa}^{(1)}(\bB) = \min_{(\gamma,z)\in\bB}  \frac{1}{\sqrt{p}}\left\{-z\Vert \bg \Vert + \Big\Vert\left(\sqrt{\gamma^2+z^2} \; \bh \odot \by+(\kappa - \gamma \Vert \bmu \Vert)\mathbf{1}_n \right)_+\Big\Vert \right\}.\label{eq:xi_3}
\end{align}
In particular, we set $\tilde{\xi}_{n,p,\kappa}^{(1)} \coloneqq \tilde{\xi}_{n,p,\kappa}^{(1)}(\tilde{\Theta}_{\eta})$, where
$\tilde{\Theta}_{\eta} \coloneqq \{(\gamma,z)\in\RR \times \RR_+:~(1+\eta)^{-2}\gamma^2+z^2 \leq 1\}$.
We have the following relation between $\tilde{\xi}_{n,p,\kappa}^{(1)}$ and $\xi_{n,p,\kappa}^{(1)}$.
\begin{lemma}\label{lem:reduction2}
With the definition above, we have
$$\Big|\left(\tilde{\xi}_{n,p,\kappa}^{(1)}\right)_+ - \xi_{n,p,\kappa}^{(1)}\Big| \overset{p}{\longrightarrow} 0.$$
\end{lemma}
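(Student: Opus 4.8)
To prove Lemma~\ref{lem:reduction2} the plan is to carry out the standard reduction of the Gordon-type auxiliary objective \eqref{eq:xi} to the scalar program defining $\tilde\xi^{(1)}_{n,p,\kappa}$, tracking all approximation errors uniformly over the feasible region. First I would change variables in $\bbeta$: writing $\bP_\bmu = \bmu_0\bmu_0^\top$ and $\bP_\bmu^\perp = \bI_p - \bP_\bmu$, set $\gamma = \langle\bmu_0,\bbeta\rangle$ and $z = \|\bP_\bmu^\perp\bbeta\|$. Then $\langle\bmu,\bbeta\rangle = \gamma\|\bmu\|$, $\|\bbeta\|^2 = \gamma^2 + z^2$, and using $\bOmega = \bI_p - (1 - (1+\eta)^{-2})\bmu_0\bmu_0^\top$ one checks $\bbeta^\top\bOmega\bbeta = (1+\eta)^{-2}\gamma^2 + z^2$, so the constraint $\bbeta\in\Theta$ is exactly $(\gamma,z)\in\tilde\Theta_\eta$, with the direction $\bP_\bmu^\perp\bbeta/z$ completely unconstrained. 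The objective of \eqref{eq:xi} involves that direction only through $\bg^\top\bbeta = \gamma\langle\bg,\bmu_0\rangle + \langle\bP_\bmu^\perp\bg,\bbeta\rangle$, and as the direction varies $\langle\bP_\bmu^\perp\bg,\bbeta\rangle$ sweeps $[-z\|\bP_\bmu^\perp\bg\|,\,z\|\bP_\bmu^\perp\bg\|]$. Since the coefficient multiplying $\bg^\top\bbeta$ is $\|\blambda\|/\sqrt p \ge 0$, for every fixed $\blambda$ the objective is nondecreasing in $\langle\bP_\bmu^\perp\bg,\bbeta\rangle$; hence the outer minimization selects $\langle\bP_\bmu^\perp\bg,\bbeta\rangle = -z\|\bP_\bmu^\perp\bg\|$, and because this substitution dominates the objective pointwise in $\blambda$ it is legitimate inside the $\min$--$\max$.

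After this substitution I would drop lower-order terms. On the feasible set $|\gamma|\le 1+\eta$, $z\le 1$ and $\|\blambda\|\le 1$, so $\gamma\|\blambda\|\langle\bg,\bmu_0\rangle/\sqrt p = o_\PP(1)$ uniformly, and $\big|\|\bP_\bmu^\perp\bg\| - \|\bg\|\big| = O_\PP(1/\sqrt p)$ makes $z(\|\bP_\bmu^\perp\bg\| - \|\bg\|)/\sqrt p = o_\PP(1)$ uniformly. Consequently, up to an additive $o_\PP(1)$,
\[
\xi^{(1)}_{n,p,\kappa} = \min_{(\gamma,z)\in\tilde\Theta_\eta}\ \max_{\|\blambda\|\le 1,\ \by\odot\blambda\ge 0}\ \frac{1}{\sqrt p}\Big\{-\|\blambda\|\,z\,\|\bg\| + \sqrt{\gamma^2+z^2}\,\bh^\top\blambda + (\kappa - \gamma\|\bmu\|)\,\blambda^\top\by\Big\}.
\]

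Next I would evaluate the inner maximization explicitly. Writing $\blambda = r\bw$ with $r = \|\blambda\|\in[0,1]$, $\|\bw\|=1$, $\by\odot\bw\ge 0$, the bracket becomes $r$ times $-z\|\bg\| + \langle\bc,\bw\rangle$ with $\bc = \sqrt{\gamma^2+z^2}\,\bh + (\kappa-\gamma\|\bmu\|)\by$. Maximizing $\langle\bc,\bw\rangle$ over the intersection of the unit sphere and the orthant $\{y_iw_i\ge 0\}$ --- a projection computation that, using $\by\odot\by = \mathbf 1_n$, yields $\|(\by\odot\bc)_+\| = \|(\sqrt{\gamma^2+z^2}\,\bh\odot\by + (\kappa-\gamma\|\bmu\|)\mathbf 1_n)_+\|$ --- and then maximizing the resulting $\frac{r}{\sqrt p}(-z\|\bg\| + \|(\cdot)_+\|)$ over $r\in[0,1]$ produces $\frac1{\sqrt p}(-z\|\bg\| + \|(\cdot)_+\|)_+$. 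Hence $\xi^{(1)}_{n,p,\kappa} = \min_{(\gamma,z)\in\tilde\Theta_\eta}\big(g_{n,p}(\gamma,z)\big)_+ + o_\PP(1)$, where $g_{n,p}(\gamma,z) := \frac1{\sqrt p}\big\{-z\|\bg\| + \|(\sqrt{\gamma^2+z^2}\,\bh\odot\by + (\kappa-\gamma\|\bmu\|)\mathbf 1_n)_+\|\big\}$ is precisely the objective whose minimum over $\tilde\Theta_\eta$ equals $\tilde\xi^{(1)}_{n,p,\kappa}$. Finally, for any continuous $f$ on a compact set $\calK$ one has the elementary identity $\min_{x\in\calK}(f(x))_+ = (\min_{x\in\calK}f(x))_+$ (if $\min f\ge 0$ then $f_+ = f$ on $\calK$; if $\min f < 0$ then both sides are $0$); applying it with $f = g_{n,p}$ gives $\min_{(\gamma,z)\in\tilde\Theta_\eta}(g_{n,p})_+ = (\tilde\xi^{(1)}_{n,p,\kappa})_+$, which combined with the previous display yields $\big|(\tilde\xi^{(1)}_{n,p,\kappa})_+ - \xi^{(1)}_{n,p,\kappa}\big| \overset{p}{\to} 0$.

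I expect the main obstacle to be the rigor of the second and third displays rather than any single calculation: one must justify that the $\bP_\bmu^\perp\bbeta$-direction may be optimized ``before'' the $\blambda$-maximization (the pointwise-domination argument above, which hinges on the sign of $\|\blambda\|$), carefully verify the orthant-constrained projection that produces the positive-part norm --- including the degenerate cases where $(\by\odot\bc)_+$ vanishes, which are absorbed by the outer $(\cdot)_+$ --- and control every discarded remainder uniformly over the feasible region using boundedness of $\tilde\Theta_\eta$ and concentration of $\|\bg\|$ and $\langle\bg,\bmu_0\rangle$. The remaining steps are bookkeeping together with the trivial $(\cdot)_+$ identity.
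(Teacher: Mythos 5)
Your proof is correct and follows essentially the same reduction as the paper's: reparametrize $\bbeta$ by $(\gamma,z)$ so that $\Theta$ becomes $\tilde{\Theta}_{\eta}$, solve the orthant-constrained inner maximization to produce the positive-part norm, discard the $o_{\PP}(1)$ contributions of $\gamma\langle\bg,\bmu_0\rangle/\sqrt{p}$ and of $\|\bP_{\bmu}^{\perp}\bg\|-\|\bg\|$ using boundedness of $\tilde{\Theta}_{\eta}$, and finish with the identity $\min(\cdot)_+=(\min(\cdot))_+$. The only differences are minor: you eliminate the $\bP_{\bmu}^{\perp}\bbeta$-direction before the $\blambda$-maximization via a pointwise-domination argument (the paper performs the $\blambda$-maximization first and then minimizes over that direction), and your deterministic absorption of the degenerate case $(\by\odot\bc)_+=\mathbf{0}$ into the outer positive part replaces the paper's probabilistic step that the relevant vector has a nonnegative coordinate with probability at least $1-\Phi(\|\bmu\|)^n$.
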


\noindent See Appendix~\ref{sec:proof-lem-reduction2} for the proof.
\medskip

\paragraph{Step 3: Connecting $\tilde{\xi}_{n,p,\kappa}^{(1)}$ with $\xi_{n,p,\kappa}^{(2)}$.}

It turns out that $\tilde{\xi}_{n,p,\kappa}^{(1)}$ can be further simplified for analytical purposes. 
For any set $\bB \subseteq \mathbb{R} \times \mathbb{R}_{+}$, 
we define a new quantity
\begin{align}
     \xi_{n,p,\kappa}^{(2)}(\bB) = \min_{(\gamma,z)\in\bB} \left\{-z + \sqrt{\delta}\sqrt{\EE\left(\sqrt{\gamma^2+z^2}G+\kappa - \gamma \Vert \bmu \Vert\right)_+^2}\right\}.
\end{align}
Similar to before, we simply denote $\xi_{n,p,\kappa}^{(2)} \coloneqq \xi_{n,p,\kappa}^{(2)}(\tilde{\Theta}_{\eta})$.
The two quantities of interest can be related via the uniform law of large numbers as is shown in the following lemma.

%\cm{Rewrite the lemma and move the remaining to the proof. }

\begin{lemma}\label{lem:ulln}
With the definition of $\xi_{n,p,\kappa}^{(2)}$, we have $$\bigg | \xi_{n,p,\kappa}^{(1)} - \left(\xi_{n,p,\kappa}^{(2)}\right)_+ \bigg | \overset{p}{\longrightarrow} 0.$$
\end{lemma}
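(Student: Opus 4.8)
\textbf{Proof plan for Lemma~\ref{lem:ulln}.}

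The plan is to show that $\xi_{n,p,\kappa}^{(1)}(\tilde\Theta_\eta)$ and $\xi_{n,p,\kappa}^{(2)}(\tilde\Theta_\eta)$ are uniformly close by identifying the former, up to a vanishing error, as an empirical average that concentrates on the latter's population expectation. Recall from \eqref{eq:xi_3} that
\[
\tilde\xi_{n,p,\kappa}^{(1)}(\bB) = \min_{(\gamma,z)\in\bB} \frac{1}{\sqrt p}\Big\{ -z\norm{\bg} + \bignorm{\big(\sqrt{\gamma^2+z^2}\,\bh\odot\by + (\kappa-\gamma\norm{\bmu})\mathbf 1_n\big)_+} \Big\},
\]
and that $n/p\to\delta$. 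Writing $\norm{\bg}/\sqrt p \xrightarrow{p} 1$ (with the convergence uniform in the minimization since the term is parameter-free up to the factor $z$, which is bounded on $\tilde\Theta_\eta$), and $\frac1p\norm{(\cdots)_+}^2 = \frac np\cdot\frac1n\sum_{i\le n}\big(\sqrt{\gamma^2+z^2}\,h_i y_i + \kappa-\gamma\norm\bmu\big)_+^2$, the inner sum is an empirical mean of i.i.d.\ terms (using that $h_i y_i\eqd G\sim\calN(0,1)$ by the independence of $\bh$ and $\by$ and the symmetry of the Gaussian), whose expectation is exactly $\EE\big(\sqrt{\gamma^2+z^2}\,G+\kappa-\gamma\norm\bmu\big)_+^2$. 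Hence, heuristically, $\tilde\xi_{n,p,\kappa}^{(1)}(\tilde\Theta_\eta)$ converges to $\xi_{n,p,\kappa}^{(2)}(\tilde\Theta_\eta)$, and then combining with Lemma~\ref{lem:reduction2} (which gives $|(\tilde\xi_{n,p,\kappa}^{(1)})_+ - \xi_{n,p,\kappa}^{(1)}|\xrightarrow{p}0$) and continuity of $x\mapsto x_+$ yields the claim $|\xi_{n,p,\kappa}^{(1)} - (\xi_{n,p,\kappa}^{(2)})_+|\xrightarrow{p}0$.

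The steps I would carry out, in order, are: (i) restrict attention to the compact feasible set $\tilde\Theta_\eta=\{(\gamma,z):(1+\eta)^{-2}\gamma^2+z^2\le 1\}$, on which $|\gamma|$ and $z$ are uniformly bounded by constants depending only on $\eta$; (ii) handle the deterministic factor $z\norm{\bg}/\sqrt p$ by noting $\norm\bg/\sqrt p\to 1$ a.s.\ and $z$ bounded, so this contributes a $o_{\PP}(1)$ error uniformly over $\tilde\Theta_\eta$; (iii) for the second term, set $\phi_{\gamma,z}(h,y) := \big(\sqrt{\gamma^2+z^2}\,hy+\kappa-\gamma\norm\bmu\big)_+^2$ and apply a uniform law of large numbers over the parameter class $\{\phi_{\gamma,z}:(\gamma,z)\in\tilde\Theta_\eta\}$ --- this class is indexed by a compact set, each $\phi_{\gamma,z}$ is Lipschitz in $(\gamma,z)$ uniformly in $(h,y)$ on compacts after a truncation of $h$, and admits the integrable envelope $\le 2(C^2 h^2 + C^2)$ for a constant $C$, so that $\sup_{(\gamma,z)\in\tilde\Theta_\eta}\big|\frac1n\sum_{i\le n}\phi_{\gamma,z}(h_i,y_i) - \EE\phi_{\gamma,z}(G\cdot 1,1)\big|\xrightarrow{p}0$ by, e.g., \citet[Lemma~2.4]{newey1994large} or a bracketing argument; (iv) combine $\frac np\to\delta$ with (iii) and use the elementary fact that $|\min_x f_n(x) - \min_x f(x)|\le \sup_x|f_n(x)-f(x)|$ to pass the uniform convergence through the minimization, obtaining $\tilde\xi_{n,p,\kappa}^{(1)}(\tilde\Theta_\eta)\xrightarrow{p}\xi_{n,p,\kappa}^{(2)}(\tilde\Theta_\eta)$ (invoking also continuity of $\sqrt{\cdot}$, which is applied to a bounded nonnegative argument); (v) apply $1$-Lipschitzness of $x\mapsto x_+$ to get $(\tilde\xi_{n,p,\kappa}^{(1)})_+\xrightarrow{p}(\xi_{n,p,\kappa}^{(2)})_+$, then invoke Lemma~\ref{lem:reduction2} and the triangle inequality to conclude.

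The main obstacle I expect is step (iii): establishing the \emph{uniform} (over $(\gamma,z)\in\tilde\Theta_\eta$) law of large numbers for the family $\phi_{\gamma,z}$, since the summands are unbounded (they grow quadratically in the Gaussian $h_i$) and one must verify the measurability/continuity and envelope conditions carefully --- in particular that the map $(\gamma,z)\mapsto\phi_{\gamma,z}(h,y)$ is continuous for each fixed $(h,y)$ and that $\EE[\sup_{(\gamma,z)\in\tilde\Theta_\eta}\phi_{\gamma,z}(G,1)]<\infty$, which holds because $\sqrt{\gamma^2+z^2}\le C$ and $|\gamma\norm\bmu|\le C$ uniformly on the compact set. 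A secondary, more minor subtlety is ensuring that all the approximations ($\norm\bg/\sqrt p\to1$, the passage $\frac np\to\delta$, the empirical-to-population replacement) are genuinely uniform in the minimization variables; this is routine once the envelope/compactness structure is in place, because each error term factors as (bounded function of $(\gamma,z)$) $\times$ ($o_{\PP}(1)$ quantity free of $(\gamma,z)$) or is controlled by the uniform LLN directly.
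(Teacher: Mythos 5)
Your proposal is correct and follows essentially the same route as the paper: the paper likewise applies a uniform law of large numbers (citing Jennrich 1969 / Montanari et al., Lemma 6.2) over the compact set $\tilde{\Theta}_{\eta}$ to replace the empirical norm term by its population counterpart, uses $p^{-1/2}\norm{\bg} = 1 + o_{\PP}(1)$ and $n/p \to \delta$ to conclude $|\tilde{\xi}^{(1)}_{n,p,\kappa} - \xi^{(2)}_{n,p,\kappa}| \xrightarrow{p} 0$, and then (implicitly) combines this with Lemma~\ref{lem:reduction2} and the $1$-Lipschitzness of $x \mapsto x_+$ exactly as in your step (v). Your more explicit handling of the envelope/measurability conditions for the unbounded squared summands is just a fleshed-out version of the ULLN the paper invokes by citation.
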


\noindent See Appendix~\ref{sec:proof-lem-ulln} for the proof.
\medskip

Following the chain of equivalence, we switch our goal to study $\xi_{n,p,\kappa}^{(2)}$.

\paragraph{Step 4: Analyzing $\xi_{n,p,\kappa}^{(2)}$.}

In view of the chain of arguments above, analyzing the margin boils down to analyzing the positivity of $\xi_{n,p,\kappa}^{(2)}$, which is equivalent to the positivity of the following function 
\begin{align}\label{eq:F}
    F_{\delta}(\gamma,z,\kappa) \coloneqq -z^2 + \delta \EE\left(\sqrt{\gamma^2+z^2}G+\kappa - \gamma \Vert \bmu \Vert\right)_+^2.
\end{align}
To do so, the following lemma is useful in which $\bB_v \coloneqq \{\bbeta \in \RR^p: \Vert \bP^{\perp}_{\bmu} \bbeta \Vert \leq v \Vert \bP_{\bmu} \bbeta \Vert\}$  is a convex cone for each choice of $v \geq 0$. %\cm{This is not compact???}
\begin{lemma}\label{prop:cgmt_general}
Define the event 
\begin{align}
    \calE_{n,p,\kappa,v} \coloneqq \left\{\text{there~exists~} \bbeta \in \Theta \cap \bB_v: y_i \langle \hh_{0,i},\bbeta \rangle \geq \kappa \text{~for~all~} i \leq n\right\}.
\end{align}
\begin{enumerate}
    \item If there exists $(\gamma,z) \in \tilde{\Theta}_{\eta} \cap \{(\gamma,z): z \leq v\gamma\}$ such that $F_{\delta}(\gamma,z,\kappa) < 0$, then
    \begin{align}
        \PP\left(\calE_{n,p,\kappa,v}\right) = 1 - o(1).
    \end{align}
    \item If for any $(\gamma,z) \in \tilde{\Theta}_{\eta} \cap \{(\gamma,z): z \leq v\gamma\}$, we have $F_{\delta}(\gamma,z,\kappa) > 0$, then
    \begin{align}
        \PP\left(\calE^c_{n,p,\kappa,v}\right) = 1 - o(1).
    \end{align}
\end{enumerate}
\end{lemma}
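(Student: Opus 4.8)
The plan is to rerun the CGMT reduction chain of Steps~1--4, now with the restricted feasible set $\Theta\cap\bB_v$ in place of $\Theta$, and then to read off the sign of the resulting deterministic scalar program from $F_\delta$. As in \eqref{eq:maxmargin_form}, the event $\calE_{n,p,\kappa,v}$ equals $\{\xi_{n,p,\kappa}(\Theta\cap\bB_v)=0\}$, the right-hand quantity being $\ge 0$ because $\blambda=\mathbf{0}$ is feasible; equivalently it is $\{\hat\kappa_n(\Theta\cap\bB_v)\ge\kappa\}$, where $\hat\kappa_n(\Theta\cap\bB_v):=\max_{\bbeta\in\Theta\cap\bB_v}\min_{i\le n}y_i\langle\hh_{0,i},\bbeta\rangle$. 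Since $\bB_v$ only constrains the pair $(\gamma,z)=(\langle\bmu_0,\bbeta\rangle,\|\bP_{\bmu}^\perp\bbeta\|)$ and not the angular component of $\bbeta$ that is optimized away in \eqref{eq:xi}, the proofs of Lemmas~\ref{lem:cgmt1}, \ref{lem:reduction2} and \ref{lem:ulln} apply verbatim after discarding the half $\{\gamma<0\}$ (which cannot be margin-optimal, since there the mean margin $\gamma\|\bmu\|$ is negative), on which $\Theta\cap\bB_v$ is compact and convex. Carrying the constraint $z\le v\gamma$ through the reductions, $\bbeta\in\Theta$ becomes $(1+\eta)^{-2}\gamma^2+z^2\le 1$, i.e.\ $(\gamma,z)\in\tilde\Theta_\eta$, and one obtains $\xi_{n,p,\kappa}(\Theta\cap\bB_v)\overset{p}{\to}\big(\xi^{(2)}_{n,p,\kappa}(\tilde\Theta_\eta\cap\{z\le v\gamma\})\big)_+$, with $\xi^{(2)}_{n,p,\kappa}(\tilde\Theta_\eta\cap\{z\le v\gamma\})=\min_{(\gamma,z)\in\tilde\Theta_\eta,\,z\le v\gamma}\{-z+\sqrt\delta\,(\E(\sqrt{\gamma^2+z^2}\,G+\kappa-\gamma\|\bmu\|)_+^2)^{1/2}\}$.

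The key elementary observation is that for $z\ge 0$ the bracketed quantity has the same sign as $F_\delta(\gamma,z,\kappa)=-z^2+\delta\,\E(\sqrt{\gamma^2+z^2}\,G+\kappa-\gamma\|\bmu\|)_+^2$; since $\tilde\Theta_\eta\cap\{z\le v\gamma\}$ is compact and $F_\delta$ is continuous, $\xi^{(2)}_{n,p,\kappa}(\tilde\Theta_\eta\cap\{z\le v\gamma\})>0$ iff $F_\delta>0$ everywhere on that set, and it is $<0$ iff $F_\delta(\gamma,z,\kappa)<0$ at some point. Claim~2 then follows at once: under its hypothesis, $\xi^{(2)}_{n,p,\kappa}(\tilde\Theta_\eta\cap\{z\le v\gamma\})=:c>0$ is a deterministic positive constant, hence $\xi_{n,p,\kappa}(\Theta\cap\bB_v)\overset{p}{\to}c>0$, so $\xi_{n,p,\kappa}(\Theta\cap\bB_v)>0$ --- and thus $\calE^c_{n,p,\kappa,v}$ --- holds with probability $1-o(1)$. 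For Claim~1, the hypothesis gives $\xi^{(2)}_{n,p,\kappa}(\tilde\Theta_\eta\cap\{z\le v\gamma\})<0$, hence $\xi_{n,p,\kappa}(\Theta\cap\bB_v)\overset{p}{\to}0$; it then remains to upgrade this to $\xi_{n,p,\kappa}(\Theta\cap\bB_v)=0$ with high probability, i.e.\ to $\hat\kappa_n(\Theta\cap\bB_v)\ge\kappa$.

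I expect this last upgrade to be the main obstacle, since the Gordon/CGMT machinery only controls the \emph{value} of the feasibility program and gives $\xi_{n,p,\kappa}(\Theta\cap\bB_v)=o_p(1)$ rather than exact feasibility. The standard remedy, which I would follow, is to establish the margin concentration $\hat\kappa_n(\Theta\cap\bB_v)\overset{p}{\to}\kappa_v^*:=\sup\{\kappa:\exists(\gamma,z)\in\tilde\Theta_\eta\cap\{z\le v\gamma\},\ F_\delta(\gamma,z,\kappa)<0\}$: run the full reduction chain uniformly over $\kappa$ in a compact interval (so that the approximations in Lemmas~\ref{lem:reduction2}--\ref{lem:ulln} hold uniformly), and use that $\kappa\mapsto\xi_{n,p,\kappa}(\Theta\cap\bB_v)$ is nondecreasing together with the continuity and monotonicity of $\kappa\mapsto F_\delta(\gamma,z,\kappa)$ to identify $\kappa_v^*$ as the zero-crossing of $\kappa\mapsto\xi^{(2)}_{n,p,\kappa}(\tilde\Theta_\eta\cap\{z\le v\gamma\})$. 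The hypothesis of Claim~1, namely $F_\delta(\gamma_0,z_0,\kappa)<0$ at some point, forces $\kappa<\kappa_v^*$ (by continuity in $\kappa$, $F_\delta(\gamma_0,z_0,\kappa+\epsilon)<0$ for some $\epsilon>0$), whence $\hat\kappa_n(\Theta\cap\bB_v)\ge\tfrac12(\kappa+\kappa_v^*)>\kappa$ with probability $1-o(1)$, which is $\calE_{n,p,\kappa,v}$. Everything other than this bridging step --- the three reduction lemmas for $\Theta\cap\bB_v$, the sign identification with $F_\delta$, and Claim~2 --- is routine once one notes that $\bB_v$ constrains only $(\gamma,z)$ and that the relevant feasible set is compact and convex.
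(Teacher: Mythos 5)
Your reduction to the restricted set, the sign identification between the integrand of $\xi^{(2)}_{n,p,\kappa,v}$ and $F_\delta$, and your proof of Claim~2 all coincide with the paper's Step~1 and Case~2 (convexity/compactness of $\Theta\cap\bB_v$, two-sided Gordon comparison, then $\xi_{n,p,\kappa,v}$ bounded away from $0$ in probability). The problem is Claim~1, exactly where you flag it: your bridging step does not actually close the gap. The event $\{\hat\kappa_n(\Theta\cap\bB_v)\ge\kappa\}$ \emph{is} $\calE_{n,p,\kappa,v}$, so the lower-bound half of the asserted concentration $\hat\kappa_n(\Theta\cap\bB_v)\overset{p}{\to}\kappa_v^*$ is precisely the statement to be proven; running the reduction chain uniformly in $\kappa$ and using monotonicity only yields $\xi_{n,p,\kappa}(\Theta\cap\bB_v)=o_{\PP}(1)$ for every $\kappa<\kappa_v^*$ and identifies $\kappa_v^*$ as the zero-crossing of the deterministic limit, which never upgrades to exact feasibility. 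Nor can one argue that a small value implies approximate feasibility: after solving the inner maximization, $\xi_{n,p,\kappa}(\Theta\cap\bB_v)=\min_{\bbeta}p^{-1/2}\bignorm{\bigl(\kappa\mathbf{1}_n-(y_i\langle\hh_{0,i},\bbeta\rangle)_{i\le n}\bigr)_+}$, so a value of order $\epsilon$ still permits individual margin violations of order $\epsilon\sqrt{p}$. As sketched, the remedy is circular.

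The missing idea, which is how the paper proves part~1, is that exactness is obtained on the \emph{auxiliary} side rather than the primal side. In the proof of Lemma~\ref{lem:reduction2}, after maximizing over $\blambda$, the Gordon functional satisfies (with probability at least $1-\Phi(\|\bmu\|)^n$) $\xi^{(1)}_{n,p,\kappa,v}=(\cdot)_+$ of a quantity that concentrates around the constant $\xi^{(2)}_{n,p,\kappa,v}$; when some point of $\tilde{\Theta}_{\eta}\cap\{z\le v\gamma\}$ has $F_\delta<0$, this constant is strictly negative, hence $\PP\bigl(\xi^{(1)}_{n,p,\kappa,v}>0\bigr)=o(1)$, i.e.\ the auxiliary value is \emph{exactly} zero with high probability. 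The convex-side inequality of Lemma~\ref{lem:cgmt1}, $\PP(\xi_{n,p,\kappa,v}\ge t)\le 2\PP(\xi^{(1)}_{n,p,\kappa,v}\ge t)$, applied along $t\downarrow 0$ (the paper handles the interchange of the limits in $t$ and $n$ via a Moore--Osgood argument) then gives $\PP(\calE^c_{n,p,\kappa,v})=\PP(\xi_{n,p,\kappa,v}>0)\le 2\PP(\xi^{(1)}_{n,p,\kappa,v}>0)=o(1)$, which is Claim~1. So to complete your proof you must either add this exact-zero argument for the auxiliary problem, or abandon the feasibility formulation and apply CGMT directly to the max--min margin $\max_{\bbeta\in\Theta\cap\bB_v}\min_{i\le n}y_i\langle\hh_{0,i},\bbeta\rangle$; the detour through margin concentration as you describe it assumes what it is meant to deliver.
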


\noindent See Appendix~\ref{sec:proof-prop-cgmt-general} for the proof.
\medskip

\begin{remark}
This proposition still holds when $v = +\infty$. This will be useful for proving Theorem~\ref{thm:CGMT}~(1).
\end{remark}

\paragraph{Step 5: Analyzing $F_{\delta}(\gamma,z,\kappa)$ via an equivalent form.}
Lemma~\ref{prop:cgmt_general} motivates us to focus on the positivity of function $F_{\delta}$~\eqref{eq:F}. 
To this end, we define a helpful function $f_{\delta} : {\mathbb{R}}_{+} \times \mathbb{R}_{+} \times \mathbb{R}_{+} \mapsto \mathbb{R}$
\[
f_{\delta}(u,\kappa,c)=-u^{2}+\delta\mathbb{E}\left[\left(\sqrt{1+u^{2}}G+\kappa\sqrt{u^{2}+c}-\|\mu\|\right)_{+}^{2}\right].
\]
Fix any $c > 0$, we consider the optimization problem
\begin{align}\label{eq:opt_new}
    \sup_{\kappa \geq 0} &\quad \kappa\\\nonumber
    {\rm subject~to} &\quad \inf_{u \geq 0}\; f_{\delta}(u,\kappa,c) \leq 0.
\end{align}
In fact, $c=\frac{1}{(1+\eta)^2}$ is just a reparametrization of $\eta$.
Denote by $\kappa^* = \kappa^*(c)$ the corresponding maximizer and $u^* = u^*(c)$, the smallest minimizer of $\inf_{u \geq 0}\; f_{\delta}(u,\kappa^*(c),c)$. 
The following lemma demonstrates that both are well defined.
\begin{lemma}\label{lem:solution}
For every $c>0$, one has $0 \leq \kappa^*(c) < +\infty$, and $0<u^*(c)<+\infty$.
And they attain the corresponding maximum and minimum.
\end{lemma}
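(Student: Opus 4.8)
The plan is to read off everything from soft analysis of the scalar map $m(\kappa):=\inf_{u\ge0}f_\delta(u,\kappa,c)$, together with one explicit large-$u$ expansion of $f_\delta$. First I would record three structural facts. (i) $f_\delta$ is jointly continuous in $(u,\kappa)$ on $[0,\infty)^2$ (the integrand $(\sqrt{1+u^2}\,G+\kappa\sqrt{u^2+c}-\|\bmu\|)_+^2$ is, locally uniformly in $(u,\kappa)$, dominated by a polynomial in $|G|$, so dominated convergence applies). (ii) For each fixed $u$, $\kappa\mapsto f_\delta(u,\kappa,c)$ is strictly increasing on $[0,\infty)$, since raising $\kappa$ shifts the Gaussian $\sqrt{1+u^2}\,G+\kappa\sqrt{u^2+c}-\|\bmu\|$ upward and $x\mapsto x_+^2$ is nondecreasing; hence $m$ is nondecreasing, so $\calF:=\{\kappa\ge0:m(\kappa)\le0\}$ is an interval with left endpoint $0$ whenever it is nonempty. (iii) Writing $\E[X_+^2]=(\mu_X^2+\sigma_X^2)\Phi(\mu_X/\sigma_X)+\mu_X\sigma_X\phi(\mu_X/\sigma_X)$ for $X\sim\calN(\mu_X,\sigma_X^2)$ (with $\phi$ the standard Gaussian density) and using $(\kappa\sqrt{u^2+c}-\|\bmu\|)/\sqrt{1+u^2}\to\kappa$, one obtains $f_\delta(u,\kappa,c)=\big(\delta\,\E[(G+\kappa)_+^2]-1\big)u^2+O(u)$ as $u\to\infty$.

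Next I would show $\kappa^*(c)=\sup\calF$ is finite, nonnegative, and attained. Boundedness of $\calF$: using $\sqrt{u^2+c}\ge\tfrac12(u+\sqrt c)$, once $\kappa\ge4\|\bmu\|/\sqrt c$ the Gaussian mean is $\ge\kappa u/2\ge0$, so fact (iii) gives $\E[X_+^2]\ge\tfrac12\mu_X^2\ge\tfrac18\kappa^2u^2$ and hence $f_\delta(u,\kappa,c)\ge u^2\big(\tfrac{\delta\kappa^2}{8}-1\big)$; for $\kappa>\sqrt{8/\delta}$ this is $\ge0$ for all $u>0$, while $f_\delta(0,\kappa,c)=\delta\,\E[(G+\kappa\sqrt c-\|\bmu\|)_+^2]>0$, so such $\kappa$ are infeasible and $\kappa^*(c)<\infty$. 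Nonemptiness of $\calF$ is exactly the (asymptotic) separability condition $\delta<\delta^*(\rho)$ under which these statements are made, equivalently $m(0)=\inf_uf_\delta(u,0,c)\le0$ (it can also be read off from $\xi^{(2)}_{n,p,0}(\tilde\Theta_\eta)\le0$ using $(\gamma,z)=(0,0)$); thus $0\in\calF$ and $\kappa^*(c)\ge0$. For attainment, let $\kappa_c$ solve $\delta\,\E[(G+\kappa_c)_+^2]=1$ (it exists and is positive when $\delta<2$). On $[0,\kappa_c]$ the leading $u^2$-coefficient in (iii) is $\le0$, and at $\kappa=\kappa_c$ it vanishes so the $O(u)$ term governs the limit; the expansion shows this coefficient equals $-2\delta\|\bmu\|\big(\phi(\kappa_c)+\kappa_c\Phi(\kappa_c)\big)<0$, so $f_\delta(\cdot,\kappa,c)\to-\infty$ and $m\equiv-\infty$ on $[0,\kappa_c]$. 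For $\kappa>\kappa_c$ (and for all $\kappa>0$ when $\delta\ge2$), $f_\delta(\cdot,\kappa,c)$ is coercive, so $m(\kappa)$ is finite, attained, and continuous in $\kappa$ (minimum of a continuous family with coercivity uniform on compact $\kappa$-sets); since $m$ is nondecreasing with $m(\kappa)\to+\infty$ and $m<0$ just above $\kappa_c$, the intermediate value theorem gives $\calF=[0,\kappa^*(c)]$ with $m(\kappa^*(c))=0$, a closed set, so the supremum is attained.

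It remains to handle $u^*(c)$. At $\kappa=\kappa^*(c)$ one has $\delta\,\E[(G+\kappa^*(c))_+^2]>1$ (true in each case of the trichotomy on $\delta$ above, using $\kappa^*(c)>\kappa_c$ when $\delta<2$), so $f_\delta(\cdot,\kappa^*(c),c)$ is coercive and continuous on $[0,\infty)$ and therefore $\inf_{u\ge0}f_\delta(u,\kappa^*(c),c)$ is attained; take $u^*(c)$ to be its smallest minimizer, so $u^*(c)<\infty$. Strict positivity $u^*(c)>0$ follows since $f_\delta(0,\kappa^*(c),c)=\delta\,\E[(G+\kappa^*(c)\sqrt c-\|\bmu\|)_+^2]>0=m(\kappa^*(c))$ — a nondegenerate Gaussian has strictly positive second moment of its positive part — so $u=0$ is not a minimizer.

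The only step that is not routine is identifying the $O(u)$ coefficient of $f_\delta$ at the critical value $\kappa_c$ — equivalently, ruling out $m(\kappa^*(c))=-\infty$, which would destroy the existence of $u^*$; this requires the second-order expansions of $(\kappa\sqrt{u^2+c}-\|\bmu\|)/\sqrt{1+u^2}$, $\Phi$, and $\phi$ near $\kappa_c$ substituted into the Gaussian closed form, and is the one genuine computation, the rest (continuity, monotonicity, coercivity, the intermediate value theorem) being soft. A secondary subtlety worth flagging is that $f_\delta$ arises from the two-variable minimization over $\tilde\Theta_\eta$ through the reparametrization $u=z/\gamma$, which degenerates at $(\gamma,z)=(0,0)$; this is why nonemptiness of $\calF$ should be imported from the separability hypothesis rather than proved from the formula for $f_\delta$ alone.
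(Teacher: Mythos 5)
Your proof is correct and follows essentially the same route as the paper's: both reduce everything to scalar analysis of $f_\delta(u,\kappa,c)$, both hinge on the critical value $\kappa_c$ (the paper's $\kappa_0$, where $\delta\,\E[(G+\kappa)_+^2]=1$), and both use a linear-in-$u$ expansion at that value (your coefficient $-2\delta\|\bmu\|\big(\phi(\kappa_c)+\kappa_c\Phi(\kappa_c)\big)$ matches the paper's $\tfrac{\|\bmu\|}{4}q'(0)$ computation) to conclude $\kappa^*>\kappa_c$, hence coercivity of $f_\delta(\cdot,\kappa^*,c)$ and finiteness/attainment of $u^*$, with large-$\kappa$ infeasibility giving $\kappa^*<\infty$ just as in the paper. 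Your monotone-$m$/intermediate-value packaging additionally makes explicit two points the paper treats loosely — closedness of the feasible set (so the supremum defining $\kappa^*$ is attained with $m(\kappa^*)=0$) and strict positivity $u^*>0$ via $f_\delta(0,\kappa^*,c)>0=m(\kappa^*)$ — but the underlying argument is the same.
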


\noindent See Appendix~\ref{sec:proof-g-F-connections} for the proof.  
\medskip

Both $\kappa^*$ and $u^*$ play an essential role in our later developments. 
% In addition, we define a similar optimization problem
% \begin{align}\label{eq:opt_F}
%     \sup &\quad \kappa\\\nonumber
%     {\rm subject~to} &\quad \exists (\gamma,u)\in\tilde{\Omega}, ~F_{\delta}(\gamma,u \gamma,\kappa) < 0,
% \end{align}
% where $\tilde{\Omega} = \{(\gamma,u)\in\RR_+\times\bar{\RR}_+: \gamma^2\left(u^2+c\right)\leq 1\}$.
%We have the following lemma to show how $g_{\delta}(u,\kappa,c)$ is related to $F_{\delta}(\gamma,z,\kappa)$.
%
%\begin{lemma}\label{lem:solution}
%With the defintion of $g_{\delta}(u,\kappa,c)$ and $u = r/\gamma$,
%\[
%F_{\delta}(\gamma,z,\kappa) \geq g_{\delta}(u,\kappa,c), \qquad \inf_{(\gamma,z)\in \tilde\Theta_{\eta}} F_{\delta}(\gamma,z,\kappa) = \min_{u \geq 0} g_{\delta}(u,\kappa,c).
%\]
%% If $\tilde{\kappa}$ is the maximizer to \eqref{eq:opt_F} and $(\tilde{u}, \tilde{\gamma})$ is the minimizer  to $F_{\delta}(\gamma, u\gamma, \tilde{\kappa})$ such that $\tilde{u}$ is the smallest minimizer for $u$, then 
%% \begin{align}
%%     \tilde{\kappa} = \kappa^*,~~~~\tilde{u} = u^*,~~~~\tilde{\gamma} = \frac{1}{\sqrt{u^{*2}+c}}.
%% \end{align}
%Moreover, we have $0 < u^* < +\infty.$
%\end{lemma}

\subsubsection{Proof of Theorem~\ref{thm:CGMT} (1).} 

We plan to invoke Lemma~\ref{prop:cgmt_general} part 2 to prove this claim. 
More precisely, we aim to show that for any fixed $\kappa >0$, when $\delta >\delta^*(\rho)$ with\footnote{To see that $\delta^*(\rho)$ is well-defined, let $\alpha = 1/u$. Then we need to check if $$\inf_{\alpha \geq 0} \EE\left\{\left(\sqrt{1+\alpha^2}G-\Vert \bmu \Vert \alpha\right)_+^2\right\} > 0.$$ 
At $\alpha=0$, $\EE[(G)_+^2]>0$ indicates that the infimum is strictly positive in the neighborhood of $0$; for $\alpha \geq C_{\alpha}$,  $\inf_{\alpha \geq 0} \EE\{\left(\sqrt{1+\alpha^2}G-\Vert \bmu \Vert \alpha\right)_+^2\} \geq \inf_{\alpha \geq 0} \alpha^2 \EE\{\left(G-\Vert \bmu \Vert\right)_+^2\} \geq C_{\alpha}^2\EE\{\left(G-\Vert \bmu \Vert\right)_+^2\} >0$. Thus, $\delta^*(\rho) < +\infty$ is well-defined. As a final remark, when $\Vert \bmu \Vert=0$ (i.e., the pure noise case), we have $\delta^*(0)=2$. As a result, $\delta^*(\rho) \geq 2$ for $\rho \geq 0$.} 
\begin{align}
    \delta^*(\rho) = \left(\inf_{u>0} \frac{1}{u^2}\EE\left\{\left(\sqrt{1+u^2}G - \Vert \bmu \Vert\right)_+^2\right\}\right)^{-1},
\end{align}
one has
$\inf_{(\gamma,z) \in \tilde{\Theta}_{\eta}} F_{\delta}(\gamma, z, \kappa)> 0$. 

\paragraph{Case 1: $\gamma=0$. }In this case, we have 
$
F_{\delta}(0,z,\kappa) =-z^{2}+\delta\mathbb{E}\left[\left(zG+\kappa\right)_{+}^{2}\right].
$
If in addition $z=0$, one has $F_{\delta}(0,0,\kappa)>0$. Otherwise for any $0<z\leq1$,
one has
\[
F_{\delta}(0,z,\kappa)=z^{2}\left(-1+\delta\mathbb{E}\left[\left(G+\frac{\kappa}{z}\right)_{+}^{2}\right]\right).
\]
Note that since $z\leq1$, we have 
\begin{align*}
-1+\delta\mathbb{E}\left[\left(G+\frac{\kappa}{z}\right)_{+}^{2}\right] & \geq-1+\delta\mathbb{E}\left[\left(G+\kappa\right)_{+}^{2}\right]\\
 & >-1+\delta\mathbb{E}\left[\left(G\right)_{+}^{2}\right]\qquad\text{since }\kappa>0\\
 & >-1+\delta^{*}(\rho)\frac{1}{2}\geq0,
\end{align*}
where the last relation uses the fact that $\delta^{*}(\rho) \geq 2$.

\paragraph{Case 2: $\gamma>0$. }
 In this case, we have
 \[
F_{\delta}(\gamma,z,\kappa)=\gamma^{2}\left\{ -u^{2}+\delta\mathbb{E}\left[\left(\sqrt{1+u^{2}}G+\frac{\kappa}{\gamma}-\|\mu\|\right)_{+}^{2}\right]\right\},
\]
where we denote $u=z / \gamma$. It is easy to see that when $u=0$, one has $F_{\delta}(\gamma,z,\kappa) > 0$.
Therefore we focus on the case when $u > 0$.

By the definition of $\delta^{*}(\rho)$, we know that for any $u>0$,
\[
\frac{1}{\delta}<\frac{1}{\delta^{*}(\rho)}\leq\frac{1}{u^{2}}\mathbb{E}\left[\left(\sqrt{1+u^{2}}G-\|\mu\|\right)_{+}^{2}\right].
\]
As a result, for any $u>0$, one has 
\[
-u^{2}+\delta\mathbb{E}\left[\left(\sqrt{1+u^{2}}G-\|\mu\|\right)_{+}^{2}\right]>0,
\]
which further implies 
\[
-u^{2}+\delta\mathbb{E}\left[\left(\sqrt{1+u^{2}}G+\frac{\kappa}{\gamma}-\|\mu\|\right)_{+}^{2}\right]>0.
\]
This proves that $F_{\delta}(\gamma,z,\kappa)>0$ for all $\gamma,z$.

\subsubsection{Proof of Theorem~\ref{thm:CGMT} (2).} 
In view of Part 1 of Lemma~\ref{prop:cgmt_general}, we only need to show that
for some $\kappa>0$, there exist some $(\gamma,z)$ such that $F_{\delta}(\gamma,z,\kappa)<0$.
Since $\delta<\delta^{*}(\rho)$, we have 
\[
\frac{1}{\delta}>\frac{1}{u^{2}}\mathbb{E}\left[\left(\sqrt{1+u^{2}}G-\|\mu\|\right)_{+}^{2}\right]
\]
for some $u>0$. As a result, we have
\[
-u^{2}+\delta\mathbb{E}\left[\left(\sqrt{1+u^{2}}G-\|\mu\|\right)_{+}^{2}\right]<0.
\]
Due to continuity, we know that there exists some $L>0$, such that
\[
-u^{2}+\delta\mathbb{E}\left[\left(\sqrt{1+u^{2}}G+\kappa\sqrt{u^{2}+c}-\|\mu\|\right)_{+}^{2}\right]<0,\qquad\text{for all }\kappa<L.
\]
Let $\gamma=\frac{1}{\sqrt{u^{2}+c}}$, and $z=u\cdot\gamma$. We
then have
\[
F_{\delta}(\gamma,z,\kappa)=\gamma^{2}\left\{ -u^{2}+\delta\mathbb{E}\left[\left(\sqrt{1+u^{2}}G+\frac{L}{2}\sqrt{u^{2}+c}-\|\mu\|\right)_{+}^{2}\right]\right\} <0.
\]
This finishes the proof.

We then turn to the claim regarding the margin. 
Recall that we work in the regime where $\delta<\delta^{*}(\rho)$.
Fix any $\varepsilon>0$. We aim to prove that $\mathcal{E}_{n,p,\kappa^{*}+\varepsilon}$ does
not hold and that $\calE_{n,p,\kappa^*-\veps}$ holds. 

\paragraph{Step 1: Quantifying $\mathcal{E}_{n,p,\kappa^{*}+\varepsilon}$.}
Similar to the proof of Theorem~\ref{thm:CGMT} (1), it suffices to prove that  for all $\gamma,z$, 
$
F_{\delta}(\gamma,z,\kappa^{*}+\varepsilon)>0.
$

\paragraph{Case 1: $\gamma=0$. }It is simple to see that 
$F_{\delta}(0,0,\kappa^{*}+\varepsilon)>0$. Therefore we focus on the case for any $0<z\leq1$.
Using similar arguments as in Theorem~\ref{thm:CGMT} (1), we have
\[
F_{\delta}(0,z,\kappa^{*}+\varepsilon)=z^{2}\left(-1+\delta\mathbb{E}\left[\left(G+\frac{\kappa^{*}+\varepsilon}{z}\right)_{+}^{2}\right]\right).
\]
Note that since $z\leq1$, we have 
\begin{align*}
-1+\delta\mathbb{E}\left[\left(G+\frac{\kappa^{*}+\varepsilon}{z}\right)_{+}^{2}\right] & \geq-1+\delta\mathbb{E}\left[\left(G+\kappa^{*}\right)_{+}^{2}\right].
\end{align*}
When $2 \leq  \delta < \delta^*(\rho)$, we have $\kappa^* \geq 0$, and hence $F_{\delta}(0,z,\kappa^{*}+\varepsilon) > 0$.
When $\delta < 2$, since $\kappa^* \geq \kappa(\delta)$, where $\kappa(\delta)$ is the unique solution to
\begin{align}
    \frac{1}{\delta} = \EE\left[\left(G+\kappa\right)_+^2\right],
\end{align}
we again have  $F_{\delta}(0,z,\kappa^{*}+\varepsilon) > 0$.

\paragraph{Case 2: $\gamma>0$. }
 In this case, we have
 \[
F_{\delta}(\gamma,z,\kappa^{*}+\varepsilon)=\gamma^{2}\left\{ -u^{2}+\delta\mathbb{E}\left[\left(\sqrt{1+u^{2}}G+\frac{\kappa^{*}+\varepsilon}{\gamma}-\|\mu\|\right)_{+}^{2}\right]\right\},
\]
where we denote $u=z / \gamma$. By the definition of $\kappa^{*}$, we know
that for all $u\geq 0$, 
\[
f_{\delta}(u,\kappa^{*}+\varepsilon,c)=-u^{2}+\delta\mathbb{E}\left[\left(\sqrt{1+u^{2}}G+(\kappa^{*}+\varepsilon)\sqrt{u^{2}+c}-\|\mu\|\right)_{+}^{2}\right]>0.
\]
For all $0 < \gamma \leq \frac{1}{\sqrt{u^2 + c}}$, we further have 
\[
-u^{2}+\delta\mathbb{E}\left[\left(\sqrt{1+u^{2}}G+\frac{\kappa^{*}+\varepsilon}{\gamma}-\|\mu\|\right)_{+}^{2}\right]>-u^{2}+\delta\mathbb{E}\left[\left(\sqrt{1+u^{2}}G+(\kappa^{*}+\varepsilon)\sqrt{u^{2}+c}-\|\mu\|\right)_{+}^{2}\right]>0.
\]
This finishes the claim.

\paragraph{Step 2: Quantifying $\calE_{n,p,\kappa^*-\veps}$.}
Similarly, for $\kappa^{*}-\varepsilon$, since $f_{\delta}(u,\kappa,c)$ is increasing in $\kappa$ and $\inf_{u \geq 0} f_{\delta}(u,\kappa^*,c) = 0$, there exists
some $u>0$, such that 
\[
-u^{2}+\delta\mathbb{E}\left[\left(\sqrt{1+u^{2}}G+(\kappa^{*}-\varepsilon)\sqrt{u^{2}+c}-\|\mu\|\right)_{+}^{2}\right]<0.
\]
Going through the same argument as in part 1 completes the proof of this part. 

Combine the arguments above, for any $\veps > 0$, we see that with probability approaching 1, $\kappa^{*}-\varepsilon<\hat{\kappa}<\kappa^{*}+\varepsilon$, which proves that $\hat{\kappa} \overset{p}{\rightarrow} \kappa^*$.

\subsubsection{Proof of Theorem~\ref{thm:CGMT} (3).} 
We start with Lemma~\ref{lem:part3} showing that $u^* = u^*(c) = u^*(\rho,\eta)$, defined previously, is indeed the limit of $\hat{u} := \frac{\Vert \bP_{\bmu}^{\perp} \hat{\bbeta} \Vert}{\Vert \bP_{\bmu} \hat{\bbeta} \Vert}$. 

\begin{lemma}\label{lem:part3}
Assume that $\delta < \delta^*(\rho)$. 
Fix any $c > 0$ %such that $\partial^2_u g(u^*,\kappa^*,c) > 0$. 
and any $\veps > 0$, %there exists some $\veps^\prime > 0$ such that
\begin{align}
    \calE_{n,p,\kappa^*,u^*+\veps}\qquad \text{and} \qquad \calE^c_{n,p,\kappa^*,u^*-\veps} \qquad \text{hold}.
\end{align}
As a result, we have
\begin{align}
    \hat{u} := \frac{\Vert \bP_{\bmu}^{\perp} \hat{\bbeta} \Vert}{\Vert \bP_{\bmu} \hat{\bbeta} \Vert} \overset{p}{\rightarrow} u^*.
\end{align}
\end{lemma}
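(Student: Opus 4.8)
The plan is to prove only the convergence $\hat u \xrightarrow{p} u^*$; the margin formula and the conclusion that the test error equals $\Phi(-\rho^{1/2}\sqrt{1/(1+(u^*)^2)})$ then follow from this, from $\hat\kappa \xrightarrow{p}\kappa^*$ (already proven in Theorem~\ref{thm:CGMT}(2)), and from a conditional‑Gaussian computation of $\PP(\tilde y\langle\tilde\zz,\hat\bbeta\rangle<0)$ exactly as in Step~3 of the proof of Theorem~\ref{thm:low-d}. Write $c=(1+\eta)^{-2}$, so $u^*=u^*(c)$ and $\tilde\Theta_\eta=\{(\gamma,z):z\ge0,\ c\gamma^2+z^2\le1\}$. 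Fix $\veps>0$. I would establish the two one‑sided bounds $\PP(\hat u>u^*-\veps)\to1$ and $\PP(\hat u<u^*+\veps)\to1$ separately, each by applying the $\calE^c$‑direction (part~2) of Lemma~\ref{prop:cgmt_general} to an appropriate convex cone, together with the fact that $\hat\bbeta\in\Theta$ attains margin $\hat\kappa$ with $\hat\kappa\xrightarrow{p}\kappa^*>0$ (positivity of $\kappa^*$ holds in the separable regime $\delta<\delta^*(\rho)$). The key bookkeeping fact, used repeatedly, is that on the cone constraint $F_\delta$ dominates $f_\delta$: for $\gamma>0$ and $(\gamma,z)\in\tilde\Theta_\eta$ with $u:=z/\gamma$, the constraint $c\gamma^2+z^2\le1$ forces $1/\gamma\ge\sqrt{u^2+c}$, hence $\kappa/\gamma\ge\kappa\sqrt{u^2+c}$ for $\kappa>0$; since $b\mapsto\EE[(aG+b)_+^2]$ is nondecreasing, $F_\delta(\gamma,z,\kappa)=\gamma^2\big(-u^2+\delta\,\EE[(\sqrt{1+u^2}G+\kappa/\gamma-\|\bmu\|)_+^2]\big)\ge\gamma^2 f_\delta(u,\kappa,c)$; the cases $\gamma\le0$ (where $\kappa-\gamma\|\bmu\|\ge\kappa$) and $\gamma=z=0$ (where $F_\delta=\delta\kappa^2$) are positive by the same computations used in the proofs of Theorem~\ref{thm:CGMT}(1)--(2).

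For the lower bound $\hat u>u^*-\veps$, I would apply Lemma~\ref{prop:cgmt_general}(2) with the cone $\bB_{u^*-\veps}$, whose scalarized constraint is $z\le(u^*-\veps)\gamma$. On $\tilde\Theta_\eta\cap\{z\le(u^*-\veps)\gamma\}$ one has $\gamma\ge0$ and, when $\gamma>0$, $u:=z/\gamma\in[0,u^*-\veps]$; by the domination, $F_\delta(\gamma,z,\kappa^*-\veps')\ge\gamma^2 f_\delta(u,\kappa^*-\veps',c)$. Since $u^*$ is the smallest minimizer of $f_\delta(\cdot,\kappa^*,c)$ and $\inf_u f_\delta(\cdot,\kappa^*,c)=0$, we get $m:=\min_{[0,u^*-\veps]}f_\delta(\cdot,\kappa^*,c)>0$ by compactness; joint continuity of $f_\delta$ on $[0,u^*-\veps]\times[\kappa^*/2,\kappa^*]$ lets me fix $\veps'\in(0,\min\{\kappa^*,\veps\})$ with $f_\delta(\cdot,\kappa^*-\veps',c)\ge m/2$ there. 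Hence $F_\delta(\cdot,\cdot,\kappa^*-\veps')>0$ on the whole set, so Lemma~\ref{prop:cgmt_general}(2) gives $\PP(\calE^c_{n,p,\kappa^*-\veps',\,u^*-\veps})\to1$. On that event no $\bbeta\in\Theta\cap\bB_{u^*-\veps}$ has margin $\ge\kappa^*-\veps'$; since $\hat\kappa>\kappa^*-\veps'$ w.h.p. and $\hat\bbeta$ attains margin $\hat\kappa$, necessarily $\hat\bbeta\notin\bB_{u^*-\veps}$, i.e.\ $\hat u>u^*-\veps$.

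For the upper bound $\hat u<u^*+\veps$, the relevant region $\{\bbeta:u(\bbeta)\ge u^*+\veps\}=\{\bbeta:\|\bP_{\bmu}\bbeta\|\le(u^*+\veps)^{-1}\|\bP^\perp_{\bmu}\bbeta\|\}$ is again a convex cone, and I would first check that the Gordon‑comparison argument behind Lemma~\ref{prop:cgmt_general} goes through for it essentially verbatim, the scalarized constraint now being $z\ge(u^*+\veps)\gamma$; it then yields the corresponding $\calE^c$ event provided $F_\delta(\gamma,z,\kappa^*-\veps')>0$ on $\tilde\Theta_\eta\cap\{z\ge(u^*+\veps)\gamma\}$. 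By the domination this reduces to $f_\delta(u,\kappa^*-\veps',c)>0$ for all $u\ge u^*+\veps$ (again handling $\gamma\le0$ as before). Here I would invoke the finer structure of $f_\delta(\cdot,\kappa^*,c)$: that $u^*$ is its unique minimizer and $f_\delta(u,\kappa^*,c)\to+\infty$ as $u\to\infty$ — the latter because $-1+\delta\,\EE[(G+\kappa^*)_+^2]\ge0$ must hold (otherwise $\inf_u f_\delta=-\infty$, contradicting Lemma~\ref{lem:solution}), with the degenerate equality case ruled out by a next‑order expansion — so that $\inf_{u\ge u^*+\veps}f_\delta(u,\kappa^*,c)>0$, after which a continuity‑in‑$\kappa$ shrinking of $\veps'$ preserves positivity. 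Concluding as in the lower bound, $\hat\bbeta$ (margin $>\kappa^*-\veps'$ w.h.p.) cannot lie in $\{u\ge u^*+\veps\}$. Combining the two bounds gives $\PP(u^*-\veps<\hat u<u^*+\veps)\to1$ for every $\veps>0$, hence $\hat u\xrightarrow{p}u^*$.

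The main obstacle is the upper bound. First, $\{u(\bbeta)>u^*+\veps\}$ is not one of the cones $\bB_v$ covered by Lemma~\ref{prop:cgmt_general} (all of which contain the $\bmu$‑direction), so one must confirm that the comparison inequality machinery applies to the complementary convex cone — I expect this to be routine but it does need to be written. More substantively, unlike in the lower bound, one must control $f_\delta(\cdot,\kappa^*,c)$ on an \emph{unbounded} interval $[u^*+\veps,\infty)$ rather than a compact one; this forces the use of uniqueness of the minimizer (which should come from strict convexity of a suitable reparametrization of $f_\delta$, as is standard in CGMT analyses) plus its quadratic growth at infinity. An alternative route I would keep in reserve is to establish an a priori bound $\hat u\le C$ with high probability — e.g.\ via a uniform lower bound on $\langle\bmu,\hat\bbeta\rangle$ coming from the scalarized optimal pair $(\gamma^*,z^*)$ — so that only the compact range $[u^*+\veps,C]$ needs to be handled.
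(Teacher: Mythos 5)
Your lower-bound half ($\hat u > u^*-\veps$) is essentially the paper's own argument (Steps 1--2 of its proof): exclude the cone $\bB_{u^*-\veps}$ via Lemma~\ref{prop:cgmt_general}(2), using that $u^*$ is the \emph{smallest} zero of $f_{\delta}(\cdot,\kappa^*,c)$ so that $f_{\delta}>0$ on the compact interval $[0,u^*-\veps]$, then invoke $\hat\kappa \xrightarrow{p} \kappa^*>0$. The upper bound is where you diverge, and that is where the genuine gap sits. Your reduction requires $\inf_{u\ge u^*+\veps} f_{\delta}(u,\kappa^*,c)>0$, which is precisely the claim that $u^*$ is the \emph{unique} minimizer of $f_{\delta}(\cdot,\kappa^*,c)$ (your treatment of the growth at infinity is fine and mirrors Lemma~\ref{lem:solution}). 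Uniqueness is established nowhere --- the paper deliberately defines $u^*$ only as the smallest minimizer --- and your appeal to ``strict convexity of a suitable reparametrization'' does not deliver it: because of the $-u^2$ (equivalently $-z^2$) term, neither $f_{\delta}$ nor $F_{\delta}$ is convex. What convexity does give (via $(\gamma,z)\mapsto \bigl(\delta\,\EE[(\sqrt{\gamma^2+z^2}\,G+\kappa-\gamma\|\bmu\|)_+^2]\bigr)^{1/2}$ being a convex function) is only that $\{F_{\delta}\le 0\}$ is convex, hence that the zero set of $f_{\delta}(\cdot,\kappa^*,c)$ is an \emph{interval}; if that interval were nondegenerate, your continuity-in-$\kappa$ shrinking of $\veps'$ fails (the infimum over $[u^*+\veps,\infty)$ is $0$, not positive) and the exclusion of the complementary cone collapses. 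On top of this, the complementary cone $\{\|\bP_{\bmu}\bbeta\|\le (u^*+\veps)^{-1}\|\bP_{\bmu}^{\perp}\bbeta\|\}$ is not one of the cones $\bB_v$ handled by Lemma~\ref{prop:cgmt_general}, so even granting positivity you still owe a rerun of the scalarization chain (Lemmas~\ref{lem:cgmt1}, \ref{lem:reduction2}, \ref{lem:ulln}) under the constraint $z\ge(u^*+\veps)|\gamma|$; you flag this as routine, and it is plausibly doable since the set is convex and compact, but it is additional machinery the paper never needs.

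The paper's Step 3 sidesteps both issues with a deterministic interpolation argument: let $\hat\bbeta^{\veps}$ maximize the margin over $\Theta\cap\bB_{u^*+\veps}$ (nonempty with positive margin by the already-established event $\calE_{n,p,\kappa^*,u^*+\veps}$). If the unconstrained maximizer $\hat\bbeta$ lay outside $\bB_{u^*+2\veps}$, then by strict convexity of the ellipsoid $\Theta$ the interpolant $\lambda\hat\bbeta^{\veps}+(1-\lambda)\hat\bbeta$ is strictly interior, can be inflated by a factor $1+\alpha$ while remaining in $\Theta\cap\bB_{u^*+\veps}$, and linearity of the margin then forces $\kappa(\hat\bbeta)<\kappa(\hat\bbeta^{\veps})\le\kappa(\hat\bbeta)$, a contradiction. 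This uses no information about $f_{\delta}$ beyond $u^*$, no uniqueness of its minimizer, and no new Gaussian-comparison lemma. To repair your route you would have to actually prove uniqueness of the minimizer of $f_{\delta}(\cdot,\kappa^*,c)$ (not obvious), or adopt an interpolation argument of this type; your fallback of an a priori bound $\hat u\le C$ does not remove the uniqueness requirement, only the unboundedness. A minor further point: your plan proves only $\hat u\xrightarrow{p}u^*$, whereas the lemma also asserts the events $\calE_{n,p,\kappa^*,u^*+\veps}$ and $\calE^c_{n,p,\kappa^*,u^*-\veps}$ at level exactly $\kappa^*$, which do not follow from the convergence statements alone.
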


\noindent See Appendix~\ref{sec:proof-part3} for the proof.
\medskip

Based on the lemma, recall the definition $\rho = \norm{\bmu}^2$, $u^*(\rho,\eta) = u^*(c)$, and
\begin{align}
    \texttt{Err}(\hat{\bbeta};\eta) = \PP\left(y^\prime \langle \hh_0^\prime, \hat{\bbeta} \rangle < 0\right) = \Phi\left(-\langle \bmu, \frac{\hat{\bbeta}}{\Vert \hat{\bbeta} \Vert}\rangle\right),\qquad \texttt{Err}^*(\eta) = \Phi\left(-\Vert \bmu \Vert \sqrt{\frac{1}{1+u^{*2}(\rho,\eta)}}\right).
\end{align}
From Lemma~\ref{lem:part3}, we have
\begin{align}
    \Bigg|\langle \bmu, \frac{\hat{\bbeta}}{\Vert \hat{\bbeta} \Vert}\rangle  -  \Vert \bmu \Vert \sqrt{\frac{1}{1+u^{*2}(\rho,\eta)}}\Bigg| = o_{\PP}(1),
\end{align}
which implies that
\begin{align}
    \texttt{Err}(\hat{\bbeta};\eta) \overset{p}{\rightarrow}\texttt{Err}^*(\eta) = \Phi\left(-\Vert \bmu \Vert \sqrt{\frac{1}{1+u^{*2}(\rho,\eta)}}\right).
\end{align}
Since $\texttt{Err}^*(\eta)$ is monotonically decreasing in $u^*(\rho,\eta)$, then asymptotically, the classification error of the original max-margin solution is monotonically decreasing in $u^*(\rho,\eta)$. 
It remains to show that $u^*(\rho,\eta)$ is monotonically decreasing in $\eta$, which is equivalent to showing that $u^*(c)$ is monotonically increasing in $c$. This is provided in the following lemma. 
\begin{lemma}\label{lem:monot} 
% \begin{enumerate}
%     \item When $c \geq 1$, $u^\star(c)$ is an increasing function in $c$.
%     \item Moreover, if $\partial_u^2 g(u^*,\kappa^*,c) > 0$ for a given $c$, then in a local neighborhood of $c$, $ u^*(c)$  is differentiable with positive derivative.
% \end{enumerate}
With the definition of $(u^*,\kappa^*,c)$, for any $c > 0$, 
\begin{enumerate}
    \item $u^* = u^*(c)$ is a continuous function in $c$;
    \item $u^* = u^*(c)$ is monotonically increasing in $c$.
\end{enumerate}
\end{lemma}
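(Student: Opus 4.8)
\textbf{Plan for the proof of Lemma~\ref{lem:monot}.} I would reparametrize the scalar program defining $\kappa^*(c)$ and $u^*(c)$ so that $u^*(c)$ becomes the smallest maximizer of an explicit one-dimensional function with transparent $c$-dependence; monotonicity and continuity can then be read off. Since $f_\delta$ depends on $(\kappa,c)$ only through the product $\kappa\sqrt{u^2+c}$, set $\tilde f(u,s):=-u^2+\delta\,\E[(\sqrt{1+u^2}\,G+s-\|\mu\|)_+^2]$, so that $f_\delta(u,\kappa,c)=\tilde f(u,\kappa\sqrt{u^2+c})$. For each $u>0$ the map $s\mapsto\tilde f(u,s)$ is continuous and strictly increasing (its $s$-derivative is $2\delta\,\E[(\sqrt{1+u^2}G+s-\|\mu\|)_+]>0$), with limits $-u^2<0$ and $+\infty$; hence it has a unique zero $s_0(u)$, and the implicit function theorem makes $s_0$ of class $C^1$, in particular continuous. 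Because $\tilde f(u,s)\le0\iff s\le s_0(u)$, a short check gives $\inf_{u\ge0}f_\delta(u,\kappa,c)\le0\iff\kappa\le\sup_{u>0}\Psi(u,c)$ with $\Psi(u,c):=s_0(u)/\sqrt{u^2+c}$. Thus $\kappa^*(c)=\sup_{u>0}\Psi(u,c)$; in the separable regime $\delta<\delta^*(\rho)$ it is positive and, by Lemma~\ref{lem:solution}, attained at an interior point, and $u^*(c)=\min\argmax_{u>0}\Psi(\cdot,c)$ (one checks that $\inf_u f_\delta(u,\kappa^*(c),c)$ is attained precisely on $\argmax\Psi(\cdot,c)$); note $s_0(u^*(c))=\kappa^*(c)\sqrt{u^*(c)^2+c}>0$.

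For the monotonicity --- the key point --- note that for $c_1<c_2$ the ratio $\Psi(u,c_2)/\Psi(u,c_1)=\sqrt{(u^2+c_1)/(u^2+c_2)}$ is strictly increasing in $u\ge0$; equivalently $\log\Psi$ has strictly increasing differences in $(u,c)$ on $(0,\infty)^2$ (a Topkis-type structure). An exchange argument then finishes: for maximizers $u_i\in\argmax\Psi(\cdot,c_i)$, optimality gives $\Psi(u_1,c_1)\,\Psi(u_2,c_2)\ge\Psi(u_2,c_1)\,\Psi(u_1,c_2)$, with all four factors positive because $\kappa^*(c_i)>0$; but $u_2<u_1$ would force the reverse strict inequality through the ratio monotonicity, a contradiction, so $u_2\ge u_1$. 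The case $u_1=u_2=:u_0$ is excluded since at an interior maximizer the first-order condition reads $(\log s_0)'(u_0)=u_0/(u_0^2+c_i)$ for both $i=1,2$, which with $u_0>0$ forces $c_1=c_2$. Hence every maximizer at $c_1$ is strictly below every maximizer at $c_2$; in particular $u^*(c_1)<u^*(c_2)$, i.e.\ $u^*$ is strictly increasing in $c$ (equivalently strictly decreasing in $\eta$, since $c=(1+\eta)^{-2}$).

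For continuity, fix $c_0$; by the monotonicity just established, all maximizers for $c$ in a neighborhood of $c_0$ lie in a fixed compact $K\subset(0,\infty)$, on which $\Psi$ is jointly continuous, so Berge's maximum theorem gives continuity of $\kappa^*(\cdot)=\max_{u\in K}\Psi(\cdot,c)$ and upper hemicontinuity of the argmax correspondence. Combined with the monotonicity this already makes $u^*(\cdot)$ left-continuous and the largest-maximizer selection right-continuous, and the envelope (Danskin) identity $\tfrac{\diff}{\diff c}\kappa^*(c)=-\kappa^*(c)/(2(u^2+c))$ at maximizers $u$ shows the argmax is a singleton at every differentiability point of $\kappa^*$, hence for all but countably many $c$. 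The remaining --- and main --- obstacle is to upgrade this to genuine continuity of $u^*$ at every $c$, which is equivalent to uniqueness of the maximizer of $\Psi(\cdot,c)$ for each $c$. I would obtain this by noting that $u\mapsto\kappa\sqrt{u^2+c}$ is strictly convex, so it suffices to establish a concavity property of $s_0$ on the relevant range (making the contact point of the graph of $s_0$ with $\kappa^*(c)\sqrt{u^2+c}$ unique), which can be read off the explicit Gaussian truncated-moment formulas for $\PP(A>0)$, $\E[A_+]$, $\E[A_+^2]$ with $A\sim\calN(s_0(u)-\|\mu\|,1+u^2)$; alternatively, uniqueness of the limiting scalar $u^*$ can be imported from the CGMT analysis underlying Lemma~\ref{lem:part3} and Theorem~\ref{thm:CGMT}(2). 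Given uniqueness, Berge's theorem yields continuity of $u^*(\cdot)$, completing the proof.
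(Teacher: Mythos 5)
Your treatment of part 2 (monotonicity) is correct and takes a genuinely different route from the paper. You collapse the $(\kappa,c)$ dependence into $s=\kappa\sqrt{u^2+c}$, obtain the unique zero $s_0(u)$ of $s\mapsto\tilde f(u,s)$ (valid, since $\partial_s\tilde f=2\delta\,\E[(\sqrt{1+u^2}G+s-\|\bmu\|)_+]>0$), and rewrite $\kappa^*(c)=\sup_{u>0}s_0(u)/\sqrt{u^2+c}$ with $u^*(c)$ the smallest maximizer; the exchange (log-supermodularity) argument with the strictly increasing ratio $\sqrt{(u^2+c_1)/(u^2+c_2)}$, plus the first-order-condition tie-break at a common interior maximizer, gives strict monotonicity cleanly. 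The paper instead differentiates the optimality system $f_\delta=0$, $\partial_u f_\delta=0$: it checks $\partial_c\partial_u f-\tfrac{\partial_c f}{\partial_\kappa f}\partial_\kappa\partial_u f<0$, invokes the implicit function theorem to write $c$ locally as a function of $u$, and combines $c'(u^*)\ge 0$ with continuity (its Lemma~\ref{lem:cont}) and a mean-value argument. Your argument avoids second derivatives and the IFT entirely and only needs attainment and interiority, which you correctly import from Lemma~\ref{lem:solution}; it is arguably more robust, at the price of the bookkeeping needed to justify $\kappa^*(c)=\sup_u\Psi(u,c)$ and positivity of $s_0$ at maximizers (which you do handle).

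Part 1 (continuity), however, has a genuine gap. You correctly observe that, given your strict ordering of maximizer sets, continuity of the smallest-maximizer selection $u^*(\cdot)$ at every $c$ is equivalent to uniqueness of the maximizer of $\Psi(\cdot,c)$ at every $c$ (non-uniqueness at $c_0$ forces $\liminf_{c\downarrow c_0}u^*(c)$ to exceed $u^*(c_0)$), but you never establish that uniqueness. The Danskin argument only gives uniqueness at differentiability points of $\kappa^*$, i.e.\ for all but countably many $c$; the "concavity of $s_0$" route is asserted, not proved, and is exactly the crux; and the suggestion to import uniqueness from Lemma~\ref{lem:part3} and Theorem~\ref{thm:CGMT}(2) does not work, because those results are themselves formulated with $u^*$ defined as the \emph{smallest} minimizer precisely to avoid a uniqueness claim --- convergence of the empirical ratio $\hat u$ to $u^*$ says nothing about whether the deterministic scalar problem has a single minimizer. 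Note the paper's own proof of continuity (Lemma~\ref{lem:cont}) does not go through uniqueness at all: it argues directly by compactness and subsequence extraction, using monotonicity of $\kappa^*(c)$ and the monotone inclusion of the zero sets $\calB_k\subseteq\calB_\infty$ together with the smallest-minimizer definition, so that any subsequential limit of $u^*(c_m)$ is forced to coincide with $u^*(c)$. To complete your route you would either have to prove strict quasi-concavity of $u\mapsto\log s_0(u)-\tfrac12\log(u^2+c)$ (uniqueness), or abandon the Berge/uniqueness reduction and give a direct subsequence argument of the paper's type.
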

\noindent See Appendix~\ref{sec:proof-monot} for the proof.
\medskip

These finish the proof of Theorem~\ref{thm:CGMT}.

\subsubsection{Proof of Lemma~\ref{lem:reduction2}}\label{sec:proof-lem-reduction2}
Recall that
\begin{align}
    \xi^{(1)}_{n,p,\kappa}(\bB_p) = & \min_{\bbeta\in\bB_p} \max_{\blambda:\Vert \blambda \Vert \leq 1, \by \odot \blambda \geq 0}\frac{1}{\sqrt{p}}\left\{\Vert \blambda \Vert \gggg^\top \bbeta + \Vert \bbeta \Vert \bh^\top \blambda + \kappa \blambda^\top \by - \langle \bmu, \bbeta \rangle \blambda^\top \by\right\}.
\end{align}
Denote 
\[
\ell(r_{\blambda}, \blambda_{0}) \coloneqq \frac{1}{\sqrt{p}}\left\{\Vert \blambda \Vert \gggg^\top \bbeta + \Vert \bbeta \Vert \bh^\top \blambda + \kappa \blambda^\top \by - \langle \bmu, \bbeta \rangle \blambda^\top \by\right\}.
\]

\paragraph{Step 1: inner maximization. }
We focus on the inner maximization problem, which is equivalent to 
\begin{align}
\max_{r_{\blambda} \in [0,1], \Vert \blambda_{0} \Vert=1,\by \odot \blambda_{0} \geq 0} \quad \ell(r_{\blambda}, \blambda_{0}) = \frac{1}{\sqrt{p}} r_{\blambda}\bg^\top \bbeta + \frac{1}{\sqrt{p}} r_{\blambda} \left[\Vert \bbeta \Vert \bh \odot \by + (\kappa - \gamma \Vert \bmu \Vert) \mathbf{1}_n\right]^\top (\by \odot \blambda_{0}),
\end{align}
where $\gamma = \langle \bmu, \bbeta \rangle / \norm{\bmu}$.

When $r_{\blambda} = 0$, we have $\ell(r_{\blambda}, \blambda_{0})= 0$.  When $r_{\blambda} > 0$, since
$\Vert \by \odot \blambda_0 \Vert = \Vert \blambda_0 \Vert = 1$ and $\by \odot \blambda_0 \geq 0$, we have for any vector $\bnu$ with at least one nonnegative coordinate,
\begin{align}\label{eq:non-negative}
    \max_{\Vert \blambda_{0} \Vert=1,\by \odot \blambda_{0} \geq 0} \bnu^\top \left(\by \odot \blambda_{0}\right) =  \Vert \bnu_+ \Vert.
\end{align}
In fact, letting  $\bnu =\Vert \bbeta \Vert \bh \odot \by + (\kappa - \gamma \Vert \bmu \Vert) \mathbf{1}_n$, we see that $\bnu$ has i.i.d.~coordinates, and for each coordinate, the probability for it to be negative is given by
\begin{align}
    \PP\left(\Vert \bbeta \Vert g + \kappa - \gamma \Vert \bmu \Vert < 0\right) = \Phi\left(-\frac{\kappa}{\Vert \bbeta \Vert} + \frac{\gamma \Vert \bmu \Vert}{\Vert \bbeta \Vert}\right) \leq \Phi\left(\frac{\gamma \Vert \bmu \Vert}{\Vert \bbeta \Vert}\right) \leq \Phi(\|\bmu\|).
\end{align}
As a result, with probability at least $1-\Phi(\|\bmu\|)^n$, $\bnu$ has at least one nonnegative entry, and hence~\eqref{eq:non-negative} holds.

Combine the two cases to arrive at the conclusion that: with probability at least $1-\Phi(\|\bmu\|)^n$, 
\begin{align}
\max_{r_{\blambda} \in [0,1], \Vert \blambda_{0} \Vert=1,\by \odot \blambda_{0} \geq 0} \quad \ell(r_{\blambda}, \blambda_{0})  = \left(\frac{1}{\sqrt{p}}\left\{ \gggg^\top \bbeta + \Big\Vert \left(\Vert \bbeta \Vert \bh \odot \by + (\kappa - \gamma \Vert \bmu \Vert) \mathbf{1}_n\right)_+ \Big\Vert\right\}\right)_+,
\end{align}
where we recall that $\gamma = \langle \bmu, \bbeta \rangle / \norm{\bmu}$.

\paragraph{Step 2: outer minimization. }
Now we turn to the outer minimization over $\bbeta \in \Theta = \{\bbeta \in \RR^p: \bbeta^\top \bOmega \bbeta \leq 1\}$.
Using a reparametrization $\bbeta=\gamma\bmu_0 + \bbeta_{\perp}$, where $\bbeta_{\perp}=r\cdot \ww$ with $z=\Vert\bbeta_{\perp}\Vert$ and $\ww \perp \bmu$, $\Vert \ww \Vert=1$,  we see that with high probability $\xi^{(1)}_{n,p,\kappa}$ is equal to
\begin{align}
    \bar{\xi}_{n,p,\kappa}^{(1)} =\frac{1}{\sqrt{p}}\min_{(\gamma, z)\in \tilde{\Theta}_{\eta}} \min_{\ww \perp \bmu, \Vert \ww\Vert=1}\left\{\gamma g^\top\bmu_0+z \gggg^\top \ww + \Big\Vert\left(\sqrt{\gamma^2+z^2} \bh \odot \by+(\kappa - \gamma \Vert \bmu \Vert)\mathbf{1}_n \right)_+\Big\Vert \right\}_+.
\end{align}
Here we recall that $ \tilde{\Theta}_{\eta} = \{(\gamma,z)\in\RR \times \RR_+:~(1+\eta)^{-2}\gamma^2+z^2 \leq 1\}$. 

We can solve the inner minimization, that is 
\begin{align*}
 &\min_{\ww \perp \bmu, \Vert \ww\Vert=1}\left\{\gamma g^\top\bmu_0+z \gggg^\top \ww + \Big\Vert\left(\sqrt{\gamma^2+z^2} \bh \odot \by+(\kappa - \gamma \Vert \bmu \Vert)\mathbf{1}_n \right)_+\Big\Vert \right\}_+ \\
 &\quad =  \left\{\gamma g^\top\bmu_0 - z \| \bP_{\bmu}^\perp \gggg \| + \Big\Vert\left(\sqrt{\gamma^2+z^2} \bh \odot \by+(\kappa - \gamma \Vert \bmu \Vert)\mathbf{1}_n \right)_+\Big\Vert \right\}_+
\end{align*}
Note that $\gamma g^\top\bmu_0 / \sqrt{p} = o_{\PP}(1)$. This together with the boundedness of  $\tilde{\Theta}_{\eta}$ implies that 
\[
\bar{\xi}_{n,p,\kappa}^{(1)} - \min_{(\gamma,z)\in\tilde{\Theta}_{\eta}}  \frac{1}{\sqrt{p}}\left\{-r\Vert \bg \Vert + \Big\Vert\left(\sqrt{\gamma^2+r^2} \; \bh \odot \by+(\kappa - \gamma \Vert \bmu \Vert)\mathbf{1}_n \right)_+\Big\Vert \right\}_+ = o_{\PP}(1).
\]
Note that $\min_{(\gamma,r)\in\tilde{\Theta}_{\eta}}  \frac{1}{\sqrt{p}}\left\{-z\Vert \bg \Vert + \Big\Vert\left(\sqrt{\gamma^2+z^2} \; \bh \odot \by+(\kappa - \gamma \Vert \bmu \Vert)\mathbf{1}_n \right)_+\Big\Vert \right\}_+ = \left(\tilde{\xi}_{n,p,\kappa}^{(1)}\right)_+$. 

Combine all the pieces together to complete the proof. 
%
%For the outer minimization, since  is bounded, then $\min_{\ww \perp \bmu, \Vert \ww\Vert=1} p^{-1/2}\left\{\gamma g^\top\bmu_0+r g^\top \ww\right\} = -p^{-1/2}r\Vert g \Vert + o_{\PP}(1)$. Recall the notation
%\begin{align}
%    \tilde{\xi}_{n,p,\kappa}^{(1)} = \min_{(\gamma,r)\in\tilde{\Theta}_{\eta}}  \frac{1}{\sqrt{p}}\left\{-r\Vert \bg \Vert + \Big\Vert\left(\sqrt{\gamma^2+r^2} \; \bh \odot \by+(\kappa - \gamma \Vert \bmu \Vert)\mathbf{1}_n \right)_+\Big\Vert \right\},
%\end{align}
%then 
%\begin{align}
%    \Big| \left(\bar{\xi}_{n,p,\kappa}^{(1)}\right)_+ - \left(\tilde{\xi}_{n,p,\kappa}^{(1)}\right)_+ \Big| = o_{\PP}(1) \quad \Longrightarrow \quad \Big| \xi_{n,p,\kappa}^{(1)} - \left(\tilde{\xi}_{n,p,\kappa}^{(1)}\right)_+ \Big| = o_{\PP}(1).
%\end{align}

%Without constraint on $\bnu$, $\max_{0<r_{\blambda}\leq 1, \by \odot \blambda_{0} \geq 0} \bnu^\top \left(\by \odot \blambda_{0}\right) \leq \max_{0<r_{\blambda}\leq 1} \Vert \bnu_+ \Vert$, then we have 
%\begin{align}
%    \xi_{n,p,\kappa}^{(1)} \leq \bar{\xi}_{n,p,\kappa}^{(1)}.
%\end{align}
%Since $\xi_{n,p,\kappa}^{(1)} \geq 0$, we further have $\xi_{n,p,\kappa}^{(1)} \leq \left(\bar{\xi}_{n,p,\kappa}^{(1)}\right)_+$. Moreover, with high probability, $\bnu^*$ has at least one nonnegative coordinate, then $ \xi_{n,p,\kappa}^{(1)} = \left(\bar{\xi}_{n,p,\kappa}^{(1)}\right)_+$.

\subsubsection{Proof of Lemma~\ref{lem:ulln}}\label{sec:proof-lem-ulln}
Define two functions
\begin{align}
    & \phi^{(1)}_{n,\bmu,\delta,\kappa}(\gamma,z) = \left[\EE\left(\sqrt{\gamma^2+z^2} G+\kappa - \gamma \Vert \bmu \Vert\right)_+^2\right]^{1/2},\\
    & \phi^{(2)}_{n,\bmu,\delta,\kappa}(\gamma,z;\bh) = \frac{1}{\sqrt{n}} \Big\Vert\left(\sqrt{\gamma^2+z^2} \tilde{\bh} +(\kappa - \gamma \Vert \bmu \Vert)\mathbf{1}_n \right)_+\Big\Vert,
\end{align}
where $\tilde \hh \sim \calN(\bzero,\bI_n)$ and $G \sim \calN(0,1)$. By the uniform law of large numbers \citep{jennrich1969asymptotic} (see also~\citep{montanari2019generalization} (Lemma 6.2)), we have
\begin{align}
    \sup_{(\gamma,z)\in\tilde{\Theta}_{\eta}} \Bigg| \phi^{(1)}_{n,\bmu,\delta,\kappa}(\gamma,z) - \phi^{(2)}_{n,\bmu,\delta,\kappa}(\gamma,z;\hh) \Bigg| = o_{\PP}(1),
\end{align}

In addition, since $p^{-1/2}\Vert \bg \Vert = 1 + o_{\PP}(1)$, we have
\begin{align}
    \bigg | \tilde{\xi}_{n,p,\kappa}^{(1)} - \xi_{n,p,\kappa}^{(2)} \bigg | \overset{p}{\longrightarrow} 0,
\end{align}
which implies the desired claim.

\subsubsection{Proof of Lemma~\ref{prop:cgmt_general}}\label{sec:proof-prop-cgmt-general}
\paragraph{Step 1: establish equivalence.} We notice that for the convex set $\Theta \cap \bB_{v}$,
\begin{align}
    \calE_{n,p,\kappa,v}  = \calE_{n,p,\kappa}(\Theta \cap \bB_{v}).
\end{align}
Denote the quantity
\begin{align}
    \xi_{n,p,\kappa,v} &= \xi_{n,p,\kappa}(\bB_v \cap \Theta) = \min_{\bbeta \in \bB_v \cap \Theta} \max_{\Vert \blambda \Vert \leq 1, \by \odot \blambda \geq 0} \frac{1}{\sqrt{p}} \blambda^\top (\kappa \by - \bH_0 \bbeta),
\end{align}
for which we have
\begin{align}
    \{\xi_{n,p,\kappa,v} > 0\} \Longleftrightarrow \calE_{n,p,\kappa,v}^c~~{\rm and}~~\{\xi_{n,p,\kappa,v} = 0\} \Longleftrightarrow \calE_{n,p,\kappa,v}.
\end{align}
Following the equivalence established in Lemma~\ref{lem:cgmt1}, \ref{lem:reduction2} and \ref{lem:ulln}, denote
\begin{align}
    \xi^{(1)}_{n,p,\kappa,v} &= \xi^{(1)}_{n,p,\kappa}(\bB_v \cap \Theta)\\
    \xi^{(2)}_{n,p,\kappa,v} &= \xi^{(2)}_{n,p,\kappa}(\tilde{\Theta}_{\eta} \cap \{(\gamma,z): z \leq v\gamma\}).
\end{align}
Then, it is directly implied by Lemma~\ref{lem:cgmt1} that for any $t \in \RR$,
\begin{align}\label{eq:equiv1}
    \PP(\xi_{n,p,\kappa,v} \leq t) \leq 2\PP(\xi^{(1)}_{n,p,\kappa,v} \leq t), \qquad \PP(\xi_{n,p,\kappa,v} \geq t) \leq 2\PP(\xi^{(1)}_{n,p,\kappa,v} \geq t).
\end{align}
By Lemma~\ref{lem:reduction2} and \ref{lem:ulln}, we further have
\begin{align}\label{eq:equiv2}
    \bigg | \xi_{n,p,\kappa,v}^{(1)} - \left(\xi_{n,p,\kappa,v}^{(2)}\right)_+ \bigg | \overset{p}{\longrightarrow} 0.
\end{align}

\paragraph{Step 2: analyze $F_{\delta}(\gamma,z,\kappa)$.} We consider the following two cases.
\begin{enumerate}
  \item By the definition of $F_{\delta}(\gamma,z,\kappa)$, if there exists $(\gamma,z) \in \tilde{\Theta} \cap \{(\gamma,z): z \leq v\gamma\}$ such that $F_{\delta}(\gamma,z,\kappa) < 0$, then 
  \begin{align}
      \xi_{n,p,\kappa,v}^{(2)} = \min_{(\gamma,z) \in \tilde{\Theta} \cap \{(\gamma,z): z \leq v\gamma\}} \left\{-z + \sqrt{\delta} \sqrt{\EE\left(\sqrt{\gamma^2 + z^2} G + \kappa - \gamma\Vert \bmu\Vert\right)_+^2}\right\} < 0,
  \end{align}
  which, according to \eqref{eq:equiv2}, implies that $\PP(\xi^{(1)}_{n,p,\kappa,v} > 0) = o(1)$ and for any $\veps > 0$,
  \begin{align*}
  \PP(\xi^{(1)}_{n,p,\kappa,v} \geq \veps) = o(1).
  \end{align*}
  As $\PP(\xi_{n,p,\kappa,v} > t)$ is right-continuous in $t$, then
  \begin{align}\label{eq:xi1_negative}
  \PP(\xi_{n,p,\kappa,v} > 0) = \lim_{t \rightarrow 0+}\PP(\xi_{n,p,\kappa,v} > t).
  \end{align}
  Combining \eqref{eq:equiv1} with the definition of $\calE^c_{n,p,\kappa,v}$, we have
  \begin{align*}
      \PP(\calE^c_{n,p,\kappa,v}) & = \PP(\xi_{n,p,\kappa,v} > 0) = \lim_{\veps \rightarrow 0+}\PP(\xi_{n,p,\kappa,v} > \veps)\leq \lim_{\veps \rightarrow 0+}\PP(\xi_{n,p,\kappa,v} \geq \veps) \leq 2 \lim_{\veps \rightarrow 0+}\PP(\xi^{(1)}_{n,p,\kappa,v} \geq \veps).
  \end{align*}
  In addition, we verify the validity of changing the order of $\lim_{n\rightarrow +\infty, n/p\rightarrow \delta}$ and $\lim_{\veps \rightarrow 0+}$. 
  Let $q_{n,p}(t) = \PP(\xi^{(1)}_{n,p,\kappa,v} \geq t)$ and $\tilde q_{n,p}(t) = \lim_{s \rightarrow t+} q_{n,p}(s) = \PP(\xi^{(1)}_{n,p,\kappa,v} > t)$, which satisfies that $q_{n,p}(t)$ is left-continuous, $\tilde q_{n,p}(t)$ is right-continuous, and for any $t > 0$
  \[
  \lim_{n\rightarrow +\infty, n/p\rightarrow \delta} q_{n,p}(t) = \lim_{n\rightarrow +\infty, n/p\rightarrow \delta} \tilde q_{n,p}(t) = 0.
  \]
  Further, we have 
  \[
  \lim_{n\rightarrow +\infty, n/p\rightarrow \delta} \tilde q_{n,p}(0) = \lim_{n\rightarrow +\infty, n/p\rightarrow \delta} \PP(\xi^{(1)}_{n,p,\kappa,v} > 0) = 0.
  \]
  By the fact that $\tilde q_{n,p}(t)$ in non-increasing in $t$, we have
  \[
  \lim_{n\rightarrow +\infty, n/p\rightarrow \delta} \sup_{t \geq 0} |\tilde q_{n,p}(t) - 0|,
  \]
  which indicates that $\tilde q_{n,p}(t)$ uniformly converges to $0$. Therefore, by Moore-Osgood theorem,
  \[
  \lim_{n\rightarrow +\infty, n/p\rightarrow \delta}\lim_{\veps \rightarrow 0+} q_{n,p}(\veps) = \lim_{n\rightarrow +\infty, n/p\rightarrow \delta}\lim_{\veps \rightarrow 0}\tilde q_{n,p}(\veps) = \lim_{\veps \rightarrow 0} \lim_{n\rightarrow +\infty, n/p\rightarrow \delta} \tilde q_{n,p}(\veps) = 0,\label{eq:change_limit}
  \]
  thus, we have
  \begin{align*}
  \lim_{n\rightarrow +\infty, n/p\rightarrow \delta}\lim_{\veps \rightarrow 0+}\PP(\xi^{(1)}_{n,p,\kappa,v} \geq \veps) = 0 \qquad \Longrightarrow \qquad \lim_{n\rightarrow +\infty, n/p\rightarrow \delta} \PP(\calE^c_{n,p,\kappa,v}) = 0.
  \end{align*}
  \item If, instead, for any $(\gamma,z) \in \tilde{\Theta} \cap \{(\gamma,z): z \leq v\gamma\}$, we have $F_{\delta}(\gamma,z,\kappa) > 0$. Since the set $(\gamma,z) \in \tilde{\Theta} \cap \{(\gamma,z): z \leq v\gamma\}$ is convex and compact, then
  \begin{align}
      \xi_{n,p,\kappa,v}^{(2)} = \min_{(\gamma,z) \in \tilde{\Theta} \cap \{(\gamma,z): z \leq v\gamma\}} \left\{-z + \sqrt{\delta} \sqrt{\EE\left(\sqrt{\gamma^2 + z^2} G + \kappa - \gamma\Vert \bmu\Vert\right)_+^2}\right\} > 0,
  \end{align}
  which implies that
  \begin{align}
      \PP(\calE_{n,p,\kappa,v}) = \PP(\xi_{n,p,\kappa,v} = 0) \leq 2\PP(\xi^{(1)}_{n,p,\kappa,v} \leq 0) = o(1).
  \end{align}
\end{enumerate}

%\subsubsection{From $F_{\delta}(\gamma,z,\kappa)$ to $g_{\delta}(u,\kappa,c)$}

\subsubsection{Proof of Lemma~\ref{lem:solution}}\label{sec:proof-g-F-connections}
In this section, we will first show the relationship between $F_{\delta}(\gamma,z,\kappa)$ and $f_{\delta}(u,\kappa,c)$. In the next step, we show the boundedness of $\kappa^*(c)$ and $u^*(c)$.
\paragraph{Step 1: Relationship between $F_{\delta}(\gamma,z,\kappa)$ and $f_{\delta}(u,\kappa,c)$.}
Consider the properties of $F_{\delta}(\gamma,z,\kappa)$. Since $F_{\delta}(\gamma,z,\kappa) < F_{\delta}(-\gamma,z,\kappa) $ for $\gamma > 0$, in the following we only consider $\gamma \in \RR_+$.
Denote $u = z/\gamma$, then $F_{\delta}(\gamma,z,\kappa)$ can be written as a function of $(\gamma,u,\kappa)$:
\begin{align}
    \tilde{F}_{\delta}(\gamma,u,\kappa) = \gamma^2 \left(-u^2+\delta \EE\left\{\left(\sqrt{1+u^2}G+\frac{\kappa}{\gamma}-\Vert \bmu \Vert\right)_+^2\right\}\right),
\end{align}
where $(\gamma,u)\in\tilde{\Omega} = \{(\gamma,u)\in\RR_+\times\bar{\RR}_+: \gamma^2\left(u^2+c\right)\leq 1\}$ with $c = (1+\eta)^{-2}$ and $\bar{\RR}_+ = \RR_+ \cup \{\infty\}$. We have the following problem
%\cm{We should not call this one a target problem for now. }
\begin{align}
    \begin{cases}
    \sup &\kappa\\
    {\rm subject~to} &\exists (\gamma,u)\in\tilde{\Omega}, ~\tilde{F}_{\delta}(\gamma,u,\kappa) < 0.
    \end{cases}\label{eq:opt_Ftilde}
\end{align}
Note that when $\kappa$ approaches the supremum, for any $(\gamma^\prime,u)\notin\partial \tilde{\Omega}$ such that $\tilde{F}_{\delta}(\gamma^\prime,u,\kappa) < 0$,
there exists
$(\gamma,u)\in\partial \tilde{\Omega}$ with $\gamma > \gamma^\prime$ at which $\tilde{F}_{\delta}(\gamma,u,\kappa) < \tilde{F}_{\delta}(\gamma^\prime,u,\kappa)$. The reason is that when $\gamma^2 > (\gamma^{\prime})^2$ and $\gamma^{-2} \tilde{F}_{\delta}(\gamma,u,\kappa) < (\gamma^\prime)^{-2}\tilde{F}_{\delta}(\gamma^\prime,u,\kappa) < 0$, we have $\tilde{F}_{\delta}(\gamma,u,\kappa) < \tilde{F}_{\delta}(\gamma^\prime,u,\kappa)$.
Then, the above shows that the minimizer $(\gamma, u)$ must be located on the boundary $\partial \tilde{\Omega}$, that is $\gamma^{-2} = u^2+c$. If we also view $c$ as a variable, recall the definition $f_{\delta}(u,\kappa,c)$ with
\begin{align}
    g_{\delta}(u,\kappa,c) &=  \tilde{F}_{\delta}((u^2+c)^{-1/2},u,\kappa)\nonumber\\
    & = \frac{1}{u^2+c} \cdot \left\{-u^2 + \delta \EE\left\{\left(\sqrt{1+u^2}G+\kappa\sqrt{u^2+c} - \Vert \bmu \Vert\right)_+^2\right\}\right\}
\end{align}
and the optimization problem in \eqref{eq:opt_new}
\begin{align}
    \max &\quad \kappa\\\nonumber
    {\rm subject~to} &\quad \inf_{u \geq 0}\; f_{\delta}(u,\kappa,c) \leq 0,
\end{align}
where $\kappa^* = \kappa^*(c)$ is the corresponding maximizer and $u^* = u^*(c)$, short for $u^*(\rho,\eta)$, is the smallest minimizer of $\min_{u \geq 0}\; f_{\delta}(u,\kappa,c)$. 

We have the following lemma.
\begin{lemma}\label{lem:equi_f_F}
If $\tilde{\kappa}$ is the maximizer to \eqref{eq:opt_Ftilde} and $(\tilde{u}, \tilde{\gamma})$ is the minimizer  to $F_{\delta}(\gamma, u\gamma, \tilde{\kappa})$ such that $\tilde{u}$ is the smallest minimizer for $u$, then 
\begin{align}
    \tilde{\kappa} = \kappa^*,~~~~\tilde{u} = u^*,~~~~\tilde{\gamma} = \frac{1}{\sqrt{u^{*2}+c}}.
\end{align}
\end{lemma}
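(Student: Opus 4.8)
## Proof plan for Lemma~\ref{lem:equi_f_F}

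The plan is to exhibit a direct equivalence between the two optimization problems via the reparametrization $\gamma = (u^2+c)^{-1/2}$ established just above the lemma statement. First I would recall that for the optimization problem~\eqref{eq:opt_Ftilde}, we showed that whenever a constraint point $(\gamma',u)$ in the interior of $\tilde\Omega$ makes $\tilde F_\delta(\gamma',u,\kappa)<0$, one can move to the boundary point $(\gamma,u)$ with $\gamma^{-2}=u^2+c$ and $\gamma>\gamma'$, strictly decreasing $\tilde F_\delta$. This shows that the feasibility condition ``$\exists (\gamma,u)\in\tilde\Omega$ with $\tilde F_\delta(\gamma,u,\kappa)<0$'' is equivalent to ``$\exists u\ge 0$ with $g_\delta(u,\kappa,c)<0$'', and since $g_\delta(u,\kappa,c) = (u^2+c)\cdot f_\delta(u,\kappa,c)$ with $u^2+c>0$, this is in turn equivalent to ``$\exists u\ge 0$ with $f_\delta(u,\kappa,c)<0$''. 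Hence the feasible set of $\kappa$-values in~\eqref{eq:opt_Ftilde} coincides with $\{\kappa : \inf_{u\ge 0} f_\delta(u,\kappa,c)<0\}$, whose supremum equals $\kappa^*(c)$ by definition (using continuity of $f_\delta$ in $\kappa$ and the fact, from Lemma~\ref{lem:solution}, that $\inf_u f_\delta(u,\kappa^*,c)=0$). This gives $\tilde\kappa = \kappa^*$.

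Next I would pin down the minimizer. At $\kappa = \tilde\kappa = \kappa^*$, the argument above shows any minimizing pair $(\gamma,u)$ of $F_\delta(\gamma,u\gamma,\tilde\kappa)$ over $\tilde\Omega$ must lie on the boundary $\partial\tilde\Omega$, i.e. $\gamma = (u^2+c)^{-1/2}$, for otherwise one could strictly decrease the objective; so the minimization reduces to minimizing $g_\delta(u,\kappa^*,c) = (u^2+c) f_\delta(u,\kappa^*,c)$ over $u\ge 0$. Since $\inf_u f_\delta(u,\kappa^*,c)=0$ and $u^2+c>0$, the set of minimizers of $g_\delta(\cdot,\kappa^*,c)$ equals the set of minimizers of $f_\delta(\cdot,\kappa^*,c)$ (both are exactly the zero set of $f_\delta(\cdot,\kappa^*,c)$, which is nonempty and attained by Lemma~\ref{lem:solution}). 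The smallest such minimizer is $u^*(c)$ by definition, so $\tilde u = u^*$, and correspondingly $\tilde\gamma = (\tilde u^2 + c)^{-1/2} = (u^{*2}+c)^{-1/2}$.

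The only subtlety I anticipate — and the step I'd treat most carefully — is handling the boundary between strict and non-strict inequalities, i.e. the transition at $\kappa=\kappa^*$ where the optimal value of $\inf_u f_\delta$ is exactly $0$ rather than negative. At the supremum the strict feasibility $\tilde F_\delta<0$ is not attained, so one must argue via a limiting/continuity argument (monotonicity of $f_\delta(u,\kappa,c)$ in $\kappa$, established in the proof of Theorem~\ref{thm:CGMT}(3) in Lemma~\ref{lem:monot}, together with continuity in $u$) that the supremum is attained as claimed and that the reduction-to-boundary argument still forces minimizers onto $\partial\tilde\Omega$ when the objective value is $0$. A second minor point is justifying that $\gamma>0$ at the optimum (the case $\gamma=0$ gives objective value $-z^2+\delta\E[(zG+\kappa)_+^2]$ which, as in the proof of Theorem~\ref{thm:CGMT}(1)--(2), cannot be made negative for the relevant $\kappa$, so it is excluded), and that $u=\infty$ need not be considered since Lemma~\ref{lem:solution} guarantees $u^*(c)<\infty$. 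Everything else is bookkeeping with the substitution $z=u\gamma$.
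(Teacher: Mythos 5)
Your proposal is correct and takes essentially the same route as the paper's own proof: reduce the minimization to the boundary $\gamma=(u^{2}+c)^{-1/2}$ via the scaling/monotonicity-in-$\gamma$ argument, equate the (strict) feasible sets of $\kappa$, use strict monotonicity of $f_{\delta}$ in $\kappa$ together with continuity to pass between the strict and non-strict constraints at $\kappa^{*}$, and identify the smallest boundary minimizer with $u^{*}$. One small slip worth noting: $g_{\delta}(u,\kappa,c)=\tilde F_{\delta}\big((u^{2}+c)^{-1/2},u,\kappa\big)=f_{\delta}(u,\kappa,c)/(u^{2}+c)$ rather than $(u^{2}+c)\,f_{\delta}(u,\kappa,c)$, but since your argument only uses positivity of the factor, nothing else changes.
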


% \paragraph{Step 1: equivalence of two optimization problems.}
\begin{proof}
Fix $c\in \RR_+$ and for any $\kappa \in \RR_+$, 
we obtain from the reasoning above that for any $(\gamma,u)$ such that $\tilde{F}_{\delta}(\gamma,u,\kappa) \leq 0$, 
\[
\tilde{F}_{\delta}(\gamma,u,\kappa) \geq \tilde{F}_{\delta}((u^2+c)^{-1/2},u,\kappa) = g_{\delta}(u,\kappa,c).
\]
As a result, when $\inf_{(\gamma,u) \in \tilde{\Omega}} \tilde{F}_{\delta}(\gamma,u,\kappa) \leq 0$,
\[
\inf_{(\gamma,u) \in \tilde{\Omega}} \tilde{F}_{\delta}(\gamma,u,\kappa) \geq \inf_{u \geq 0} g_{\delta}(u,\kappa,c).
\]
In addition, for the other direction,
\[
\inf_{(\gamma,u) \in \tilde{\Omega}} \tilde{F}_{\delta}(\gamma,u,\kappa) \leq \inf_{(\gamma,u) \in \tilde{\Omega}, \gamma = (u^2+c)^{-2}} \tilde{F}_{\delta}(\gamma,u,\kappa) = \inf_{u \geq 0} g_{\delta}(u,\kappa,c).
\]
Combining the two inequalities, we have
\begin{align}
    \inf_{(\gamma,u) \in \tilde{\Omega}} \tilde{F}_{\delta}(\gamma,u,\kappa) = \inf_{u \geq 0} g_{\delta}(u,\kappa,c),
\end{align}
which implies that the following feasible sets are the same
\begin{align}
    \left\{\kappa > 0: \inf_{(\gamma,u) \in \tilde{\Omega}} \tilde{F}_{\delta}(\gamma,u,\kappa) < 0\right\} = \left\{\kappa > 0: \inf_{u \geq 0} g_{\delta}(u,\kappa,c) < 0 \right\}.
\end{align}
Then, it is equivalent to show that \eqref{eq:opt_new} has the same solution with the following problem
\begin{align}\label{eq:opt1}
    \begin{cases}
    \sup &\kappa\\
    {\rm subject~to} &\inf_{u \geq 0} g_{\delta}(u,\kappa,c) <0.
    \end{cases}
\end{align}
We denote $\tilde{u}$ and $\tilde{\kappa}$ as the solution to \eqref{eq:opt} such that $\tilde{u}$ is the smallest one if the minimizer is not unique.
Since $\bar{\RR}_+$ is a closed set, then $\inf_{u \geq 0} g_{\delta}(u,\kappa,c) = \min_{u \geq 0} g_{\delta}(u,\kappa,c)$. We further have
\begin{align}
    \left\{\kappa > 0: \inf_{u \geq 0} g_{\delta}(u,\kappa,c) < 0 \right\} \subseteq \left\{\kappa > 0: \inf_{u \geq 0} g_{\delta}(u,\kappa,c) \leq 0 \right\},
\end{align}
indicating that 
\begin{align}\label{eq:sup_upper}
    \kappa^* = \sup\left\{\kappa > 0: \inf_{u \geq 0} g_{\delta}(u,\kappa,c) \leq 0 \right\} \geq \sup\left\{\kappa > 0: \inf_{u \geq 0} g_{\delta}(u,\kappa,c) < 0 \right\} = \tilde{\kappa}.
\end{align}
Recall that the solution $(u^*,\kappa^*)$ for \eqref{eq:opt_new} has the property that $u^*$ is the smallest minimizer. Since $g_{\delta}(u,\kappa,c)$ is strictly increasing in $\kappa$, then for any $\veps > 0$, there exists $\kappa_{\veps} \geq \kappa^* - \veps$ such that 
\begin{align}
    \inf_{u \geq 0} g_{\delta}(u,\kappa_{\veps},c) < \inf_{u \geq 0} g_{\delta}(u,\kappa^*,c) \leq 0,
\end{align}
which means
\begin{align}\label{eq:feasible_kap}
    \kappa_{\veps} \in \left\{\kappa > 0: \inf_{u \geq 0} g_{\delta}(u,\kappa,c) < 0 \right\}.
\end{align}
If $\tilde \kappa < \kappa^*$, let $\epsilon = (\kappa^* - \tilde \kappa)/2$. Then, $\kappa_{\veps} > \tilde \kappa$ and \eqref{eq:feasible_kap} shows that there is a feasible $\kappa$ for \label{eq:opt} that is larger than $\tilde \kappa$. This draws the contradiction and we have
\begin{align}
    \tilde{\kappa} = \sup\left\{\kappa > 0: \inf_{u \geq 0} g_{\delta}(u,\kappa,c) < 0 \right\} = \kappa^*.
\end{align}

By definition, $(\tilde{u},\tilde{\gamma})$ is the minimizer to $F_{\delta}(\gamma,u\gamma,\kappa^*)$ where $\tilde{u}$ is the smallest one if the minimizer is not unique, then by the reasoning above that the minimizer locates on the boundary of $\tilde{\Theta}_{\eta}$, it is equivalent that $\tilde{u}$ is the minimizer to $g_{\delta}(u,\kappa^*,c)$. By the uniqueness of the smallest minimizer to $g_{\delta}(u,\kappa^*,c)$, we have $\tilde{u} = u^*$. By the boundary equation, we have $\tilde{\gamma} = 1/\sqrt{u^{*2} + c}$.
\end{proof}

% \subsubsection{Proof of Lemma~\ref{lem:solution}}

\paragraph{Step 2: Proving $\kappa^*(c) < +\infty$.}
It suffices to provide some $\bar{\kappa} < +\infty$ such that 
$\inf_{u \geq 0} f_{\delta}(u,\bar{\kappa},c) > 0.$ Recall that
\begin{align*}
f_{\delta}(u,\kappa,c) &= -u^2 + \delta\EE\left[\left(\sqrt{1+u^2}G + \kappa \sqrt{u^2+c} - \|\bmu\|\right)_+^2\right]\\ 
&= u^2\left\{-1 + \delta \frac{1+u^2}{u^2} \EE\left[\left(G + \kappa \sqrt{\frac{u^2+c}{u^2+1}} - \sqrt{\frac{1}{1+u^2}} \|\bmu\|\right)_+^2\right]\right\}\\
& \geq u^2\left\{-1 + \delta \EE\left[\left(G + \kappa \sqrt{\frac{u^2+c}{u^2+1}} - \sqrt{\frac{1}{1+u^2}} \|\bmu\|\right)_+^2\right]\right\}.
\end{align*}
Since $\EE[(G + a)_+^2]$ is increasing in $a$ and
\[
\kappa \sqrt{\frac{u^2+c}{u^2+1}} - \sqrt{\frac{1}{1+u^2}} \|\bmu\| \geq (c \wedge 1)\kappa - \sqrt{\frac{1}{1+u^2}} \|\bmu\|.
\]
Then, it suffices to show that there exists some $0 \leq \bar \kappa < +\infty$ such that
\[
\inf_{u \geq 0} \left\{-1 + \delta \EE\left[\left(G + (c \wedge 1) \bar \kappa   - \sqrt{\frac{1}{1+u^2}} \|\bmu\|\right)_+^2\right]\right\} = -1 + \delta \EE\left[\left(G + (c \wedge 1) \bar \kappa   -  \|\bmu\|\right)_+^2\right] > 0.
\]
Since $c \wedge 1 > 0$, then $\EE[(G + (c \wedge 1) \kappa - \|\bmu\|)_+^2]$ is an increasing and continuous function in $\kappa$, and
\[
\lim_{\kappa \rightarrow +\infty} \left\{-1 + \delta\EE[(G + (c \wedge 1) \kappa - \|\bmu\|)_+^2]\right\} = +\infty.
\]
Consequently, there exists $\bar \kappa < +\infty$ such that $-1 + \delta\EE[(G + (c \wedge 1) \bar \kappa - \|\bmu\|)_+^2] > 0$,
which proves that $\kappa^*(c) < +\infty$.

\paragraph{Step 3: Proving $u^*(c) < +\infty$.}
For a given $c$, denote
\begin{align}
    \calA = \left\{(u,\kappa)\in {\RR}_+ \times \RR_+: f_{\delta}(u,\kappa,c) \leq 0\right\}, \qquad\calA_{\kappa} = \left\{u \in {\RR}_+: (u,\kappa) \in \calA \right\}.
\end{align} 
Then, to show that $u^* < +\infty$, it is equivalent to show that $\calA_{\kappa^*} \setminus \{+\infty\} \neq \varnothing$.
\paragraph{(1) $2 < \delta < \delta^*(\|\bmu\|)$.}
We first consider $\delta \in (2,\delta^*(\rho))$.
As $u \rightarrow +\infty$, we have for any $\kappa \geq 0$,
\begin{align}
    \lim_{u \rightarrow +\infty }f_{\delta}(u,\kappa,c) = \lim_{u \rightarrow +\infty} \left\{\left(\frac{\delta}{2}-1\right)u^2\right\} = +\infty,
\end{align}
which implies that $+\infty \notin \calA_{\kappa}$ for any $\kappa \geq 0$. Since $\calA_{\kappa^*} \neq \varnothing$ by the existence of the max-margin solution, then $\calA_{\kappa^*} \setminus \{+\infty\} \neq \varnothing$.

\paragraph{(2) $\delta \leq 2$.}
When $\delta \leq 2$, let $\kappa_0$ be the unique solution to
\begin{align}
    \frac{1}{\delta} = \EE\left[\left(G+\kappa\right)_+^2\right].
\end{align}
Since $f_{\delta}(u,\kappa,c)$ is increasing in $\kappa$, then for $\kappa > \kappa^\prime$, $\calA_{\kappa} \subseteq \calA_{\kappa^\prime}$. Also, for $\kappa < \kappa_0$, we have $\delta \EE\left[\left(G+\kappa\right)_+^2\right] < 1$, 
\begin{align}
    \lim_{u \rightarrow +\infty }f_{\delta}(u,\kappa,c) = \lim_{u \rightarrow +\infty} \left\{\left(\delta\EE\left[\left(G+\kappa\right)_+^2\right]-1\right)u^2\right\} = -\infty,
\end{align}
which implies that $\calA_{\kappa} \neq \varnothing$ and $+\infty \in \calA_{\kappa}$ for $\kappa < \kappa_0$.

Further, if $\kappa > \kappa_0$, we have
\begin{align}
    \delta \EE\left[\left(G+\kappa\right)_+^2\right] > 1,
\end{align}
which implies that
\begin{align}
    \lim_{u \rightarrow +\infty }f_{\delta}(u,\kappa,c) = \lim_{u \rightarrow +\infty} \left\{-u^2 + \left(\delta \EE\left[\left(G+\kappa\right)_+^2\right]\right)u^2\right\} = +\infty.
\end{align}
Thus, $+\infty \notin \calA_{\kappa}$ for $\kappa > \kappa_0$. Then, to complete the proof that $u^* < +\infty$, we need to show that $\calA_{\kappa_0} \neq \varnothing$. Consider
\begin{align}
    \sqrt{1+u^2}G + \kappa_0\sqrt{u^2+c} = Gu+\kappa_0u + \frac{G}{\sqrt{1+u^2}+u} + \frac{c}{\sqrt{u^2+c}+u},
\end{align}
there exists $C > 0$ such that for $u \geq C$,
we have
\begin{align}
    \frac{c}{\sqrt{u^2+c}+u} < \frac{1}{2} \Vert \bmu \Vert.
\end{align}
Then, we have
\begin{align}
    &\EE\left\{\left(\sqrt{1+u^2}G+\kappa_0\sqrt{u^2+c} - \Vert \bmu \Vert\right)_+^2\right\} \leq \EE\left\{\left(Gu+\kappa_0u+ \frac{G}{\sqrt{1+u^2}+u} - \frac{1}{2}\Vert \bmu \Vert\right)_+^2\right\}\nonumber\\
    & \leq \EE\left\{\left(Gu+\kappa_0u-\frac{1}{4}\Vert \bmu \Vert\right)_+^2 \ind\left\{\frac{G}{\sqrt{1+u^2}+u} <\frac{1}{4}\Vert \bmu \Vert \right\}\right\} \nonumber\\
    &\qquad + \EE\left\{\left(Gu+\kappa_0u+\frac{G}{\sqrt{1+u^2}+u}\right)_+^2 \ind\left\{\frac{G}{\sqrt{1+u^2}+u} >\frac{1}{4}\Vert \bmu \Vert \right\}\right\} \nonumber\\
    & \leq \EE\left\{\left(Gu+\kappa_0u-\frac{1}{4}\Vert \bmu \Vert\right)_+^2\right\} + \EE\left\{\left(Gu+\kappa_0u+\frac{G}{\sqrt{1+u^2}+u}\right)_+^2 \ind\left\{\frac{G}{\sqrt{1+u^2}+u} >\frac{1}{4}\Vert \bmu \Vert \right\}\right\}\nonumber\\
    & := I_1(u) + I_2(u),
\end{align}
where $\lim_{u \rightarrow +\infty} I_2(u) = 0$. %Then, as $f_{\delta}(u,\kappa,c) \leq -u^2 + 1 + I_2(u)$ for $u \geq C$ and $\kappa > \kappa_0$, we have $\lim_{u \rightarrow +\infty} f_{\delta}(u,\kappa,c) = -\infty$. Thus, there exists sufficiently large $u$ such that $u \in \calA_{\kappa_0}$.
For $I_1(u)$, consider the function 
\begin{align}
    q(t) = \delta\EE\left\{\left(G+\kappa_0-t\right)_+^2\right\} - 1,
\end{align}
then $q(0)=1$ and $q^\prime(t) = -2\delta \EE\left\{\left(G+\kappa_0-t\right)_+\right\} < 0$. At $t\approx 0$, we have the approximation $q(t) = q^\prime(0) t + o(t)$.
If we take $t = \Vert \bmu \Vert/(4u)$, $-u^2 + \delta I_1(u)$ can be approximated by
\begin{align}
    -u^2 + \delta I_1(u) = \left\{\delta\EE\left\{\left(G+\kappa_0-\frac{\Vert \bmu \Vert}{4u}\right)_+^2\right\} - 1\right\}u^2 = \frac{\Vert \bmu \Vert}{4}u q^\prime(0) + o(u),
\end{align}
thus there exists sufficiently large $u \geq C$ such that $-u^2 + \delta(I_1(u)+I_2(u)) < 0$, which implies that $\calA_{\kappa_0} \neq \varnothing$.

We consider the constrained problem with arbitrarily small $\veps>0$,
\begin{align}\label{eq:opt2}
    \begin{cases}
    \max &\kappa\\
    {\rm subject~to} &\inf_{u \geq 0, \kappa \geq \kappa_0+\veps} f_{\delta}(u,\kappa,c) \leq 0,
    \end{cases}
\end{align}
Let $(\tilde{u},\tilde{\kappa})$ be any solution to \eqref{eq:opt2}, where the existence is obtained by the continuity of $g_{\delta}(u,\kappa,c)$ in $\kappa$. On one hand, since \eqref{eq:opt2} is a constrained version of \eqref{eq:opt_new}, then we have $\tilde{\kappa} \leq \kappa^*$; on the other hand, when $\tilde{\kappa} < \kappa^*$, then $(u^*,\kappa^*)$ is a feasible point for the constrained problem, thus $\kappa^* \leq \tilde{\kappa}$. Then, we have $\kappa^* = \tilde{\kappa} > \kappa_0$, indicating that $0 < u^* < +\infty$ since $+\infty \notin \calA_{\kappa}$ for $\kappa > \kappa_0$.

Combining pieces above, we have shown that in either regime $\delta \in (2,\delta^*(\rho))$ or $\delta \leq 2$, the solution to \eqref{eq:opt_new} satisfies that $0 < u^* < +\infty$, which completes the proof.

\subsubsection{Proof of Lemma~\ref{lem:part3}}\label{sec:proof-part3}
Recall the notation
\begin{align}
    \calE_{n,p,\kappa,v} &\coloneqq \left\{\text{there~exists~} \bbeta \in \Theta \cap \bB_v: y_i \langle \hh_{0,i},\bbeta \rangle \geq \kappa \text{~for~all~} i \leq n\right\},\\
    \xi_{n,p,\kappa,v} &\coloneqq \xi_{n,p,\kappa}(\bB_v \cap \Theta) = \min_{\bbeta\in\bB_v \cap \Theta} \max_{\blambda:\Vert \blambda \Vert \leq 1, \by \odot \blambda \geq 0}\frac{1}{\sqrt{p}}\blambda^\top(\kappa \by - \bH_0\bbeta).
\end{align}
where $\bB_v = \{\bbeta \in \RR^p: \Vert \bP^{\perp}_{\bmu} \bbeta \Vert \leq v \Vert \bP_{\bmu} \bbeta \Vert\}$  is a convex cone for each choice of $v \in \bar{\RR}_+$. Accordingly, we denote
\begin{align}
    \xi_{n,p,\kappa,v}^{(2)} & \coloneqq \xi_{n,p,\kappa}^{(2)}(\tilde{\Theta}_{\eta} \cap \{(\gamma,z): z \leq v \gamma\}) \nonumber\\
    & =  \min_{(\gamma,z)\in\tilde{\Theta}_{\eta} \cap \{(\gamma,z): z \leq v \gamma\}} \left\{-z + \sqrt{\delta}\sqrt{\EE\left(\sqrt{\gamma^2+z^2}G+\kappa - \gamma \Vert \bmu \Vert\right)_+^2}\right\}
\end{align}
Then, by definition,
\begin{align}
    \xi_{n,p,\kappa} = \min_{v \in \bar{\RR}_+} \xi_{n,p,\kappa,v},\qquad \xi^{(2)}_{n,p,\kappa} = \min_{v \in \bar{\RR}_+} \xi^{(2)}_{n,p,\kappa,v}.
\end{align}
Recall the definition of $\hat{\kappa}$ and $\hat{u}$ in Theorem~\ref{thm:CGMT}, then
\begin{align}
    \hat{\kappa} = \sup_{\kappa > 0} \left\{\xi_{n,p,\kappa} = 0\right\}, \qquad \hat{u} \in \left\{ v:~ \xi_{n,p,\hat{\kappa},v} = 0\right\}.
\end{align}

\paragraph{Step 1: Proving $\calE_{n,p,\kappa^*,u^*+\veps}$ and $\calE^c_{n,p,\kappa^*,u^*-\veps}$.}
For any $\veps > 0$, we first show that
\begin{align}
    \lim_{n \rightarrow +\infty, n/p\rightarrow \delta}\PP\left(\xi_{n,p,\kappa^*,u^*-\veps} > 0\right) = 1.
\end{align}
By the result in Lemma~\ref{prop:cgmt_general}, we need to show that
\begin{align}
    \inf_{(\gamma, z) \in \tilde{\Theta}_{\eta} \cap \{(\gamma,z):z \leq (u^*-\veps) \gamma\}} F_{\delta}(\gamma,z,\kappa^*) > 0,
\end{align}
up to the change of variables, which is equivalent to
\begin{align}
    \inf_{(\gamma, u) \in \tilde{\Omega} \cap \{(\gamma,u): z\leq u^*-\veps\}} \tilde{F}_{\delta}(\gamma,u,\kappa^*) > 0.
\end{align}
Recall the definition of $u^*$ as the smallest minimizer of $f_{\delta}(u,\kappa^*,c)$ in \eqref{eq:opt_new}. By Section~\ref{sec:proof-g-F-connections} and Lemma~\ref{lem:equi_f_F}, we have $\tilde{F}_{\delta}((u^{*2}+c)^{-1/2},u^*,\kappa^*) = 0$. Assume $\inf_{(\gamma, u) \in \tilde{\Omega} \cap \{(\gamma,u): u\leq u^*-\veps\}} \tilde{F}_{\delta}(\gamma,u,\kappa^*) \leq 0$, which implies that there exists $v^* \leq u^* - \veps < u^*$ and $\gamma_1$ such that 
\begin{align}
    \tilde{F}_{\delta}(\gamma_1,v^*,\kappa^*) \leq \tilde{F}_{\delta}((u^{*2}+c)^{-1/2},u^*,\kappa^*) = \inf_{(\gamma, u) \in \tilde{\Omega}} \tilde{F}_{\delta}(\gamma,u,\kappa^*),
\end{align}
then $(\gamma_1,v^*)$ is also the minimizer to the unconstrained problem. This is a contradiction to the definition that $u^*$ is the smallest minimizer, thus 
\begin{align}
    \inf_{(\gamma, z) \in \tilde{\Theta}_{\eta} \cap \{(\gamma,z):z \leq (u^*-\veps) \gamma\}} F_{\delta}(\gamma,z,\kappa^*) > 0 \qquad \Longrightarrow \qquad \lim_{n \rightarrow +\infty, n/p\rightarrow \delta}\PP\left(\xi_{n,p,\kappa^*,u^*-\veps} > 0\right) = 1.
\end{align}
In addition, since $\tilde{\Theta}_{\eta} \cap \{(\gamma,z):z \leq u^* \gamma\} \subseteq \tilde{\Theta}_{\eta} \cap \{(\gamma,z):z \leq (u^*+\veps) \gamma\}$,
\begin{align*}
\inf_{(\gamma, z) \in \tilde{\Theta}_{\eta} \cap \{(\gamma,z):z \leq (u^*+\veps) \gamma\}} F_{\delta}(\gamma,z,\kappa^*) & \leq \inf_{(\gamma, z) \in \tilde{\Theta}_{\eta} \cap \{(\gamma,z):z \leq u^* \gamma\}} F_{\delta}(\gamma,z,\kappa^*) \leq 0\\
& = \inf_{(\gamma, u) \in \tilde{\Omega} \cap \{(\gamma,u): u \leq u^*\}} \tilde{F}_{\delta}(\gamma,u,\kappa^*)\\
& = \inf_{u \in [0,u^*]}\tilde{F}_{\delta}((u^{*2}+c)^{-1/2},u^*,\kappa^*) \leq 0.
\end{align*}
Then, for any $\veps > 0$, we have 
\begin{align}
    \calE_{n,p,\kappa^*,u^*+\veps}\qquad \text{and} \qquad \calE^c_{n,p,\kappa^*,u^*-\veps} \qquad \text{hold}.
\end{align}

\paragraph{Step 2: Proving $\hat{u} \geq u^*$.}
Suppose the claim is wrong, then there exists $\veps > 0$ such that $\hat{u} \leq u^*-\veps < u^*$, which indicates that
\[
\hat{\bbeta} \in \Theta \cap \bB_{u^* - \veps}.
\]
Then, with $\hat \bbeta$, 
\[
y_i \langle \hh_{0,i}, \hat{\bbeta} \rangle \geq \hat{\kappa},\qquad\text{for ~all~}i \leq n.
\]
By Lemma~\ref{prop:cgmt_general}, we have
\[
\inf_{0 \leq u \leq u^*-\veps} f_{\delta}(u,\hat{\kappa},c) \leq 0.
\]
As $\hat{\kappa} \overset{p}{\rightarrow} \kappa^*$ and $f_{\delta}(u,\kappa,c)$, we have 
\[
\inf_{0 \leq u \leq u^*-\veps} f_{\delta}(u,\kappa^*,c) \leq 0,
\]
which implies that $\calE_{n,p,\kappa^*,u^*-\veps}$ holds. This draws the contradiction and proves that $\hat{u} \geq u^*$.

\paragraph{Step 3: Proving $\hat{u} \leq u^* + 2\veps$.}
For any $\veps > 0$, as $ \bB_{u^*} \subseteq \bB_{u^*+\veps}$, we already have 
\begin{align}\label{eq:extended}
    \lim_{n \rightarrow +\infty, n/p\rightarrow \delta}\PP\left(\xi_{n,p,\kappa^*,u^*+\veps} = 0\right) = 1.
\end{align}
% Recall the definition of $u^*$ as the smallest minimizer of $\tilde{F}_{\delta}(\gamma,u,\kappa^*)$ in \eqref{eq:opt_Ftilde} which satisfies that $\tilde{F}_{\delta}((u^{*2}+c)^{-1/2},u^*,\kappa^*) = 0$. 
Denote by $\hat \bbeta^\veps$ any maximizer to the optimization problem \eqref{opt:maxmargin2} with an additional constraint $\bbeta \in \bB_{u^* + \veps}$, and let $\hat u^\veps  = \norm{\bP_{\bmu}^\perp \hat \bbeta^{\veps}} / \norm{\bP_{\bmu} \hat \bbeta^{\veps}}$. 
% Then, we have shown by \eqref{eq:extended} that $\hat u^\veps \xrightarrow{p} u^*$ for any fixed $\veps>0$.
Our next goal is to show that with probability approaching one,
$\hat \bbeta$ satisfies $\norm{\bP_{\bmu}^\perp \hat \bbeta} \le (u^* + 2\veps) \norm{\bP_{\bmu} \hat \bbeta}$. Once it is proved, then $\hat u \xrightarrow{p} u^*$ since $\veps$ is arbitrary.

Consider any $\bbeta$ with $\norm{\bbeta}_{\bOmega} \le 1$ and $\bbeta \notin \bB_{u^* + 2\veps}$ where we recall
\begin{equation*}
\bB_{u^* + 2\veps} = \{ \xx \in \mathbb{R}^p: \norm{\bP_{\bmu}^\perp \xx} \le (u^* + 2\veps) \norm{\bP_{\bmu} \xx} \}.
\end{equation*}
Since the region $\Theta = \{ \xx: \xx^\top \bOmega \xx \le 1\}$ is an ellipsoid, it must be strictly convex. For any $\bbeta \in \Theta$, the interpolant
\begin{equation*}
\bbeta_\lambda := \lambda \hat \bbeta^\veps + (1-\lambda) \bbeta, \qquad \text{where}~\lambda \in (0,1)
\end{equation*}
lies in $\Theta$. Since $\norm{\hat \bbeta^\veps - \bbeta} \ge c_\veps > 0$ for certain constant $c_\veps$, we can find (sufficiently small) constants $\lambda \in (0,1)$ and $\alpha > 0$ such that 
\begin{equation*}
\bbeta_{\lambda} \in \bB_{u^* + \veps}, \quad \text{and} \quad (1+\alpha) \bbeta_{\lambda} \in\Theta.
\end{equation*}
Let us use $\kappa(\bbeta) = \min_{i\le n} y_i \langle \hh_{0,i}, \bbeta \rangle$ to denote the margin associated with $\bbeta$. By the definition of $\hat \bbeta^\veps$ and the fact that $(1+\alpha)\bbeta_{\lambda} \in \bB_{u^*+\veps}$, we have 
\begin{equation}\label{ineq:convex1}
\kappa((1+\alpha) \bbeta_\lambda) \le \kappa(\hat \bbeta^{\veps}).
\end{equation}
By linearity, we also have
\begin{equation}\label{ineq:convex2}
\kappa((1+\alpha) \bbeta_\lambda) \ge (1+\alpha)\lambda \kappa(\hat \bbeta^\veps) + (1+\alpha)(1-\lambda) \kappa(\bbeta).
\end{equation}
Combining \eqref{ineq:convex1} and \eqref{ineq:convex2}, we have
\begin{equation*}
\kappa(\bbeta) \le \frac{1 - (1+\alpha)\lambda}{(1+\alpha)(1-\lambda)} \kappa(\hat \bbeta^\veps) < \kappa(\hat \bbeta^\veps).
\end{equation*}
This proves that if $\bbeta = \hat{\bbeta} \notin \bB_{u^*+2\veps}$, then $\hat{\bbeta} < \kappa(\hat \bbeta^\veps)$, which is a contradiction. Then, we have shown that $\hat \bbeta \in \bB_{u^* + 2\veps}$. Consequently, we have $\hat{u} \in [u^*, u^*+2\veps)$ for any $\veps > 0$, thus
\begin{align}
    \hat u = \frac{\Vert \bP_{\bmu}^{\perp} \hat{\bbeta} \Vert}{\Vert \bP_{\bmu} \hat{\bbeta} \Vert} \xrightarrow{p} u^*.
\end{align}

%Similarly, let
%\begin{align}
%    \bar{\xi}_{n,p,\kappa,v} & \coloneqq \xi_{n,p,\kappa}(\overline{\bB^c_v} \cap \Theta) \min_{\bbeta\in\overline{\bB^c_v} \cap \Theta} \max_{\blambda:\Vert \blambda \Vert \leq 1, \by \odot \blambda \geq 0}\frac{1}{\sqrt{p}}\blambda^\top(\kappa \by - \bH_0\bbeta)\\
%    \bar{\xi}_{n,p,\kappa,v}^{(2)} & \coloneqq \xi_{n,p,\kappa}^{(2)}(\tilde{\Theta}_{\eta} \cap \{(\gamma,r): r \geq v \gamma\}) \nonumber\\
%    & =  \min_{(\gamma,r)\in\tilde{\Theta}_{\eta} \cap \{(\gamma,r): r \geq v \gamma\}} \left\{-r + \sqrt{\delta}\sqrt{\EE\left(\sqrt{\gamma^2+r^2}G+\kappa - \gamma \Vert \bmu \Vert\right)_+^2}\right\}.
%\end{align}
%By the uniqueness of $u^*$, we can also show that for any $\veps > 0$,
%\begin{align}
%    \inf_{(\gamma, r) \in \tilde{\Theta}_{\eta} \cap \{(\gamma,r):r \geq (u^*+\veps) \gamma\}} F_{\delta}(\gamma,r,\kappa^*) > 0 \qquad \Longrightarrow \qquad \lim_{n \rightarrow +\infty, n/p\rightarrow \delta}\PP\left(\bar{\xi}_{n,p,\kappa^*,u^*+\veps} > 0\right) = 1,
%\end{align}
%then with high probability,
%\begin{align}
%    \hat{u} \leq \sup\left\{v:~\bar{\xi}_{n,p,\hat{\kappa},v} = 0\right\} = u^*,
%\end{align}
%which, together with \eqref{eq:hatu_lower}, proves that
%\begin{align}
%        \hat{u} := \frac{\Vert \bP_{\bmu}^{\perp} \hat{\bbeta} \Vert}{\Vert \bP_{\bmu} \hat{\bbeta} \Vert} \overset{p}{\rightarrow} u^*.
%    \end{align}

\subsubsection{Proof of Lemma~\ref{lem:monot}}\label{sec:proof-monot}
By the definition of $\kappa^*(c)$ and $u^*(c)$, it is easy to check that 
\begin{align}\label{eq:optm_cond}
        \begin{cases}
            f_{\delta}(u^*(c),\kappa^*(c),c) &= 0,\\
            \partial_u f_{\delta}(u^*(c),\kappa^*(c),c) &= 0.
        \end{cases}
    \end{align}
In other words, for any c, $(u^*(c), \kappa^*(c), c)$ is the zero point of the systems of equations
\begin{align}\label{eq:optm_cond_general}
        \begin{cases}
            f(u,\kappa,c) &= 0,\\
            \partial_u f(u,\kappa,c) &= 0,
        \end{cases}
    \end{align} 
where we identify $f_\delta$ with $f$ to simply the notation.    

Fix any $c_0$, and its corresponding $u^*(c_0), \kappa^*(c_0)$. 
We can compute that  
\begin{align}\label{eq:invertable-Jacobian}
\partial_c\partial_u f - \frac{\partial_c f}{\partial_{\kappa} f} \partial_{\kappa}\partial_u f \mid_{u^*(c_0), \kappa^*(c_0), c_0} < 0
\end{align}
As a result, the implicit function theorem tells us that in an open neighborhood around $u^*(c_0)$, 
the zero point of~\eqref{eq:optm_cond_general} can be written as a continuously differentiable function 
$u \to \kappa^*(u), c^*(u)$.
Moreover, implicit function theorem also tells us that 
\begin{align}
    \left(\partial_c \partial_u f - \frac{\partial_c f}{\partial_{\kappa} f} \partial_{\kappa}\partial_u f\right) \mid_{u^*(c_0), \kappa^*(c_0), c_0} \; \cdot \; c^\prime(u^*(c_0)) + \partial^2_u f \mid_{u^*(c_0), \kappa^*(c_0), c_0} = 0
\end{align}
As $u^*(c_0)$ is a minimizer for $f(u,\kappa^*(c_0),c_0)$, we have $\partial^2_u f \mid_{u^*(c_0), \kappa^*(c_0), c_0} \geq 0$. 
In all, we conclude that $c^\prime(u^*(c_0)) \geq 0$.

We have the following lemma which shows the continuity of $u^*(c)$.
\begin{lemma}\label{lem:cont}
   Both $\kappa^*(c)$ and $u^*(c)$ are continuous in $c$.
\end{lemma}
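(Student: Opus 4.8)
The plan is to prove the two continuity claims in sequence, first for $\kappa^*(\cdot)$ and then for $u^*(\cdot)$, in both cases by establishing upper and lower semicontinuity at an arbitrary fixed $c_0>0$ along an arbitrary sequence $c_n\to c_0$. Throughout I will use the optimality equations~\eqref{eq:optm_cond}, joint continuity of $f_\delta$ in $(u,\kappa,c)$, and the fact (already used in the proof of Lemma~\ref{lem:solution}) that $u^*(c)$ and $\kappa^*(c)$ stay bounded for $c$ near $c_0$: indeed $f_\delta(u,\kappa,c)/u^2\to -1+\delta\,\EE[(G+\kappa)_+^2]$ as $u\to\infty$ uniformly over bounded $(\kappa,c)$, and this limit is bounded away from $0$ near $\kappa=\kappa^*(c_0)$, which gives the required coercivity uniformly in a neighborhood of $c_0$.

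For $\kappa^*$, set $G(\kappa,c):=\inf_{u\ge0}f_\delta(u,\kappa,c)$. For upper semicontinuity, pass to a subsequence along which $\kappa^*(c_n)\to\bar\kappa$ and $u^*(c_n)\to u_\infty$; the first equation in~\eqref{eq:optm_cond}, $f_\delta(u^*(c_n),\kappa^*(c_n),c_n)=0$, and continuity of $f_\delta$ give $f_\delta(u_\infty,\bar\kappa,c_0)=0$, hence $G(\bar\kappa,c_0)\le0$ and therefore $\bar\kappa\le\kappa^*(c_0)$ by maximality. For lower semicontinuity, note that $G(\cdot,c_0)$ is strictly increasing (comparing $f_\delta$ at the minimizer of $f_\delta(\cdot,\kappa_2,c_0)$ shows $G(\kappa_1,c_0)<G(\kappa_2,c_0)$ for $\kappa_1<\kappa_2$) with unique zero $\kappa^*(c_0)$; for any $\kappa<\kappa^*(c_0)$ one has $G(\kappa,c_0)<0$, so picking a minimizer $u_\kappa$ of $f_\delta(\cdot,\kappa,c_0)$ gives $G(\kappa,c_n)\le f_\delta(u_\kappa,\kappa,c_n)\to f_\delta(u_\kappa,\kappa,c_0)<0$, whence $\kappa^*(c_n)\ge\kappa$ for large $n$; letting $\kappa\uparrow\kappa^*(c_0)$ completes the proof that $\kappa^*$ is continuous.

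For $u^*$, take again $c_n\to c_0$ and a subsequence with $u^*(c_n)\to u_\infty$. Feeding $\kappa^*(c_n)\to\kappa^*(c_0)$ (just established) into both optimality equations $f_\delta(u^*(c_n),\kappa^*(c_n),c_n)=0$ and $\partial_uf_\delta(u^*(c_n),\kappa^*(c_n),c_n)=0$ and passing to the limit yields $f_\delta(u_\infty,\kappa^*(c_0),c_0)=0$ and $\partial_uf_\delta(u_\infty,\kappa^*(c_0),c_0)=0$. Since $G(\kappa^*(c_0),c_0)=0$ (the bound $\le0$ because $u^*(c_0)$ solves the system, and $\ge0$ by maximality of $\kappa^*(c_0)$), $u_\infty$ is a global minimizer of $f_\delta(\cdot,\kappa^*(c_0),c_0)$, so $u_\infty\ge u^*(c_0)$ as $u^*(c_0)$ is the smallest such minimizer. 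For the reverse bound I would invoke the nondegeneracy~\eqref{eq:invertable-Jacobian}: it makes the Jacobian of $(u,\kappa)\mapsto(f_\delta,\partial_uf_\delta)$ nonsingular at every solution of~\eqref{eq:optm_cond_general}, so by the implicit function theorem the solution set is locally a $C^1$ curve, $u^*(c_0)$ is an isolated critical point of $f_\delta(\cdot,\kappa^*(c_0),c_0)$, and the ``winning'' (global-minimizing) branch is the one of maximal $\kappa$; by continuity of $\kappa^*$ and of the branch functions together with $\partial_\kappa f_\delta>0$, the branch emanating from $u^*(c_0)$ remains a winning branch of smallest $u$-coordinate throughout a neighborhood of $c_0$, forcing $u_\infty=u^*(c_0)$.

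The main obstacle is precisely this last step: the $\argmin$ correspondence $c\mapsto\argmin_{u}f_\delta(u,\kappa^*(c),c)$ need not be single valued, so one cannot simply apply Berge's maximum theorem to its smallest selection, and must rule out a jump of the minimizer at parameters where $f_\delta(\cdot,\kappa^*(c),c)$ has several global minimizers. The cleanest route is via the IFT nondegeneracy~\eqref{eq:invertable-Jacobian} — equivalently, showing $\partial_u^2 f_\delta>0$ at the smallest minimizer, which (through the relation derived in the proof of Lemma~\ref{lem:monot}, $c'(u^*(c_0))\cdot(\partial_c\partial_uf-\tfrac{\partial_cf}{\partial_\kappa f}\partial_\kappa\partial_uf)+\partial_u^2f=0$) makes $c^*(\cdot)$ locally strictly monotone and $u^*$ its $C^1$ inverse near $c_0$. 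Alternatively, if one can show that $f_\delta(\cdot,\kappa^*(c),c)$ admits a unique minimizer for every $c>0$, upper hemicontinuity of the $\argmin$ plus single-valuedness yields continuity at once and the whole second half collapses.
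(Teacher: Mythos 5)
Your argument for the continuity of $\kappa^*(c)$ is essentially sound and close in spirit to the paper's (a compactness/subsequence argument combined with one-sided monotonicity), with one caveat: the uniform coercivity you invoke to keep $u^*(c_n)$ bounded should not be justified by ``$\kappa^*(c_n)$ is near $\kappa^*(c_0)$'' --- that proximity is exactly what is being proved --- but rather by the monotonicity of $\kappa^*$ in $c$ (since $f_\delta$ is increasing in $c$, the feasible set of $\kappa$'s shrinks as $c$ grows, so for $c_n\to c_0$ one eventually has $\kappa^*(c_n)\ge \kappa^*(c_0+1)>\kappa_0$ in the $\delta\le 2$ case), which is also how the paper keeps everything in a fixed compact set.

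The genuine gap is in the second half, and you have identified it yourself: the reverse inequality $u_\infty\le u^*(c_0)$ is the entire difficulty, and the tool you propose does not deliver it. At a solution of \eqref{eq:optm_cond_general} one has $\partial_u f_\delta=0$, so the Jacobian of $(u,\kappa)\mapsto(f_\delta,\partial_u f_\delta)$ has determinant $-\,\partial_\kappa f_\delta\cdot\partial_u^2 f_\delta$, and the paper only establishes $\partial_u^2 f_\delta\ge 0$ at the minimizer; the nondegeneracy \eqref{eq:invertable-Jacobian} is instead nonsingularity with respect to the variables $(\kappa,c)$, i.e.\ it lets one parametrize the solution branch by $u$ as $(\kappa^*(u),c^*(u))$ --- which is precisely how Lemma~\ref{lem:monot} uses it --- and not invertibility in $(u,\kappa)$ as a function of $c$. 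Moreover, Lemma~\ref{lem:monot}'s argument explicitly invokes Lemma~\ref{lem:cont}, so importing its relation $c'(u^*)\,(\partial_c\partial_u f-\tfrac{\partial_c f}{\partial_\kappa f}\partial_\kappa\partial_u f)+\partial_u^2 f=0$ here is circular, and neither of your fallback hypotheses ($\partial_u^2 f_\delta>0$ at the smallest minimizer, or uniqueness of the global minimizer) is established anywhere in the paper. The paper closes this gap without any nondegeneracy or uniqueness assumption, by a monotone nested-sublevel-set argument: along a one-sided sequence $c_m\to c$, the values $\kappa^*(c_m)$ are monotone, $\tilde F_\delta$ is increasing in $\kappa$ and does not involve $c$ directly, so the sets $\calB_k=\{\ww:\tilde F_\delta(\ww,\kappa^*(c_{m_k}))\le 0\}$ are nested with limit $\calB_\infty=\{\ww:\tilde F_\delta(\ww,\kappa^*(c))\le 0\}$; since $u^*(c_{m_k})$ is the infimum of the $u$-coordinate over $\calB_k$, these infima converge to the infimum over $\calB_\infty$, namely $u^*(c)$, while the subsequential limit $\tilde u$ lies in $\calB_\infty$ and hence is at least that infimum, forcing $\tilde u=u^*(c)$ and contradicting $|\tilde u-u^*(c)|\ge\veps$. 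You would need to add an argument of this type (or actually prove one of your two unverified hypotheses) for the continuity of $u^*$ to be complete.
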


\noindent See the end of this section for the proof.
\medskip

% \cm{I believe this is what we originally proved using IFT. 
% Check this.}
According to Lemma~\ref{lem:cont}, in an open neighborhood $\calO$ of $(u_0^*,\kappa_0^*,c_0) = (u^*(c_0), \kappa^*(c_0), c_0)$, for any $c_1 > c_0$ such that $c_1 - c_0$ is sufficiently small, denote $(u^*_1,\kappa^*_1)$ as the solution to \eqref{eq:opt_new} with $c = c_1$. Since both $u^*(c)$ and $\kappa^*(c)$ are continuous in $c$, we have $(u_0^*,\kappa_0^*,c_0), (u^*_1,\kappa^*_1,c_1) \in \calO$. By the implicit function theorem, in the neighborhood $\calO$, $c$ can be written as a differentiable function $c(u^*)$ of $u^*$ with $0 \leq c^\prime(u^*) < +\infty$. 
Suppose we do not have $u_0^* < u_1^*$. If $u_0^* > u_1^*$, by the mean value theorem, then there exists $u_2^* \in (u_1^*, u_0^*)$ such that $c^\prime(u_2^*) < 0$, which draws the contradiction. If $u_0^* = u_1^*$, then $u_0^*$ is mapped to two distinct values and the function $c(u)$ is not well-defined. Therefore, we have $u^*_0 < u^*_1$ and $u^*(c)$ is increasing in $c$.
% When $c^\prime(u_0^*) > 0$, we have $c_1 = c(u_0^*) > c(u_0^*) = c_0$, thus $u_0^* > u_1^*$ by the continuity of $c(u)$ in $\calO$, indicating that $u^* = u^*(c)$ is increasing in $c$. When $c^\prime(u_0^*) = 0$, 

% \paragraph{Step 3: study the monotonicity of $u^*(c)$ in $c$.}
% To study the property of $u^* = u^*(c)$ as a function in $c$, we have the following lemma showing that the solution $u^*$ has continuous dependence in $c$.
% \begin{lemma}\label{lem:cont}
%    Recall the definition $\kappa(\bbeta) = \min_{i \leq n} y_i \langle \hh_i, \bbeta \rangle$ and $\Theta(c) = \{\bbeta \in \RR^p: \Vert \bP_{\bmu} \bbeta \Vert^2 + c\Vert \bP_{\bmu}^{\perp} \bbeta \Vert^2 \leq 1\}$. Then, both $\max_{\beta \in \Theta(c)} \kappa(\bbeta)$ and $\hat{\bbeta}(c) = \argmax_{\bbeta \in \Theta(c)} \kappa(\bbeta)$ are continuous in $c$.
% \end{lemma}
% According to Lemma~\ref{lem:cont}, $u^*(c)$ as a continuous function of $\hat{\bbeta}(c)$ is also continuous in $c$.
% In an open neighborhood $\calO$ of $(u^*,\kappa^*,c)$, for any $c_1 > c$ such that $c_1 - c$ is sufficiently small, denote $(u^*_1,\kappa^*_1)$ as the solution to \eqref{eq:opt_new} with $c_1$. Since \eqref{eq:opt_new} is continuous in $c$, then we have $(u^*,\kappa^*,c), (u^*_1,\kappa^*_1,c_1) \in \calO$. By the continuity, in the neighborhood $\calO$, $c$ can be written as a differentiable function of $u$ with $0 < c^\prime(u) < +\infty$ and $c_1 = c(u^*_1) > c(u^*) = c$, then $u^*_1 > u^*$, indicating that $u^* = u^*(c)$ is increasing in $c$

\paragraph{Proof of~\eqref{eq:invertable-Jacobian}.}
    
Denote
$
    Z(u) = \sqrt{1+u^2}G+\kappa\sqrt{u^2+c} - \Vert \bmu \Vert.
$
By direct calculations, we have
\begin{enumerate}
    \item $\partial_{\kappa}f = 2\delta\sqrt{u^2+c} \EE[Z_+(u)]$,
    \item $\partial_{c} f = \frac{\kappa\delta}{\sqrt{u^2+c}}\EE[Z_+(u)]$,
    \item $\partial_u f = -2u + 2u\delta\PP(Z(u) \geq 0) + 2\delta\frac{\kappa u}{\sqrt{u^2+c}}\EE[Z_+(u)]$,
    \item $\partial_{\kappa}\partial_u f = 2u\delta\frac{\sqrt{u^2+c}}{\sqrt{u^2+1}}\phi\left(\frac{\sqrt{u^2+c}}{\sqrt{u^2+1}} \kappa - \frac{\Vert \bmu \Vert}{\sqrt{u^2+1}}\right) + 2u\frac{\delta}{\sqrt{u^2+c}}\EE[Z_+(u)] + 2u\delta \kappa \PP(Z(u) \geq 0)$,
    \item $\partial_{c} \partial_u f = u\delta\frac{\kappa}{\sqrt{u^2+1}} \frac{1}{\sqrt{u^2+c}} \phi\left(\frac{\sqrt{u^2+c}}{\sqrt{u^2+1}} \kappa - \frac{\Vert \bmu \Vert}{\sqrt{u^2+1}}\right) - u\delta \kappa \frac{1}{(u^2+c)^{3/2}}\EE[Z_+(u)] + \delta \frac{\kappa^2}{u^2+c}\PP(Z(u) \geq 0)$.
\end{enumerate}

Denoting $\nu = \frac{\sqrt{u^2+c}}{\sqrt{u^2+1}} \kappa - \frac{\Vert \bmu \Vert}{\sqrt{u^2+1}}$, one then has
\begin{align}
    \frac{\partial_c f}{\partial_{\kappa} f} \partial_{\kappa}\partial_u f =  \delta u\kappa \sqrt{\frac{1}{(u^2+1)(u^2+c)}}\phi(\nu) + \frac{\delta  u\kappa}{(u^2+c)^{3/2}}\EE[Z_+(u)] + \frac{\delta u\kappa^2}{u^2+c}\PP(Z(u) \geq 0).
\end{align}
As the result, for any $0 < u < +\infty$, we have
\begin{align}
    &\partial_c\partial_u f - \frac{\partial_c f}{\partial_{\kappa} f} \partial_{\kappa}\partial_u f = -\frac{2\delta u \kappa}{(u^2+c)^{3/2}}\EE[Z_+(u)] < 0\nonumber.
\end{align}

\paragraph{Proof of Lemma~\ref{lem:cont}.}
Denote $\ww = (\gamma,u)$ and
\[
\kappa(\ww) = \sup_{\kappa \geq 0} \{\tilde F_{\delta}(\gamma,u,\kappa) \leq 0\},
\]
where $\kappa(\ww) = 0$ if $\{\kappa \geq 0: \tilde F_{\delta}(\gamma,u,\kappa) \leq 0\} = \varnothing$, then $\kappa(\ww)$ is continuous in $\ww$. Rewrite $U(c) = \{(\gamma,u)\in \RR \times \RR_+: \gamma^2(u^2+c) \leq 1\}$, which satisfies that $U(c_1) \subseteq U(c_2)$ for $c_1 \geq c_2$, and denote 
\[
\kappa^*(c) = \sup_{\ww \in U(c)} \kappa(\ww)
\]
Consider any $0 < c < +\infty$ and $\ww^*(c)$ is the associated solution to ~\eqref{eq:opt_Ftilde}. 
\paragraph{Step 1: Continuity of $\kappa^*(c)$.}
We first show that $\kappa^*(c)$ is continuous in $c$. Otherwise, without loss of generality, there exists $\veps > 0$ such that there exists an increasing sequence $\{c_m\}_{m \geq 1}$ with $c/2 < c_m < c$ and $c - c_m \leq \frac{1}{m}$ which satisfies that $| \kappa^*(c_m) - \kappa^*(c) | \geq \veps$, which forms a non-increasing sequence $\{\kappa^*(c_m)\}_{m=1}^{\infty}$ and $\{\ww^*(c_m)\}_{m=1}^{\infty} \subseteq U(c/2)$. Recall that $\kappa^*(c)$ is non-increasing in $c$, then $\kappa^*(c_m) \geq \kappa^*(c) + \veps$ and $\kappa^*(c_m) \in [\kappa^*(c), \kappa^*(c/2)]$. Then, for the sequence in a compact set, there is a subsequence $\{c_{m_k}\}_{k=1}^{\infty} \subseteq \{c_m\}_{m=1}^{\infty}$ such that
\begin{align}
   \lim_{k \rightarrow +\infty} c_{m_k} = c,\qquad \lim_{k \rightarrow +\infty} \kappa^*(c_{m_k}) = \tilde{\kappa}, \qquad \lim_{k \rightarrow +\infty} \ww^*(c_m) = \tilde{\ww} \in \bigcap_{k=1}^{\infty} U(c_{m_k}) = U(c). 
\end{align}
Thus, there exists $K \in \NN$ such that for any $k \geq K$,
\begin{align}
   \kappa(\tilde{\ww}) \geq \kappa(\ww^*(c_{m_k})) - \frac{\veps}{2} = \kappa^*(c_{m_k}) - \frac{\veps}{2} \geq \kappa^*(c) + \frac{\veps}{2} > \kappa^*(c),
\end{align}
which violates the fact that $\ww^*(c) = \argmax_{\ww \in U(c)} \kappa(\ww)$. This proves the left-continuity of $\kappa^*(c)$ and the right-continuity follows the same technique.

\paragraph{Step 2: Continuity of $\ww^*(c)$.} 
Suppose $\ww^*(c) = (\gamma^*(c),u^*(c))$ is not continuous in $c$, particularly for the second coordinate, then there exists $\veps > 0$ such that for any $m \in \NN$, there exists an increasing sequence $\{c_m\}_{m \geq 1}$ with $c/2 < c_m < c$ and $c - c_m \leq \frac{1}{m}$ such that $|u^*(c_m) - u^*(c)| \geq \veps$, which forms a sequence $\{\ww^*(c_m)\}_{m=1}^{\infty}$. Also, we have $\ww^*(c_m) = (\gamma^*(c_m), u^*(c_m)) \in U(c_m) \subseteq U(c/2)$ for each $m$. As $U(c/2)$ is compact, then there is a subsequence $\{c_{m_k}\}_{k=1}^{\infty} \subseteq \{c_m\}_{m=1}^{\infty}$ such that
\begin{align}
   \lim_{k \rightarrow +\infty} c_{m_k} = c,\qquad \lim_{k \rightarrow +\infty} \ww^*(c_{m_k}) = \tilde{\ww} = (\tilde \gamma, \tilde u) \in \bigcap_{k=1}^{\infty} U(c_{m_k}) = U(c). 
\end{align}
Since $\tilde{F}_{\delta}(\ww,\kappa)$ is continuous in $(\ww,\kappa)$ and for any $k$, $\tilde{F}_{\delta}(\ww^*(c_{m_k}),\kappa^*(c_{n_k})) = 0$, then $\tilde{F}_{\delta}(\tilde \ww,\tilde \kappa) = 0$ with $|\tilde u - u^*(c)| \geq \veps$.
% By the reasoning in Lemma~\ref{lem:equi_f_F}, we have $\gamma^*(c) = (u^*(c)+c)^{-1/2}$. 
Denote
\[
\calB_k = \left\{\ww: \tilde{F}_{\delta}(\ww,\kappa^*(c_{m_k})) \leq 0\right\},\qquad \calB_{\infty} = \left\{\ww: \tilde{F}_{\delta}(\ww,\kappa^*(c)) \leq 0\right\}
\]
Since $\kappa^*(c_{m_k})$ is non-increasing as $k$ increases with $\lim_{k \rightarrow +\infty}\kappa^*(c_{m_k}) = \kappa^*(c)$, and $\tilde{F}_{\delta}(\ww,\kappa)$ is continuous and increasing in $\kappa$, we have
\[
\calB_k \subseteq \calB_{k+1} \subseteq \calB_{\infty},\qquad \bigcup_{k \geq 1} \calB_k = \calB_{\infty}.
\]
By definition, $u^*(c_{m_k}) = \inf \{u: (\gamma,u) \in \calB_k\}$, then as $u^*(c_{m_k}) \rightarrow \tilde u$,
\[
\tilde u \leq \inf \{u: (\gamma,u) \in \calB_k\}, \qquad \text{for~any~} k.
\]
Consequently, we have
\[
\tilde u \leq \inf_{k \geq 1} \inf \left\{u: (\gamma,u) \in \calB_k\right\} = \inf \left\{u: (\gamma,u) \in \bigcup_{k \geq 1} \calB_k \right\} = \inf \left\{u: (\gamma,u) \in \calB_{\infty}\right\}.
\]
In addition, by definition, $u^*(c) = \inf \left\{u: (\gamma,u) \in \calB_{\infty}\right\}$, we have $\tilde u \leq u^*(c)$ and $\tilde u, u^*(c) \in \left\{u: (\gamma,u) \in \calB_{\infty}\right\}$ such that $|\tilde u - u^*(c)| \geq \veps > 0$, which draws the contradiction. Thus, $u^*(c)$ is continuous in $c$.

\section{Proof for Section~\ref{sec:inhomogeneous}}
\subsection{Proof of Proposition~\ref{prop:loss-aug}}
Recall that we assumed features and augmentations with inhomogeneous covariance take the following form.
\begin{align}
    &\hh_{0,i} \stackrel{\mathrm{i.i.d.}}{\sim} \frac{1}{2} \mathcal{N} (-\bmu, \bI_p) + \frac{1}{2} \mathcal{N} (\bmu, \bI_p) \\
&\hh_i,\hh_i^+ |\, \hh_{0,i} \stackrel{\mathrm{i.i.d.}}{\sim} \mathcal{N}(\hh_{0,i}, \sigma_{\mathrm{aug}}^2 \bA).
\end{align}
where $\bA \succeq \bI_p$. The special case where $\bA = \bI_p$ is already analyzed before. Here, we can write
\begin{align*}
    &\hh_{i} \stackrel{d}{=} \xi_i \bmu + ( \bI_p + \sigma_\aug^2 \bA)^{1/2} \gggg_i,  \qquad \text{where} \\
&\xi_i \bot \gggg_i, \qquad \xi \sim \mathrm{Unif}(\{\pm 1\}), \qquad \gggg_i \sim \calN(\mathbf{0}, \bI_p).
\end{align*}
We have
\begin{align*}
& \E \big[ \norm{ \bW \hh_i }^2 \big] = \norm{\bW \bmu}^2 + \tr\big( \bW (\bI_p + \sigma_\aug^2 \bA) \bW^\top \big) \\
& \qquad \qquad \quad \ = \norm{ \bW \bmu}^2 + \norm{ \bW}_{\mathrm{F}}^2 + \sigma^2_\aug\tr\big(\bW \bA \bW^\top\big) =: \tilde \alpha,\\
& \E \big[ \norm{ \bW \hh_i^+ - \bW \hh_i}^2 \big] = 2 \sigma_\aug^2 \tr\big( \bW \bA \bW^\top \big)  \\
&  \norm{ \bW \hh_i - \bW \hh_i^-}^2   =  \norm{ \bW(\xi_i - \xi_i^-) \bmu + \bW ( \bI_p + \sigma_\aug^2 \bA)^{1/2}(\gggg_i - \gggg_i^-)}^2.
\end{align*}
Consider the same infinite-sample loss $\calL(\bW)$ in Section~\ref{sec:gmm_expansion_shrinkage}. The alignment loss can be written as
\[
\calL_{\aalign}(\bW) = \frac{1}{2\tau \tilde \alpha} \EE_{\hh,\hh^+}\left[\|\bW\hh - \bW\hh^+\|^2\right] = \frac{\sigma_\aug^2}{\tau \tilde \alpha}\, \tr \big(  \bW \bA \bW^\top \big).
\]
Further, the difference between the negative pair has the distribution
\[
\hh - \hh^- \overset{d}{=} \frac{1}{2}\delta_0 + \frac{1}{4}\delta_{2\bmu} + \frac{1}{4}\delta_{-2\bmu} + \sqrt{2}\left(\bI + \sigma_{\aug}^2\bA\right)^{1/2} \gggg,
\]
where $\gggg \sim \calN(0,\bI)$. Let $\tilde \bW = \bW \left(\bI + \sigma_{\aug}^2\bA\right)^{1/2}$ and $\tilde \bmu = \left(\bI + \sigma_{\aug}^2\bA\right)^{-1/2} \bmu$. Then, we have
\[
\bW\hh - \bW\hh^- \overset{d}{=} \frac{1}{2}\delta_0 + \frac{1}{4}\delta_{2\tilde \bW \tilde \bmu} + \frac{1}{4}\delta_{-2\tilde \bW \tilde \bmu} + \sqrt{2}\tilde \bW \gggg,
\]
which has the same form as in Section~\ref{sec:loss_calc}.
Therefore, with the SVD of $\tilde \bW = \sum_{j=1}^p \tilde \sigma_j \tilde \uu_j \tilde \vv_j^\top$, we can calculate the following expectation
\begin{align*}
&\mathbb{E}_{\hh^{-},\hh}\left[\exp\left(-\frac{\|\tilde\bW \hh-\tilde\bW \hh^{-}\|_{2}^{2}}{2\tau\tilde\alpha}\right)\right]\nonumber\\
&=\frac{1}{2}\prod_{j=1}^{p}\left(1+\frac{2\tilde \sigma_{j}^{2}}{\tau\tilde \alpha}\right)^{-1/2}\left(1+\exp\left(-\sum_{j=1}^{p}\frac{2\tilde \sigma_{j}^{2}}{2\tilde \sigma_{j}^{2}+\tau\tilde \alpha}(\tilde \vv_{j}^{\top}\tilde \bmu)^{2}\right)\right).
\end{align*}
As a result, we have decomposition $\calL(\bW) = \calL_{\aalign}(\bW) + \calL_{\unif}(\bW)$, where
\begin{align*}
& \calL_{\aalign}(\bW) = \frac{\sigma_\aug^2}{\tau \tilde \alpha}\, \tr \big(  \bW \bA \bW^\top \big) \\
& \calL_{\unif}(\bW) = -\frac{1}{2} \sum_{j=1}^p \log \left(1 + \frac{2 \tilde \sigma_j^2}{\tau \tilde \alpha} \right) + \log \left( 1 + \exp\Big( - \sum_{j=1}^p \frac{2\tilde \sigma_j^2 \langle \tilde \bmu, \tilde \vv_j \rangle^2}{2\tilde \sigma_j^2 + \tau \tilde \alpha } \Big) \right) - \log 2.
\end{align*}
Further, if we assume that $\tilde\alpha \gg \| \bW \|_{\mathrm{F}}^2$, then $\calL_{\unif}(\bW)$ can be approximated by
\begin{align*}
\tilde\calL_{\unif}(\bW) &= -\sum_{j=1}^p \frac{\tilde \sigma_j^2}{\tau \tilde\alpha} + \log\left(1 + \exp\left(-\sum_{j=1}^p \frac{2\tilde \sigma_j^2 \langle \tilde \bmu, \tilde \vv_j \rangle^2}{\tau \tilde\alpha}\right)\right) - \log 2\\
& =-\frac{\| \tilde\bW \|_{\mathrm{F}}^2}{\tau \tilde \alpha} + \log \left( 1 + \exp \Big( - \frac{2 \norm{\tilde \bW  \tilde \bmu}^2}{\tau \tilde \alpha} \Big) \right)-\log 2\\
& = -\frac{1}{\tau \tilde \alpha} \left(\norm{\bW}_{\mathrm{F}}^2 + \sigma_\aug^2  \tr\big(\bW \bA \bW^\top\big)\right) + \log \left( 1 + \exp \Big( - \frac{2 \norm{\bW \bmu}^2}{\tau \tilde \alpha} \Big) \right)-\log 2.
\end{align*}
We can then introduce an approximate loss $\tilde \calL(\bW) = \calL_{\aalign}(\bW) + \tilde \calL_{\unif}(\bW)$, where 
\begin{equation*}
\tilde \calL(\bW) = - \frac{1}{\tau \tilde \alpha} \norm{\bW}_{\mathrm{F}}^2  + \log \left( 1 + \exp \Big( - \frac{2 \norm{\bW \bmu}^2}{\tau \tilde \alpha} \Big) \right).
\end{equation*}

\subsection{Proof of Theorem~\ref{thm:inhomo}}\label{sec:proof-inhomo}
Recall that the minimization problem is $\min_{\bW} \tilde \calL(\bW)$, where
\begin{align*}
\tilde \calL(\bW) &= -\frac{1}{\tau\tilde \alpha}\Vert \bW\Vert_{\mathrm{F}}^{2}+\log\left(1+\exp\left(-\frac{2\|\bW \bmu\|^{2}}{\tau\tilde \alpha}\right)\right), \\
\tilde \alpha & =(1+\sigma_\aug^{2})\|\bW\|_{\mathrm{F}}^{2}+\|\bW\bmu\|^{2}+\sigma_\aug^{2}\rho_\aug\|\bW\vv_\aug\|^{2}.
\end{align*}
First, notice that the loss $\tilde \calL(\bW)$ is scale-invariant. Hence without loss of generality, we can assume $\|\bW\|_{\mathrm{F}}^{2}=1$.
Then $\tilde \calL(\bW)$ only depends on the values of 
\[
T(\bW) = \|\bW \bar \bmu\|^{2},\qquad\text{and}\qquad\|\bW\vv_\aug\|^{2}\,.
\]
Second, we can rewrite the optimization problem into a nested one:
\begin{equation}\label{opt:doublemin}
\min_{T \in [0,1]} \Big[ \min_{\bW: T(\bW) = T} \tilde \calL(\bW) \Big].
\end{equation}
It is easy to see that given any fixed value of $T(\bW)$, the loss $\tilde \calL(\bW)$ is decreasing as $\|\bW\vv_\aug\|^{2}$ decreases. Thus, the inner minimization problem is easy to solve: we only need to determine the smallest possible value of $\|\bW\vv_\aug\|^{2}$, which will be the focus below.

Consider the Gram--Schmidt orthogonalization for $\bar \bmu, \vv_\aug$: Let $\vv_{\perp} \in \mathrm{span}(\bar \bmu, \vv_\aug)$ be a unit  vector orthogonal to $\bar \bmu$; define $a\in [-1,1]$ to be coefficient in the orthogonal decomposition
\begin{align*}
\bar{\bmu} = r \vv_\aug+ a \vv_{\perp}, 
\end{align*}
where we recall that $r = \langle \bar{\bmu} , \vv_\aug \rangle \in [-1,1]$. 
It is also clear that $a^2 + r^2 = 1$.

We split the inner minimization into two cases. 

\paragraph{Case 1: $0 \le T \le a^2$.} We claim that there exists some $\bW$ such that $\|\bW\|_{\mathrm{F}}^2=1$, $\|\bW\bar{\bmu}\|^{2}=T$ and $\|\bW\vv_\aug\|=0$. To prove this claim, we will construct a $\bW$ of the form
\[
\bW=\sum_{j=1}^{p}s_{j} \bar \uu_{j} \bar \vv_{j}^{\top}, 
\]
such that the required equalities hold.  Let us set 
\begin{align*}
& \bar \vv_{p}=\vv_\aug, \qquad s_{p}=0,  \\
& s_{1}^{2}= \frac{T}{a^{2}} \leq 1, \qquad \bar \vv_{1} = \vv_{\perp}.
\end{align*}
Choose $(\bar \vv_j)_{2\le j \le p-1}$ be any orthogonal basis in the orthogonal complement of $\mathrm{span}(\bar \vv_1, \bar \vv_p)$, and $\bar \uu_1,\ldots,\bar \uu_p$ be any orthogonal basis in $\mathbb{R}^p$. Also, choose any $s_2,\ldots,s_{p-1} \ge 0$ such that $s_2^2+\ldots+s_{p-1}^2 = 1 - T/a^2$.

Our construction in fact gives an SVD of $\bW$, with $(s_p, \bar \vv_p)$ being the bottom singular pair. So we have $\bW \vv_\aug=s_p \vv_\aug= 0$. Moreover, 
\begin{align*}
\|\bW\bar{\bmu}\|^{2} & =\sum_{j=1}^{p}s_{j}^{2} \langle \bar \vv_{j}, \bar{\bmu} \rangle^{2}\\
 & =\sum_{j=1}^{p-1}s_{j}^{2}\langle \bar \vv_{j}, \bar{\bmu} \rangle^{2} \qquad \text{since }s_{p}=0\\
 & =s_{1}^{2}\langle \bar \vv_{1}, \bar{\bmu} \rangle^{2}\qquad \text{by orthogonality }\bmu \perp (\bar \vv_j)_{2\le j \le p-1}\\
 & =s_{1}^{2}a^{2}=T.
\end{align*}
Thus the inner minimization of \eqref{opt:doublemin} is solved at $\|\bW \vv_\aug\|= 0$. 

In all, under the constraint $T \le a^2$, the problem of minimizing $\tilde \calL(\bW)$ boils down to 
\[
\min_{T\leq a^{2}}\quad-\frac{1}{\tau\left[(1+\sigma_\aug^{2})+\|\bmu\|^{2}T\right]}+\log\left(1+\exp\left(-\frac{2\|\bmu\|^{2}T}{\tau\left[(1+\sigma_\aug^{2})+\|\bmu\|^{2}T\right]}\right)\right).
\]
This objective is in fact the same as that in Section~\ref{sec:gmm_expansion_shrinkage} via change of variables. Denote $t = \left[(1+\sigma_\aug^{2})+\|\bmu\|^{2}T\right]^{-1} \in [(1+\sigma_\aug^2+\| \bmu \|^2 a^2)^{-1}, (1+\sigma_\aug^2)^{-1}]$. Then the above objective function can be written as
\[
h(t) = -\frac{t}{\tau} + \log\left(1+\exp\left(-\frac{2}{\tau} + 2\frac{1+\sigma_\aug^2}{\tau} t\right)\right),
\]
for which
\[
h^\prime(t) = \frac{1+2\sigma_\aug^2}{\tau} - 2\frac{1+\sigma_\aug^2}{\tau} \frac{1}{1+\exp\left(-\frac{2}{\tau} + 2\frac{1+\sigma_\aug^2}{\tau} t\right)}.
\]
As in the homogeneous case, note that $h^\prime(t)$ is increasing in $t$ and $h^\prime((1+\sigma_\aug^2)^{-1}) = \sigma_\aug^2/\tau > 0$. Further, notice that
\[
h^\prime((1+\sigma_\aug^2+\| \bmu \|^2 a^2)^{-1}) > 0 ~~\Longleftrightarrow~~ \tau > \frac{2a^2\| \bmu \|^2}{\log(1+2\sigma_\aug^2)(1+\sigma_\aug^2+a^2 \| \bmu \|^2)} = \tau^*_1.
\]
Then, if $\tau > \tau^*_1$, then $h^\prime((1+\sigma_\aug^2+\| \bmu \|^2 a^2)^{-1})> 0$ and $h(t)$ is increasing in $t\in [(1+\sigma_\aug^2+\| \bmu \|^2 a^2)^{-1}, (1+\sigma_\aug^2)^{-1}]$, so the minimum of $h(t)$ is achieved when $t$ is the smallest in the interval, or equivalently $T = a^2$. If $\tau \le \tau^*_1$, the minimizer in the interval $[0,a^2]$ is determined by $h'(t)=0$, which yields
\begin{equation}
%T^* = \frac{\tau(1+\sigma_\aug^2) }{2\| \bmu \|^{2}} \log(1+2\sigma_\aug^2) \left[1-\frac{\tau}{2}\log(1+2\sigma_\aug^2)\right]^{-1}
t^* = \frac{1}{1+\sigma_\aug^2}\left(1-\frac{\tau}{2}\log(1+2\sigma_\aug^2)\right).
\end{equation}
As a result, we have
\begin{align}\label{eq:Tstar}
T^* = \frac{1}{\|\bmu\|^2}\left(\frac{1}{t^*} - (1+\sigma_\aug^2)\right) = \frac{\tau(1+\sigma_\aug^2) }{2\| \bmu \|^{2}} \log(1+2\sigma_\aug^2) \left[1-\frac{\tau}{2}\log(1+2\sigma_\aug^2)\right]^{-1}.
\end{align}

\paragraph{Case 2: $a^2 < T \le 1$.} Note that we implicitly assume $|a|<1$ since $|a|=1$ is always covered in Case~1. In order to solve the inner minimization problem of \eqref{opt:doublemin}, it suffices to consider the semidefinite program
\begin{align}
\begin{split}\label{opt:sdp}
\min &  \qquad \vv_\aug^{\top}\bA\vv_\aug\\
\text{subject to} \qquad & \mathrm{Tr}(\bA)=1,\;\bar{\bmu}^{\top}\bA\bar{\bmu}=T,\;\bA\succeq 0,
\end{split}
\end{align}
where we define $\bA=\bW^{\top}\bW \in \mathbb{R}^{p \times p}$. The square-root matrix of the minimizer gives $\bW$ that obeys $\| \bW \bar \bmu\|^2=T$ and achieves the minimum of the inner minimization problem.

The dual problem of~\eqref{opt:sdp} is given by 
\begin{align}
\begin{split}\label{opt:dual}
\max_{\lambda_{1},\lambda_{2} \in \mathbb{R}} & \qquad-\lambda_{1}-\lambda_{2}T\\
\text{subject to} & \qquad \vv_\aug\vv_\aug^{\top}+\lambda_{1}\bI_p+\lambda_{2}\bar{\bmu}\bar{\bmu}^{\top}\succeq 0.
\end{split}
\end{align}
By duality (in particular, complementary slackness), we know that the optimal $\bA$ and $\lambda_1,\lambda_2$ obey
\begin{equation}\label{eq:complementary}
\left\langle \vv_\aug\vv_\aug^{\top}+\lambda_{1}\bI+\lambda_{2}\bar{\bmu}\bar{\bmu}^{\top},\bA\right\rangle =0 \, .
\end{equation}
For now, we will assume that $a>0$; the case $a=0$ (i.e., $|\langle \bar \bmu, \vv_\aug \rangle|=1$) will be discussed later. 
We consider two separate cases: (1) the optimal $\lambda_1 = 0$, and (2) the optimal $\lambda_1 >0$.

First, when $\lambda_1 = 0$, it is easy to check that the optimal choice of $\lambda_2$ is also 0, which yields the optimal objective 0. 
Second, when $\lambda_1 >0$,  we claim that the optimal $\bA$ is a rank-one matrix, and it is given by 
\begin{equation}\label{eq:rankone}
\bA=  (\sqrt{T}\, \bar \bmu - \sqrt{1-T} \, \bar \bmu_\bot) (\sqrt{T}\, \bar \bmu - \sqrt{1-T} \, \bar \bmu_\bot)^\top,
\end{equation}
where $\bar \bmu_\bot = a \vv_\aug - r \vv_\bot$.
To see this, we note that $\vv_\aug\vv_\aug^{\top}+\lambda_{1}\bI_p+\lambda_{2}\bar{\bmu}\bar{\bmu}^{\top}$ has rank at least $p-2$ since it is two rank-one updates of $\lambda_1 \bI_p$. In fact, we will show that this matrix has rank exactly $p-1$. Otherwise, we must have
\begin{align*}
\vv_\aug^\top \Big( \vv_\aug\vv_\aug^{\top}+\lambda_{1}\bI_p+\lambda_{2}\bar{\bmu}\bar{\bmu}^{\top} \Big)\vv_\aug = 0, \\
\bar \bmu^\top \Big( \vv_\aug\vv_\aug^{\top}+\lambda_{1}\bI_p+\lambda_{2}\bar{\bmu}\bar{\bmu}^{\top} \Big)\bar \bmu = 0.
\end{align*}
These are equivalent to 
\begin{align*}
&1 + \lambda_1 + \lambda_2 \langle \bar \bmu, \vv_\aug \rangle^2 = 0 , \\
&\lambda_1 + \lambda_2 +  \langle \bar \bmu, \vv_\aug \rangle^2 = 0 ,
\end{align*}
which implies $a^2 \cdot (1 - \lambda_2) = 0$. This further implies $\lambda_2 =1$ since $a > 0$. 
However, we know that when $\lambda_1 > 0$, the optimal $\lambda_2$ would not be 1, as $\lambda_2 = 0$ is strictly better. Therefore, we have shown that $\vv_\aug\vv_\aug^{\top}+\lambda_{1}\bI_p+\lambda_{2}\bar{\bmu}\bar{\bmu}^{\top}$ has rank $p-1$. 
By \eqref{eq:complementary} primal optimal $\bA$ is a rank-one matrix, which together with the primal constraint $\bar{\bmu}^{\top}\bA\bar{\bmu}=T$ yields the conclusion that the optimal $\bA$ is uniquely determined by \eqref{eq:rankone}.

As a result, the optimal objective value of \eqref{opt:sdp} is 
\begin{equation*}
\| \bW \vv_\aug\|^2 = \langle \sqrt{T}\, \bar \bmu - \sqrt{1-T} \, \bar \bmu_\bot, \vv_\aug \rangle^2 = \big(r \sqrt{T}\, - a \sqrt{1-T} \,\big)^2\,.
\end{equation*}
As this is always than 0 (the optimal objective when $\lambda_1 =0$), we conclude that the maximal objective of~\eqref{opt:dual} is $\langle \sqrt{T}\, \bar \bmu - \sqrt{1-T} \, \bar \bmu_\bot, \vv_\aug \rangle^2 = \big(r \sqrt{T}\, - a \sqrt{1-T} \,\big)^2$.

In the corner case $a=0$, trivially, the objective is a constant: $\vv_\aug^\top \bA \vv_\aug = \bar \bmu^\top \bA \bar \bmu = T$, which is the same as $\big(r \sqrt{T}\, - a \sqrt{1-T} \,\big)^2$ in this case. 

In all, 
under the constraint $a^2 < T \le 1$, the original minimization problem \eqref{opt:doublemin} becomes 
\begin{align}
\begin{split}\label{eq:obj_fun}
\min_{1\ge T > a^2}\quad &-\frac{1}{\tau\left[(1+\sigma_\aug^{2})+\|\bmu\|^{2}T + \rho_\aug \sigma_\aug^2 (r \sqrt{T} - a \sqrt{1-T})^2 \right]}\\
&+\log\left(1+\exp\left(-\frac{2\|\bmu\|^{2}T}{\tau\left[(1+\sigma_\aug^{2})+\|\bmu\|^{2}T + \rho_\aug \sigma_\aug^2 (r \sqrt{T} - a \sqrt{1-T})^2\right]}\right)\right).
\end{split}
\end{align}

\paragraph{Step 3: combining two cases.} Recall the notation $x_+ = \max\{x,0\}$. Introduce the unified loss $\ell(T)$ that subsumes both cases:
\begin{align}
\begin{split}\label{eq:obj_fun2}
\ell(T) = &-\frac{1}{\tau\left[(1+\sigma_\aug^{2})+\|\bmu\|^{2}T + \rho_\aug \sigma_\aug^2 [ (r \sqrt{T} - a \sqrt{1-T})_+]^2 \right]}\\
&+\log\left(1+\exp\left(-\frac{2\|\bmu\|^{2}T}{\tau\left[(1+\sigma_\aug^{2})+\|\bmu\|^{2}T + \rho_\aug \sigma_\aug^2 [ (r \sqrt{T} - a \sqrt{1-T})_+ ] ^2\right]}\right)\right).
\end{split}
\end{align}
Notice that $\ell(T) \ge \ell^*(T)$ for each $T \in [0,1]$ where 
\[
\ell^*(T) := -\frac{1}{\tau\left[(1+\sigma_\aug^{2})+\|\bmu\|^{2}T\right]}+\log\left(1+\exp\left(-\frac{2\|\bmu\|^{2}T}{\tau\left[(1+\sigma_\aug^{2})+\|\bmu\|^{2}T\right]}\right)\right), 
\]
and $\ell(T) = \ell^*(T)$ for $T \in [0,a^2]$. In addition, the two functions satisfy that 
\[
\ell^*(a^2) = \ell(a^2), \qquad (\ell^*)^\prime(a^2) = \ell^\prime(a^2).
\]
If $\tau \le \tau_1^*$, we have seen in Step 1 that the minimizer $T^*$ of $\ell^*(T)$ is given by \eqref{eq:Tstar} and $T^* \in [0, a^2]$. Since $\ell(T) \ge \ell^*(T)$ and equality holds if $T \le a^2$, it verifies that $T^*$ is also the minimizer of $\ell(T)$.

If $\tau > \tau_1^*$, the analysis in Step 1 shows that for $T \in [0,a^2]$, $(\ell^*)(T)$ (and thus $\ell(T)$) is monotonically decreasing. Further, denote
\[
h(T) = (1+\sigma_\aug^{2})+\|\bmu\|^{2}T + \rho_\aug \sigma_\aug^2 (r \sqrt{T} - a \sqrt{1-T})^2,
\]
which is an increasing function of $T$ when $T \geq a^2$. The derivative of $\ell(T)$ takes the following form:
\[
\ell^\prime(T) = \frac{1}{\tau} \frac{h^\prime(T)}{h^2(T)} - \frac{2 \Vert \bmu \Vert^2}{\tau} \frac{\exp\left(-\frac{2\|\bmu\|^{2}T}{\tau h(T)}\right)}{1+\exp\left(-\frac{2\|\bmu\|^{2}T}{\tau h(T)}\right)} \frac{h(T) - T h^\prime(T)}{h^2(T)}.
\]
If $a \neq 0$ and $\rho_\aug>0$, we have $\lim_{T \rightarrow 1-} h^\prime(T) = +\infty$, then $\ell^\prime(1-) > 0$, which indicates that the minimizer is attained in the interval $(a^2,1)$. (Note that $T^* = 1$ is impossible!)

\paragraph{Step 4: degenerate cases.}
If $\rho_\aug=0$, our analysis is reduced to the homogeneous case, where $T^*=1$ if $\tau > \tau^*$.

If $a = 0$ and  $\rho_\aug>0$, we have $\bar \bmu = \pm \vv_\aug$ and 
\[
\ell(T) = -\frac{1}{\tau\left[1+\sigma_\aug^{2}+LT\right]}+\log\left(1+\exp\left(-\frac{2\|\bmu\|^{2}T}{\tau\left[1+\sigma_\aug^{2}+LT\right]}\right)\right)
\]
where $L = \| \bmu\|^2 + \rho_\aug \sigma_\aug^2$. Th analysis of the $\ell(T)$ is similar to the homogeneous case after a change of variables.

\subsection{Proof of Corollary~\ref{cor:change}}\label{sec:proof-change}

As a consequence of Theorem~\ref{thm:inhomo}, most of the statements in Corollary~\ref{cor:change} are straightforward. Below we will focus on Item 2 and 3. Let $S_k = \sum_{j \le k} \langle \vv_j, \bar \bmu \rangle^2$ for the $k$-th cumulative score for $\bmu$, where $1 \le k \le p-1$.

First, by the ordering of singular values, we have 
\[
(k+1) \sigma_{k+1}^2 \le \| \bW \|_F^2 = 1 \qquad \Rightarrow \qquad \sigma_{k+1} \le \frac{1}{\sqrt{k+1}} \,.
\] 
Therefore, we have
\[
\sum_{j > k} \sigma_j^2 \langle \vv_j, \bar \bmu \rangle^2 \le \sigma_{k+1}^2 \cdot \sum_{j > k} \langle \vv_j, \bar \bmu \rangle^2 \le \frac{1}{k+1} (1-S_k).
\]
We thus obtain an upper bound on $\| \bW \bar \bmu \|^2$:
\begin{align}
\| \bW \bar \bmu \|^2 &= \sum_{j\le k} \sigma_j^2 \langle \vv_j , \bar \bmu \rangle^2 + \sum_{j > k} \sigma_j^2 \langle \vv_j , \bar \bmu \rangle^2 \\
&\le \sum_{j\le k} \langle \vv_j , \bar \bmu \rangle^2 + \frac{1}{k+1} (1-S_k) \\
&\le \frac{k}{k+1} S_k + \frac{1}{k+1}.
\end{align}
If $\| \bW \bar \bmu \|^2 \ge c$ for certain constant $c>0$, we can choose $k\ge 1$ to the smallest integer with $k \ge (2c)^{-1} - 1$. Combining the upper bound and lower bound, we obtain $S_k \ge c/4$. This proves the first jump of the cumulative score. The second jump is because $S_{p} - S_{p-1} = \langle \vv_p, \bar \bmu \rangle^2 = r^2 > 0$ due to nondegeneracy.

Moreover, if $\tau > \tau_1^*$, the top right singular vector of $\bW$ is $\vv_1 = \sqrt{T^*} \bar{\bmu} - \sqrt{1-T^*}\bmu_{\perp}$ where $T^* > \sqrt{1-r^2}$. Since
\[
\langle \vv_1, \vv_\aug \rangle = r \sqrt{T^*} - \sqrt{1-r^2}\, \sqrt{1-T^*}
\]
is monotone increasing in $T^*$. The inner product must be positive if $\tau > \tau_1^*$. Also, $\langle \vv_1, \bar \bmu \rangle = \sqrt{T^*} > 0$, so this inner product is also positive.

%\section{Proof for VICReg}
%\input{Appendix/append_vicreg.tex}

\end{document}